\def\isarxiv{1}
\let\C\relax
\newtheorem{theorem}{Theorem}[section]
\newtheorem{lemma}[theorem]{Lemma}
\newtheorem{definition}[theorem]{Definition}
\newtheorem{corollary}[theorem]{Corollary}
\newtheorem{assumption}[theorem]{Assumption}
\newtheorem{remark}[theorem]{Remark}
\newtheorem{claim}[theorem]{Claim}
\newcommand{\wt}{\widetilde}
\newcommand{\ov}{\overline}
\newcommand{\N}{\mathcal{N}}
\newcommand{\R}{\mathbb{R}}
\renewcommand{\i}{\mathbf{i}}
\renewcommand{\varepsilon}{\epsilon}
\renewcommand{\tilde}{\wt}
\renewcommand{\bar}{\overline}
\renewcommand{\d}{\mathrm{d}}
\DeclareMathOperator*{\E}{{\mathbb{E}}}
\DeclareMathOperator*{\C}{\mathbb{C}}
\DeclareMathOperator{\poly}{poly}
\DeclareMathOperator{\Tr}{tr}
\DeclareMathOperator{\dis}{dis}
\DeclareMathOperator{\tr}{tr}
\DeclareMathOperator{\diag}{diag}
\DeclareMathOperator{\cts}{cts}
\DeclareMathOperator{\ntk}{ntk}
\DeclareMathOperator{\nn}{nn}
\DeclareMathOperator{\test}{test}
\DeclareMathOperator{\train}{train}
\DeclareMathOperator{\init}{init}
\DeclareMathOperator{\unif}{unif}
\renewcommand{\k}{\mathsf{K}}
\definecolor{b2}{RGB}{51,153,255}
\definecolor{mygreen}{RGB}{80,180,0}
\newcommand*{\RN}[1]{\expandafter\@slowromancap\romannumeral #1@}
\title{Generalized Leverage Score Sampling for Neural Networks\thanks{The authors would like to thank Michael Kapralov for suggestion of this topic.}}
\author{Jason D. Lee\thanks{\texttt{jasonlee@princeton.edu} Princeton University.}
\and Ruoqi Shen\thanks{\texttt{shenr3@cs.washington.edu} University of Washington. Work done while visiting Institute for Advanced Study.}
\and Zhao Song\thanks{\texttt{magic.linuxkde@gmail.com} Columbia University, Princeton University and Institute for Advanced Study.}
\and Mengdi Wang\thanks{\texttt{mengdiw@princeton.edu} Princeton University.}
\and Zheng Yu\thanks{\texttt{zhengy@princeton.edu} Princeton University.}
}
\date{}
\author{
}
\begin{document}

\ifdefined\isarxiv
\begin{titlepage}
  \maketitle
  \begin{abstract}
  
Leverage score sampling is a powerful technique that originates from theoretical computer science, which can be used to speed up a large number of fundamental questions, e.g. linear regression, linear programming, semi-definite programming, cutting plane method, graph sparsification, maximum matching and max-flow. Recently, it has been shown that leverage score sampling helps to accelerate kernel methods [Avron, Kapralov, Musco, Musco, Velingker and Zandieh 17]. 

In this work, we generalize the results in [Avron, Kapralov, Musco, Musco, Velingker and Zandieh 17] to a broader class of kernels. We further bring the leverage score sampling into the field of deep learning theory.
\begin{itemize}
\item We show the connection between the initialization for neural network training and approximating the neural tangent kernel with random features.
\item We prove the equivalence between regularized neural network and neural tangent kernel ridge regression under the initialization of both classical random Gaussian and leverage score sampling.
\end{itemize}

  \end{abstract}
  \thispagestyle{empty}
\end{titlepage}

\pagenumbering{roman}
{\small
\hypersetup{linkcolor=black}
\tableofcontents
}

\newpage
\else
\maketitle
\begin{abstract}

\end{abstract}
\fi

\pagenumbering{arabic}
\setcounter{page}{1}
\section{Introduction}

Kernel method is one of the most common techniques in various machine learning problems. One classical application is the kernel ridge regression (KRR). Given training data $X = [x_1,\cdots,x_n]^\top \in \R^{n \times d}$, corresponding labels $Y=[y_1, \cdots, y_n] \in \R^n$ and regularization parameter $\lambda>0$, the output estimate of KRR for any given input $z$ can be written as:
\begin{align}\label{eq:intro_krr}
f(z) = \k( z , X )^\top ( K + \lambda I_n )^{-1} Y,
\end{align}
where $\k(\cdot,\cdot)$ denotes the kernel function and $K \in \R^{n\times n}$ denotes the kernel matrix.

Despite being powerful and well-understood, the kernel ridge regression suffers from the costly computation when dealing with large datasets, since generally implementation of Eq.~\eqref{eq:intro_krr} requires $O(n^3)$ running time. Therefore, intensive research have been dedicated to the scalable methods for KRR \cite{b13,am15,zdw15,acw17,mm17,znvkr20}. One of the most popular approach is the random Fourier features sampling originally proposed by \cite{rr08} for shift-invariant kernels. They construct a finite dimensional random feature vector $\phi : \R^d \to \C^s$ through sampling that approximates the kernel function $\k(x,z) \approx \phi(x)^* \phi(z)$ for data $x,z\in\R^d$. The random feature helps approximately solves KRR in $O(ns^2+n^2)$ running time, which improves the computational cost if $s \ll n$. The work \cite{akmmvz17} advanced this result by introducing the leverage score sampling to take the regularization term into consideration. 

In this work, we follow the the approach in \cite{akmmvz17} and naturally generalize the result to a broader class of kernels, which is of the form
\begin{align*}
  \k(x,z) = \E_{w \sim p}[\phi(x,w)^\top\phi(z,w)],
\end{align*}
where $\phi : \R^{d} \times \R^{d_1} \to \R^{d_2}$ is a finite dimensional vector and $p : \R^{d_1} \to \R_{\geq 0}$ is a probability distribution. We apply the leverage score sampling technique in this generalized case to obtain a tighter upper-bound on the dimension of random features.

Further, We discuss the application of our theory in neural network training. Over the last two years, there is a long line of over-parametrization theory works on the convergence results of deep neural network \cite{ll18,dzps19,als19a,als19b,dllwz19,adhlw19,adhlsw19,sy19,bpsw20}, all of which either explicitly or implicitly use the property of neural tangent kernel \cite{jgh18}. However, most of those results focus on neural network training without regularization, while in practice regularization (which is originated from classical machine learning) has been widely used in training deep neural network. Therefore, in this work we rigorously build the equivalence between training a ReLU deep neural network with $\ell_2$ regularization and neural tangent kernel ridge regression.
We observe that the initialization of training neural network corresponds to approximating the neural tangent kernel with random features, whose dimension is proportional to the width of the network. Thus, it motivates us to bring the leverage score sampling theory into the neural network training. We present a new equivalence between neural net and kernel ridge regression under the initialization using leverage score sampling, which potentially improves previous equivalence upon the upper-bound of network width needed.

We summarize our main results and contribution as following:
\begin{itemize}
	\item Generalize the leverage score sampling theory for kernel ridge regression to a broader class of kernels.
	\item Connect the leverage score sampling theory with neural network training.
	\item Theoretically prove the equivalence between training regularized neural network and kernel ridge regression under both random Gaussian initialization and leverage score sampling initialization.
\end{itemize}

\section{Related work}
\paragraph{Leverage scores}

Given a $m \times n$ matrix $A$. Let $a_i^\top$ be the $i$-th rows of $A$ and the leverage score of the $i$-th row of $A$ is $\sigma_i(A) = a_i^\top (A^\top A)^{\dagger} a_i$. A row's leverage score measures how important it is in composing the row space of $A$. If a row has a component orthogonal to all other rows, its leverage score is $1$. Removing it would decrease the rank of $A$, completely changing its row space. The coherence of $A$ is $\| \sigma(A) \|_{\infty}$. If $A$ has low coherence, no particular row is especially important. If $A$ has high coherence, it contains at least one row whose removal would significantly affect the composition of $A$'s row space.

Leverage score is a fundamental concept in graph problems and numerical linear algebra. There are many works on how to approximate leverage scores \cite{ss11,dmmw12,cw13,nn13} or more general version of leverages, e.g. Lewis weights \cite{l78,blm89,cp15}. From graph perspective, it has been applied to maximum matching \cite{bln+20,lsz20}, max-flow \cite{ds08,m13_flow,m16,ls20_stoc,ls20_focs}, generate random spanning trees \cite{s18}, and sparsify graphs \cite{ss11}. From matrix perspective, it has been used to give matrix CUR decomposition \cite{bw14,swz17,swz19} and tensor CURT decomposition \cite{swz19}. From optimization perspective, it has been used to approximate the John Ellipsoid \cite{ccly19}, linear programming \cite{ls14,blss20,jswz20}, semi-definite programming \cite{jklps20}, and cutting plane methods \cite{v89,lsw15,jlsw20}.

\paragraph{Kernel methods}

Kernel methods can be thought of as instance-based learners: rather than learning some fixed set of parameters corresponding to the features of their inputs, they instead ``remember'' the $i$-th training example $(x_i,y_i)$ and learn for it a corresponding weight $w_i$. Prediction for unlabeled inputs, i.e., those not in the training set, is treated by the application of similarity function $\k$, called a kernel, between the unlabeled input $x'$ and each of the training inputs $x_i$. 

There are three lines of works that are closely related to our work. First, our work is highly related to the recent discoveries of the connection between deep learning and kernels \cite{dfs16,d17,jgh18,cb18}. Second, our work is closely related to development of connection between leverage score and kernels \cite{rr08,cw17,cmm17,mw17_focs,mw17_nips,ltos18,akmmvz17,akmmvz19,acss20}. Third, our work is related to kernel ridge regression \cite{b13,am15,zdw15,acw17,mm17,znvkr20}.


\paragraph{Convergence of neural network}

There is a long line of work studying the convergence of neural network with random input assumptions \cite{bg17,t17,zsjbd17,s17,ly17,zsd17,dltps18,glm18,brw19}. For a quite while, it is not known to remove the randomness assumption from the input data points. Recently, there is a large number of work studying the convergence of neural network in the over-parametrization regime \cite{ll18,dzps19,als19a,als19b,dllwz19,adhlw19,adhlsw19,sy19,bpsw20}. These results don't need to assume that input data points are random, and only require some much weaker assumption which is called ``data-separable''. Mathematically, it says for any two input data points $x_i$ and $x_j$, we have $\| x_i - x_j \|_2 \geq \delta$. Sufficiently wide neural network requires the width $m$ to be at least $\poly(n,d,L,1/\delta)$, where $n$ is the number of input data points, $d$ is the dimension of input data point, $L$ is the number of layers.

\paragraph{Continuous Fourier transform}
The continuous Fourier transform is defined as a problem \cite{jls20} where you take samples $f(t_1), \cdots , f(t_m)$ from the time domain $f(t) := \sum_{j=1}^n v_j e^{2\pi \i \langle x_j , t \rangle }$, and try to reconstruct function $f : \R^d \rightarrow \C$ or even recover $\{(v_j,x_j)\} \in \C \times \R^d$. The data separation connects to the sparse Fourier transform in the continuous domain. We can view the $n$ input data points \cite{ll18,als19a,als19b} as $n$ frequencies in the Fourier transform \cite{m15,ps15}. The separation of the data set is equivalent to the gap of the frequency set ($\min_{i \neq j} \| x_i - x_j \|_2 \geq \delta$). In the continuous Fourier transform, there are two families of algorithms: one requires to know the frequency gap \cite{m15,ps15,cm20,jls20} and the other doesn't \cite{ckps16}. However, in the over-parameterized neural network training, all the existing work requires a gap for the data points.

\paragraph{Notations}
We use $\i$ to denote $\sqrt{-1}$.
For vector $x$, we use $\|x\|_2$ to denote the $\ell_2$ norm of $x$. For matrix $A$, we use $\|A\|$ to denote the spectral norm of $A$ and $\|A\|_F$ to denote the Frobenius norm of $A$. For matrix $A$ and $B$, we use $A \preceq B$ to denote that $B-A$ is positive semi-definite.  For a square matrix, we use $\tr[A]$ to denote the trace of $A$. We use $A^{-1}$ to denote the true inverse of an invertible matrix. We use $A^\dagger$ to denote the pseudo-inverse of matrix $A$. We use $A^\top$ to denote the transpose of matrix $A$. 

\section{Main results}


In this section, we state our results. In Section~\ref{subsubsec:leverage_sampling}, we consider the large-scale kernel ridge regression (KRR) problem. We generalize the Fourier transform result \cite{akmmvz17} of accelerating the running time of solving KRR using the tool of leverage score sampling to a broader class of kernels. In Section~\ref{subsec:app}, we discuss the interesting application of leverage score sampling for training deep learning models due to the connection between regularized neural nets and kernel ridge regression.



\subsection{Kernel approximation with leverage score sampling}
\label{subsubsec:leverage_sampling}
In this section, we generalize the leverage score theory in \cite{akmmvz17}, which analyzes the number of random features needed to approximate kernel matrix under leverage score sampling regime for the kernel ridge regression task. In the next a few paragraphs, we briefly review the settings of classical kernel ridge regression.

Given training data given training data matrix $X=[x_1, \cdots, x_n]^\top \in \R^{n \times d}$, corresponding labels $Y=[y_1,\cdots,y_n]^\top \in \R^n$ and feature map $\phi : \R^d \to \mathcal{F}$, a classical kernel ridge regression problem can be written as\footnote{Strictly speaking, the optimization problem should be considered in a hypothesis space defined by the reproducing kernel Hilbert space associated with the feature/kernel. Here, we use the notation in finite dimensional space for simplicity.}
\begin{align*}
	\min_{\beta} \frac{1}{2}\| Y - \phi(X)^\top \beta \|_2^2 + \frac{1}{2}\lambda\|\beta\|_2^2
\end{align*}
where $\lambda>0$ is the regularization parameter. By introducing the corresponding kernel function $\k(x,z) = \langle \phi(x), \phi(z) \rangle$ for any data $x , z \in \R^d$, the output estimate of the kernel ridge regression for any data $x \in \R^d$ can be denoted as $f^*(x) = \k(x,X)^\top \alpha$,
where $\alpha \in \R^n$ is the solution to 
\begin{align*}
(K + \lambda I_n) \alpha = Y.
\end{align*}
Here $K \in \R^{n \times n}$ is the kernel matrix with $K_{i,j} = \k(x_i,x_j)$, $\forall i, j \in [n] \times [n]$.

Note a direct computation involves $(K+\lambda I_n)^{-1}$, whose $O(n^3)$ running time can be fairly large in tasks like neural network due to the large number of training data. Therefore, we hope to construct feature map $\phi : \R^d \to \R^{s}$, such that the new feature approximates the kernel matrix well in the sense of 
\begin{align}\label{eq:leverage_goal}
 (1-\epsilon) \cdot (K + \lambda I_n) \preceq \Phi \Phi^\top + \lambda I_n \preceq (1+\epsilon) \cdot ( K + \lambda I_n ),
\end{align}
where $\epsilon\in(0,1)$ is small and $\Phi =  [ \phi(x_1),\cdots,\phi(x_n)]^\top \in\R^{n\times s}$. Then by Woodbury matrix equality, we can approximate the solution by $u^*(z)= \phi (z)^\top ( \Phi^\top \Phi +\lambda I_s)^{-1} \Phi^\top Y $, which can be computed in $O(ns^2+n^2)$ time. In the case $s= o(n)$, computational cost can be saved.

In this work, we consider a generalized setting of \cite{akmmvz17} as a kernel ridge regression problem with positive definite kernel matrix $\k: \R^d \times \R^d \to \R$ of the form
\begin{align}\label{eq:lev_kernel_intro}
  \k(x,z) = \E_{w \sim p}[\phi(x,w)^\top\phi(z,w)],
\end{align}
where $\phi:\R^{d} \times \R^{d_1}\to \R^{d_2}$ denotes a finite dimensional vector and $p:\R^{d_1}\to\R_{\geq 0}$ denotes a probability density function. 


Due to the regularization $\lambda>0$ in this setting, instead of constructing the feature map directly from the distribution $q$, we consider the following ridge leveraged distribution:



\begin{definition}[Ridge leverage function]\label{def:leverage_score_intro}
Given data $x_1,\cdots,x_n\in\R^d$ and parameter $\lambda >0$, we define the ridge leverage function as
\begin{align*}
  q_\lambda(w) = p(w) \cdot \Tr[\Phi(w)^\top ( K + \lambda I_n )^{-1} \Phi(w)],
\end{align*}
where $p(\cdot)$,~$\phi$ are defined in Eq.~\eqref{eq:lev_kernel_intro}, and $\Phi(w) = [\phi(x_1,w)^\top, \cdots, \phi(x_n,w)^\top]^\top\in\R^{n\times d_2}$. Further, we define statistical dimension $s_{\lambda}(K)$ as
\begin{align}\label{eq:stat_dim_lev}
	s_{\lambda}(K) = \int q_{\lambda}(w) \d w = \Tr[(K+\lambda I_n)^{-1} K].
\end{align}
\end{definition}
The leverage score sampling distribution $q_{\lambda}/s_{\lambda}(K)$ takes the regularization term into consideration and achieves Eq.~\eqref{eq:leverage_goal} using the following modified random features vector:
\begin{definition}[Modified random features]\label{def:modify_random_feature_lev}
Given any probability density function $q(\cdot)$ whose support includes that of $p(\cdot)$. Given $m$ vectors $w_1, \cdots, w_m \in \R^{d_1}$, we define modified random features $\bar{\Psi}\in\R^{n\times md_2}$ as
$
  \bar{\Psi} :=  [\bar{\varphi}(x_1),\cdots,\bar{\varphi}(x_n)]^\top,
$
where 
\begin{align*}
 \bar{\varphi}(x) = \frac{1}{\sqrt{m}} \left[ \frac{\sqrt{p(w_1)}}{ \sqrt{q(w_1)} }\phi(x,w_1)^\top , \cdots, \frac{\sqrt{p(w_m)}}{ \sqrt{q(w_m)} }\phi(x,w_m)^\top \right]^\top.
\end{align*}
\end{definition}
Now we are ready to present our result.
\begin{theorem}[Kernel approximation with leverage score sampling, generalization of Lemma 8 in \cite{akmmvz17}]\label{thm:leverage_score_intro}
Given parameter $\lambda \in (0,\|K\|)$. Let $q_\lambda:\R^{d_1}\to\R_{\geq 0}$ be the leverage score defined in Definition~\ref{def:leverage_score_intro}. Let $\tilde{q}_{\lambda}:\R^{d_1}\rightarrow \R$ be any measurable function such that $\tilde{q}_{\lambda}(w) \geq q_\lambda(w)$ holds for all $w\in \mathbb{R}^{d_1}$. Assume $ s_{\tilde{q}_\lambda} = \int_{\mathbb{R}^{d_1}} \tilde{q}_{\lambda}(w)\d w$ is finite. Let $\bar{q}_{\lambda}(w)=\tilde{q}_{\lambda}(w)/s_{\tilde{q}_\lambda}$. Given any accuracy parameter $\epsilon \in (0,1/2)$ and failure probability $\delta \in ( 0 , 1)$. Let $w_1,\cdots,w_m \in \R^d$ denote $m$ samples draw independently from the distribution associated with the density $\bar{q}_{\lambda}(\cdot)$, and construct the modified random features $\bar{\Psi} \in \R^{n \times md_2}$ as in Definition~\ref{def:modify_random_feature_lev} with $q=\bar{q}_{\lambda}$. Let $s_\lambda(K)$ be the statistical dimension defined in~\eqref{eq:stat_dim_lev}.
If $m \geq 3 \epsilon^{-2} s_{\tilde{q}_\lambda} \ln(16s_{\tilde{q}_\lambda}\cdot s_\lambda(K) / \delta)$, then we have
{\small
\begin{align}\label{eq:leverage_score_thm_intro}
 (1-\epsilon) \cdot (K + \lambda I_n) \preceq \bar{\Psi} \bar{\Psi}^\top + \lambda I_n \preceq (1+\epsilon) \cdot ( K + \lambda I_n )
\end{align}}
 holds with probability at least $1-\delta$. 
\end{theorem}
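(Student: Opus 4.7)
The plan is to rewrite the two-sided Loewner inequality \eqref{eq:leverage_score_thm_intro} as a spectral-norm bound on a sum of independent, mean-zero random matrices and then invoke a matrix Bernstein inequality. Concretely, subtracting $K$ from the quantity inside $(K+\lambda I_n)$ shows that \eqref{eq:leverage_score_thm_intro} is equivalent to
\begin{align*}
\bigl\|(K+\lambda I_n)^{-1/2}\bigl(\bar{\Psi}\bar{\Psi}^\top - K\bigr)(K+\lambda I_n)^{-1/2}\bigr\| \;\le\; \epsilon .
\end{align*}
Writing $\Phi(w)=[\phi(x_1,w)^\top,\ldots,\phi(x_n,w)^\top]^\top$, the construction in Definition~\ref{def:modify_random_feature_lev} gives the natural decomposition $\bar{\Psi}\bar{\Psi}^\top=\sum_{i=1}^m Z_i$ with $Z_i=\tfrac{1}{m}\tfrac{p(w_i)}{\bar{q}_\lambda(w_i)}\Phi(w_i)\Phi(w_i)^\top$, and the change-of-measure identity together with \eqref{eq:lev_kernel_intro} shows $\E[Z_i]=K/m$, so the centered summands $M_i := (K+\lambda I_n)^{-1/2}(Z_i-\E Z_i)(K+\lambda I_n)^{-1/2}$ are i.i.d.\ with mean zero.

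Next I would carry out the two Bernstein bookkeeping bounds. For the almost-sure operator-norm bound, observe that each $Z_i$ is PSD, and for a single $w$,
\begin{align*}
\Tr\!\bigl[(K+\lambda I_n)^{-1/2}\Phi(w)\Phi(w)^\top(K+\lambda I_n)^{-1/2}\bigr] \;=\; \Tr\!\bigl[\Phi(w)^\top(K+\lambda I_n)^{-1}\Phi(w)\bigr] \;=\; q_\lambda(w)/p(w),
\end{align*}
by Definition~\ref{def:leverage_score_intro}. Multiplying by $\tfrac{1}{m}\tfrac{p(w)}{\bar{q}_\lambda(w)}$ and using the domination hypothesis $\tilde q_\lambda \ge q_\lambda$ gives the crucial inequality $\tfrac{p(w)}{\bar q_\lambda(w)} \cdot \tfrac{q_\lambda(w)}{p(w)} = s_{\tilde q_\lambda}\cdot \tfrac{q_\lambda(w)}{\tilde q_\lambda(w)} \le s_{\tilde q_\lambda}$, so each whitened summand has operator norm at most $s_{\tilde q_\lambda}/m$ and hence $\|M_i\|\le 2 s_{\tilde q_\lambda}/m$ a.s. For the variance, I would use $M_i^2 \preceq \|M_i\|\cdot M_i$ plus PSD monotonicity to get $\bigl\|\sum_i \E M_i^2\bigr\|\le s_{\tilde q_\lambda}/m$, and for the intrinsic-dimension parameter, take traces to obtain $\Tr\bigl[\sum_i \E M_i^2\bigr]\le (s_{\tilde q_\lambda}/m)\cdot\Tr[(K+\lambda I_n)^{-1}K] = s_{\tilde q_\lambda}\,s_\lambda(K)/m$.

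Finally, I would apply the intrinsic-dimension matrix Bernstein inequality (e.g.\ Tropp's version) to the sum $\sum_i M_i$, so that the usual ambient-dimension factor $n$ in the log is replaced by the effective dimension $s_\lambda(K)$; solving the resulting tail bound $\le \delta$ for $\epsilon$ yields exactly the sample complexity $m \ge 3\epsilon^{-2} s_{\tilde q_\lambda}\ln(16\,s_{\tilde q_\lambda}\,s_\lambda(K)/\delta)$ stated in the theorem. The main obstacle will be calibrating the intrinsic-dimension Bernstein carefully: the variance estimate is delicate because one has to pass through $\E M_i^2 \preceq \|M_i\|\,\E M_i$ (which uses that $M_i$ is PSD) and then separately control the operator norm and the trace of the variance proxy, since using the crude ambient dimension $n$ would not give the $\log s_\lambda(K)$ factor that is central to the statement. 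Once these two complementary bounds are in hand, the rest of the argument reduces to the standard tail-to-sample-complexity conversion.
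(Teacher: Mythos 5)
Your proposal follows essentially the same route as the paper's proof: whiten by $(K+\lambda I_n)^{-1/2}$, decompose $\bar{\Psi}\bar{\Psi}^\top$ into an average of i.i.d. rank-bounded PSD matrices, bound the summand's operator norm by $s_{\tilde q_\lambda}$ using $\Tr[\Phi(w)^\top(K+\lambda I_n)^{-1}\Phi(w)]=q_\lambda(w)/p(w)$ and $\tilde q_\lambda\ge q_\lambda$, bound the variance proxy by $s_{\tilde q_\lambda}$ times $(K+\lambda I_n)^{-1/2}K(K+\lambda I_n)^{-1/2}$, and invoke intrinsic-dimension matrix Bernstein so that $\Tr/\|\cdot\|$ of that proxy produces the $s_\lambda(K)$ factor in the logarithm; the paper does exactly this, working with the uncentered $Y_r$ and Lemma~7 of \cite{akmmvz17} rather than pre-centered $M_i$, which is a cosmetic difference. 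One slip to fix: you write ``$\E M_i^2 \preceq \|M_i\|\,\E M_i$ (which uses that $M_i$ is PSD),'' but the centered $M_i$ is neither PSD nor has nonzero mean, so that inequality is vacuous. The correct chain is to apply $A^2\preceq\|A\|A$ to the \emph{uncentered} whitened summand $\tilde Y_i:=(K+\lambda I_n)^{-1/2}Z_i(K+\lambda I_n)^{-1/2}\succeq 0$, giving $\E M_i^2\preceq\E\tilde Y_i^2\preceq (s_{\tilde q_\lambda}/m)\,\E\tilde Y_i$, which yields the numbers you state.
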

\begin{remark}
Above results can be generalized to the complex domain $\C$. Note for the random Fourier feature case discussed in \cite{akmmvz17}, we have $d_1 = d$, $d_2 = 1$, $\phi(x,w) = e^{ -2 \pi \i w^\top x } \in \C$ and $p(\cdot)$ denotes the Fourier transform density distribution, which is a special case in our setting.
\end{remark}

\subsection{Application in training regularized neural network}\label{subsec:app}
In this section, we consider the application of leverage score sampling in training $\ell_2$ regularized neural networks.

Past literature such as \cite{dzps19},\cite{adhlsw19} have already witnessed the equivalence between training a neural network and solving a kernel regression problem in a broad class of network models. In this work, we first generalize this result to the regularization case, where we connect regularized neural network with kernel ridge regression. Then we apply the above discussed the leverage score sampling theory for KRR to the task of training neural nets.

\subsubsection{Equivalence I, training with random Gaussian initialization}
\label{subsubsec:equivalence1}
To illustrate the idea, we consider a simple model two layer neural network with ReLU activation function as in \cite{dzps19,sy19}\footnote{Our results directly extends to multi-layer deep neural networks with all layers trained together}. 
\begin{align*}
f_{\nn} (W, a, x) = \frac{1}{\sqrt{m}} \sum_{r=1}^m a_r \sigma (w_r^\top x) \in \R,
\end{align*}
where $x \in \R^d$ is the input, $w_r \in \R^d,~r\in[m]$ is the weight vector of the first layer, $W = [w_1, \cdots, w_m]\in\R^{d \times m}$, $a_r \in \R,~r\in[m]$ is the output weight, $a = [a_1, \cdots, a_m]^\top$ and $\sigma(\cdot)$ is the ReLU activation function: $\sigma(z) = \max\{0,z\}$. 



Here we consider only training the first layer $W$ with fixed $a$, so we also write $f_{\nn}(W,x) = f_{\nn}(W, a, x)$. Again, given training data matrix $X=[x_1,\cdots,x_n]^\top\in\R^{n\times d}$ and labels $Y=[y_1,\cdots,y_n]^\top\in\R^n$, we denote $f_{\nn}(W, X) = [f_{\nn}(W, x_1),\cdots, f_{\nn}(W, x_n)]^\top\in\R^n$. We formally define training neural network with $\ell_2$ regularization as follows:

\begin{definition}[Training neural network with regularization]\label{def:nn_intro}
Let $\kappa\in(0,1]$ be a small multiplier\footnote{To establish the training equivalence result, we assign $\kappa = 1$ back to the normal case. For the training equivalence result, we pick $\kappa>0$ to be a small multiplier only to shrink the initial output of the neural network. The is the same as what is used in \cite{akmmvz17}.}. Let $\lambda\in(0,1)$ be the regularization parameter. We initialize the network as $a_r\overset{i.i.d.}{\sim} \unif[\{-1,1\}]$ and $w_r(0)\overset{i.i.d.}{\sim} \N(0,I_d)$. Then we consider solving the following optimization problem using gradient descent:
\begin{align}\label{eq:nn_intro}
\min_{W} \frac{1}{2}\| Y - \kappa f_{\nn}(W,X) \|_2 + \frac{1}{2}\lambda\|W\|_F^2.
\end{align}
Let $w_r(t),r\in[m]$ be the network weight at iteration $t$. We denote the training data predictor at iteration $t$ as
$u_{\nn}(t) = \kappa f_{\nn}(W(t),X) \in\R^n$.
Further, given any test data $x_{\test}\in\R^d$, we denote 
$u_{\nn,\test}(t) = \kappa f_{\nn}(W(t),x_{\test}) \in \R$
as the test data predictor at iteration $t$.
\end{definition}
On the other hand, we consider the following neural tangent kernel ridge regression problem:
\begin{align}\label{eq:krr_intro}
\min_{\beta} \frac{1}{2}\| Y - \kappa f_{\ntk}(\beta,X) \|_2^2 + \frac{1}{2}\lambda\|\beta\|_2^2,
\end{align}
where $\kappa,\lambda$ are the same parameters as in Eq.~\eqref{eq:nn_intro}, $f_{\ntk}(\beta,x) = \Phi(x)^\top \beta \in \R$ and $f_{\ntk}(\beta,X) = [f_{\ntk}(\beta,x_1),\cdots,f_{\ntk}(\beta,x_n)]^\top\in\R^{n}$ are the test data predictors. Here, $\Phi$ is the feature map corresponding to the neural tangent kernel (NTK):
\begin{align}\label{eq:ntk_lev}
	\k_{\ntk}(x, z) = \E \left[\left\langle \frac{\partial f_{\nn}(W,x)}{\partial W},\frac{\partial f_{\nn}(W,z)}{\partial W} \right\rangle \right]
\end{align}
where $x,z \in \R^d$ are any input data, and the expectation is taken over $w_r\overset{i.i.d.}{\sim} \N(0,I),~r=1, \cdots, m$. 

Under the standard assumption $\k_{\ntk}$ being positive definite, the problem Eq.~\eqref{eq:krr_intro} is a strongly convex optimization problem with the optimal predictor $u^*=\kappa^2 H^{\cts}(\kappa^2 H^{\cts}+\lambda I)^{-1}Y$ for training data, and the corresponding predictor $u_{\test}^* = \kappa^2 \k_{\ntk}(x_{\test}, X)^\top(\kappa^2 H^{\cts}+\lambda I)^{-1}Y$ for the test data $x_{\test}$, where $H^{\cts}\in\R^{n\times n}$ is the kernel matrix with $[H^{\cts}]_{i,j} = \k_{\ntk}(x_i, x_j)$.

We connect the problem Eq.~\eqref{eq:nn_intro} and Eq.~\eqref{eq:krr_intro} by building the following equivalence between their training and test predictors with polynomial widths:

\begin{theorem}[Equivalence between training neural net with regularization and kernel ridge regression for training data prediction]\label{thm:main_train_equivalence_intro}
Given any accuracy $\epsilon \in ( 0 , 1/10 )$ and failure probability $\delta \in (0,1/10)$. Let multiplier $\kappa = 1$, number of iterations $T=\wt{O}(\frac{1}{\Lambda_0})$, network width $m \geq \wt{O}(\frac{n^4d}{\Lambda_0^4\epsilon})$ and the regularization parameter $\lambda \leq \wt{O}(\frac{1}{\sqrt{m}})$. Then with probability at least $1-\delta$ over the Gaussian random initialization, we have
\begin{align*}
	\|u_{\nn}(T) - u^*\|_2 \leq \epsilon.
\end{align*}
Here $\wt{O}(\cdot)$ hides $\poly\log(n/(\epsilon \delta \Lambda_0 ))$.
\end{theorem}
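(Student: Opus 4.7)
The plan is to sandwich the neural-network trained predictor $u_{\nn}(T)$ and the kernel ridge regression optimum $u^{*}$ between a common reference: the continuous-time gradient flow of the regularized KRR,
\[
\frac{d}{dt} u^{\mathrm{id}}(t) \;=\; -H^{\cts}\bigl(u^{\mathrm{id}}(t) - Y\bigr) \;-\; \lambda \, u^{\mathrm{id}}(t), \qquad u^{\mathrm{id}}(0) = 0,
\]
whose unique fixed point is precisely $u^{*} = H^{\cts}(H^{\cts}+\lambda I)^{-1}Y$. I would then split $\|u_{\nn}(T) - u^{*}\|_2 \le \|u_{\nn}(T) - u^{\mathrm{id}}(T)\|_2 + \|u^{\mathrm{id}}(T) - u^{*}\|_2$ and bound each piece by $\epsilon/2$. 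The second term is routine: directly solving the idealized ODE gives $\|u^{\mathrm{id}}(T) - u^{*}\|_2 \le e^{-T \Lambda_0}\|Y\|_2$, which is $\le \epsilon/2$ whenever $T = \wt O(\log(n/\epsilon)/\Lambda_0)$.

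The first term is where overparametrization enters. I would prove it by a simultaneous induction on $t = 0,1,\dots,T$ maintaining three invariants: (a) weight drift $\|w_r(t) - w_r(0)\|_2 \le R := \wt O(\sqrt{n}/(\sqrt{m}\,\Lambda_0))$, (b) empirical-kernel stability $\|H(t) - H^{\cts}\| \le \Lambda_0 / 2$, and (c) predictor tracking $\|u_{\nn}(t) - u^{\mathrm{id}}(t)\|_2 \le \epsilon/2$. The base case $t = 0$ uses two concentration facts: an entrywise Hoeffding bound on $H(0)_{ij} = \frac{1}{m}\sum_r x_i^\top x_j\,\mathbf{1}\{w_r(0)^\top x_i \ge 0,\, w_r(0)^\top x_j \ge 0\}$ yields $\|H(0) - H^{\cts}\|_F \le \wt O(n/\sqrt{m}) \le \Lambda_0/4$ under the width hypothesis, and concentration over the $a_r$ signs gives $\|u_{\nn}(0)\|_2 \le \wt O(\sqrt{n/m})$. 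The inductive step for (b) invokes the small-$R$ ReLU activation-pattern argument of Du et al., concluding $\|H(t) - H(0)\| \le \wt O(nR) \le \Lambda_0/4$; the step for (a) uses the contraction from (b) together with a step-size $\eta = O(1/\|H^{\cts}\|)$ to bound the cumulative movement.

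The per-step tracking (c) is where regularization enters nontrivially. I would exploit first-layer $1$-homogeneity of the ReLU, which gives the identity $J(W) W = f_{\nn}(W, X) = u_{\nn}(t)$ (recall $\kappa = 1$). Unrolling the gradient step through this identity produces
\[
u_{\nn}(t+1) \;=\; u_{\nn}(t) \;+\; \eta H(t)(Y - u_{\nn}(t)) \;-\; \eta \lambda \, u_{\nn}(t) \;+\; E(t),
\]
where $E(t)$ collects a second-order Taylor residual and the contribution from sign-flipped neurons, each bounded by $\wt O(\eta^{2} n + \eta n R)$. Comparing this recursion term-by-term with the Euler discretization of the idealized flow, and invoking (b) for kernel stability, a discrete Gr\"onwall-type step with contraction rate $\Lambda_0$ closes the induction and yields (c).

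The hard part will be the regularizer $\lambda W$: it is a restoring force toward the origin, not toward $W(0)$, and a priori could drive the iterate out of the lazy-training regime where $J(W)$ is approximately constant. The saving observation is exactly the homogeneity identity above, which converts the $\lambda W$ pull on the weights into precisely the $\lambda u_{\nn}$ shrinkage appearing in the KRR flow, so the two dynamics have the same form modulo kernel perturbations. Making this quantitative forces the stated coupling between width and regularization: $\lambda \le \wt O(1/\sqrt{m})$ is needed so that $\lambda \|W(0)\|_F$ (and hence the contribution of regularization to the initial gradient and to the weight drift) is negligible, while $m \ge \wt O(n^{4} d / (\Lambda_0^{4} \epsilon))$ is needed so that $R$ is small enough for invariant (b) to close. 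Balancing these constraints cleanly is the delicate bookkeeping step of the proof.
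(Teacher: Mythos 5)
Your route—sandwiching $u_{\nn}(T)$ and $u^*$ through the idealized KRR gradient flow $u^{\mathrm{id}}(t)$ (the paper's $u_{\ntk}(t)$)—is different from what the paper does for this training-data theorem. The paper compares $u_{\nn}(t)$ \emph{directly} to the fixed point $u^*$ via the one-sided Lyapunov inequality of Lemma~\ref{lem:linear_converge_nn}, whose residual term carries the factor $\|Y - u^*\|_2 = O(\lambda\sqrt{n}/\Lambda_0)$, which is tiny precisely because $\lambda = \wt{O}(1/\sqrt{m})$; no intermediate flow is needed. Your decomposition is the one the paper reserves for the \emph{test}-data version (Theorem~\ref{thm:main_test_equivalence_intro}), where a direct comparison to a fixed target is unavailable. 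Your observation that $1$-homogeneity of ReLU turns $\langle\partial_W f,\lambda W\rangle$ into $\lambda f$ is exactly the identity established in Lemma~\ref{lem:gradient_flow_of_nn}, and the rest of the induction machinery (drift bound, kernel stability, Gr\"onwall contraction) is the same in spirit as Lemma~\ref{lem:induction}.

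However, there is a concrete error that breaks the base case of your induction. You claim $\|u_{\nn}(0)\|_2 \le \wt{O}(\sqrt{n/m})$, but the correct scaling is $\wt{O}(\sqrt{n})$: each entry is $[u_{\nn}(0)]_i = m^{-1/2}\sum_{r=1}^m a_r\sigma(w_r(0)^\top x_i)$, Hoeffding over the $a_r$ signs gives $\big|\sum_r a_r\sigma(w_r(0)^\top x_i)\big| = \wt{O}(\sqrt{m})$, and the $m^{-1/2}$ prefactor cancels this exactly, leaving $\wt{O}(1)$ per coordinate (Lemma~\ref{lem:epsilon_init}). Since $u^{\mathrm{id}}(0)=0$ and $\kappa=1$ here, your invariant (c), $\|u_{\nn}(t)-u^{\mathrm{id}}(t)\|_2\le\epsilon/2$, is false at $t=0$: its left-hand side is $\wt{O}(\sqrt{n})\gg\epsilon$. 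The fix is to track a time-decaying quantity such as $\|u_{\nn}(t)-u^{\mathrm{id}}(t)\|_2\le e^{-\Lambda_0 t/4}\|u_{\nn}(0)\|_2+\epsilon/4$, trivially true at $t=0$ and below $\epsilon/2$ by $T=\wt{O}(1/\Lambda_0)$; this is exactly the role played by the $\max\{\cdot,\cdot\}$ form of conclusion~3 in Lemma~\ref{lem:induction}. Relatedly, your $R=\wt{O}(\sqrt{n}/(\sqrt{m}\Lambda_0))$ is off by a factor $\sqrt{n}$: the dominant drift contribution is $\int_0^T m^{-1/2}\sqrt{n}\,\|Y-u_{\nn}(\tau)\|_2\,\d\tau = \wt{O}\!\big(n/(\sqrt{m}\Lambda_0)\big)$ because $\|Y-u_{\nn}(\tau)\|_2=\wt{O}(\sqrt{n})$ for small $\tau$.
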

We can further show the equivalence between the test data predictors with the help of the multiplier $\kappa$.
\begin{theorem}[Equivalence between training neural net with regularization and kernel ridge regression for test data prediction]\label{thm:main_test_equivalence_intro}
Given any accuracy $\epsilon \in (0,1/10)$ and failure probability $\delta \in (0,1/10)$. Let multiplier $\kappa = \wt{O}(\frac{\epsilon\Lambda_0}{n})$, number of iterations $T=\wt{O}(\frac{1}{\kappa^2\Lambda_0})$, network width $m \geq \wt{O}(\frac{n^{10}d}{\epsilon^6\Lambda_0^{10}})$ and regularization parameter $\lambda \leq \wt{O}(\frac{1}{\sqrt{m}})$. Then with probability at least $1-\delta$ over the Gaussian random initialization, we have
\begin{align*}
\| u_{\nn,\test}(T) - u_{\test}^* \|_2 \leq \epsilon.
\end{align*}
Here $\wt{O}(\cdot)$ hides $\poly\log(n/(\epsilon \delta \Lambda_0 ))$.
\end{theorem}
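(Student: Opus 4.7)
The plan is to reduce test prediction equivalence to the already available training equivalence plus a perturbation argument on the test feature $\nabla_W f_{\nn}(W(t),x_{\test})$. Writing the discrete gradient descent update $W(t+1) = W(t) - \eta(\nabla_W L)(W(t))$ explicitly, the test predictor telescopes as
\begin{align*}
u_{\nn,\test}(T) \;=\; \kappa f_{\nn}(W(0),x_{\test}) \;-\; \eta \sum_{t=0}^{T-1} \kappa \bigl\langle \nabla_W f_{\nn}(W(t),x_{\test}),\, \nabla_W L(W(t))\bigr\rangle.
\end{align*}
Similarly, the KRR solution $u_{\test}^{*}$ can be expanded as the sum of the gradient-flow trajectory of the linearized model evaluated at $x_{\test}$ using the NTK feature $\Phi(x_{\test}) = \nabla_W f_{\nn}(W(0),x_{\test})$. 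First I would bound the initial term $\kappa f_{\nn}(W(0),x_{\test})$: by independence of $a_r$ and standard sub-Gaussian concentration this is $O(\kappa \sqrt{\log(1/\delta)})$ with high probability, which is $O(\epsilon)$ by our choice of $\kappa = \wt O(\epsilon \Lambda_0/n)$.

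Next I would control the \emph{feature perturbation} $\|\nabla_W f_{\nn}(W(t),x_{\test}) - \nabla_W f_{\nn}(W(0),x_{\test})\|_2$ throughout training. The training-equivalence theorem gives (as a by-product of its proof) that $\|W(t) - W(0)\|_F$ stays bounded by $\wt O(\sqrt{n}/\Lambda_0)$ times a factor depending on $\kappa$ and the initial residual. Because ReLU gradients only change at sign flips, a standard union-bound over the $m$ neurons shows that the number of flipped ReLU patterns for a fixed $x_{\test}$ is $\wt O(m \cdot \|W(t)-W(0)\|_F / \sqrt{m})$, which translates into a feature-perturbation bound of order $\wt O(\|W(t)-W(0)\|_F / \sqrt{m})$. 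Plugging in the width $m \geq \wt O(n^{10} d / (\epsilon^6 \Lambda_0^{10}))$ keeps this term small enough. Combined with the NTK concentration $\|\k_{\nn,\emp}(x_{\test},X) - \k_{\ntk}(x_{\test},X)\|_2 \leq \wt O(1/\sqrt{m})$ from the usual NTK analysis, the accumulated error over $T = \wt O(1/(\kappa^2 \Lambda_0))$ iterations is bounded by
\begin{align*}
\sum_{t=0}^{T-1} \eta \,\kappa\, \bigl(\text{feature error at time } t\bigr)\, \|u_{\nn}(t) - Y\|_2.
\end{align*}

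The main obstacle, and the reason the width requirement in the theorem is so much larger than in the training-equivalence theorem, is controlling this sum. A single test direction does not enjoy the averaging that training directions do in the training-data Gram matrix, so the per-step feature error must be amplified by the worst-case size of the residual $\|u_{\nn}(t) - Y\|_2 = O(\|Y\|_2) = O(\sqrt{n})$, multiplied by the time horizon $T = \wt O(1/(\kappa^2 \Lambda_0))$. To collapse this back to $O(\epsilon)$, one needs $m$ to absorb a factor of $n^2/(\kappa^4 \Lambda_0^4)$ beyond what training equivalence requires; after plugging $\kappa = \wt O(\epsilon \Lambda_0/n)$ this produces the $n^{10} d / (\epsilon^6 \Lambda_0^{10})$ width. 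Finally, with these perturbation and concentration bounds in hand, a telescoping comparison against the closed-form solution $u_{\test}^{*} = \kappa^2 \k_{\ntk}(x_{\test},X)^{\top}(\kappa^2 H^{\cts} + \lambda I)^{-1} Y$, using that $\lambda \leq \wt O(1/\sqrt{m})$ is small compared to the spectrum of $\kappa^2 H^{\cts}$ on the relevant scale set by the training equivalence, yields $\|u_{\nn,\test}(T) - u_{\test}^{*}\|_2 \leq \epsilon$ with the desired probability.
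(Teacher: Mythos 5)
Your overall strategy (telescope the test predictor, then accumulate per-step error) is broadly parallel to the paper's decomposition $|u_{\nn,\test}(T)-u_{\test}^*|\le|u_{\nn,\test}(T)-u_{\ntk,\test}(T)|+|u_{\ntk,\test}(T)-u_{\test}^*|$, but your accumulated-error formula is missing a term, and the omission is not cosmetic. When one differences the (gradient-flow) dynamics
\begin{align*}
\frac{\d u_{\nn,\test}}{\d t}&=-\kappa^2\k_t(x_{\test},X)^\top(u_{\nn}(t)-Y)-\lambda u_{\nn,\test}(t),\\
\frac{\d u_{\ntk,\test}}{\d t}&=-\kappa^2\k_{\ntk}(x_{\test},X)^\top(u_{\ntk}(t)-Y)-\lambda u_{\ntk,\test}(t),
\end{align*}
the resulting integrand splits into two pieces: (i) $\kappa^2(\k_{\ntk}(x_{\test},X)-\k_t(x_{\test},X))^\top(u_{\ntk}(t)-Y)$, which is the feature-perturbation term you track, and (ii) $\kappa^2\k_t(x_{\test},X)^\top(u_{\ntk}(t)-u_{\nn}(t))$, which measures the drift between the neural-net training residual and the linearized-model training residual. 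Your bound $\sum_t \eta\,\kappa\,(\text{feature error at }t)\,\|u_{\nn}(t)-Y\|_2$ only captures (i). Piece (ii) does not vanish even when the test feature at time $t$ exactly matches the NTK feature, and it is not controlled by the training-equivalence theorem as stated: that theorem only bounds $\|u_{\nn}(T)-u^*\|_2$ at the \emph{final} time $T$ against the \emph{limit} $u^*$, not $\|u_{\nn}(t)-u_{\ntk}(t)\|_2$ uniformly over the trajectory. The paper closes this by a separate Gronwall-type argument (its Claim C): one first shows $\|u_{\ntk}(t)-u_{\nn}(t)\|_2\le\sqrt{n}\,\epsilon_{\init}+T\kappa^2\epsilon_H(\|u^*\|_2+\|u^*-Y\|_2)$ using the training Gram-matrix perturbation $\|H(\tau)-H^{\cts}\|\le\epsilon_H$ and then multiplies by $\kappa^2\max_t\|\k_t(x_{\test},X)\|_2\cdot T$. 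Without an analogous lemma controlling the intermediate training-predictor divergence, the telescope does not close, and the stated width cannot be justified by feature perturbation alone.

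A secondary point: the quantity that governs ReLU sign flips is $\max_r\|w_r(t)-w_r(0)\|_2$, not $\|W(t)-W(0)\|_F/\sqrt{m}$; these agree only if per-neuron movement is essentially uniform, which needs to be argued (the paper's induction directly bounds the per-neuron movement $\epsilon_W$ and never passes through the Frobenius norm). And since the paper works with continuous gradient flow while you work with discrete GD with step $\eta$, your version additionally incurs a discretization error that must be shown to be lower order; otherwise the claimed accounting of the width requirement will not match.
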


\subsubsection{Equivalence II, training with leverage scores}
\label{subsubsec:equivalence2}
To apply the leverage score theory discussed in Section~\ref{subsubsec:leverage_sampling}, Note the definition of the neural tangent kernel is exactly of the form:
\begin{align*}
    \k_{\ntk}(x, z)  = \E \left[\left\langle \frac{\partial f_{\nn}(W,x)}{\partial W},\frac{\partial f_{\nn}(W,z)}{\partial W} \right\rangle \right] 
     = \E_{w \sim p}[\phi(x,w)^\top\phi(z,w)]
\end{align*}
where $\phi(x,w) = x\sigma'(w^\top x)\in\R^{d}$ and $p(\cdot)$ denotes the probability density function of standard Gaussian distribution $\N(0,I_d)$. Therefore, we try to connect the theory of training regularized neural network with leverage score sampling. Note the width of the network corresponds to the size of the feature vector in approximating the kernel. Thus, the smaller feature size given by the leverage score sampling theory helps us build a smaller upper-bound on the width of the neural nets.

Specifically, given regularization parameter $\lambda>0$, we can define the ridge leverage function with respect to neural tangent kernel $H^{\cts}$ defined in Definition~\ref{def:leverage_score_intro} as
\begin{align*}
	q_\lambda(w) = p(w) \Tr[\Phi(w)^\top ( H^{\cts} + \lambda I_n )^{-1} \Phi(w)]
\end{align*}
and corresponding probability density function 
\begin{align}\label{eq:lev_dis_intro}
q(w)=\frac{q_{\lambda}(w)}{s_{\lambda}(H^{\cts})}
\end{align}
where $\Phi(w) = [\phi(x_1,w)^\top, \cdots, \phi(x_n,w)^\top]^\top\in\R^{n\times d_2}$.

We consider training the following reweighed neural network using leverage score initialization:
\begin{definition}[Training reweighed neural network with regularization]\label{def:f_nn_lev_intro}
Let $\kappa\in(0,1]$ be a small multiplier. Let $\lambda\in(0,1)$ be the regularization parameter. Let $q(\cdot):\R^d\to\R_{> 0}$ defined in~\eqref{eq:lev_dis_intro}. Let $p(\cdot)$ denotes the probability density function of Gaussian distribution $\N(0,I_d)$. We initialize the network as $a_r\overset{i.i.d.}{\sim} \unif[\{-1,1\}]$ and $w_r(0)\overset{i.i.d.}{\sim} q$. Then we consider solving the following optimization problem using gradient descent:
\begin{align}\label{problem:nn_lev_intro}
\min_{W} \frac{1}{2}\| Y - \kappa \bar{f}_{\nn}(W,X) \|_2 + \frac{1}{2}\lambda\|W\|_F^2.
\end{align}
where 
\begin{align*}
\bar{f}_{\nn}(W,x) = \frac{1}{\sqrt{m}} \sum_{r=1}^m a_r \sigma (w_r^\top X) \sqrt{\frac{p(w_r(0))}{q(w_r(0))}} \text{~and~}\bar{f}_{\nn}(W,X) = [\bar{f}_{\nn}(W,x_1),\cdots,\bar{f}_{\nn}(W,x_n)]^\top .
\end{align*} 

We denote $w_r(t),r\in[m]$ as the estimate weight at iteration $t$. We denote $\bar{u}_{\nn}(t) = \kappa \bar{f}_{\nn}(W(t),X)$ as the training data predictor at iteration $t$. Given any test data $x_{\test}\in\R^d$, we denote $\bar{u}_{\nn,\test}(t) = \kappa \bar{f}_{\nn}(W(t),x_{\test})$ as the test data predictor at iteration $t$.
\end{definition}
We show that training this reweighed neural net with leverage score initialization is still equivalence to the neural tangent kernel ridge regression problem~\eqref{eq:krr_intro} as in following theorem:
\begin{theorem}[Equivalence between training reweighed neural net with regularization and kernel ridge regression for training data prediction]\label{thm:equivalence_train_lev_intro}
Given any accuracy $\epsilon\in(0,1)$ and failure probability $\delta\in(0,1/10)$. Let multiplier $\kappa=1$, number of iterations $T=O(\frac{1}{\Lambda_0}\log(\frac{1}{\epsilon}))$, network width $m = \poly(\frac{1}{\Lambda_0},n,d,\frac{1}{\epsilon},\log(\frac{1}{\delta}))$ and regularization parameter $\lambda = \wt{O}(\frac{1}{\sqrt{m}})$. Then with probability at least $1-\delta$ over the random leverage score initialization, we have
\begin{align*}
	\|\bar{u}_{\nn}(T) - u^*\|_2 \leq \epsilon.
\end{align*}
Here $\wt{O}(\cdot)$ hides $\poly\log(n/(\epsilon \delta \Lambda_0 ))$.
\end{theorem}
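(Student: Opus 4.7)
The plan is to mirror the proof strategy of Theorem~\ref{thm:main_train_equivalence_intro} (the Gaussian-initialization case), but replace the random-feature concentration step with the leverage-score kernel approximation guarantee from Theorem~\ref{thm:leverage_score_intro}. The reweighting factor $\sqrt{p(w_r(0))/q(w_r(0))}$ inside $\bar f_{\nn}$ is precisely the modified random features construction of Definition~\ref{def:modify_random_feature_lev} applied to the NTK feature map $\phi(x,w)=x\sigma'(w^\top x)$; this is the crucial observation that lets the whole Gaussian-initialization machinery be transported to the leverage-score setting.

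First, I would analyze the network at initialization. Define the empirical Gram matrix $\bar{H}(0)\in\R^{n\times n}$ induced by $\bar f_{\nn}$, whose $(i,j)$ entry is $\frac{1}{m}\sum_{r=1}^m \frac{p(w_r(0))}{q(w_r(0))}\,\sigma'(w_r(0)^\top x_i)\sigma'(w_r(0)^\top x_j)\,x_i^\top x_j$. Since $w_r(0)\overset{i.i.d.}{\sim} q=q_\lambda/s_\lambda(H^{\cts})$, Theorem~\ref{thm:leverage_score_intro} applies with $\tilde q_\lambda=q_\lambda$, $d_2=d$, and $\Phi(w)$ the NTK feature matrix. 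It yields, for $m\ge \wt O(s_\lambda(H^{\cts})/\epsilon^2)$, the spectral sandwich $(1-\epsilon)(H^{\cts}+\lambda I)\preceq \bar H(0)+\lambda I\preceq (1+\epsilon)(H^{\cts}+\lambda I)$ with high probability, so in particular $\lambda_{\min}(\bar H(0))\ge (1-\epsilon)\Lambda_0/2$ provided $\epsilon$ is small and $\lambda\ll\Lambda_0$. This replaces the Gaussian-initialization concentration bound used in Theorem~\ref{thm:main_train_equivalence_intro}.

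Second, I would set up the gradient descent dynamics for~\eqref{problem:nn_lev_intro} and run the standard coupling argument. Writing $\bar u_{\nn}(t)$ as the predictor and letting $\bar H(t)$ be the time-$t$ Gram matrix, the continuous-time (or small-step) update satisfies $\frac{\d}{\d t}\bar u_{\nn}(t)=-\kappa^2\bar H(t)(\bar u_{\nn}(t)-Y)-\lambda\,\bar u_{\nn}(t)+\text{small error}$. With $\kappa=1$ and $\lambda=\wt O(1/\sqrt m)$, one shows by induction on $t$ that (i) the weights stay within an $R=\wt O(\sqrt n/\Lambda_0\cdot\sqrt{p/q}\text{-bound})$ neighborhood of initialization, (ii) $\bar H(t)$ remains $\epsilon$-close to $\bar H(0)$ in spectral norm (by the activation-pattern-flipping argument controlled through $R/\sqrt m$), and (iii) the regularization term contributes a negligible drift of order $\lambda\|W(t)\|_F=\wt O(1)$. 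Combining (i)--(iii) with the strong convexity from $\lambda_{\min}(\bar H(t))\ge \Omega(\Lambda_0)$ gives linear convergence of $\bar u_{\nn}(t)$ to the fixed point of the linearized dynamics, namely $\bar H(0)(\bar H(0)+\lambda I)^{-1}Y$. The spectral approximation of step one then implies this fixed point is $\epsilon$-close to $u^*=H^{\cts}(H^{\cts}+\lambda I)^{-1}Y$. Finally, choose $T=\wt O(\Lambda_0^{-1}\log(1/\epsilon))$ and $m=\poly(\Lambda_0^{-1},n,d,\epsilon^{-1},\log(1/\delta))$ large enough to simultaneously satisfy the leverage-score sample complexity and the over-parametrization bounds.

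The main obstacle is handling the reweighting factor $\sqrt{p(w_r(0))/q(w_r(0))}$ when controlling how far weights move and how much $\bar H(t)$ can drift. Under standard Gaussian initialization, boundedness of $|\sigma'(w_r^\top x_i)|\le 1$ and concentration of $\|w_r(0)\|_2$ make per-neuron contributions $O(1/\sqrt m)$; here the extra factor can be large when $q(w_r(0))\ll p(w_r(0))$. One must therefore either control $\|\sqrt{p/q}\|_\infty$ through the statistical dimension $s_\lambda(H^{\cts})$ and the definition of $q_\lambda$ (since $q_\lambda\ge p\cdot\lambda_{\min}$-type bounds hold on the support that matters) or, more cleanly, track weighted quantities of the form $\sum_r \frac{p(w_r)}{q(w_r)}(\cdot)$ which concentrate under $q$-sampling to their $p$-expectations via the same Bernstein-style argument used in Theorem~\ref{thm:leverage_score_intro}. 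This is precisely where the leverage-score definition pays off: the ratio $p/q$ is bounded in an integrated, matrix-Bernstein sense, even if not pointwise, which is exactly the bound needed both at initialization and throughout the trajectory. The remaining steps are essentially bookkeeping adaptations of the arguments already developed for Theorem~\ref{thm:main_train_equivalence_intro}.
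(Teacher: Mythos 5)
Your proposal is correct and follows essentially the same route as the paper: decompose the error into a training-phase term $\|\bar u_{\nn}(T)-\bar u^*\|_2$ (controlled by the overparametrization/induction argument, paralleling the Gaussian-initialization case) and an initialization-phase term $\|\bar u^*-u^*\|_2$ (controlled by the leverage-score spectral sandwich from Theorem~\ref{thm:leverage_score_intro}), and combine by triangle inequality. The one thing you flag as a potential obstacle, the reweighting factor $\sqrt{p/q}$, is in fact handled exactly by your first suggested option: the paper proves a pointwise bound $c_1 p(w)\le q(w)\le c_2 p(w)$ with $c_1=O(1/n)$, $c_2=O(1/\Lambda_0)$ (using $\Lambda_0 I\preceq H^{\cts}\preceq nI$), and carries this bound through the Bernstein-style kernel-perturbation lemma and the initialization tail bound.
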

\section{Overview of techniques}
\paragraph{Generalization of leverage score theory}

To prove Theorem~\ref{thm:leverage_score_intro}, we follow the similar proof framework as Lemma 8 in \cite{akmmvz17}. 

Let $K+\lambda I_n = V^\top \Sigma^2 V$ be an eigenvalue decomposition of $K+\lambda I_n$. Then conclusion~\eqref{eq:leverage_score_thm_intro} is equivalent to
\begin{align}\label{eq:57_2_intro}
    \| \Sigma^{-1}V\bar{\Psi}\bar{\Psi}^\top V^\top \Sigma^{-1} - \Sigma^{-1}VKV^\top \Sigma^{-1}\| \leq \epsilon
\end{align} 
Let random matrix $Y_r\in\R^{n\times n}$ defined as
\begin{align*}
    Y_r := \frac{p(w_r)}{\bar{q}_{\lambda}(w_r)} \Sigma^{-1}V{\Phi}(w_r){\Phi}(w_r)^\top V^\top \Sigma^{-1}.
\end{align*}
where $\Phi(w)=[\phi(x_1,w),\cdots,\phi(x_n,w)]^\top\in\R^{n\times d_2}$.
Then we have
\begin{align*}
    \E_{\bar{q}_{\lambda}}[Y_l] = \E_{\bar{q}_{\lambda}} \left[ \frac{p(w_r)}{\bar{q}_{\lambda}(w_r)}\Sigma^{-1}V{\Phi}(w_r)\bar{\Phi}(w_r)^\top V^\top \Sigma^{-1} \right]
    =  \Sigma^{-1}V K  V^\top \Sigma^{-1},
\end{align*}
and
\begin{align*}
    \frac{1}{m}\sum_{r=1}^m Y_r =  \frac{1}{m}\sum_{r=1}^m \frac{p(w_r)}{\bar{q}_{\lambda}(w_r)}\Sigma^{-1}V{\Phi}(w_r)\bar{\Phi}(w_r)^\top V^\top \Sigma^{-1}
    =  \Sigma^{-1}V \bar{\Psi}\bar{\Psi}^\top  V^\top \Sigma^{-1}.
\end{align*}
Thus, it suffices to show that 
\begin{align}\label{eq:57_3}
    \left\| \frac{1}{m}\sum_{r=1}^m Y_r -\E_{\bar{q}_{\lambda}}[Y_l] \right\| \leq \epsilon
\end{align}
holds with probability at least $1-\delta$, which can be shown by applying matrix concentration results. Note
\begin{align*}
    \|Y_l\| \leq  s_{\bar{q}_{\lambda}}~~\text{and}~
    \E_{\bar{q}_{\lambda}}[Y_r^2] \preceq s_{\tilde{q}_{\lambda}} \cdot \diag\{\lambda_1/(\lambda_1 +\lambda),\cdots, \lambda_n/(\lambda_n +\lambda)\} .
\end{align*}
Applying matrix concentration Lemma 7 in \cite{akmmvz17}, we complete the proof.

\paragraph{Equivalence between regularized neural network and kernel ridge regression}

To establish the equivalence between training neural network with regularization and neural tangent kernel ridge regression, the key observation is that the dynamic kernel during the training is always close to the neural tangent kernel.

Specifically, given training data $x_1,\cdots,x_n\in\R^d$, we define the dynamic kernel matrix $H(t)\in\R^{n\times n}$ along training process as
\begin{align*}
    [H(t)]_{i,j}
    = \left\langle \frac{\d f_{\nn}(W(t),x_i)}{\d W(t)},\frac{\d f_{\nn}(W(t),x_j)}{\d W(t)} \right\rangle
\end{align*}
Then we can show the gradient flow of training regularized neural net satisfies
\begin{align}
	 \frac{\d \|u^* - u_{\nn}(t)\|_2^2}{\d t}
	= & -2(u^* - u_{\nn}(t))^\top (  H(t)+\lambda I) (u^* - u_{\nn}(t))\label{eq:linear_1_lev}\\
	& + 2  (u_{\nn}(t)-u^*)^\top (H(t) - H^{\cts}) (Y-u^*)\label{eq:linear_2_lev}
\end{align}
where term~\eqref{eq:linear_1_lev} is the primary term characterizing the linear convergence of $u_{\nn}(t)$ to $t^*$, and term~\eqref{eq:linear_2_lev} is the additive term that can be well controlled if $H(t)$ is sufficiently close to $H^{\cts}$. We argue the closeness of $H(t)\approx H^{\cts}$ as the consequence of the following two observations:
\begin{itemize}
	\item{Initialization phase:} At the beginning of the training, $H(0)$ can be viewed as approximating the neural tangent kernel $H^{\cts}$ using finite dimensional random features. Note the size of these random features corresponds to the width of the neural network (scale by the data dimension $d$). Therefore, when the neural network is sufficiently wide, it is equivalent to approximate the neural tangent kernel using sufficient high dimensional feature vectors, which ensures $H(0)$ is sufficiently close to $H^{\cts}$.

    In the case of leverage score initialization, we further take the regularization into consideration. We use the tool of leverage score to modify the initialization distribution and corresponding network parameter, to give a smaller upper-bound of the width of the nets needed. 
	\item{Training phase:} If the net is sufficiently wide, we can observe the over-parametrization phenomenon such that the weight estimate $W(t)$ at time $t$ will be sufficiently close to its initialization $W(0)$, which implies the dynamic kernel $H(t)$ being sufficiently close to $H(0)$. Due to the fact $H(0)\approx H^{\cts}$ argued in initialization phase, we have $H(t)\approx H^{\cts}$ throughout the algorithm.
\end{itemize}

Combining both observations, we are able to iteratively show the (nearly) linear convergence property of training the regularized neural net as in following lemma:
\begin{lemma}[Bounding kernel perturbation, informal]\label{lem:induction_lev}

For any accuracy $\Delta \in (0,1/10)$. If the network width $m = \poly(1/\Delta, 1/T, 1/\epsilon_{\train}, n, d, 1/\kappa, 1/\Lambda_0, \log(1/\delta))$ and $\lambda = O(\frac{1}{\sqrt{m}})$, with probability $1-\delta$, there exist $\epsilon_W,~\epsilon_H',~\epsilon_K'\in(0,\Delta)$ that are independent of $t$, such that the following hold for all $0 \leq t \le T$:
\begin{enumerate}
    \item  $\| w_r(0) - w_r(t) \|_2 \leq \epsilon_W $, $\forall r \in [m]$
    \item  $\| H(0) - H(t) \|_2 \leq \epsilon_H'$ 
    \item  $\| u_{\nn}(t) - u^* \|_2^2 \leq \max\{\epsilon_{\train}^2, e^{-(\kappa^2\Lambda_0 + \lambda) t/2}\| u_{\nn}(0) - u^* \|_2^2\}$
\end{enumerate}
\end{lemma}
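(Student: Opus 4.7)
}
The plan is to run a continuous induction (equivalently, a ``bad-time'' contradiction argument) on $t \in [0,T]$ that closes the three bounds simultaneously. First I would fix, up front, three small quantities $\epsilon_W,\epsilon_H',\epsilon_K' \in (0,\Delta)$ as explicit polynomials in $1/m$, $T$, $n$, $d$, $\kappa$, $1/\Lambda_0$, and $\log(1/\delta)$, chosen so that (i) the initial kernel error $\|H(0)-H^{\mathrm{cts}}\|\le \epsilon_K'$ holds at $t=0$ with probability $1-\delta/3$ by the leverage-score/standard kernel approximation (Theorem~\ref{thm:leverage_score_intro} applied to the NTK feature map $\phi(x,w)=x\sigma'(w^\top x)$), and so that (ii) on the event where all three bullets hold up to time $t$, the smallest eigenvalue of $H(t)+\lambda I_n$ is at least $\kappa^2\Lambda_0/2+\lambda$, which is what the ODE needs to drive the linear convergence. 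Let $t^\star$ be the supremum of times at which all three properties hold; by continuity it suffices to show that strict inequalities persist at $t^\star$, contradicting maximality unless $t^\star = T$.

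\paragraph{Propagating the weight bound (item 1).}
Assuming items 2 and 3 hold on $[0,t^\star]$, I would bound $\|w_r(t)-w_r(0)\|_2$ by integrating the gradient flow
\[
\frac{\d w_r(t)}{\d t} = -\frac{\kappa}{\sqrt m}\,a_r\sum_{i=1}^n (u_{\nn}(t)_i-y_i)\,x_i\,\sigma'(w_r(t)^\top x_i) - \lambda\, w_r(t).
\]
Using $\|x_i\|_2\le 1$, $|a_r|=1$, and the convergence estimate from item 3 (which gives $\|u_{\nn}(\tau)-Y\|_2 \le \|u_{\nn}(0)-Y\|_2$ uniformly), I would get $\|\d w_r/\d t\|_2 \lesssim \kappa\sqrt{n}\,\|u_{\nn}(\tau)-Y\|_2/\sqrt m + \lambda\,\|w_r(\tau)\|_2$. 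Integrating over $[0,t^\star]\subseteq[0,T]$ yields $\|w_r(t)-w_r(0)\|_2 \le \epsilon_W$ for an $\epsilon_W$ of order $\kappa T\sqrt{n}\,\poly\log(n/\delta)/\sqrt{m}+\lambda T$.

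\paragraph{Propagating the kernel bound (item 2).}
Given item 1, the standard NTK perturbation calculation would give item 2: write
\[
[H(t)-H(0)]_{i,j} = \frac{1}{m}\sum_{r=1}^m x_i^\top x_j\Bigl(\sigma'(w_r(t)^\top x_i)\sigma'(w_r(t)^\top x_j) - \sigma'(w_r(0)^\top x_i)\sigma'(w_r(0)^\top x_j)\Bigr),
\]
and note that the $r$-th summand is nonzero only if the sign of $w_r^\top x_i$ or $w_r^\top x_j$ flipped, which requires $|w_r(0)^\top x_i|\le \epsilon_W$ (or the analogous event for $x_j$). By Gaussian anti-concentration (resp.\ by the reweighted leverage-score density after multiplying by $\sqrt{p/q}$), the probability of this event is $O(\epsilon_W)$, and a Bernstein/Chernoff bound gives that at most $O(m\epsilon_W + \log(n/\delta))$ neurons flip. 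Summing entrywise and converting to spectral norm yields $\|H(t)-H(0)\|\le \epsilon_H'$ with $\epsilon_H' = O(n\epsilon_W)$, as needed.

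\paragraph{Propagating the convergence bound (item 3) and the main obstacle.}
Using items 1 and 2 together with $\|H(0)-H^{\mathrm{cts}}\|\le\epsilon_K'$, on $[0,t^\star]$ the displayed ODE
$\frac{\d}{\d t}\|u^*-u_{\nn}(t)\|_2^2 = -2(u^*-u_{\nn})^\top(H(t)+\lambda I)(u^*-u_{\nn}) + 2(u_{\nn}-u^*)^\top (H(t)-H^{\mathrm{cts}})(Y-u^*)$
becomes
\[
\frac{\d}{\d t}\|u_{\nn}(t)-u^*\|_2^2 \le -(\kappa^2\Lambda_0+\lambda)\|u_{\nn}(t)-u^*\|_2^2 + 2(\epsilon_H'+\epsilon_K')\|u_{\nn}(t)-u^*\|_2\|Y-u^*\|_2,
\]
after using AM--GM to absorb the cross term into half the negative drift. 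A Gr\"onwall-style integration then gives either (a) $\|u_{\nn}(t)-u^*\|_2^2$ shrinks like $e^{-(\kappa^2\Lambda_0+\lambda)t/2}\|u_{\nn}(0)-u^*\|_2^2$, or (b) it stalls at the fixed-point level $\epsilon_{\train}^2$ determined by balancing the negative drift with the $(\epsilon_H'+\epsilon_K')\|Y-u^*\|_2$ additive error; taking the max yields item 3. The main obstacle is closing the circular dependency: $\epsilon_H'$ depends on $\epsilon_W$, $\epsilon_W$ depends on the convergence estimate (hence on $\epsilon_H'$ and $\epsilon_K'$), and all three must lie strictly below $\Delta$ with room to spare so that continuity upgrades the induction at $t^\star$. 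This is forced by picking $m = \poly(1/\Delta,1/T,1/\epsilon_{\train},n,d,1/\kappa,1/\Lambda_0,\log(1/\delta))$ large enough that $\epsilon_W,\epsilon_H',\epsilon_K'\ll \min(\Delta,\kappa^2\Lambda_0/n)$ with the exponent determined by chasing the chain $m \to \epsilon_K'\to\epsilon_{\train}\to\epsilon_W\to\epsilon_H'$ through the three bullets, and $\lambda = O(1/\sqrt m)$ keeps the regularizer from dominating the gradient flow of $w_r$.
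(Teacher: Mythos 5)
Your proof plan mirrors the paper's argument almost exactly: a continuous induction that closes the three bullets simultaneously, with item 1 obtained by integrating the gradient flow of $w_r$ while using items 1 and 3 on $[0,t]$ to control $\|u_{\nn}(\tau)-Y\|_2$ and $\|w_r(\tau)\|_2$, item 2 obtained from Gaussian anti-concentration of the ReLU sign-flip events plus Bernstein, and item 3 obtained from the ODE for $\|u_{\nn}(t)-u^*\|_2^2$ after absorbing the cross term $2\kappa^2\|H(t)-H^{\cts}\|\,\|u_{\nn}(t)-u^*\|_2\,\|Y-u^*\|_2$. The one local difference is that you absorb that cross term by AM--GM and Gr\"onwall, which yields an additive bound $e^{-(\kappa^2\Lambda_0+\lambda)t/2}\|u_{\nn}(0)-u^*\|_2^2 + O(\epsilon_{\train}^2)$ rather than the stated max, whereas the paper uses a two-case threshold argument (linear absorption when $\|u_{\nn}(\tau)-u^*\|_2\ge\epsilon_{\train}$, and a monotonicity argument showing the error never re-crosses $\epsilon_{\train}$ once below it) to get the max exactly; this is a constant-factor cosmetic gap that does not affect the conclusion.
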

Given arbitrary accuracy $\epsilon\in(0,1)$, if we choose $\epsilon_{\train} = \epsilon$, $T=\wt{O}(\frac{1}{\kappa^2\Lambda_0})$ and $m$ sufficiently large in Lemma~~\ref{lem:induction_lev}, then we have $\| u_{\nn}(t) - u^* \|_2 \leq \epsilon$, indicating the equivalence between training neural network with regularization and neural tangent kernel ridge regression for the training data predictions.

To further argue the equivalence for any given test data $x_{\test}$, we observe the similarity between the gradient flows of neural tangent kernel ridge regression $u_{\ntk, \test}(t)$ and regularized neural networks $u_{\nn,\test}(t)$ as following:
\begin{align}
    \frac{\d u_{\ntk, \test}(t)}{\d t} = & ~ \kappa^2 \k_{\ntk}(x_{\test}, X)^\top  ( Y - u_{\ntk}(t) ) - \lambda \cdot u_{\ntk, \test}(t).\label{eq:gradient_nn_intro}\\
    \frac{\d u_{\nn,\test}(t)}{\d t} = & ~ \kappa^2 \k_{t}(x_{\test}, X)^\top ( Y - u_{\nn}(t) ) - \lambda \cdot u_{\nn,\test}(t).\label{eq:gradient_ntk_intro}
\end{align}
By choosing the multiplier $\kappa>0$ small enough, we can bound the initial difference between these two predictors. Combining with above similarity between gradient flows, we are able to show $|u_{\nn,\test}(T)-u_{\ntk,\test}(T)|\geq \epsilon/2$ for appropriate $T>0$. Finally, note the linear convergence property of the gradient of the kernel ridge regression, we can prove $|u_{\nn,\test}(T)-u^*_{\ntk,\test}|\geq \epsilon$.

Using the similar idea, we can also show the equivalence for test data predictors and the case of leverage score initialization. We refer to the Appendix for a detailed proof sketch and rigorous proof.

\begin{remark}
Our results can be naturally extended to multi-layer ReLU deep neural networks with all parameters training together. Note the core of the connection between regularized NNs and KRR is to show the similarity between their gradient flows, as in Eq.~\eqref{eq:gradient_nn_intro},~\eqref{eq:gradient_ntk_intro}. The gradient flows consist of two terms: the first term is from normal NN training without regularizer, whose similarity has been shown in broader settings, e.g. \cite{dzps19,sy19,adhlsw19,als19a,als19b}; the second term is from the $\ell_2$ regularizer, whose similarity is true for multi-layer ReLU DNNs if the regularization parameter is divided by the number of layers of parameters trained, due to the piecewise linearity of the output with respect to the training parameters.

\end{remark}
\section{Conclusion}
In this paper, we generalize the leverage score sampling theory for kernel approximation. We discuss the interesting application of connecting leverage score sampling and training regularized neural networks. We present two theoretical results: 1) the equivalence between the regularized neural nets and kernel ridge regression problems under the classical random Gaussian initialization for both training and test predictors; 2) the new equivalence under the leverage score initialization.  We believe this work can be the starting point of future study on the use of leverage score sampling in neural network training. 

\paragraph{Roadmap} 
In the appendix, we present our complete results and rigorous proofs. Section~\ref{sec:pre} presents some well-known mathematically results that will be used in our proof. Section~\ref{sec:equiv} discusses our first equivalence result between training regularized neural network and kernel ridge regression. Section~\ref{sec:ger_lev} discusses our generalization result of the leverage score sampling theory. Section~\ref{sec:eq_lev} discusses our second equivalence result under leverage score initialization and potential benefits compared to the Gaussian initialization. Section~\ref{sec:gen_dnn} discusses how to extend our results to a broader class of neural network models.

\newpage

\appendix

\section*{Appendix}
\section{Preliminaries}\label{sec:pre}

\subsection{Probability tools}

In this section we introduce the probability tools we use in the proof.

We state Chernoff, Hoeffding and Bernstein inequalities.
\begin{lemma}[Chernoff bound \cite{c52}]\label{lem:chernoff}
Let $X = \sum_{i=1}^n X_i$, where $X_i=1$ with probability $p_i$ and $X_i = 0$ with probability $1-p_i$, and all $X_i$ are independent. Let $\mu = \E[X] = \sum_{i=1}^n p_i$. Then \\
1. $ \Pr[ X \geq (1+\delta) \mu ] \leq \exp ( - \delta^2 \mu / 3 ) $, $\forall \delta > 0$ ; \\
2. $ \Pr[ X \leq (1-\delta) \mu ] \leq \exp ( - \delta^2 \mu / 2 ) $, $\forall 0 < \delta < 1$. 
\end{lemma}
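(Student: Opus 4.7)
The plan is to prove both tail bounds via the standard exponential moment (Chernoff) method, namely apply Markov's inequality to $e^{tX}$ for a parameter $t$ that we will optimize afterward.

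First I would establish a moment generating function bound. Since the $X_i$ are independent Bernoulli$(p_i)$, I would compute
\begin{align*}
\E[e^{tX_i}] = 1 - p_i + p_i e^t = 1 + p_i(e^t-1) \leq \exp\bigl(p_i(e^t-1)\bigr),
\end{align*}
using $1+x \leq e^x$, and then take the product to get $\E[e^{tX}] \leq \exp(\mu(e^t-1))$.

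For the upper tail, I would apply Markov's inequality with $t > 0$:
\begin{align*}
\Pr[X \geq (1+\delta)\mu] = \Pr[e^{tX} \geq e^{t(1+\delta)\mu}] \leq \exp\bigl(\mu(e^t-1) - t(1+\delta)\mu\bigr),
\end{align*}
and then choose the optimizing value $t = \ln(1+\delta) > 0$, which yields the bound $\exp(-\mu \cdot g(\delta))$ with $g(\delta) = (1+\delta)\ln(1+\delta) - \delta$. The final step is to invoke the elementary inequality $g(\delta) \geq \delta^2/3$, valid for all $\delta > 0$, which can be verified by comparing Taylor expansions or by showing that $h(\delta) := g(\delta) - \delta^2/3$ satisfies $h(0)=0$ and $h'(\delta) \geq 0$. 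For the lower tail, I would repeat the argument with $t < 0$, set $t = \ln(1-\delta)$, and use the companion inequality $(1-\delta)\ln(1-\delta) + \delta \geq \delta^2/2$ on $(0,1)$, obtained by the same monotonicity check on the derivative.

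The main technical nuisance is not the Chernoff template itself (which is quite mechanical) but pinning down the constants $3$ and $2$ in the final exponents, since these come from the specific polynomial lower bounds on $(1+\delta)\ln(1+\delta)-\delta$ and $(1-\delta)\ln(1-\delta)+\delta$. I would keep these as short analytic lemmas verified by differentiation; everything else is a routine application of independence and Markov.
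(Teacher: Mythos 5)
Your Chernoff framework (moment generating function bound, Markov, optimizing at $t = \ln(1+\delta)$ and $t = \ln(1-\delta)$) is correct, and since the paper cites \cite{c52} without providing a proof, the MGF argument is exactly the right route to take. The lower-tail half is fine: for $h_2(\delta) := (1-\delta)\ln(1-\delta) + \delta - \delta^2/2$ you have $h_2(0) = h_2'(0) = 0$ and $h_2''(\delta) = \frac{\delta}{1-\delta} \geq 0$ on $(0,1)$, so $h_2' \geq 0$ and the inequality holds throughout the stated range $0 < \delta < 1$.

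The upper-tail half, however, contains a genuine error. The inequality you invoke, $g(\delta) := (1+\delta)\ln(1+\delta) - \delta \geq \delta^2/3$, is \emph{not} valid for all $\delta > 0$: at $\delta = 2$ one has $g(2) = 3\ln 3 - 2 \approx 1.296 < 4/3 = \delta^2/3$. The monotonicity check you propose would reveal this rather than confirm the inequality: for $h(\delta) = g(\delta) - \delta^2/3$ one has $h'(\delta) = \ln(1+\delta) - 2\delta/3$ and $h''(\delta) = \frac{1}{1+\delta} - \frac{2}{3}$, which changes sign at $\delta = 1/2$; $h'$ then decreases, crosses zero near $\delta \approx 1.9$, and becomes negative, so $h'(\delta) \geq 0$ fails for large $\delta$ and $h$ itself eventually goes negative (indeed $h(\delta) \sim -\delta^2/3 \to -\infty$). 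The Chernoff method itself is essentially tight in the Poisson regime, so this is not merely a slack derivation: $\exp(-\delta^2\mu/3)$ is the wrong exponent for large $\delta$. What does hold is $g(\delta) \geq \frac{\delta^2}{2+\delta}$ for all $\delta > 0$, which only improves to $\delta^2/3$ when $\delta \leq 1$; for $\delta > 1$ the standard statement uses $\exp(-\delta\mu/3)$ instead. The paper's ``$\forall \delta > 0$'' in item 1 is itself a slight overstatement of the classical result, but your proposed justification would not salvage it --- you should restrict item 1 to $0 < \delta \leq 1$ (or switch to the $\delta^2/(2+\delta)$ exponent) and, if desired, add the separate linear-exponent bound for $\delta > 1$.
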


\begin{lemma}[Hoeffding bound \cite{h63}]\label{lem:hoeffding}
Let $X_1, \cdots, X_n$ denote $n$ independent bounded variables in $[a_i,b_i]$. Let $X= \sum_{i=1}^n X_i$, then we have
\begin{align*}
\Pr[ | X - \E[X] | \geq t ] \leq 2\exp \left( - \frac{2t^2}{ \sum_{i=1}^n (b_i - a_i)^2 } \right).
\end{align*}
\end{lemma}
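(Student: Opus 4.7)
The plan is to prove Hoeffding's inequality via the standard exponential moment (Chernoff) method. Let $Z_i = X_i - \E[X_i]$, so $Z_i \in [a_i - \E[X_i],\, b_i - \E[X_i]]$ is a mean-zero bounded random variable, and let $Z = \sum_{i=1}^n Z_i = X - \E[X]$. First I would apply Markov's inequality to $e^{sZ}$ for an arbitrary $s > 0$, yielding $\Pr[Z \geq t] \leq e^{-st}\, \E[e^{sZ}]$, and then use independence to factor the moment generating function as $\E[e^{sZ}] = \prod_{i=1}^n \E[e^{sZ_i}]$.

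The key lemma driving the bound is Hoeffding's MGF lemma: for any mean-zero random variable $Z_i$ supported in an interval of length $b_i - a_i$,
\begin{align*}
\E[e^{sZ_i}] \leq \exp\!\left( \frac{s^2 (b_i - a_i)^2}{8} \right).
\end{align*}
I would prove this by using convexity of $x \mapsto e^{sx}$ to dominate it on $[a_i - \E[X_i], b_i - \E[X_i]]$ by the chord through its endpoints, taking expectations (which kills the linear part because $\E[Z_i] = 0$), and then analyzing the resulting function $\varphi(u) = -p u + \log(1 - p + p e^u)$ where $u = s(b_i - a_i)$ and $p$ is determined by the interval. A second-order Taylor expansion of $\varphi$ shows $\varphi''(u) \leq 1/4$ uniformly, giving $\varphi(u) \leq u^2/8$, which is the claimed MGF bound.

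Combining this with the Chernoff step gives $\Pr[Z \geq t] \leq \exp\!\left( -st + \tfrac{s^2}{8} \sum_{i=1}^n (b_i - a_i)^2 \right)$. I would then optimize over $s > 0$; the quadratic in $s$ is minimized at $s^* = 4t / \sum_i (b_i - a_i)^2$, producing the one-sided bound $\exp(-2t^2 / \sum_i (b_i - a_i)^2)$. Applying the identical argument to $-Z$ controls the lower tail $\Pr[Z \leq -t]$ by the same quantity, and a union bound over the two tails produces the factor of $2$ in the stated inequality.

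The main obstacle is the MGF lemma, since the rest is bookkeeping. The subtle point there is obtaining the sharp constant $1/8$ (rather than a weaker constant like $1/2$ that would follow from more naive bounds), which requires the careful second-derivative analysis of $\varphi$ rather than just bounding $e^{sZ_i}$ pointwise. Everything else—Markov, independence-based factorization, and optimizing over $s$—is standard and mechanical.
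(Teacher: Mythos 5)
Your proof is correct and is the canonical Chernoff/MGF argument for Hoeffding's inequality; the paper does not prove this lemma, it simply cites Hoeffding (1963) as a known tool, and your sketch is essentially the argument in that source, including the sharp $1/8$ constant via the second-derivative bound on $\varphi$.
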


\begin{lemma}[Bernstein inequality \cite{b24}]\label{lem:bernstein}
Let $X_1, \cdots, X_n$ be independent zero-mean random variables. Suppose that $|X_i| \leq M$ almost surely, for all $i$. Then, for all positive $t$,
\begin{align*}
\Pr \left[ \sum_{i=1}^n X_i > t \right] \leq \exp \left( - \frac{ t^2/2 }{ \sum_{j=1}^n \E[X_j^2]  + M t /3 } \right).
\end{align*}
\end{lemma}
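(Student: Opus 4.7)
The plan is to use the classical Chernoff/exponential-moment method. Denote $S_n = \sum_{i=1}^n X_i$ and pick a parameter $\lambda > 0$ to be optimized at the end. By Markov's inequality applied to the nonnegative random variable $e^{\lambda S_n}$ and by independence,
\begin{align*}
\Pr[S_n > t] \leq e^{-\lambda t}\,\E[e^{\lambda S_n}] = e^{-\lambda t}\prod_{i=1}^n \E[e^{\lambda X_i}].
\end{align*}
All the content of the inequality now lives in bounding each moment-generating factor under the hypotheses $\E[X_i] = 0$ and $|X_i| \leq M$ a.s.

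The core technical step is to control $\E[e^{\lambda X_i}]$ via a Taylor expansion. Since $X_i$ has mean zero, the linear term drops out, and the bound $|X_i|^k \leq M^{k-2} X_i^2$ for $k \geq 2$ gives $|\E[X_i^k]| \leq M^{k-2}\E[X_i^2]$. Combining this with the elementary inequality $k! \geq 2\cdot 3^{k-2}$ (checked by induction for $k \geq 2$) and summing the resulting geometric series for $\lambda M < 3$,
\begin{align*}
\E[e^{\lambda X_i}] \leq 1 + \sum_{k=2}^\infty \frac{\lambda^k M^{k-2}\E[X_i^2]}{k!} \leq 1 + \frac{\lambda^2 \E[X_i^2]/2}{1 - \lambda M/3} \leq \exp\!\left(\frac{\lambda^2 \E[X_i^2]/2}{1 - \lambda M/3}\right),
\end{align*}
using $1+x \leq e^x$ at the end. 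Multiplying across $i$ and writing $V := \sum_{j=1}^n \E[X_j^2]$ yields
\begin{align*}
\Pr[S_n > t] \leq \exp\!\left(-\lambda t + \frac{\lambda^2 V/2}{1 - \lambda M/3}\right).
\end{align*}

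The final step is a one-dimensional optimization over $\lambda \in (0, 3/M)$. Choosing $\lambda = t/(V + Mt/3)$, a short algebraic check shows $1 - \lambda M/3 = V/(V + Mt/3)$, and substituting this back collapses the exponent to $-t^2/\bigl(2(V + Mt/3)\bigr)$, which is exactly the stated bound. The only delicate piece of the argument is pinning down the constant $1/3$ in the denominator: it arises specifically from the interaction of $|\E[X_i^k]| \leq M^{k-2}\E[X_i^2]$ with $k! \geq 2\cdot 3^{k-2}$, so any looser combinatorial bound would degrade the constant. Everything else—the independence factorization, the Taylor expansion, and the optimization in $\lambda$—is routine.
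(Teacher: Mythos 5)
Your proof is correct: it is the standard Chernoff/moment-generating-function derivation of Bernstein's inequality, with the MGF bounded via the Taylor expansion, the elementary inequality $k! \geq 2\cdot 3^{k-2}$, and the optimal choice $\lambda = t/(V + Mt/3)$ indeed collapsing the exponent to $-t^2/\bigl(2(V + Mt/3)\bigr)$. The paper itself states this lemma as a classical result (citing Bernstein 1924) and supplies no proof, so there is nothing to compare against; your argument is a faithful and complete version of the textbook proof it implicitly relies on.
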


We state three inequalities for Gaussian random variables.
\begin{lemma}[Anti-concentration of Gaussian distribution]\label{lem:anti_gaussian}
Let $X\sim N(0,\sigma^2)$,
that is,
the probability density function of $X$ is given by $\phi(x)=\frac 1 {\sqrt{2\pi\sigma^2}}e^{-\frac {x^2} {2\sigma^2} }$.
Then
\begin{align*}
    \Pr[|X|\leq t]\in \left( \frac 2 3\frac t \sigma, \frac 4 5\frac t \sigma \right).
\end{align*}
\end{lemma}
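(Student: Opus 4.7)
The plan is to rescale to a standard normal by setting $s = t/\sigma$ and substituting $u = x/\sigma$, which reduces the claim to the inequality
\[
\frac{2s}{3} \;<\; \int_{-s}^{s} \frac{1}{\sqrt{2\pi}}\, e^{-u^2/2}\, \d u \;<\; \frac{4s}{5}.
\]
Both bounds will then come from sandwiching the standard Gaussian density $\phi(u) = \frac{1}{\sqrt{2\pi}} e^{-u^2/2}$ on $[-s,s]$ by constant functions and integrating.

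For the upper bound I would use $e^{-u^2/2} \le 1$ pointwise; the integral is then trivially at most $\frac{2s}{\sqrt{2\pi}} = s\sqrt{2/\pi}$, and since $\sqrt{2/\pi} = 0.7978\ldots < 4/5$ the upper bound follows with no case analysis at all. For the lower bound I would exploit that $\phi$ is even and monotonically decreasing on $[0,\infty)$, hence $\phi(u) \ge \phi(s)$ on $[-s,s]$; integrating gives
\[
\int_{-s}^{s} \phi(u)\, \d u \;\ge\; 2s\,\phi(s) \;=\; s\sqrt{2/\pi}\cdot e^{-s^2/2}.
\]
It then suffices to check $\sqrt{2/\pi}\cdot e^{-s^2/2} > 2/3$, i.e.\ $s^2 < 2\ln(\tfrac{3}{2}\sqrt{\pi/2}) \approx 0.36$, which easily covers the regime $s \le 3/5$.

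The main obstacle is the range of $s$ for which the lower bound can hold at all. The quantity $\tfrac{2t}{3\sigma}$ exceeds $1$ once $t/\sigma \ge 3/2$, so the lemma is implicitly confined to $s$ of order one or less, and even within that range the crude monotonicity argument above is too weak for intermediate $s$ (roughly $3/5 \le s \le 1$). To close that gap I would either (i) split the integration interval as $[-s,-a]\cup[-a,a]\cup[a,s]$ for a well-chosen constant $a$ and combine sharper pointwise bounds on each piece, or (ii) verify that the ratio $\Pr[|Z| \le s]/s$ is monotonically decreasing in $s > 0$ by a short derivative computation and then reduce to checking the inequality at the upper endpoint of the intended range of $s$. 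The only non-routine part of the argument is bookkeeping these numerical constants cleanly.
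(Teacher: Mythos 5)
The paper states this lemma without proof (it sits in the ``probability tools'' preliminaries with no citation), so there is no paper argument to compare against; I will evaluate the proposal on its own merits.

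Your upper-bound argument is correct, complete, and in fact tighter than the stated constant: $\Pr[|Z|\le s]\le \frac{2s}{\sqrt{2\pi}}=s\sqrt{2/\pi}\approx 0.798\,s<\tfrac{4}{5}s$ for all $s>0$. This is exactly the form the paper actually uses later (in the proof of the leverage-score analogue of Lemma 4.2, where the bound $\Pr[|z|<R]\le 2R/\sqrt{2\pi}$ is invoked), so the $\tfrac{4}{5}$ constant is never load-bearing. Your observation about the lower bound is also the key one: as stated, with no restriction on $t/\sigma$, the lemma is \emph{false}. Writing $s=t/\sigma$, one has $\Pr[|Z|\le s]<\tfrac{2}{3}s$ already around $s\approx 1.07$ (e.g.\ $s=1.1$ gives $\Pr\approx 0.729<0.733=2s/3$), well before the trivial failure at $s=3/2$. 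So the statement needs a hypothesis such as $t\le\sigma$; you correctly flag that it is ``implicitly confined to $s$ of order one,'' though you could state more bluntly that the lemma is literally incorrect without such a clause. Given that restriction, your strategy~(ii) closes the gap cleanly and is the one I would take: set $g(s)=(2\Phi(s)-1)/s$, so $g'(s)\le 0$ iff $h(s):=2\Phi(s)-1-2s\phi(s)\ge 0$; since $h(0)=0$ and $h'(s)=-2s\phi'(s)=2s^2\phi(s)\ge 0$, indeed $g$ is nonincreasing, and checking the single endpoint $g(1)=2\Phi(1)-1\approx 0.683>\tfrac23$ establishes the lower bound for all $s\le 1$. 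Your fallback $s\le 3/5$ via $\phi(u)\ge\phi(s)$ is correct but, as you note, does not cover the full intended range, so the monotonicity route is strictly better. In short: the upper bound is done, the lower bound is salvageable under a missing hypothesis that the paper should have stated, and your diagnosis of both facts is accurate.
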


\begin{lemma}[Gaussian tail bounds]\label{lem:gaussian_tail}
Let $X\sim\N(\mu,\sigma^2)$ be a Gaussian random variable with mean $\mu$ and variance $\sigma^2$. Then for all $t \geq 0$, we have
\begin{align*}
  \Pr[|X-\mu|\geq t]\leq 2e^{-\frac{-t^2}{2\sigma^2}}.
\end{align*}
\end{lemma}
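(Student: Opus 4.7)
The plan is to reduce to the standard normal case and then apply the classical Chernoff-style bound using the moment generating function of the Gaussian; I will treat the exponent in the lemma statement as the standard $e^{-t^{2}/(2\sigma^{2})}$ (the double negative in the display appears to be a typo).

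First I would standardize: set $Z = (X-\mu)/\sigma$, so that $Z \sim \N(0,1)$ and $\Pr[|X-\mu| \geq t] = \Pr[|Z| \geq t/\sigma]$. By the symmetry of the standard normal density about the origin, $\Pr[|Z| \geq s] = 2\Pr[Z \geq s]$ for every $s \geq 0$, so it suffices to bound the one-sided tail $\Pr[Z \geq s]$ with $s = t/\sigma$.

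Next I would apply Markov's inequality to $e^{\lambda Z}$ for an arbitrary $\lambda > 0$:
\[
\Pr[Z \geq s] \;=\; \Pr[e^{\lambda Z} \geq e^{\lambda s}] \;\leq\; e^{-\lambda s}\, \E[e^{\lambda Z}] \;=\; e^{-\lambda s + \lambda^{2}/2},
\]
using the Gaussian moment generating function identity $\E[e^{\lambda Z}] = e^{\lambda^{2}/2}$, which follows by completing the square inside the integral $\int_{\R} e^{\lambda z - z^{2}/2}\,\d z / \sqrt{2\pi}$. Optimizing the exponent over $\lambda > 0$ by choosing $\lambda = s$ yields $\Pr[Z \geq s] \leq e^{-s^{2}/2}$. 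Substituting $s = t/\sigma$ and doubling gives $\Pr[|X-\mu| \geq t] \leq 2 e^{-t^{2}/(2\sigma^{2})}$, as desired.

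There is no substantive obstacle here, as this is a textbook estimate; the only mild care required lies in the MGF computation and in verifying that the optimizer $\lambda = s$ is indeed in the feasible region $\lambda > 0$, which is immediate from $s = t/\sigma \geq 0$ (the degenerate case $s = 0$ being trivial, since the stated bound is at least $2$). An equally valid alternative would be the direct integral comparison $\int_{s}^{\infty} e^{-z^{2}/2}\,\d z \leq \int_{s}^{\infty} (z/s) e^{-z^{2}/2}\,\d z = e^{-s^{2}/2}/s$ for $s \geq 1$, patched with a trivial bound for $s \in [0,1)$, but the Chernoff route above is cleaner and avoids case splits.
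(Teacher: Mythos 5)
Your proof is correct and is the standard Chernoff/MGF argument for the Gaussian tail bound; the paper states Lemma~\ref{lem:gaussian_tail} without proof as a textbook fact, so there is no in-paper argument to compare against. You are also right that the double minus sign in the displayed exponent is a typo, and your reading $e^{-t^{2}/(2\sigma^{2})}$ is the intended one.
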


\begin{lemma}[Lemma 1 on page 1325 of Laurent and Massart \cite{lm00}]\label{lem:chi_square_tail}
Let $X \sim {\cal X}_k^2$ be a chi-squared distributed random variable with $k$ degrees of freedom. Each one has zero mean and $\sigma^2$ variance. Then
\begin{align*}
\Pr[ X - k \sigma^2 \geq ( 2 \sqrt{kt} + 2t ) \sigma^2 ] \leq \exp (-t), \\
\Pr[ k \sigma^2 - X \geq 2 \sqrt{k t} \sigma^2 ] \leq \exp(-t).
\end{align*}
\end{lemma}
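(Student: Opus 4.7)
The plan is to prove this standard chi-square tail bound via the Chernoff--Cram\'er (moment generating function) method. First I would reduce to the standardized case $\sigma = 1$: writing $X = \sum_{i=1}^k Z_i^2$ with $Z_i \sim \N(0,\sigma^2)$ and setting $Y_i := Z_i/\sigma \sim \N(0,1)$ and $\wt X := \sum_{i=1}^k Y_i^2$ (a standard chi-square with $k$ degrees of freedom), we have $X = \sigma^2 \wt X$, so the two claimed inequalities reduce to
\begin{align*}
\Pr[\wt X - k \geq 2\sqrt{kt}+2t] \leq e^{-t}
\qquad \text{and} \qquad
\Pr[k - \wt X \geq 2\sqrt{kt}] \leq e^{-t}.
\end{align*}

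\textbf{Log-MGF computation and bounds.} Using $\E[e^{\lambda Y_i^2}] = (1-2\lambda)^{-1/2}$ for $\lambda < 1/2$ together with independence, the centered log moment generating function is
\begin{align*}
\psi(\lambda) \;:=\; \log \E\bigl[e^{\lambda(\wt X - k)}\bigr] \;=\; -\lambda k - \tfrac{k}{2}\log(1-2\lambda), \qquad \lambda < 1/2.
\end{align*}
From the series $-\tfrac12\log(1-2\lambda) - \lambda = \sum_{n \geq 2}(2\lambda)^n/(2n)$, using $1/(2n)\leq 1/4$ for $n\geq 2$ and summing geometrically, I would derive the sub-gamma bound $\psi(\lambda) \leq k\lambda^2/(1-2\lambda)$ for $\lambda \in [0,1/2)$. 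For $\lambda \leq 0$, the inequality $\log(1+x) \geq x - x^2/2$ valid for $x \geq 0$ (one derivative check) applied at $x = -2\lambda$ yields the sub-Gaussian bound $\psi(\lambda) \leq k\lambda^2$.

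\textbf{Chernoff optimization on each tail.} Applying Markov's inequality gives
\begin{align*}
\Pr[\wt X - k \geq u] \leq \exp\bigl(\inf_{\lambda \in [0,1/2)}[-\lambda u + \psi(\lambda)]\bigr), \quad \Pr[k - \wt X \geq u] \leq \exp\bigl(\inf_{\lambda > 0}[-\lambda u + \psi(-\lambda)]\bigr).
\end{align*}
For the lower tail, substituting the sub-Gaussian bound and choosing $\lambda = u/(2k)$ yields the exponent $-u^2/(4k)$, which at $u = 2\sqrt{kt}$ evaluates to $-t$. For the upper tail, I would invoke the Legendre inversion of the sub-gamma function $\lambda \mapsto k\lambda^2/(1-2\lambda)$ (variance proxy $v = 2k$, scale $c = 2$) to obtain the standard deviation bound $\Pr[\wt X - k \geq \sqrt{2vt}+ct] = \Pr[\wt X - k \geq 2\sqrt{kt}+2t] \leq e^{-t}$.

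\textbf{Main obstacle.} The delicate point is matching the exact constants $2\sqrt{kt}$ and $2t$ rather than universal multiples thereof. This forces the tight power-series bound $\sum_{n\geq 2}(2\lambda)^n/(2n) \leq \lambda^2/(1-2\lambda)$ (any looser Bernstein-style bound would yield worse constants) together with careful execution of the Legendre inversion in the upper tail. An alternative would be to guess an explicit $\lambda(t,k)$ and verify algebraically that $-\lambda u + k\lambda^2/(1-2\lambda) \leq -t$, but identifying the right $\lambda$ is non-obvious; the sub-gamma framework gives the matching $\sqrt{2vt}+ct$ form directly, which is why I would route the argument through it.
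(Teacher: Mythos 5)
Your proposal is correct; the paper itself does not prove this lemma but cites it directly from Laurent and Massart, and your Cram\'er--Chernoff argument (standardize to $\sigma=1$, bound the centered log-MGF by $k\lambda^2/(1-2\lambda)$ on the right and $k\lambda^2$ on the left via the power-series and $\log(1+x)\geq x - x^2/2$ bounds, then optimize --- e.g.\ $\lambda = \sqrt{t}/(\sqrt{k}+2\sqrt{t})$ hits $-t$ exactly on the upper tail) is precisely the proof in the cited source. Nothing to flag.
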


We state two inequalities for random matrices.
\begin{lemma}[Matrix Bernstein, Theorem 6.1.1 in \cite{t15}]\label{lem:matrix_bernstein}
Consider a finite sequence $\{ X_1, \cdots, X_m \} \subset \R^{n_1 \times n_2}$ of independent, random matrices with common dimension $n_1 \times n_2$. Assume that
\begin{align*}
\E[ X_i ] = 0, \forall i \in [m] ~~~ \mathrm{and}~~~ \| X_i \| \leq M, \forall i \in [m] .
\end{align*}
Let $Z = \sum_{i=1}^m X_i$. Let $\mathrm{Var}[Z]$ be the matrix variance statistic of sum:
\begin{align*}
\mathrm{Var} [Z] = \max \left\{ \Big\| \sum_{i=1}^m \E[ X_i X_i^\top ] \Big\| , \Big\| \sum_{i=1}^m \E [ X_i^\top X_i ] \Big\| \right\}.
\end{align*}
Then 
{\small
\begin{align*}
\E[ \| Z \| ] \leq ( 2 \mathrm{Var} [Z] \cdot \log (n_1 + n_2) )^{1/2} +  M \cdot \log (n_1 + n_2) / 3.
\end{align*}
}
Furthermore, for all $t \geq 0$,
\begin{align*}
\Pr[ \| Z \| \geq t ] \leq (n_1 + n_2) \cdot \exp \left( - \frac{t^2/2}{ \mathrm{Var} [Z] + M t /3 }  \right)  .
\end{align*}
\end{lemma}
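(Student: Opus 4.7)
The plan is to follow the standard matrix moment generating function approach of Ahlswede--Winter, as refined by Tropp, with Lieb's concavity theorem as the key technical input. Throughout, write $d:=n_1+n_2$ and $v:=\|\Var[Z]\|$. The first step is to reduce to the Hermitian case by Hermitian dilation: set $\wt X_i = \bigl(\begin{smallmatrix} 0 & X_i \\ X_i^\top & 0 \end{smallmatrix}\bigr) \in \R^{d\times d}$. A direct computation gives $\E[\wt X_i]=0$, $\|\wt X_i\|=\|X_i\|\le M$, $\|\sum_i \wt X_i\| = \|Z\|$, and $\wt X_i^2 = \diag(X_i X_i^\top,\, X_i^\top X_i)$, so the variance statistic of $\wt Z := \sum_i \wt X_i$ equals $v$. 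This lets me assume the $X_i$ are symmetric $d\times d$ with mean zero and spectral norm at most $M$.

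For the symmetric case, I would control the trace moment generating function $\E\tr\exp(\theta Z)$. Lieb's concavity theorem, applied through Jensen's inequality and iterated over the independent summands, yields the subadditivity of matrix cumulants:
\[
\E\tr\exp\!\Bigl(\theta\sum_{i=1}^m X_i\Bigr) \le \tr\exp\!\Bigl(\sum_{i=1}^m \log \E\exp(\theta X_i)\Bigr),
\]
which is the matrix analogue of additivity of $\log \E e^{\theta S}$ over independent scalar summands.

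Each per-term cumulant is then bounded using the scalar inequality $e^x-1-x \le x^2/(2(1-|x|/3))$ for $|x|<3$, applied via functional calculus to $\theta X_i$ (valid for $0<\theta<3/M$). Combined with $\E[X_i]=0$ this gives $\log \E\exp(\theta X_i) \preceq \frac{\theta^2/2}{1-\theta M/3}\,\E[X_i^2]$. Summing, using monotonicity of $\tr\exp$ in the Loewner order, and inserting into Markov's inequality yields
\[
\Pr[\|Z\|\ge t]\le d\cdot\exp\!\Bigl(-\theta t + \frac{\theta^2 v/2}{1-\theta M/3}\Bigr).
\]
Optimizing at $\theta=t/(v+Mt/3)$ produces the stated Bernstein tail bound. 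The expectation bound follows either by integrating the tail or by directly choosing $\theta$ to balance the $\sqrt{v\log d}$ and $M\log d$ regimes in the log-trace estimate.

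The main obstacle is the subadditivity step, which rests on Lieb's concavity theorem: the map $A\mapsto \tr\exp(H+\log A)$ is concave on positive-definite $A$. This is a deep fact which I would cite rather than reprove (e.g.\ via Epstein's theorem or the joint convexity of quantum relative entropy). Once Lieb is granted, the remainder parallels the scalar Bernstein proof, with the spectral calculus and the $\theta$-optimization being the only non-trivial bookkeeping; the dilation step is purely algebraic and the tail-to-expectation conversion is standard.
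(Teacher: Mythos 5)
The paper does not prove this lemma at all: it is imported as Theorem~6.1.1 from Tropp's monograph \cite{t15}, as the bracketed attribution in the lemma name indicates, and no proof appears in the text. So there is no in-paper argument to compare against.

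Your sketch is nonetheless a correct reconstruction of the standard proof as given in that reference. The Hermitian dilation $\wt X_i = \bigl(\begin{smallmatrix} 0 & X_i \\ X_i^\top & 0 \end{smallmatrix}\bigr)$ preserves spectral norm, has $\wt X_i^2 = \diag(X_iX_i^\top, X_i^\top X_i)$ so that the variance statistic carries over, and has symmetric spectrum so the dimensional factor comes out to $n_1+n_2$ with no extra factor of $2$; the subadditivity $\E\tr\exp(\theta\sum_i X_i)\le \tr\exp(\sum_i\log\E\exp(\theta X_i))$ is precisely Tropp's master bound, obtained by iterating Lieb's concavity theorem with Jensen; the per-term bound $\log\E\exp(\theta X_i)\preceq \frac{\theta^2/2}{1-\theta M/3}\E[X_i^2]$ follows from the scalar inequality you cite via spectral calculus, $\E[X_i]=0$, and operator monotonicity of $\log$; and plugging $\theta = t/(v+Mt/3)$ into $-\theta t + \frac{\theta^2 v/2}{1-\theta M/3}$ does give exactly $-\frac{t^2/2}{v+Mt/3}$. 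The tail-to-expectation conversion is standard, as you note. In short, you have correctly supplied the proof that the paper chose to cite rather than reproduce.
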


\begin{lemma}[Matrix Bernstein, Lemma 7 in \cite{akmmvz17}]\label{lem:con_lev}
Let $B\in\R^{d_1\times d_2}$ be a fixed matrix. Construct a random matrix $R\in\R^{d_1\times d_2}$ satisfies
\begin{align*}
  \E[R] = B~\text{and}~\|R\| \leq L.
\end{align*}
Let $M_1$ and $M_2$ be semidefinite upper bounds for the expected squares:
\begin{align*}
  \E[RR^\top] \preceq M_1~\text{and}~\E[R^\top R] \preceq M_2.
\end{align*}
Define the quantities
\begin{align*}
  m = \max\{\|M_1\|, \|M_2\|\}~\text{and}~d=(\Tr[M_1]+\Tr[M_2])/m.
\end{align*}
Form the matrix sampling estimator
\begin{align*}
  \bar{R}_n = \frac{1}{n} \sum_{k=1}^n R_k
\end{align*}
where each $R_k$ is an independent copy of $R$. Then, for all $t\geq \sqrt{m/n}+2L/3n$,
\begin{align*}
  \Pr[\|\bar{R}_n - B\|_2 \geq t] \leq 4d\exp{\Big(\frac{-nt^2}{m+2Lt/3}\Big)}
\end{align*}
\end{lemma}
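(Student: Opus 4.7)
My plan is to deduce this intrinsic-dimension matrix Bernstein inequality by a two-step reduction: first symmetrize with the Hermitian dilation, then invoke an intrinsic-dimension concentration bound for self-adjoint matrix sums. First, I would set $X_k := \frac{1}{n}(R_k - B)$ so that $\bar{R}_n - B = \sum_{k=1}^n X_k$ with $\E[X_k] = 0$ and $\|X_k\| \leq 2L/n$, using $\|B\| = \|\E[R]\| \leq L$ and the triangle inequality. To handle the rectangular shape I would pass through the Hermitian dilation $\mathcal{H}(A) := \begin{pmatrix} 0 & A \\ A^\top & 0 \end{pmatrix}$, which satisfies $\|\mathcal{H}(A)\| = \|A\|$ and $\mathcal{H}(A)^2 = \diag(AA^\top, A^\top A)$. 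Setting $Y_k := \mathcal{H}(X_k)$ turns the problem into bounding $\|\sum_k Y_k\|$ for independent, zero-mean, bounded Hermitian summands.

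Next, I would compute the matrix variance proxy $V := \sum_k \E[Y_k^2]$. Since $\E[(R-B)(R-B)^\top] = \E[RR^\top] - BB^\top \preceq M_1$ and the analogous estimate holds with $M_2$, we obtain $V \preceq \frac{1}{n}\diag(M_1, M_2)$. Consequently $\|V\| \leq m/n$, $\Tr[V] \leq (\Tr[M_1] + \Tr[M_2])/n = dm/n$, so the intrinsic dimension $\Tr[V]/\|V\|$ is at most $d$. This bookkeeping is what replaces the ambient dimension $d_1 + d_2$ in the prefactor by the (possibly much smaller) effective dimension $d$.

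The remaining step is to invoke the intrinsic-dimension matrix Bernstein inequality for self-adjoint sums (in the style of Theorem 7.3.1 of Tropp's \emph{An Introduction to Matrix Concentration Inequalities}): for independent zero-mean Hermitian $Y_k$ with $\|Y_k\| \leq R$ and variance $V$,
\begin{align*}
\Pr\Bigl[\Bigl\|\sum_{k=1}^n Y_k\Bigr\| \geq t\Bigr] \leq 4 \cdot \frac{\Tr[V]}{\|V\|} \cdot \exp\!\left(\frac{-t^2/2}{\|V\| + Rt/3}\right)
\end{align*}
valid for $t \geq \sqrt{\|V\|} + R/3$. Plugging in $R = 2L/n$, $\|V\| \leq m/n$, and $\Tr[V]/\|V\| \leq d$, and clearing the factors of $n$ inside the exponent, yields the stated tail bound $4d \exp(-nt^2/(m + 2Lt/3))$ under the threshold $t \geq \sqrt{m/n} + 2L/(3n)$.

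The main obstacle is the intrinsic-dimension refinement itself. The classical matrix Laplace transform combined with Lieb's concavity theorem gives the standard Bernstein tail with a prefactor equal to the ambient dimension $d_1 + d_2$, which would be useless when $d \ll d_1 + d_2$. Shrinking the prefactor to $\Tr[V]/\|V\|$ requires inserting the sharper trace inequality $\Tr[e^X] \leq (\Tr[X]/\|X\|)\, e^{\|X\|}$ for positive semidefinite $X$ at the right point in the Markov argument and then optimizing the free parameter; this analytic step is what carries the whole improvement, while the rectangular-to-Hermitian reduction and the variance estimate are essentially routine bookkeeping.
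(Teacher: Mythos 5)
The paper does not prove Lemma~\ref{lem:con_lev}: it is imported verbatim (``Lemma 7 in [akmmvz17]'') as a black-box tool, and that lemma is in turn Tropp's intrinsic-dimension matrix Bernstein inequality for rectangular matrices (Theorem~7.7.1 in his 2015 monograph). So there is no in-paper proof to compare with; your reconstruction is essentially the standard derivation and its structure is sound: the Hermitian dilation $\mathcal{H}$ reduces the rectangular problem to a self-adjoint one, the per-summand bound $\|X_k\| \le 2L/n$ follows from $\|B\| \le L$ by Jensen and the triangle inequality, the variance block estimate $\sum_k \E[Y_k^2] \preceq \frac{1}{n}\diag(M_1, M_2)$ is correct, and the trace inequality $\Tr e^X \le (\Tr X/\|X\|)\,e^{\|X\|}$ is indeed the step that converts the ambient prefactor $2(d_1+d_2)$ into $4d$.

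Two issues, though. The smaller one is a logical slip: from $V \preceq \frac{1}{n}\diag(M_1,M_2)$ you correctly get $\|V\| \le m/n$ and $\Tr V \le dm/n$, but two \emph{upper} bounds do not imply $\Tr V / \|V\| \le d$. What saves you is that Tropp's intrinsic-dimension Bernstein is stated with respect to a fixed PSD \emph{upper bound} $\bar V$ of the variance, with $v = \|\bar V\|$ and $d = \Tr \bar V/\|\bar V\|$ computed from $\bar V$, not from the true variance. Taking $\bar V = \frac{1}{n}\diag(M_1, M_2)$ gives $\|\bar V\| = m/n$ and $\Tr\bar V/\|\bar V\| = d$ exactly; the intrinsic dimension of the exact variance never needs to be controlled.

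The larger issue is the final arithmetic, which does not give the exponent as stated. Substituting $\|\bar V\| = m/n$ and $R = 2L/n$ into $\exp\!\bigl(-\frac{t^2/2}{\|\bar V\| + Rt/3}\bigr)$ yields
\begin{align*}
\exp\!\left(\frac{-nt^2/2}{m + 2Lt/3}\right),
\end{align*}
which is smaller by a factor of two in the exponent than the $\exp\bigl(\frac{-nt^2}{m + 2Lt/3}\bigr)$ you claim to recover. This is not a defect of your argument but almost certainly a transcription typo in Lemma~\ref{lem:con_lev}: in the paper's own application inside the proof of Theorem~\ref{thm:leverage_score}, the bound is invoked with exponent numerator $-m\epsilon^2/2$ (with the $/2$), which matches the Tropp form you actually derived, not the literal statement of the lemma. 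So the proof strategy is right and the bound you genuinely prove is the correct one; you should simply not assert that it matches the stated constant.
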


\subsection{Neural tangent kernel and its properties}

\begin{lemma}[Lemma 4.1 in \cite{sy19}]\label{lem:lemma_4.1_in_sy19}
We define $H^{\cts}$, $H^{\dis} \in \R^{n \times n}$ as follows
\begin{align*}
    H^{\cts}_{i,j} = & ~ \E_{w \sim {\cal N} (0,I) } [ x_i^\top x_j {\bf 1}_{ w^\top x_i \geq 0 , w^\top x_j \geq 0 } ] , \\
    H^{\dis}_{i,j} = & ~ \frac{1}{m} \sum_{r=1}^m [ x_i^\top x_j {\bf 1}_{w_r^\top x_i \geq 0, w_r^\top x_j \geq 0 } ] .
\end{align*}
Let $\lambda = \lambda_{\min} (H^{\cts})$. If $m = \Omega( \lambda^{-2} n^2 \log ( n / \delta ) )$, we have
\begin{align*}
    \| H^{\dis} - H^{\cts} \|_F \leq \frac{ \lambda }{ 4 }, \text{~and~} \lambda_{\min} ( H^{\dis} ) \geq \frac{3}{4} \lambda
\end{align*}
hold with probability at least $1-\delta$.
\end{lemma}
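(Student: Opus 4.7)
The plan is to prove the Frobenius bound by an entrywise concentration argument and then deduce the minimum eigenvalue bound from Weyl's inequality. For each fixed pair $(i,j) \in [n] \times [n]$, the entry $H^{\dis}_{i,j}$ is the empirical mean of the $m$ i.i.d.\ random variables
\begin{align*}
Z_r^{(i,j)} := x_i^\top x_j \cdot \mathbf{1}_{w_r^\top x_i \geq 0,\, w_r^\top x_j \geq 0}, \qquad r \in [m],
\end{align*}
each of which is bounded in absolute value by $\|x_i\|_2 \|x_j\|_2 \le 1$ under the usual unit-norm normalization of the data, and whose expectation under $w_r \sim \mathcal{N}(0,I)$ is exactly $H^{\cts}_{i,j}$.

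First I would apply the Hoeffding bound (Lemma~\ref{lem:hoeffding}) to $H^{\dis}_{i,j} - H^{\cts}_{i,j} = \frac{1}{m}\sum_{r=1}^m (Z_r^{(i,j)} - \E[Z_r^{(i,j)}])$, obtaining
\begin{align*}
\Pr\!\left[ \left| H^{\dis}_{i,j} - H^{\cts}_{i,j} \right| \ge t \right] \le 2\exp(-m t^2 / 2)
\end{align*}
for every $t>0$. Setting $t = \lambda/(4n)$ and taking a union bound over the $n^2$ index pairs $(i,j)$, the event $|H^{\dis}_{i,j} - H^{\cts}_{i,j}| \le \lambda/(4n)$ for all $i,j$ holds with probability at least $1 - 2 n^2 \exp(-m \lambda^2 / (32 n^2))$. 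This probability is at least $1-\delta$ provided $m = \Omega(\lambda^{-2} n^2 \log(n/\delta))$, which matches the assumption.

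On this event, the Frobenius norm bound follows immediately by summing entrywise squared errors:
\begin{align*}
\| H^{\dis} - H^{\cts} \|_F^2 \;\le\; n^2 \cdot (\lambda/(4n))^2 \;=\; \lambda^2 / 16,
\end{align*}
so $\| H^{\dis} - H^{\cts} \|_F \le \lambda / 4$. For the eigenvalue claim, I would invoke $\| H^{\dis} - H^{\cts} \|_2 \le \| H^{\dis} - H^{\cts} \|_F \le \lambda/4$ together with Weyl's inequality for symmetric matrices, giving
\begin{align*}
\lambda_{\min}(H^{\dis}) \;\ge\; \lambda_{\min}(H^{\cts}) - \| H^{\dis} - H^{\cts} \|_2 \;\ge\; \lambda - \lambda/4 \;=\; \tfrac{3}{4}\lambda.
\end{align*}

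There is no serious obstacle here; the whole argument is a standard concentration-plus-union-bound routine, and the slight subtlety is merely bookkeeping to ensure the $1/n$ per-entry accuracy is tight enough to yield the desired Frobenius radius $\lambda/4$ while keeping the sample complexity at $\widetilde{O}(\lambda^{-2} n^2)$. If one preferred a Bernstein-type bound (Lemma~\ref{lem:bernstein}) one could exploit the tighter variance $\E[(Z_r^{(i,j)})^2] \le 1$ to refine constants, but the Hoeffding version already achieves the stated sample-complexity scaling.
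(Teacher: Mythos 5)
Your argument is correct and is in fact the same Hoeffding-plus-union-bound proof given in \cite{sy19} (and originally in \cite{dzps19}); the paper at hand merely cites that lemma without reproving it. The one cosmetic point is that your Hoeffding exponent $2\exp(-mt^2/2)$ corresponds to treating each summand as living in an interval of length $2$, whereas $Z_r^{(i,j)} \in \{0, x_i^\top x_j\}$ lives in an interval of length $|x_i^\top x_j|\le 1$ and gives the sharper $2\exp(-2mt^2)$; either constant yields the stated $m=\Omega(\lambda^{-2} n^2\log(n/\delta))$, so nothing is affected.
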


\begin{lemma}[Lemma 4.2 in \cite{sy19}]\label{lem:lemma_4.2_in_sy19}
Let $R \in (0,1)$. If $\wt{w}_1, \cdots, \wt{w}_m$ are i.i.d. generated from ${\cal N}(0,I)$. For any set of weight vectors $w_1, \cdots, w_m \in \R^d$ that satisfy for any $r \in [m]$, $\| \wt{w}_r - w_r \|_2 \leq R$, then the $H : \R^{ m \times d} \rightarrow \R^{n \times n}$ defined
\begin{align*}
    H(W) = \frac{1}{m} x_i^\top x_j \sum_{r=1}^m {\bf 1}_{ w_r^\top x_i \geq 0, w_r^\top x_j \geq 0 } .
\end{align*}
Then we have
\begin{align*}
    \| H(w) - H(\wt{w}) \|_F < 2 n R
\end{align*}
holds with probability at least $1-n^2 \cdot \exp(-m R /10)$.
\end{lemma}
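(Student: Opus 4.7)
The plan is to exploit that each indicator ${\bf 1}_{w_r^\top x_i \geq 0}$ can disagree between $\wt w_r$ and $w_r$ only when $|\wt w_r^\top x_i|$ is tiny---specifically at most $R$. Indeed, assuming unit-norm inputs $\|x_i\|_2 = 1$, we have
\begin{align*}
|\wt w_r^\top x_i - w_r^\top x_i| \leq \|\wt w_r - w_r\|_2 \cdot \|x_i\|_2 \leq R,
\end{align*}
so whenever $|\wt w_r^\top x_i| > R$, the two inner products share a sign. Since this ``flip zone'' has small Gaussian measure, only a small fraction of indices $r$ can flip, and the entrywise perturbation of $H$ will be tightly controlled.

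First I would define, for each $i \in [n]$, the set of flipping indices
\begin{align*}
S_i = \{ r \in [m] : |\wt w_r^\top x_i| \leq R \}.
\end{align*}
Since $\wt w_r^\top x_i \sim \N(0,1)$, Gaussian anti-concentration (Lemma~\ref{lem:anti_gaussian}) gives $\Pr[r \in S_i] \leq 4R/5$, so $\E|S_i| \leq 4mR/5$. Applying the Chernoff bound (Lemma~\ref{lem:chernoff}) to the i.i.d.\ Bernoulli indicators $\{ {\bf 1}[r \in S_i] \}_{r=1}^m$ yields $|S_i| \leq mR$ with probability at least $1 - \exp(-c\,mR)$ for an explicit constant $c$; a union bound over the $n^2$ ordered pairs $(i,j) \in [n]^2$ (which matches the prefactor in the stated probability) ensures $|S_i| \leq mR$ holds for every $i \in [n]$ simultaneously with probability at least $1 - n^2 \exp(-mR/10)$.

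On this good event, for any $r \notin S_i \cup S_j$ both sign pairs $(w_r^\top x_i, \wt w_r^\top x_i)$ and $(w_r^\top x_j, \wt w_r^\top x_j)$ agree, so the corresponding ReLU indicator products ${\bf 1}_{w_r^\top x_i \geq 0, w_r^\top x_j \geq 0}$ and ${\bf 1}_{\wt w_r^\top x_i \geq 0, \wt w_r^\top x_j \geq 0}$ coincide. Hence only indices $r \in S_i \cup S_j$ can contribute to $[H(W) - H(\wt W)]_{i,j}$, and each such contribution is bounded by $|x_i^\top x_j|/m \leq 1/m$, giving
\begin{align*}
|[H(W) - H(\wt W)]_{i,j}| \leq \frac{|S_i| + |S_j|}{m} \leq 2R.
\end{align*}
Summing these $n^2$ entrywise bounds yields $\|H(W) - H(\wt W)\|_F^2 \leq 4 n^2 R^2$, hence $\|H(W) - H(\wt W)\|_F \leq 2nR$ as claimed.

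The main technical subtlety is tightening the Chernoff constants to match the stated failure probability: since $\E|S_i| \leq 4mR/5$, asking $|S_i| \leq mR$ is only a $1/4$ relative deviation, so one has to invoke Chernoff in its multiplicative form (or, equivalently, push through a slightly looser bound like $|S_i| \leq 2mR$ and absorb a harmless factor into the Frobenius estimate). The geometric sign-flip argument and the entrywise-to-Frobenius conversion are otherwise routine; the only tacit assumption is the unit-norm input convention $\|x_i\|_2 = 1$, which is standard in the NTK literature that this lemma is embedded in.
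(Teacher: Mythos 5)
Your flip-zone idea is the right one, and it is the same geometric mechanism the paper uses in its proof of the analogous leverage-score lemma (Lemma~\ref{lem:lemma_4.2_in_sy19_lev}): a shift of at most $R$ in $w_r$ can flip ${\bf 1}_{w_r^\top x_i \geq 0}$ only when $|\wt w_r^\top x_i| \leq R$, and Gaussian anti-concentration makes that rare. The entrywise-to-Frobenius conversion is also fine under the unit-norm input convention (Assumption~\ref{ass:data_assumption}). One bookkeeping slip: your events $\{|S_i| \leq mR\}$ are indexed only by $i$, so you need a union over $n$ of them, not $n^2$; the $n^2$ in the stated failure probability actually comes from the paper's per-pair organization, which controls $\sum_r {\bf 1}_{A_{i,r} \vee A_{j,r}}$ for each fixed $(i,j)$ via Bernstein (Lemma~\ref{lem:bernstein}) and then unions over pairs.

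The genuine gap is the probability constant, which you flag but do not close. With $\E|S_i| \leq \tfrac{4}{5}mR$, the threshold $|S_i| \leq mR$ is a relative deviation of $\delta = 1/4$, and the multiplicative Chernoff bound gives
\begin{align*}
\Pr\bigl[|S_i| \geq mR\bigr] \leq \exp(-\delta^2\mu/3) \leq \exp(-mR/60),
\end{align*}
a factor of six short of $\exp(-mR/10)$. Your fallback of relaxing to $|S_i| \leq 2mR$ does repair the exponent (Chernoff with $\delta = 3/2$ gives $\exp(-3mR/5)$), but it also doubles each entrywise bound to $4R$, so the Frobenius estimate degrades to $4nR$ rather than $2nR$. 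Neither branch of your proposal delivers the $2nR$ Frobenius bound \emph{and} the $\exp(-mR/10)$ rate simultaneously, so the argument as written does not establish the lemma with the stated constants. (The mismatch is innocuous for the paper's downstream $\poly$ bounds, but a proof at these exact constants needs a sharper concentration tradeoff than the Chernoff invocation you propose.)
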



\section{Equivalence between sufficiently wide neural net and kernel ridge regression}\label{sec:equiv}
In this section, we extend the equivalence result in \cite{jgh18, adhlsw19} to the case with regularization term, where they showed the equivalence between a fully-trained infinitely wide/sufficiently wide neural net and the kernel regression solution using the neural tangent kernel (NTK). Specifically, we prove Theorem~\ref{thm:main_train_equivalence_intro} and Theorem~\ref{thm:main_test_equivalence_intro} in this section.

Section~\ref{sec:equiv_preli} introduces key notations and standard data assumptions. Section~\ref{sec:equiv_definition} restates and supplements the definitions introduced in the paper. Section~\ref{sec:equiv_gradient_flow} presents several key lemmas about the gradient flow and linear convergence of neural network and kernel ridge regression predictors, which are crucial to the final proof. Section~\ref{sec:equiv_proof_sketch} provides a brief proof sketch. Section~\ref{sec:test} restates the main equivalence Theorem~\ref{thm:main_test_equivalence_intro} and provides a complete proof following the proof sketch. Section~\ref{sec:train} restates and proves Theorem~\ref{thm:main_train_equivalence_intro} by showing it as a by-product of previous proof.


 
\subsection{Preliminaries}\label{sec:equiv_preli}
Let's define the following notations:
\begin{itemize}
    \item $X\in\R^{n\times d}$ be the training data
    \item $x_{\test}\in\R^d$ be the test data
    \item $u_{\ntk}(t)=\kappa f_{\ntk}(\beta(t),X) = \kappa \Phi(X)\beta(t) \in \R^n$ be the prediction of the kernel ridge regression for the training data at time $t$. (See Definition~\ref{def:krr_ntk})
    \item $u^* = \lim_{t \rightarrow \infty} u_{\ntk}(t)$ (See Eq.~\eqref{eq:def_u_*})
    \item $u_{\ntk,\test}(t) = \kappa f_{\ntk}(\beta(t),x_{\test}) = \kappa \Phi(x_{\test}) \beta(t) \in \R$ be the prediction of the kernel ridge regression for the test data at time $t$. (See Definition~\ref{def:krr_ntk})
    \item $u_{\test}^* = \lim_{t\rightarrow \infty} u_{\ntk,\test}( t ) $ (See Eq.~\eqref{eq:def_u_test_*})
    \item $\k_{\ntk}(x, y) = \E [\left\langle \frac{\partial f_{\nn}(W,x)}{\partial W},\frac{\partial f_{\nn}(W,y)}{\partial W} \right\rangle]$ (See Definition~\ref{def:krr_ntk})
    \item $\k_t(x_{\test},X) \in \R^n$ be the induced kernel between the training data and test data at time $t$, where
    \begin{align*}
    [\k_t(x_{\test},X)]_i 
    = \k_t(x_{\test},x_i)
    = \left\langle \frac{\partial f(W(t),x_{\test})}{\partial W(t)},\frac{\partial f(W(t),x_i)}{\partial W(t)} \right\rangle
    \end{align*}
    (see Definition~\ref{def:dynamic_kernel})
    \item $u_{\nn}(t) = \kappa f_{\nn}(W(t),X) \in \R^n$ be the prediction of the neural network for the training data at time $t$. (See Definition~\ref{def:nn})
    \item $u_{\nn,\test}(t) = \kappa f_{\nn}(W(t),x_{\test}) \in \R$ be the prediction of the neural network for the test data at time $t$ (See Definition~\ref{def:nn})
\end{itemize}

\begin{assumption}[data assumption]\label{ass:data_assumption}
We made the following assumptions: \\
1. For each $i \in [n]$, we assume $|y_i| = O(1)$.\\
2. $H^{\cts}$ is positive definite, i.e., $\Lambda_0:=\lambda_{\min}(H^{\cts})>0$.\\
3. All the training data and test data have Euclidean norm equal to 1.
\end{assumption}

\subsection{Definitions}\label{sec:equiv_definition}
To establish the equivalence between neural network and kernel ridge regression, we prove the similarity of their gradient flow and initial predictor. Note kernel ridge regression starts at 0 as initialization, so we hope the initialization of neural network also close to zero. Therefore, using the same technique in~\cite{adhlsw19}, we apply a small multiplier $\kappa>0$ to both predictors to bound the different of initialization.

\begin{definition}[Neural network function]\label{def:f_nn}
We define a two layer neural networks with rectified linear unit (ReLU) activation as the following form
\begin{align*}
f_{\nn} (W, a, x) = \frac{1}{\sqrt{m}} \sum_{r=1}^m a_r \sigma (w_r^\top x) \in \R,
\end{align*}
where $x \in \R^d$ is the input, $w_r \in \R^d,~r\in[m]$ is the weight vector of the first layer, $W = [w_1, \cdots, w_m]\in\R^{d \times m}$, $a_r \in \R,~r\in[m]$ is the output weight, $a = [a_1, \cdots, a_m]^\top$ and $\sigma(\cdot)$ is the ReLU activation function: $\sigma(z) = \max\{0,z\}$. In this paper, we consider only training the first layer $W$ while fix $a$. So we also write $f_{\nn}(W,x) = f_{\nn}(W, a, x)$. We denote $f_{\nn}(W, X) = [f_{\nn}(W, x_1),\cdots, f_{\nn}(W, x_n)]^\top\in\R^n$.
\end{definition}

\begin{definition}[Training neural network with regularization, restatement of Definition~\ref{def:nn_intro}]\label{def:nn}
Given training data matrix $X\in\R^{n\times d}$ and corresponding label vector $Y\in\R^n$. Let $f_{\nn}$ be defined as in Definition~\ref{def:f_nn}. Let $\kappa\in(0,1)$ be a small multiplier. Let $\lambda\in(0,1)$ be the regularization parameter. We initialize the network as $a_r\overset{i.i.d.}{\sim} \unif[\{-1,1\}]$ and $w_r(0)\overset{i.i.d.}{\sim} \N(0,I_d)$. Then we consider solving the following optimization problem using gradient descent:
\begin{align}\label{eq:nn}
\min_{W} \frac{1}{2}\| Y - \kappa f_{\nn}(W,X) \|_2 + \frac{1}{2}\lambda\|W\|_F^2.
\end{align}\label{eq:nn_predict_train} 
We denote $w_r(t),r\in[m]$ as the variable at iteration $t$. We denote
\begin{align}\label{eq:nn_predict_train} 
u_{\nn}(t) = \kappa f_{\nn}(W(t),X) = \frac{\kappa}{\sqrt{m}} \sum_{r=1}^m a_r \sigma (w_r(t)^\top X) \in \R^n
\end{align}
as the training data predictor at iteration $t$. Given any test data $x_{\test}\in\R^d$, we denote 
\begin{align}\label{eq:nn_predict_test} 
u_{\nn,\test}(t) = \kappa f_{\nn}(W(t),x_{\test}) = \frac{\kappa}{\sqrt{m}} \sum_{r=1}^m a_r \sigma (w_r(t)^\top x_{\test}) \in \R
\end{align}
as the test data predictor at iteration $t$.
\end{definition}

\begin{definition}[Neural tangent kernel and feature function]\label{def:ntk_phi}
We define the neural tangent kernel(NTK) and the feature function corresponding to the neural networks $f_{\nn}$ defined in Definition~\ref{def:f_nn} as following 
\begin{align*}
	\k_{\ntk}(x, z) = \E \left[\left\langle \frac{\partial f_{\nn}(W,x)}{\partial W},\frac{\partial f_{\nn}(W,z)}{\partial W} \right\rangle \right]
\end{align*}
where $x,z \in \R^d$ are any input data, and the expectation is taking over $w_r\overset{i.i.d.}{\sim} \N(0,I),~r=1, \cdots, m$. Given training data matrix $X=[x_1,\cdots,x_n]^\top\in\R^{n\times d}$, we define $H^{\cts}\in\R^{n\times n}$ as the kernel matrix between training data as
\begin{align*}
	[H^{\cts}]_{i,j} = \k_{\ntk}(x_i, x_j) \in \R.
\end{align*}
We denote the smallest eigenvalue of $H^{\cts}$ as $\Lambda_0 > 0$, where we assume $H^{\cts}$ is positive definite. Further, given any data $z \in \R^d$, we write the kernel between test and training data $\k_{\ntk}(z, X) \in \R^n$ as
\begin{align*}
	\k_{\ntk}(z, X) = [\k_{\ntk}(z, x_1),\cdots,\k_{\ntk}(z,x_n)]^\top \in \R^n.
\end{align*}
We denote the feature function corresponding to the kernel $\k_{\ntk}$ as we defined above as $\Phi:\R^{d} \rightarrow \mathcal{F}$, which satisfies
\begin{align*}
	\langle\Phi(x),\Phi(z)\rangle_\mathcal{F} = \k_{\ntk}(x,z),
\end{align*}
for any data $x$, $z\in\R^d$. And we write $\Phi(X)=[\Phi(x_1),\cdots,\Phi(x_n)]^\top$.
\end{definition}

\begin{definition}[Neural tangent kernel ridge regression]\label{def:krr_ntk}
Given training data matrix $X=[x_1,\cdots,x_n]^\top$ $\in \R^{n\times d}$ and corresponding label vector $Y\in\R^n$. Let $\k_{\ntk}$, $H^{\cts}\in\R^{n\times n}$ and $\Phi$ be the neural tangent kernel and corresponding feature functions defined as in Definition~\ref{def:ntk_phi}. Let $\kappa\in(0,1)$ be a small multiplier. Let $\lambda\in(0,1)$ be the regularization parameter. Then we consider the following neural tangent kernel ridge regression problem:
\begin{align}\label{eq:krr}
\min_{\beta} \frac{1}{2}\| Y - \kappa f_{\ntk}(\beta,X) \|_2^2 + \frac{1}{2}\lambda\|\beta\|_2^2.
\end{align}
where $f_{\ntk}(\beta,x) = \Phi(x)^\top \beta \in \R$ denotes the prediction function is corresponding RKHS and $f_{\ntk}(\beta,X) = [f_{\ntk}(\beta,x_1),\cdots,f_{\ntk}(\beta,x_n)]^\top\in\R^{n}$. Consider the gradient flow of solving problem~\eqref{eq:krr} with initialization $\beta(0) = 0$.
We denote $\beta(t)$ as the variable at iteration $t$. We denote
\begin{align}\label{eq:ntk_predict_train}
	u_{\ntk}(t) = \kappa\Phi(X)\beta(t) \in \R^n
\end{align} 
as the training data predictor at iteration $t$. Given any test data $x_{\test}\in\R^d$, we denote
\begin{align}\label{eq:ntk_predict_test}
	u_{\ntk,\test}(t) = \kappa\Phi(x_{\test})^\top\beta(t) \in \R
\end{align} 
as the test data predictor at iteration $t$. Note the gradient flow converge the to optimal solution of problem~\eqref{eq:krr} due to the strongly convexity of the problem. We denote
\begin{align}\label{eq:def_beta_*}
	\beta^* = \lim_{t\to\infty} \beta(t) = \kappa (\kappa^2 \Phi(X)^\top \Phi(X) + \lambda I)^{-1} \Phi(X)^\top Y
\end{align}
and the optimal training data predictor
\begin{align}\label{eq:def_u_*}
	u^* = \lim_{t\to\infty} u_{\ntk}(t) = \kappa \Phi(X)\beta^* = \kappa^2 H^{\cts}(\kappa^2 H^{\cts}+\lambda I)^{-1}Y \in \R^n
\end{align}
and the optimal test data predictor
\begin{align}\label{eq:def_u_test_*}
	u_{\test}^* = \lim_{t\to\infty} u_{\ntk,\test}(t) = \kappa \Phi(x_{\test})^\top \beta^* = \kappa^2 \k_{\ntk}(x_{\test}, X)^\top(\kappa^2 H^{\cts}+\lambda I)^{-1}Y \in \R.
\end{align}
\end{definition}

\begin{definition}[Dynamic kernel]\label{def:dynamic_kernel}
Given $W(t) \in \R^{d \times m}$ as the parameters of the neural network at training time $t$ as defined in Definition~\ref{def:nn}. For any data $x,z\in\R^d$, we define $\k_t(x,z)\in\R$ as
\begin{align*}
    \k_t(x,z)
    = \left\langle \frac{\d f_{\nn}(W(t),x)}{\d W(t)},\frac{\d f_{\nn}(W(t),z)}{\d W(t)} \right\rangle
\end{align*}
Given training data matrix $X=[x_1,\cdots,x_n]^\top\in\R^{n\times d}$, we define $H^{(t)}\in\R^{n\times n}$ as
\begin{align*}
	[H(t)]_{i,j} = \k_{t}(x_i, x_j)\in\R.
\end{align*}
Further, given a test data $x_{\test}\in\R^d$, we define $\k_t(x_{\test},X)\in\R^n$ as
\begin{align*}
    \k_t(x_{\test},X) = [\k_t(x_{\test},x_1), \cdots, \k_t(x_{\test},x_n)]^\top\in\R^n.
\end{align*}
\end{definition}

\subsection{Gradient, gradient flow, and linear convergence}\label{sec:equiv_gradient_flow}
\begin{lemma}[Gradient flow of kernel ridge regression]\label{lem:gradient_flow_of_krr}
Given training data matrix $X\in\R^{n\times d}$ and corresponding label vector $Y\in\R^n$. Let $f_{\ntk}$ be defined as in Definition~\ref{def:krr_ntk}. Let $\beta(t)$, $\kappa\in(0,1)$ and $u_{\ntk}(t)\in\R^n$ be defined as in Definition~\ref{def:krr_ntk}. Let $\k_{\ntk}: \R^d \times \R^{n\times d} \to\R^n$ be defined as in Definition~\ref{def:ntk_phi}. Then for any data $z\in\R^d$, we have
\begin{align*}
	\frac{\d f_{\ntk}(\beta(t), z)}{\d t} = \kappa \cdot \k_{\ntk}(z, X)^\top ( Y - u_{\ntk}(t) ) - \lambda \cdot f_{\ntk}(\beta(t), z).
\end{align*}
\end{lemma}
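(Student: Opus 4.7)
The plan is a direct chain-rule calculation: compute the $\beta$-gradient of the ridge objective, invoke the gradient flow $\dot\beta = -\nabla_\beta L$, then differentiate the linear functional $f_{\ntk}(\beta(t),z) = \Phi(z)^\top \beta(t)$ in $t$ and repackage the result using the reproducing identity $\langle \Phi(x_i), \Phi(z)\rangle_{\mathcal{F}} = \k_{\ntk}(x_i, z)$ from Definition~\ref{def:ntk_phi}.

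First I would rewrite the objective in \eqref{eq:krr} as $L(\beta) = \tfrac{1}{2}\|Y - \kappa \Phi(X)\beta\|_2^2 + \tfrac{1}{2}\lambda\|\beta\|_2^2$ using $f_{\ntk}(\beta,X) = \Phi(X)\beta$, and compute $\nabla_\beta L(\beta) = -\kappa \Phi(X)^\top(Y - \kappa \Phi(X)\beta) + \lambda\beta$. Substituting the notation $u_{\ntk}(t) = \kappa \Phi(X)\beta(t)$ from \eqref{eq:ntk_predict_train}, the gradient-flow ODE becomes $\dot\beta(t) = \kappa\Phi(X)^\top(Y - u_{\ntk}(t)) - \lambda\beta(t)$.

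Next I would apply the chain rule to the linear map $f_{\ntk}(\beta(t),z) = \Phi(z)^\top \beta(t)$ to get $\tfrac{\d}{\d t}f_{\ntk}(\beta(t),z) = \Phi(z)^\top \dot\beta(t) = \kappa\Phi(z)^\top\Phi(X)^\top(Y - u_{\ntk}(t)) - \lambda\Phi(z)^\top\beta(t)$. I would then simplify: the second term is exactly $-\lambda f_{\ntk}(\beta(t),z)$ by definition of $f_{\ntk}$; the row vector $\Phi(z)^\top\Phi(X)^\top$ has $i$-th coordinate $\langle \Phi(z), \Phi(x_i)\rangle_{\mathcal{F}} = \k_{\ntk}(z,x_i)$, hence equals $\k_{\ntk}(z,X)^\top$ by Definition~\ref{def:ntk_phi}. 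Combining the two terms yields the claimed identity.

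There is no substantive obstacle; the only step requiring care is notational, namely that $\Phi$ takes values in the RKHS $\mathcal{F}$ which may be infinite dimensional, so the products $\Phi(X)^\top \Phi(z)$ and $\Phi(z)^\top\beta$ should be interpreted through the RKHS inner product and the reproducing identity. This is precisely the content of Definition~\ref{def:ntk_phi}, so the manipulation is legitimate once we adopt that convention (or, equivalently, work in a finite-dimensional feature representation as the rest of the paper does when convenient). The entire argument is a few lines once the correct object identifications are in place.
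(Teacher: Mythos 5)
Your proposal is correct and follows essentially the same route as the paper's proof: write the ridge objective in feature form, compute $\dot\beta$ from the gradient flow, apply the chain rule to the linear functional $\beta\mapsto\Phi(z)^\top\beta$, and identify $\Phi(z)^\top\Phi(X)^\top$ with $\k_{\ntk}(z,X)^\top$. Your closing remark about interpreting these products through the RKHS inner product when $\mathcal{F}$ is infinite-dimensional is a reasonable clarification of the same calculation, not a departure from it.
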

\begin{proof}
Denote $L(t)= \frac{1}{2}\|Y-u_{\ntk}(t)\|_2^2+\frac{1}{2}\lambda\|\beta(t)\|_2^2$. By the rule of gradient descent, we have
\begin{align*}
	\frac{\d \beta(t)}{\d t}=-\frac{\d L}{\d \beta}=\kappa \Phi(X)^\top(Y-u_{\ntk}(t))-\lambda\beta(t),
\end{align*}
where $\Phi$ is defined in Definition~\ref{def:ntk_phi}.
Thus we have
\begin{align*}
	\frac{\d f_{\ntk}(\beta(t), z)}{\d t}
	= & ~ \frac{\d f_{\ntk}(\beta(t), z)}{\d \beta(t)}\frac{\d \beta(t)}{\d t} \\
	= & ~ \Phi(z)^\top (\kappa\Phi(X)^\top(Y-u_{\ntk}(t))-\lambda\beta(t)) \\
	= & ~ \kappa\k_{\ntk}(z, X)^\top (Y-u_{\ntk}(t))-\lambda\Phi(z)^\top\beta(t) \\
	= & ~ \kappa\k_{\ntk}(z, X)^\top (Y-u_{\ntk}(t))-\lambda f_{\ntk}(\beta(t), z),
\end{align*}
where the first step is due to chain rule, the second step follows from the fact $ \d f_{\ntk}(\beta, z)/ \d \beta=\Phi(z)$, the third step is due to the definition of the kernel $\k_{\ntk}(z, X)=\Phi(X)\Phi(z) \in \R^{n}$, and the last step is due to the definition of $f_{\ntk}(\beta(t), z)\in\R$.
\end{proof}

\begin{corollary}[Gradient of prediction of kernel ridge regression]\label{cor:ntk_gradient}
Given training data matrix $X=[x_1,\cdots,x_n]^\top \in \R^{n\times d}$ and corresponding label vector $Y \in \R^n$. Given a test data $x_{\test}\in\R^d$. Let $f_{\ntk}$ be defined as in Definition~\ref{def:krr_ntk}. Let $\beta(t)$, $\kappa\in(0,1)$ and $u_{\ntk}(t)\in\R^n$ be defined as in Definition~\ref{def:krr_ntk}. Let $\k_{\ntk}: \R^d \times \R^{n\times d} \rightarrow \R^n,~H^{\cts}\in\R^{n\times n}$ be defined as in Definition~\ref{def:ntk_phi}. Then we have 
\begin{align*}
	\frac{\d u_{\ntk}(t)}{\d t} & = \kappa^2 H^{\cts} ( Y - u_{\ntk}(t) ) - \lambda \cdot u_{\ntk}(t)\\
	\frac{\d u_{\ntk, \test}(t)}{\d t} & = \kappa^2 \k_{\ntk}(x_{\test}, X)^\top  ( Y - u_{\ntk}(t) ) - \lambda \cdot u_{\ntk, \test}(t).
\end{align*}
\end{corollary}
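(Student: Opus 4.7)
The plan is to derive both identities as a direct componentwise (respectively, pointwise) application of Lemma~\ref{lem:gradient_flow_of_krr}. Since $u_{\ntk}(t) = \kappa f_{\ntk}(\beta(t), X)$ is just the vector obtained by stacking $\kappa f_{\ntk}(\beta(t), x_i)$ over $i \in [n]$, and $u_{\ntk,\test}(t) = \kappa f_{\ntk}(\beta(t), x_{\test})$, it suffices to apply the lemma with $z = x_i$ and $z = x_{\test}$ respectively, then scale the resulting ODE by $\kappa$.

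For the training prediction, fix $i \in [n]$ and apply Lemma~\ref{lem:gradient_flow_of_krr} with $z = x_i$. Multiplying through by $\kappa$ gives
\begin{align*}
\frac{\d [u_{\ntk}(t)]_i}{\d t}
= \kappa \cdot \frac{\d f_{\ntk}(\beta(t), x_i)}{\d t}
= \kappa^2 \cdot \k_{\ntk}(x_i, X)^\top (Y - u_{\ntk}(t)) - \lambda \cdot \kappa f_{\ntk}(\beta(t), x_i).
\end{align*}
Recognizing $\k_{\ntk}(x_i, X)^\top$ as the $i$-th row of $H^{\cts}$ (by Definition~\ref{def:ntk_phi}) and $\kappa f_{\ntk}(\beta(t), x_i)$ as $[u_{\ntk}(t)]_i$, I would then stack these $n$ scalar identities into the claimed vector equation. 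For the test prediction, I would apply the lemma once with $z = x_{\test}$ and multiply by $\kappa$ in exactly the same way, identifying $\kappa f_{\ntk}(\beta(t), x_{\test})$ with $u_{\ntk,\test}(t)$.

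There is no real obstacle here; the only thing to be careful about is bookkeeping of the factors of $\kappa$. The lemma already carries one $\kappa$ in the data-fitting term (coming from differentiating $\tfrac{1}{2}\|Y - \kappa f_{\ntk}(\beta, X)\|_2^2$ through the chain rule), and the outer scaling $u_{\ntk} = \kappa f_{\ntk}$ contributes a second factor, producing the $\kappa^2$ coefficient in front of $H^{\cts}$ and $\k_{\ntk}(x_{\test}, X)^\top$. The regularization term retains its coefficient $\lambda$ because the extra $\kappa$ gets absorbed into the definition of the prediction on the right-hand side.
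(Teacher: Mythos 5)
Your proposal is correct and follows exactly the same route as the paper: apply Lemma~\ref{lem:gradient_flow_of_krr} with $z = x_i$ (then stack over $i$) and with $z = x_{\test}$, multiply by $\kappa$, and identify $\kappa f_{\ntk}(\beta(t),\cdot)$ with the corresponding predictor. Your bookkeeping of the two factors of $\kappa$ and the identification of $\k_{\ntk}(x_i, X)$ with a row/column of the symmetric matrix $H^{\cts}$ matches the paper's argument.
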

\begin{proof}
Plugging in $z = x_i\in\R^d$ in Lemma~\ref{lem:gradient_flow_of_krr}, we have
\begin{align*}
	\frac{\d f_{\ntk}(\beta(t), x_i)}{\d t} = \kappa \k_{\ntk}(x_i, X)^\top ( Y - u_{\ntk}(t) ) - \lambda \cdot f_{\ntk}(\beta(t), x_i).
\end{align*}
Note $[u_{\ntk}(t)]_i = \kappa f_{\ntk}(\beta(t), x_i)$ and $[H^{\cts}]_{:,i} = \k_{\ntk}(x_i, X)$, so writing all the data in a compact form, we have
\begin{align*}
	\frac{\d u_{\ntk}(t)}{\d t} = \kappa^2 H^{\cts} ( Y - u_{\ntk}(t) ) - \lambda \cdot u_{\ntk}(t).
\end{align*}
Plugging in data $z = x_{\test}\in\R^d$ in Lemma~\ref{lem:gradient_flow_of_krr}, we have
\begin{align*}
	\frac{\d f_{\ntk}(\beta(t), x_{\test})}{\d t} = \kappa \k_{\ntk}(x_{\test}, X)^\top ( Y - u_{\ntk}(t) ) - \lambda \cdot f_{\ntk}(\beta(t), x_{\test}).
\end{align*}
Note by definition, $u_{\ntk,\test}(t) = \kappa f_{\ntk}(\beta(t), x_{\test}) \in \R$, so we have
\begin{align*}
	\frac{\d u_{\ntk, \test}(t)}{\d t} = \kappa^2 \k_{\ntk}(x_{\test}, X)^\top ( Y - u_{\ntk}(t) ) - \lambda \cdot u_{\ntk, \test}(t).
\end{align*}
\end{proof}

\begin{lemma}[Linear convergence of kernel ridge regression]\label{lem:linear_converge_krr}
Given training data matrix $X=[x_1,\cdots,x_n]^\top \in \R^{n\times d}$ and corresponding label vector $Y\in\R^n$. Let $\kappa\in(0,1)$ and $u_{\ntk}(t) \in \R^n$ be defined as in Definition~\ref{def:krr_ntk}. Let $u^* \in \R^n$ be defined in Definition~\ref{def:krr_ntk}. Let $\Lambda_0 > 0$ be defined as in Definition~\ref{def:ntk_phi}. Let $\lambda > 0$ be the regularization parameter. Then we have
\begin{align*}
\frac{\d \|u_{\ntk}(t)-u^*\|_2^2}{\d t} \le - 2(\kappa^2 \Lambda_0+\lambda) \|u_{\ntk}(t)-u^*\|_2^2.
\end{align*}
Further, we have
\begin{align*}
	\|u_{\ntk}(t)-u^*\|_2 \leq e^{-(\kappa^2 \Lambda_0+\lambda)t} \|u_{\ntk}(0)-u^*\|_2.
\end{align*}

\end{lemma}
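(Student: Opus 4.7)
The plan is to exploit the fact that $u^*$ is the stationary point of the gradient flow derived in Corollary~\ref{cor:ntk_gradient}, and then apply a standard Lyapunov-style argument to the quadratic $\|u_{\ntk}(t)-u^*\|_2^2$. The main obstacle is essentially nothing deep: the ODE is linear, and the whole difficulty is bookkeeping the shift by $u^*$ correctly and invoking the spectral lower bound $\lambda_{\min}(\kappa^2 H^{\cts}+\lambda I)\geq \kappa^2\Lambda_0+\lambda$.

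First, I would verify that $u^*=\kappa^2 H^{\cts}(\kappa^2 H^{\cts}+\lambda I)^{-1}Y$ (as given in Definition~\ref{def:krr_ntk}) satisfies $\kappa^2 H^{\cts}(Y-u^*)-\lambda u^*=0$. This is immediate by left-multiplying the defining equation for $u^*$ by $(\kappa^2 H^{\cts}+\lambda I)$ and rearranging. Subtracting this identity from the gradient-flow identity of Corollary~\ref{cor:ntk_gradient} yields the shifted ODE
\begin{align*}
\frac{\d (u_{\ntk}(t)-u^*)}{\d t} = -(\kappa^2 H^{\cts}+\lambda I)\,(u_{\ntk}(t)-u^*).
\end{align*}

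Next I would compute the derivative of the squared-error Lyapunov function by the product rule and substitute the shifted ODE:
\begin{align*}
\frac{\d \|u_{\ntk}(t)-u^*\|_2^2}{\d t}
= 2(u_{\ntk}(t)-u^*)^\top \frac{\d u_{\ntk}(t)}{\d t}
= -2(u_{\ntk}(t)-u^*)^\top (\kappa^2 H^{\cts}+\lambda I)(u_{\ntk}(t)-u^*).
\end{align*}
Since $H^{\cts}\succeq \Lambda_0 I$ by Assumption~\ref{ass:data_assumption} and the definition of $\Lambda_0$ in Definition~\ref{def:ntk_phi}, the matrix $\kappa^2 H^{\cts}+\lambda I$ satisfies $\kappa^2 H^{\cts}+\lambda I\succeq (\kappa^2\Lambda_0+\lambda)I$, giving the first claimed inequality.

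Finally, I would integrate this differential inequality. Writing $g(t)=\|u_{\ntk}(t)-u^*\|_2^2$, the bound $g'(t)\leq -2(\kappa^2\Lambda_0+\lambda)g(t)$ together with Gronwall's inequality (or by checking that $\frac{\d}{\d t}[e^{2(\kappa^2\Lambda_0+\lambda)t}g(t)]\leq 0$) yields $g(t)\leq e^{-2(\kappa^2\Lambda_0+\lambda)t}g(0)$, and taking square roots gives the second conclusion. Nothing more than a linear ODE computation is needed; the only subtlety is being careful that the shifted ODE is exactly linear with constant operator $(\kappa^2 H^{\cts}+\lambda I)$, which is why invoking the stationarity of $u^*$ at the very start pays off.
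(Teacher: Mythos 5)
Your proposal is correct and is essentially the same argument as in the paper: both hinge on the stationarity identity $\kappa^2 H^{\cts}(Y-u^*)=\lambda u^*$, reduce the Lyapunov derivative to $-2(u_{\ntk}(t)-u^*)^\top(\kappa^2 H^{\cts}+\lambda I)(u_{\ntk}(t)-u^*)$, apply the spectral bound, and integrate via the exponential multiplier. The only cosmetic difference is that you package the cancellation as an explicit shifted linear ODE before differentiating the Lyapunov function, whereas the paper performs the same cancellation inline.
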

\begin{proof}
Let $H^{\cts} \in \R^{n \times n}$ be defined as in Definition~\ref{def:ntk_phi}. Then
\begin{align}\label{eq:322_1}
	\kappa^2 H^{\cts}(Y-u^*) 
	= & ~ \kappa^2 H^{\cts}(Y-\kappa^2 H^{\cts}(\kappa^2 H^{\cts}+\lambda I_n)^{-1}Y) \notag \\
	= & ~ \kappa^2 H^{\cts}(I_n-\kappa^2 H^{\cts}(\kappa^2 H^{\cts}+\lambda I)^{-1})Y \notag \\
	= & ~ \kappa^2 H^{\cts}(\kappa^2 H^{\cts}+\lambda I_n - \kappa^2 H^{\cts})(\kappa^2 H^{\cts}+\lambda I_n)^{-1} Y \notag \\
	= & ~ \kappa^2 \lambda H^{\cts}(\kappa^2 H^{\cts}+\lambda I_n)^{-1} Y \notag \\
	= & ~ \lambda u^*,
\end{align}
where the first step follows the definition of $u^* \in \R^n$, the second to fourth step simplify the formula, and the last step use the definition of $u^* \in \R^n$ again.
So we have
\begin{align}\label{eq:322_2}
	\frac{\d \|u_{\ntk}(t)-u^*\|_2^2}{\d t} 
	= & ~ 2(u_{\ntk}(t)-u^*)^\top \frac{\d u_{\ntk}(t)}{\d t} \notag\\
	= & ~ -2\kappa^2 (u_{\ntk}(t)-u^*)^\top H^{\cts} (u_{\ntk}(t) - Y) -2\lambda(u_{\ntk}(t)-u^*)^\top u_{\ntk}(t) \notag\\
	= & ~ -2\kappa^2 (u_{\ntk}(t)-u^*)^\top H^{\cts} (u_{\ntk}(t) - u^*) + 2\kappa^2 (u_{\ntk}(t)-u^*)^\top H^{\cts} (Y-u^*) \notag\\
	& ~ -2\lambda(u_{\ntk}(t)-u^*)^\top u_{\ntk}(t)\notag\\
	= & ~ -2\kappa^2 (u_{\ntk}(t)-u^*)^\top H^{\cts} (u_{\ntk}(t) - u^*) + 2\lambda(u_{\ntk}(t)-u^*)^\top u^* \notag\\
	& ~ -2\lambda(u_{\ntk}(t)-u^*)^\top u_{\ntk}(t) \notag\\
	= & ~ -2(u_{\ntk}(t)-u^*)^\top (\kappa^2 H^{\cts}+\lambda I) (u_{\ntk}(t) - u^*) \notag\\
	\leq & ~ -2(\kappa^2 \Lambda_0 + \lambda)\|u_{\ntk}(t)-u^*\|_2^2,
\end{align}
where the first step follows the chain rule, the second step follows Corollary~\ref{cor:ntk_gradient}, the third step uses basic linear algebra, the fourth step follows Eq.~\eqref{eq:322_1}, the fifth step simplifies the expression, and the last step follows the definition of $\Lambda_0$.
Further, since 
\begin{align*}
	& ~ \frac{\d (e^{2(\kappa^2 \Lambda_0+\lambda)t}\|u_{\ntk}(t)-u^*\|_2^2)}{\d t} \\
	= & ~ 2(\kappa^2 \Lambda_0+\lambda)e^{2(\kappa^2 \Lambda_0+\lambda)t}\|u_{\ntk}(t)-u^*\|_2^2 + e^{2(\kappa^2 \Lambda_0+\lambda)t}\cdot\frac{\d \|u_{\ntk}(t)-u^*\|_2^2}{\d t} \\
	\leq & ~ 0,
\end{align*}
where the first step calculates the gradient, and the second step follows from Eq.~\eqref{eq:322_2}. Thus, $e^{2(\kappa^2 \Lambda_0+\lambda)t}\|u_{\ntk}(t)-u^*\|_2^2$ is non-increasing, which implies
\begin{align*}
	\|u_{\ntk}(t)-u^*\|_2 \leq e^{-(\kappa^2 \Lambda_0+\lambda)t} \|u_{\ntk}(0)-u^*\|_2.
\end{align*}
\end{proof}

\begin{lemma}[Gradient flow of neural network training]\label{lem:gradient_flow_of_nn}
Given training data matrix $X \in \R^{n\times d}$ and corresponding label vector $Y\in\R^n$. Let $f_{\nn}: \R^{d\times m} \times \R^{d} \rightarrow \R$ be defined as in Definition~\ref{def:f_nn}. Let $W(t) \in \R^{d \times m}$, $\kappa\in(0,1)$ and $u_{\nn}(t)\in\R^n$ be defined as in Definition~\ref{def:nn}. Let $\k_{t}: \R^d \times \R^{n\times d} \rightarrow \R^n$ be defined as in Definition~\ref{def:dynamic_kernel}. Then for any data $z \in \R^d$, we have
\begin{align*}
\frac{\d f_{\nn}(W(t),z)}{\d t} = \kappa \k_{t}(z,X)^\top ( Y - u_{\nn}(t) )- \lambda \cdot f_{\nn}(W(t),z).
\end{align*}
\end{lemma}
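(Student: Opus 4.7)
The plan is to mirror the proof of Lemma~\ref{lem:gradient_flow_of_krr}, replacing the linear feature map $\Phi$ by the neural-net feature $\partial f_{\nn}(W(t),\cdot)/\partial W(t)$. First I would write down the regularized loss $L(t) = \tfrac{1}{2}\|Y - \kappa f_{\nn}(W(t),X)\|_2^2 + \tfrac{1}{2}\lambda\|W(t)\|_F^2$ and differentiate, obtaining the gradient flow
\begin{align*}
\frac{\d W(t)}{\d t} \;=\; -\nabla_W L(t) \;=\; \kappa \sum_{i=1}^n \frac{\partial f_{\nn}(W(t),x_i)}{\partial W(t)}\bigl(y_i - [u_{\nn}(t)]_i\bigr) \;-\; \lambda\, W(t).
\end{align*}
Then for arbitrary $z\in\R^d$, the chain rule gives
\begin{align*}
\frac{\d f_{\nn}(W(t),z)}{\d t}
= \Bigl\langle \frac{\partial f_{\nn}(W(t),z)}{\partial W(t)},\, \frac{\d W(t)}{\d t} \Bigr\rangle.
\end{align*}
Plugging in the expression for $\d W(t)/\d t$, the first (data-fitting) term immediately collapses into $\kappa\,\k_t(z,X)^\top(Y-u_{\nn}(t))$ by definition of the dynamic kernel (Definition~\ref{def:dynamic_kernel}).

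The only nontrivial step is the regularization term, which after the chain rule reads $-\lambda\,\bigl\langle \partial f_{\nn}(W(t),z)/\partial W(t),\, W(t)\bigr\rangle$, and we must show this equals $-\lambda\, f_{\nn}(W(t),z)$. This is precisely Euler's identity for positively homogeneous functions of degree one: since the ReLU $\sigma$ satisfies $\sigma(a) = \sigma'(a)\cdot a$ for all $a\neq 0$, each summand of $f_{\nn}$ obeys
\begin{align*}
\Bigl\langle \frac{\partial}{\partial w_r}\bigl(\tfrac{1}{\sqrt{m}} a_r \sigma(w_r^\top z)\bigr),\; w_r\Bigr\rangle
= \tfrac{1}{\sqrt{m}} a_r\, \sigma'(w_r^\top z)\, z^\top w_r
= \tfrac{1}{\sqrt{m}} a_r\, \sigma(w_r^\top z),
\end{align*}
and summing over $r\in[m]$ recovers $f_{\nn}(W(t),z)$ exactly. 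Combining the two terms yields the claimed formula.

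I do not expect any serious obstacle: the argument is a two-line chain-rule computation together with the positive homogeneity of ReLU. The one subtlety to flag is that ReLU is not differentiable at $0$, so strictly speaking one works with the subgradient convention $\sigma'(0):=0$ (or invokes that the event $\{w_r(t)^\top x_i = 0\}$ has measure zero under the initialization and along the flow); this is the standard convention already implicit in Definition~\ref{def:dynamic_kernel}, and under it the identity $\sigma(a) = \sigma'(a)\cdot a$ holds for every $a\in\R$, which is all we need.
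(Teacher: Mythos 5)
Your proposal is correct and takes essentially the same approach as the paper: differentiate the regularized loss to obtain the gradient flow $\d w_r/\d t$, apply the chain rule, collapse the data-fitting term via Definition~\ref{def:dynamic_kernel}, and handle the regularization term using the degree-one homogeneity identity $\sigma(a)=\sigma'(a)\,a$ of ReLU (exactly the paper's Eq.~\eqref{eq:323_2}). The remark on the subgradient convention at $a=0$ is a useful clarification not made explicit in the paper, but the substance of the argument is identical.
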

\begin{proof}
Denote $L(t)=\frac{1}{2}\|Y-u_{\nn}(t)\|_2^2+\frac{1}{2}\lambda\|W(t)\|_F^2$. By the rule of gradient descent, we have
\begin{align}\label{eq:323_1}
	\frac{\d w_r}{\d t} = -\frac{\partial L}{\partial w_r}=(\frac{\partial u_{\nn}}{\partial w_r})^\top(Y-u_{\nn})-\lambda w_r.
\end{align} 
Also note for ReLU activation $\sigma$, we have
\begin{align}\label{eq:323_2}
	\Big\langle \frac{\d f_{\nn}(W(t),z)}{\d W(t)},\lambda W(t) \Big\rangle = & ~ \sum_{r=1}^m \Big(\frac{1}{\sqrt{m}}a_r z \sigma'(w_r(t)^\top z)\Big)^\top (\lambda w_r(t)) \notag \\
	= & ~ \frac{\lambda}{\sqrt{m}}\sum_{r=1}^m a_r w_r(t)^\top z \sigma'(w_r(t)^\top z) \notag \\
	= & ~ \frac{\lambda}{\sqrt{m}}\sum_{r=1}^m a_r \sigma(w_t(t)^\top z) \notag \\
	= & ~ \lambda f_{\nn}(W(t),z),
\end{align}
where the first step calculates the derivatives, the second step follows basic linear algebra, the third step follows the property of ReLU activation: $\sigma(l) = l\sigma'(l)$, and the last step follows from the definition of $f_{\nn}$.
Thus, we have
\begin{align*}
 & ~ \frac{\d f_{\nn}(W(t),z)}{\d t} \\
= & ~ \Big\langle \frac{\d f_{\nn}(W(t),z)}{\d W(t)}, \frac{\d W(t)}{\d t} \Big\rangle \notag \\
= & ~ \sum_{j=1}^{n}(y_j - \kappa f_{\nn}(W(t),x_j)) \Big\langle \frac{\d f_{\nn}(W(t),z)}{\d W(t)},\frac{\d \kappa f_{\nn}(W(t),x_j)}{\d W(t)} \Big\rangle - \Big\langle \frac{\d f_{\nn}(W(t),z)}{\d W(t)},\lambda W(t) \Big\rangle \notag\\
= & ~ \kappa \sum_{j=1}^{n}(y_j- \kappa f_{\nn}(W(t),x_j)) \k_{t}(z,x_j)-\lambda \cdot f_{\nn}(W(t),z)\notag\\
= & ~ \kappa \k_{t}(z,X)^\top ( Y - u_{\nn}(t) )- \lambda \cdot f_{\nn}(W(t),z),
\end{align*}
where the first step follows from chain rule, the second step follows from Eq.~\eqref{eq:323_1}, the third step follows from the definition of $\k_{t}$ and Eq.~\eqref{eq:323_2}, and the last step rewrites the formula in a compact form.
\end{proof}

\begin{corollary}[Gradient of prediction of neural network]\label{cor:nn_gradient}
Given training data matrix $X = [x_1,\cdots,x_n]^\top\in\R^{n\times d}$ and corresponding label vector $Y \in \R^n$. Given a test data $x_{\test} \in \R^d$. Let $f_{\nn}: \R^{d\times m} \times \R^d \rightarrow \R$ be defined as in Definition~\ref{def:f_nn}. Let $W(t) \in \R^{d\times m}$, $\kappa\in(0,1)$ and $u_{\nn}(t) \in \R^n$ be defined as in Definition~\ref{def:nn}. Let $\k_{t} : \R^d \times \R^{n \times d} \rightarrow \R^n,~H(t) \in \R^{n \times n}$ be defined as in Definition~\ref{def:dynamic_kernel}. Then we have 
\begin{align*}
	\frac{\d u_{\nn}(t)}{\d t} = & ~ \kappa^2 H(t) ( Y - u_{\nn}(t) ) - \lambda \cdot u_{\nn}(t)\\
	\frac{\d u_{\nn,\test}(t)}{\d t} = & ~ \kappa^2 \k_{t}(x_{\test}, X)^\top ( Y - u_{\nn}(t) ) - \lambda \cdot u_{\nn,\test}(t).
\end{align*}
\end{corollary}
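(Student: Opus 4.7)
The plan is to derive both identities as immediate consequences of Lemma~\ref{lem:gradient_flow_of_nn} by suitable substitutions of the data argument $z$, together with simple bookkeeping of the multiplier $\kappa$. This exactly parallels how Corollary~\ref{cor:ntk_gradient} was obtained from Lemma~\ref{lem:gradient_flow_of_krr} in the kernel ridge regression case, so the strategy is essentially to repeat that argument in the neural network setting.

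For the first identity, I would substitute $z = x_i$ into Lemma~\ref{lem:gradient_flow_of_nn} for each $i \in [n]$, yielding
\begin{align*}
    \frac{\d f_{\nn}(W(t),x_i)}{\d t} = \kappa \, \k_t(x_i, X)^\top (Y - u_{\nn}(t)) - \lambda f_{\nn}(W(t), x_i).
\end{align*}
Then I multiply both sides by $\kappa$ and use the relation $[u_{\nn}(t)]_i = \kappa f_{\nn}(W(t), x_i)$ from Eq.~\eqref{eq:nn_predict_train}. Stacking the resulting $n$ scalar equations into a vector and observing that, by Definition~\ref{def:dynamic_kernel}, the $i$-th row of $H(t)$ is precisely $\k_t(x_i, X)^\top$, one obtains
\begin{align*}
    \frac{\d u_{\nn}(t)}{\d t} = \kappa^2 H(t)(Y - u_{\nn}(t)) - \lambda u_{\nn}(t),
\end{align*}
which is the first identity.

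For the second identity, I would substitute $z = x_{\test}$ into Lemma~\ref{lem:gradient_flow_of_nn} to get
\begin{align*}
    \frac{\d f_{\nn}(W(t),x_{\test})}{\d t} = \kappa \, \k_t(x_{\test}, X)^\top (Y - u_{\nn}(t)) - \lambda f_{\nn}(W(t), x_{\test}).
\end{align*}
Multiplying both sides by $\kappa$ and recalling $u_{\nn,\test}(t) = \kappa f_{\nn}(W(t), x_{\test})$ from Eq.~\eqref{eq:nn_predict_test} delivers the claimed formula.

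There is no substantive obstacle here, since the corollary is a mechanical consequence of Lemma~\ref{lem:gradient_flow_of_nn}. The only mildly subtle point is correctly tracking the $\kappa$ factors: the lemma is phrased for the raw network function $f_{\nn}$, while the corollary concerns the scaled predictors $u_{\nn} = \kappa f_{\nn}(W,X)$ and $u_{\nn,\test} = \kappa f_{\nn}(W,x_{\test})$, which accounts for one $\kappa$ on the kernel term and a second $\kappa$ that combines with the $(Y - u_{\nn}(t))$ inside Lemma~\ref{lem:gradient_flow_of_nn} to yield the $\kappa^2$ prefactor.
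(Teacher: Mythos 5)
Your proof is correct and matches the paper's argument exactly: both substitute $z = x_i$ and $z = x_{\test}$ into Lemma~\ref{lem:gradient_flow_of_nn}, multiply through by $\kappa$, and assemble the $n$ training-data identities into vector form using $[u_{\nn}(t)]_i = \kappa f_{\nn}(W(t),x_i)$ and $[H(t)]_{:,i} = \k_t(x_i,X)$. Nothing is missing.
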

\begin{proof}
Plugging in $z = x_i\in\R^d$ in Lemma~\ref{lem:gradient_flow_of_nn}, we have
\begin{align*}
	\frac{\d f_{\nn}(W(t), x_i)}{\d t} = \kappa \k_{t}(x_i, X)^\top ( Y - u_{\nn}(t) ) - \lambda \cdot f_{\nn}(W(t), x_i).
\end{align*}
Note $[u_{\nn}(t)]_i = \kappa f_{\nn}(W(t), x_i)$ and $[H(t))]_{:,i} = \k_{t}(x_i, X)$, so writing all the data in a compact form, we have
\begin{align*}
	\frac{\d u_{\nn}(t)}{\d t} = \kappa^2 H(t) ( Y - u_{\nn}(t) ) - \lambda \cdot u_{\nn}(t).
\end{align*}
Plugging in data $z = x_{\test}\in\R^d$ in Lemma~\ref{lem:gradient_flow_of_nn}, we have
\begin{align*}
	\frac{\d f_{\nn}(W(t), x_{\test})}{\d t} = \kappa \k_{t}(x_{\test}, X)^\top ( Y - u_{\nn}(t) ) - \lambda \cdot f_{\nn}(W(t), x_{\test}).
\end{align*}
Note by definition, $u_{\nn,\test}(t) = \kappa f_{\nn}(W(t), x_{\test}) $, so we have
\begin{align*}
	\frac{\d u_{\nn, \test}(t)}{\d t} = \kappa^2 \k_{t}(x_{\test}, X)^\top ( Y - u_{\nn}(t) ) - \lambda \cdot u_{\nn, \test}(t).
\end{align*}
\end{proof}

\begin{lemma}[Linear convergence of neural network training]\label{lem:linear_converge_nn}
Given training data matrix $X=[x_1,\cdots,x_n]^\top\in\R^{n\times d}$ and corresponding label vector $Y \in \R^n$. Fix the total number of iterations $T>0$. Let , $\kappa\in(0,1)$ and $u_{\nn}(t) \in \R^{n \times n}$ be defined as in Definition~\ref{def:nn}. Let $u^* \in \R^n$ be defined in Eq.~\eqref{eq:def_u_*}. Let $H^{\cts} \in \R^{n \times n}$ and $\Lambda_0 > 0$ be defined as in Definition~\ref{def:ntk_phi}. Let $H(t) \in \R^{n \times n}$ be defined as in Definition~\ref{def:dynamic_kernel}. Let $\lambda > 0$ be the regularization parameter. Assume $\|H(t) - H^{\cts}\| \leq \Lambda_0/2$ holds for all $t\in[0,T]$. Then we have
\begin{align*}
	\frac{\d \|u_{\nn}(t)-u^*\|_2^2}{\d t} \le - ( \kappa^2 \Lambda_0+\lambda) \|u_{\nn}(t)-u^*\|_2^2+ 2 \kappa^2 \| H(t) - H^{\cts} \|  \cdot \| u_{\nn}(t) - u^* \|_2 \cdot \| Y - u^* \|_2.
\end{align*}
	
\end{lemma}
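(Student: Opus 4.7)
The plan is to mimic the structure of Lemma~\ref{lem:linear_converge_krr} (linear convergence of KRR), but now the dynamic kernel $H(t)$ plays the role of $H^{\cts}$, so I will split $H(t) = H^{\cts} + (H(t) - H^{\cts})$ and treat the second piece as a controlled perturbation. The key algebraic identity I will re-use from that proof is $\kappa^2 H^{\cts}(Y - u^*) = \lambda u^*$, which lets me absorb the label-dependent term into $u^*$.

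First I would differentiate: $\frac{\d}{\d t}\|u_{\nn}(t)-u^*\|_2^2 = 2(u_{\nn}(t)-u^*)^\top \frac{\d u_{\nn}(t)}{\d t}$, and then apply Corollary~\ref{cor:nn_gradient} to substitute $\frac{\d u_{\nn}(t)}{\d t} = \kappa^2 H(t)(Y-u_{\nn}(t)) - \lambda u_{\nn}(t)$. I then rewrite the right-hand side using the decompositions $H(t) = H^{\cts} + (H(t) - H^{\cts})$ and $Y - u_{\nn}(t) = (Y - u^*) + (u^* - u_{\nn}(t))$. Invoking the identity $\kappa^2 H^{\cts}(Y - u^*) = \lambda u^*$ (exactly as in Eq.~\eqref{eq:322_1}) cancels the $\lambda u^*$ piece, yielding
\begin{align*}
\frac{\d u_{\nn}(t)}{\d t} = -(\kappa^2 H^{\cts} + \lambda I)(u_{\nn}(t) - u^*) + \kappa^2 (H(t) - H^{\cts})(Y - u_{\nn}(t)).
\end{align*}

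Taking the inner product with $2(u_{\nn}(t) - u^*)$ gives a clean principal term $-2(u_{\nn}(t)-u^*)^\top(\kappa^2 H^{\cts} + \lambda I)(u_{\nn}(t)-u^*) \le -2(\kappa^2 \Lambda_0 + \lambda)\|u_{\nn}(t)-u^*\|_2^2$ plus a perturbation term $2\kappa^2 (u_{\nn}(t)-u^*)^\top (H(t) - H^{\cts})(Y - u_{\nn}(t))$. For the perturbation I split $Y - u_{\nn}(t) = (Y - u^*) + (u^* - u_{\nn}(t))$: the first piece is bounded by $2\kappa^2 \|H(t)-H^{\cts}\|\cdot\|u_{\nn}(t)-u^*\|_2\cdot\|Y-u^*\|_2$, which is exactly the additive term in the claim, and the second piece is bounded by $2\kappa^2 \|H(t)-H^{\cts}\|\cdot\|u_{\nn}(t)-u^*\|_2^2 \le \kappa^2 \Lambda_0 \|u_{\nn}(t)-u^*\|_2^2$ using the hypothesis $\|H(t)-H^{\cts}\|\le \Lambda_0/2$.

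Collecting the contractive contributions $-2(\kappa^2 \Lambda_0 + \lambda) + \kappa^2 \Lambda_0 = -(\kappa^2 \Lambda_0 + 2\lambda) \le -(\kappa^2 \Lambda_0 + \lambda)$ yields the stated bound. The only subtle point — and the one I would be most careful about — is the split of the perturbation into a "self" piece and a "target-residual" piece: without that split one would only obtain a bound involving $\|Y - u_{\nn}(t)\|_2$, which is not useful for later induction because it does not go to $\|Y-u^*\|_2$ (a fixed data-dependent quantity); the $\Lambda_0/2$ assumption on $\|H(t)-H^{\cts}\|$ is precisely what is needed to absorb the $\|u_{\nn}(t)-u^*\|_2^2$ cross-term back into the contractive part without spoiling the decay rate.
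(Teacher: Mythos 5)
Your proof is correct and uses essentially the same ingredients as the paper's: Corollary~\ref{cor:nn_gradient}, the identity $\kappa^2 H^{\cts}(Y-u^*)=\lambda u^*$, Cauchy--Schwarz on the $(H(t)-H^{\cts})(Y-u^*)$ cross term, and the hypothesis $\|H(t)-H^{\cts}\|\le\Lambda_0/2$, arriving at the same rate $-(\kappa^2\Lambda_0+\lambda)$ after giving up one factor of $\lambda$ from the tighter $-(\kappa^2\Lambda_0+2\lambda)$. The only (cosmetic) difference is where the $\Lambda_0/2$ hypothesis is spent: the paper keeps the contractive quadratic form as $-2(u_{\nn}(t)-u^*)^\top(\kappa^2 H(t)+\lambda I)(u_{\nn}(t)-u^*)$ and lower-bounds $\lambda_{\min}(H(t))\ge\Lambda_0/2$, while you put $H^{\cts}$ in the quadratic form and absorb the residual $-2\kappa^2(u_{\nn}(t)-u^*)^\top(H(t)-H^{\cts})(u_{\nn}(t)-u^*)$ via the same operator-norm bound --- the two are the same estimate applied to the same decomposition $H(t)=H^{\cts}+(H(t)-H^{\cts})$.
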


\begin{proof}
Note same as in Lemma~\ref{lem:linear_converge_krr}, we have
\begin{align}\label{eq:325_1}
	\kappa^2 H^{\cts}(Y-u^*) 
	= & ~ \kappa^2 H^{\cts}(Y-\kappa^2 H^{\cts}(\kappa^2 H^{\cts}+\lambda I_n)^{-1}Y) \notag \\
	= & ~ \kappa^2 H^{\cts}(I_n-\kappa^2 H^{\cts}(\kappa^2 H^{\cts}+\lambda I)^{-1})Y \notag \\
	= & ~ \kappa^2 H^{\cts}(\kappa^2 H^{\cts}+\lambda I_n - \kappa^2 H^{\cts})(\kappa^2 H^{\cts}+\lambda I_n)^{-1} Y \notag \\
	= & ~ \kappa^2 \lambda H^{\cts}(\kappa^2 H^{\cts}+\lambda I_n)^{-1} Y \notag \\
	= & ~ \lambda u^*,
\end{align}
where the first step follows the definition of $u^* \in \R^n$, the second to fourth step simplify the formula, and the last step use the definition of $u^* \in \R^n$ again.
Thus, we have
\begin{align*}
 & ~ \frac{\d \|u_{\nn}(t)-u^*\|_2^2}{\d t} \\
= & ~  2(u_{\nn}(t)-u^*)^\top \frac{\d u_{\nn}(t)}{\d t}\\
= & ~ -2 \kappa^2 (u_{\nn}(t)-u^*)^\top H(t) (u_{\nn}(t) - Y) -2\lambda(u_{\nn}(t)-u^*)^\top u_{\nn}(t)\\
= & ~ -2 \kappa^2 (u_{\nn}(t)-u^*)^\top H(t) (u_{\nn}(t) - u^*) + 2 \kappa^2 (u_{\nn}(t)-u^*)^\top H^{\cts} (Y-u^*)\\
& ~  + 2 \kappa^2 (u_{\nn}(t)-u^*)^\top (H(t) - H^{\cts}) (Y-u^*) -2\lambda(u_{\nn}(t)-u^*)^\top u_{\nn}(t)\\
= & ~ -2 \kappa^2 (u_{\nn}(t)-u^*)^\top H(t) (u_{\nn}(t) - u^*) + 2\lambda(u_{\nn}(t)-u^*)^\top u^*\\
& ~ +2 \kappa^2 (u_{\nn}(t)-u^*)^\top (H(t) - H^{\cts}) (Y-u^*) -2\lambda(u_{\nn}(t)-u^*)^\top u_{\nn}(t)\\
= & ~ -2(u_{\nn}(t)-u^*)^\top ( \kappa^2 H(t)+\lambda I) (u_{\nn}(t) - u^*) +2 \kappa^2 (u_{\nn}(t)-u^*)^\top (H(t) - H^{\cts}) (Y-u^*)\\
\leq & ~ - ( \kappa^2 \Lambda_0 + \lambda) \| u_{\nn}(t) - u^* \|_2^2 + 2 \kappa^2 \| H(t) - H^{\cts} \| \| u_{\nn}(t) - u^* \|_2 \| Y - u^* \|_2 
\end{align*}
where the first step follows the chain rule, the second step follows Corollary~\ref{cor:nn_gradient}, the third step uses basic linear algebra, the fourth step follows Eq.~\eqref{eq:325_1}, the fifth step simplifies the expression, and the last step follows the assumption $\|H(t) - H^{\cts}\| \leq \Lambda_0/2$.
\end{proof}

\subsection{Proof sketch}\label{sec:equiv_proof_sketch}

Our goal is to show with appropriate width of the neural network and appropriate training iterations, the neural network predictor will be sufficiently close to the neural tangent kernel ridge regression predictor for any test data. We follow similar proof framework of Theorem 3.2 in \cite{adhlsw19}. Given any accuracy $\epsilon\in(0,1)$, we divide this proof into following steps:
\begin{enumerate}
    \item Firstly, according to the linear convergence property of kernel ridge regression shown in Lemma~\ref{lem:linear_converge_krr}, we can choose sufficiently large training iterations $T>0$, so that $|u_{\test}^* - u_{\ntk,\test}(T) | \leq \epsilon/2$, as shown in Lemma~\ref{lem:u_ntk_test_T_minus_u_test_*}.
    \item Once fix training iteration $T$ as in step 1, we bound $|u_{\nn,\test}(T) - u_{\ntk,\test}(T)| \leq \epsilon/2$ by showing the following:
    \begin{enumerate}
    	\item Due to the similarity of the the gradient flow of neural network training and neural tangent kernel ridge regression, we can reduce the task of bounding the prediction perturbation at time $T$, i.e., $|u_{\nn,\test}(T) -  u_{\ntk,\test}(T)| $, back to bounding 
    	\begin{enumerate}
    		\item the initialization perturbation $|u_{\nn,\test}(0) -  u_{\ntk,\test}(0)| $ and
    		\item kernel perturbation $ \|H(t)-H^{\cts}\|$,~$\|\k_{\ntk}(x_{\test}, X) - \k_{t}(x_{\test}, X)\|_2$, as shown in Lemma~\ref{lem:more_concreate_bound}.
    	\end{enumerate}
    	\item According to concentration results, we can bound the initialization perturbation $ |u_{\nn,\test}(0) -  u_{\ntk,\test}(0) | $ small enough by choosing sufficiently small $\kappa\in(0,1)$, as shown in Lemma~\ref{lem:epsilon_init}.
    	\item We characterize the over-parametrization property of the neural network by inductively show that we can bound kernel perturbation  $ \|H(t)-H^{\cts}\|$,~$\|\k_{\ntk}(x_{\test}, X) - \k_{t}(x_{\test}, X)\|_2$ small enough by choosing network width $m>0$ large enough, as shown in Lemma~\ref{lem:induction}.
	\end{enumerate}
	\item Lastly, we combine the results of step 1 and 2 using triangle inequality, to show the equivalence between training neural network with regularization and neural tangent kernel ridge regression, i.e., $| u_{\nn,\test}(T) - u_{\test}^* | \leq \epsilon$, as shown in Theorem~\ref{thm:main_test_equivalence}.
\end{enumerate}

\subsection{Equivalence between training net with regularization and kernel ridge regression for test data prediction}\label{sec:test}
In this section, we prove Theorem~\ref{thm:main_test_equivalence_intro} following the proof sketch in Section~\ref{sec:equiv_proof_sketch}.

\subsubsection{Upper bounding $| u_{\ntk,\test}(T) - u_{\test}^* |$}\label{sec:equiv_bound_ntk_test_T_and_test_*}
In this section, we give an upper bound for $| u_{\ntk,\test}(T) - u_{\test}^* |$. 

\begin{lemma}\label{lem:u_ntk_test_T_minus_u_test_*}
Let $u_{\ntk,\test}(T) \in \R$ and $u_{\test}^* \in \R$ be defined as Definition~\ref{def:krr_ntk}. 
Given any accuracy $\epsilon>0$, if $\kappa\in(0,1)$, then by picking $T = \wt{O}(\frac{1}{\kappa^2 \Lambda_0})$, we have 
\begin{align*}
| u_{\ntk,\test}(T) - u_{\test}^* | \leq \epsilon/2.
\end{align*}
\end{lemma}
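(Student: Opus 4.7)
The plan is to obtain a closed-form expression for $u_{\ntk,\test}(T) - u_{\test}^*$ that exhibits exponential decay at rate $\kappa^2 \Lambda_0$, and then choose $T$ large enough to force it below $\epsilon/2$. First, I would solve the gradient flow on $\beta(t)$ explicitly. From Definition~\ref{def:krr_ntk}, $\beta(t)$ evolves as $\frac{\d\beta(t)}{\d t} = b - A\beta(t)$, where $A := \kappa^2 \Phi(X)^\top \Phi(X) + \lambda I$ acts on the feature space and $b := \kappa\Phi(X)^\top Y$, so that $\beta^* = A^{-1}b$. With initialization $\beta(0) = 0$ this gives $\beta(t) - \beta^* = -e^{-At}\beta^*$, hence
\begin{align*}
u_{\ntk,\test}(t) - u_{\test}^* = \kappa\Phi(x_{\test})^\top(\beta(t) - \beta^*) = -\kappa\Phi(x_{\test})^\top e^{-At}\beta^*.
\end{align*}

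\paragraph{Reduction to a finite-dimensional spectral bound.} The key step is to push $e^{-At}$ through $\Phi(X)^\top$ so that everything lives in $\R^{n\times n}$. Since $H^{\cts} = \Phi(X)\Phi(X)^\top$, a direct calculation gives the intertwining identity $A\Phi(X)^\top = \Phi(X)^\top(\kappa^2 H^{\cts} + \lambda I)$; iterating this and applying the power-series definition of the exponential yields $e^{-At}\Phi(X)^\top = \Phi(X)^\top e^{-(\kappa^2 H^{\cts} + \lambda I)t}$. Writing $\beta^* = \kappa\Phi(X)^\top(\kappa^2 H^{\cts} + \lambda I)^{-1}Y$ and substituting, I obtain
\begin{align*}
u_{\ntk,\test}(T) - u_{\test}^* = -\kappa^2\k_{\ntk}(x_{\test},X)^\top (\kappa^2 H^{\cts} + \lambda I)^{-1} e^{-(\kappa^2 H^{\cts} + \lambda I)T} Y,
\end{align*}
an expression purely in terms of the $n\times n$ NTK matrix.

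\paragraph{Spectral bounds and choice of $T$.} By Assumption~\ref{ass:data_assumption}, $\|Y\|_2 = O(\sqrt{n})$, and since each training/test input has unit norm, $|\k_{\ntk}(x_{\test},x_i)| \leq \|x_{\test}\|_2\|x_i\|_2 = 1$ so $\|\k_{\ntk}(x_{\test},X)\|_2 = O(\sqrt{n})$. The bound $\kappa^2 H^{\cts} + \lambda I \succeq \kappa^2\Lambda_0 I$ gives $\|(\kappa^2 H^{\cts} + \lambda I)^{-1}\| \leq 1/(\kappa^2\Lambda_0)$ and $\|e^{-(\kappa^2 H^{\cts} + \lambda I)T}\| \leq e^{-\kappa^2\Lambda_0 T}$. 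Combining these yields
\begin{align*}
|u_{\ntk,\test}(T) - u_{\test}^*| \leq \frac{O(n)}{\Lambda_0}\,e^{-\kappa^2\Lambda_0 T},
\end{align*}
and choosing $T = \frac{1}{\kappa^2\Lambda_0}\log(O(n/(\epsilon\Lambda_0))) = \wt{O}(1/(\kappa^2\Lambda_0))$ makes this at most $\epsilon/2$, as required.

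\paragraph{Expected difficulty.} The only non-routine step is the commuting identity in the second paragraph, which requires some care because $A$ acts on the possibly infinite-dimensional RKHS $\mathcal{F}$ whose smallest eigenvalue is merely $\lambda$, not $\kappa^2\Lambda_0 + \lambda$; a naive norm bound $\|e^{-At}\| \leq e^{-\lambda t}$ would be too weak. A self-contained alternative that avoids RKHS formalism is to apply Duhamel's principle to the scalar ODE in Corollary~\ref{cor:ntk_gradient}, substituting the exact finite-dimensional solution $u^* - u_{\ntk}(s) = e^{-(\kappa^2 H^{\cts}+\lambda I)s}u^*$ (derivable along the lines of Lemma~\ref{lem:linear_converge_krr}) into the Duhamel integral; after telescoping the $e^{\lambda s}$ factors one recovers the same closed form entirely in $\R^n$.
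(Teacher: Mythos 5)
Your proof is correct, and it takes a genuinely different route from the paper's. The paper asserts the $\beta$-space linear-convergence inequality $\frac{\d}{\d t}\|\beta(t)-\beta^*\|_2^2 \leq -2(\kappa^2\Lambda_0+\lambda)\|\beta(t)-\beta^*\|_2^2$, then bounds $|u_{\ntk,\test}(T)-u^*_{\test}| \leq \kappa\|\Phi(x_{\test})\|_2\|\beta(T)-\beta^*\|_2 \leq e^{-(\kappa^2\Lambda_0+\lambda)T}\cdot\poly(\kappa,n,1/\Lambda_0)$ and picks $T$ accordingly. You instead solve the linear ODE explicitly, push $e^{-At}$ through $\Phi(X)^\top$ via the intertwining identity $e^{-At}\Phi(X)^\top = \Phi(X)^\top e^{-(\kappa^2 H^{\cts}+\lambda I)t}$, and do the final spectral bounds purely on the $n\times n$ matrix $\kappa^2 H^{\cts}+\lambda I$, whose smallest eigenvalue is visibly $\kappa^2\Lambda_0+\lambda$.

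The concern you flag in your last paragraph is in fact a real imprecision in the paper's argument, not just in yours: $A = \kappa^2\Phi(X)^\top\Phi(X)+\lambda I$ acts on a feature space $\mathcal{F}$ that may have dimension larger than $n$, and on $\ker(\Phi(X))$ its spectrum is exactly $\lambda$, so the paper's inequality does not hold for arbitrary $\beta(t)-\beta^*$. It holds here only because $\beta(0)=0$ and $\beta^*$, $b=\kappa\Phi(X)^\top Y$ all lie in the $A$-invariant subspace $\mathrm{range}(\Phi(X)^\top)$, on which $\Phi(X)^\top\Phi(X)$ has smallest eigenvalue $\Lambda_0$. The paper never states this invariant-subspace fact (and never proves the $\beta$-space convergence rate anywhere); your intertwining reduction makes the correct rate manifest without any appeal to it, so your version is, if anything, the cleaner and more self-contained of the two. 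Both routes produce the same $O(n/\Lambda_0)\,e^{-\kappa^2\Lambda_0 T}$ prefactor and the same choice $T = \wt{O}(1/(\kappa^2\Lambda_0))$.
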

where $\wt{O}(\cdot)$ here hides $\poly\log( n/(\epsilon \Lambda_0) )$.

\begin{proof}
Due to the linear convergence of kernel ridge regression, i.e.,
\begin{align*}
	\frac{ \d \| \beta(t) - \beta^* \|_2^2 }{ \d t } \leq - 2(\kappa^2 \Lambda_0 +\lambda) \| \beta(t) - \beta^* \|_2^2
\end{align*}
Thus,
\begin{align*}
	| u_{\ntk,\test}(T) - u_{\test}^* |
	= & ~ | \kappa \Phi(x_{\test})^\top \beta(T) - \kappa \Phi(x_{\test})\beta^* | \\
	\leq & ~ \kappa \| \Phi(x_{\test}) \|_2 \| \beta(T) - \beta^* \|_2\\
	\leq & ~ \kappa e^{-(\kappa^2 \Lambda_0+\lambda)T}\| \beta(0) - \beta^* \|_2 \\
	\leq & ~ e^{-(\kappa^2 \Lambda_0+\lambda)T} \cdot \poly(\kappa,n,1/\Lambda_0)
\end{align*}
where the last step follows from $\beta(0) = 0$ and $\|\beta^*\|_2 = \poly(\kappa,n, 1/\Lambda_0)$. 

Note $\kappa\in(0,1)$. Thus, by picking $ T = \wt{O}(\frac{1}{\kappa^2\Lambda_0})$, we have 
\begin{align*}
	\| u_{\ntk,\test}(T) - u_{\test}^* \|_2 \leq \epsilon/2,
\end{align*}
where $\wt{O}(\cdot)$ here hides $\poly\log( n/ (\epsilon \Lambda_0) )$.
\end{proof}

\subsubsection{Upper bounding $| u_{\nn,\test}(T) - u_{\ntk,\test}(T) |$ by bounding initialization and kernel perturbation}\label{sec:equiv_splitting_nn_test_T_and_ntk_test_T_into_three}
The goal of this section is to prove Lemma~\ref{lem:more_concreate_bound}, which reduces the problem of bounding prediction perturbation to the problem of bounding initialization perturbation and kernel perturbation.

\begin{lemma}[Prediction perturbation implies kernel perturbation]\label{lem:more_concreate_bound}
Given training data matrix $X \in \R^{n \times d}$ and corresponding label vector $Y \in \R^n$. Fix the total number of iterations $T > 0$. Given arbitrary test data $x_{\test} \in \R^d$. Let $u_{\nn,\test}(t) \in \R^n$ and $u_{\ntk,\test}(t) \in \R^n$ be the test data predictors defined in Definition~\ref{def:nn} and Definition~\ref{def:krr_ntk} respectively. Let $\kappa\in(0,1)$ be the corresponding multiplier. Let $\k_{\ntk}(x_{\test},X) \in \R^n,~\k_{t}(x_{\test},X) \in \R^n,~H^{\cts} \in \R^{n \times n},~H(t) \in \R^{n \times n},~\Lambda_0 > 0$ be defined in Definition~\ref{def:ntk_phi} and Definition~\ref{def:dynamic_kernel}. Let $u^* \in \R^n$ be defined as in Eq.~\eqref{eq:def_u_*}. Let $\lambda > 0$ be the regularization parameter.
Let $\epsilon_{K} \in (0,1)$, $\epsilon_{\init} \in (0,1)$ and $\epsilon_H \in (0,1)$ denote parameters that are independent of $t$, and the following conditions hold for all $t\in[0,T]$,
\begin{itemize}
	\item $\|u_{\nn}(0)\|_2 \leq \sqrt{n}\epsilon_{\init}$ and $|u_{\nn,\test}(0)| \leq \epsilon_{\init}$
    \item $\|\k_{\ntk}(x_{\test},X)-\k_{t}(x_{\test},X)\|_2\le \epsilon_{K}$
    \item $\| H(t) - H^{\cts} \| \le \epsilon_H$
\end{itemize}
then we have
\begin{align*}
    |u_{\nn,\test}(T)-u_{\ntk,\test}(T)| \leq & ~ (1+\kappa^2 nT)\epsilon_{\init} + \kappa^2 \epsilon_K \cdot \Big( \frac{ \| u^* \|_2 }{ \kappa^2 \Lambda_0 + \lambda } + \|u^*-Y\|_2 T\Big)\\
    & ~ + \sqrt{n}T^2\kappa^4 \epsilon_H ( \| u^* \|_2 + \| u^* - Y \|_2 )
\end{align*}
\end{lemma}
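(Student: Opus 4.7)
The plan is to compare the two first-order ODEs given in Corollaries~\ref{cor:ntk_gradient} and~\ref{cor:nn_gradient}, take their difference, and solve the resulting linear ODE in the unknown $\Delta(t) := u_{\nn,\test}(t) - u_{\ntk,\test}(t)$ explicitly by the integrating factor $e^{\lambda t}$. Concretely,
\begin{align*}
\frac{\d \Delta(t)}{\d t} = -\lambda \Delta(t) + \kappa^{2}\bigl[\k_{t}(x_{\test},X)^{\top}(Y-u_{\nn}(t)) - \k_{\ntk}(x_{\test},X)^{\top}(Y-u_{\ntk}(t))\bigr],
\end{align*}
so Duhamel's formula gives
\begin{align*}
\Delta(T) = e^{-\lambda T}\Delta(0) + \kappa^{2}\int_{0}^{T} e^{-\lambda(T-s)}\, G(s)\, \d s,
\end{align*}
where $G(s)$ is the bracketed expression. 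Since $u_{\ntk,\test}(0)=0$ by Definition~\ref{def:krr_ntk}, the first term is bounded by $\epsilon_{\init}$, which will account for the leading $\epsilon_{\init}$ in the target bound.

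Next I would split $G(s)$ into the ``kernel-perturbation'' part and the ``training-prediction perturbation'' part:
\begin{align*}
G(s) = \bigl(\k_{t}(x_{\test},X) - \k_{\ntk}(x_{\test},X)\bigr)^{\top}(Y-u_{\nn}(s)) + \k_{\ntk}(x_{\test},X)^{\top}\bigl(u_{\ntk}(s) - u_{\nn}(s)\bigr).
\end{align*}
For the first piece, the hypothesis $\|\k_{t}(x_{\test},X)-\k_{\ntk}(x_{\test},X)\|_{2}\le \epsilon_{K}$ combined with Cauchy--Schwarz, the decomposition $Y - u_{\nn}(s) = (Y-u^{*}) + (u^{*}-u_{\nn}(s))$, and Lemma~\ref{lem:linear_converge_krr} applied to $\|u^{*}-u_{\ntk}(s)\|_{2}$ (integrating the decaying exponential against $e^{-\lambda(T-s)}$ yields the $\|u^{*}\|_{2}/(\kappa^{2}\Lambda_{0}+\lambda)$ factor) produces exactly the contribution $\kappa^{2}\epsilon_{K}\bigl(\|u^{*}\|_{2}/(\kappa^{2}\Lambda_{0}+\lambda)+\|u^{*}-Y\|_{2}T\bigr)$.

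For the second piece, the key sub-step is to control $\|u_{\ntk}(s)-u_{\nn}(s)\|_{2}$ in terms of $\epsilon_{H}$ and $\epsilon_{\init}$. Here I would repeat the Duhamel argument at the training-data level: subtract the ODEs from Corollaries~\ref{cor:ntk_gradient} and~\ref{cor:nn_gradient} on $X$ itself, getting
\begin{align*}
\frac{\d(u_{\nn}(t)-u_{\ntk}(t))}{\d t} = -(\kappa^{2}H(t)+\lambda I)(u_{\nn}(t)-u_{\ntk}(t)) + \kappa^{2}(H(t)-H^{\cts})(Y-u_{\ntk}(t)),
\end{align*}
integrate, and apply $\|H(t)-H^{\cts}\|\le \epsilon_{H}$ together with the crude bound $\|Y-u_{\ntk}(t)\|_{2}\le \|Y-u^{*}\|_{2}+\|u^{*}\|_{2}$ and $\|u_{\nn}(0)-u_{\ntk}(0)\|_{2}\le \sqrt{n}\epsilon_{\init}$; this yields a pointwise bound of order $\sqrt{n}\epsilon_{\init} + T\kappa^{2}\sqrt{n}\epsilon_{H}(\|u^{*}\|_{2}+\|u^{*}-Y\|_{2})$. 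Plugging back, using $\|\k_{\ntk}(x_{\test},X)\|_{2} = O(\sqrt{n})$, and integrating once more over $[0,T]$ generates the $\kappa^{2}nT\epsilon_{\init}$ part of $(1+\kappa^{2}nT)\epsilon_{\init}$ and the $\sqrt{n}T^{2}\kappa^{4}\epsilon_{H}(\|u^{*}\|_{2}+\|u^{*}-Y\|_{2})$ term.

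The main obstacle I anticipate is the nested-integral step that produces the $T^{2}$ factor: one must avoid any circular dependence between the bound on $\|u_{\nn}(s)-u_{\ntk}(s)\|_{2}$ and the bound on $\Delta(T)$, which is why I would compute the training-level bound first in a standalone Duhamel estimate (using only $\epsilon_{H}$ and $\epsilon_{\init}$, not $\epsilon_{K}$) before inserting it into the test-level integral. Routine bookkeeping with $e^{-\lambda(T-s)}\le 1$ and $\|\k_{\ntk}(x_{\test},X)\|_{2}\le \sqrt{n}$ then gives the stated inequality.
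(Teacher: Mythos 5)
Your high-level plan is right, and your Duhamel/integrating-factor treatment of the damping term $-\lambda\Delta(t)$ is in fact cleaner than the paper's informal assertion that "the $-\lambda(u_{\nn,\test}-u_{\ntk,\test})$ term only makes the integral smaller"; multiplying by $e^{\lambda t}$ and using $e^{-\lambda(T-s)}\le 1$ is the rigorous form of that claim. However, there is an inconsistency in the decomposition you write for $G(s)$. You split $G(s) = (\k_t - \k_{\ntk})^\top(Y-u_{\nn}(s)) + \k_{\ntk}^\top(u_{\ntk}(s)-u_{\nn}(s))$, which pairs the kernel difference with $Y - u_{\nn}(s)$, but then you invoke Lemma~\ref{lem:linear_converge_krr} to control $\|u^*-u_{\ntk}(s)\|_2$, not $\|u^*-u_{\nn}(s)\|_2$. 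The lemma you cite gives convergence only for the KRR predictor $u_{\ntk}$, not for $u_{\nn}$. To bound $\|Y-u_{\nn}(s)\|_2$ you would additionally need the training-level bound on $\|u_{\nn}(s)-u_{\ntk}(s)\|_2$, which injects extra cross terms of the form $\kappa^2\epsilon_K T\sqrt{n}\epsilon_{\init}$ and $\kappa^4\epsilon_K\epsilon_H T^2(\cdot)$ into the first piece. These are dominated by terms already present (since $\epsilon_K<1<\sqrt{n}$), so the final inequality still holds up to a constant, but the bound you derive would not be the lemma's bound as stated. The decomposition the paper actually uses, and the one consistent with the rest of your argument, is $G(s) = (\k_t - \k_{\ntk})^\top(Y-u_{\ntk}(s)) + \k_t^\top(u_{\ntk}(s)-u_{\nn}(s))$ (add and subtract $\k_t^\top(Y-u_{\ntk}(s))$ rather than $\k_{\ntk}^\top(Y-u_{\nn}(s))$); then the first piece is controlled entirely by Lemma~\ref{lem:linear_converge_krr}, and only the second piece needs the standalone training-level Duhamel estimate, as you intend.

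One small bookkeeping slip: your quoted training-level estimate $\sqrt{n}\epsilon_{\init}+T\kappa^2\sqrt{n}\epsilon_H(\|u^*\|_2+\|u^*-Y\|_2)$ has a spurious $\sqrt{n}$ in the second term; the correct pointwise bound (obtained by $\|H(\tau)-H^{\cts}\|\le\epsilon_H$ and $\|Y-u_{\ntk}(\tau)\|_2\le\|Y-u^*\|_2+\|u^*\|_2$, with no dimensional factor) is $\sqrt{n}\epsilon_{\init}+T\kappa^2\epsilon_H(\|u^*\|_2+\|u^*-Y\|_2)$. The factor of $\sqrt{n}$ enters only once, from $\|\k_t(x_{\test},X)\|_2\le\sqrt{n}$ when you plug back into the test-level integral, which is what yields $\kappa^2 nT\epsilon_{\init}+\sqrt{n}T^2\kappa^4\epsilon_H(\cdot)$. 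With those two corrections your argument coincides with the paper's.
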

\begin{proof}
Combining results from Lemma~\ref{lem:very_rough_bound}, Claim~\ref{cla:A}.~\ref{cla:B},~\ref{cla:C}, we complete the proof.
We have
\begin{align*}
|u_{\nn,\test}(T)-u_{\ntk,\test}(T)| 
\leq & ~ |u_{\nn,\test}(0)-u_{\ntk,\test}(0)|\\
& ~ + \kappa^2 \Big| \int_{0}^T (\k_{\ntk}(x_{\test},X)-\k_{t}(x_{\test},X))^\top (u_{\ntk}(t)-Y) \d t \Big|\\
& ~ + \kappa^2 \Big| \int_{0}^T \k_{t}(x_{\test},X)^\top(u_{\ntk}(t)-u_{\nn}(t)) \d t \Big|\\
\leq & ~ \epsilon_{\init} + \kappa^2 \epsilon_K \cdot \Big( \frac{ \|u^*\| }{ \kappa^2 \Lambda_0 + \lambda } + \|u^*-Y\|_2 T\Big)\\
& ~ + \kappa^2 n\epsilon_{\init}T + \sqrt{n}T^2 \cdot  \kappa^4 \epsilon_H \cdot ( \| u^* \|_2 + \| u^* - Y \|_2 )\\
\leq & ~ (1+ \kappa^2 nT)\epsilon_{\init} + \kappa^2 \epsilon_K \cdot \Big( \frac{ \| u^* \|_2 }{ \kappa^2 \Lambda_0 + \lambda } + \|u^*-Y\|_2 T\Big)\\
    & ~ + \sqrt{n}T^2\kappa^4 \epsilon_H ( \| u^* \|_2 + \| u^* - Y \|_2 )
\end{align*}
where the first step follows from Lemma~\ref{lem:very_rough_bound}, the second step follows from Claim~\ref{cla:A},~\ref{cla:B} and \ref{cla:C}, and the last step simplifies the expression.
\end{proof}

To prove Lemma~\ref{lem:more_concreate_bound}, we first bound $| u_{\nn,\test}(T) - u_{\ntk,\test}(T) |$ by three terms in Lemma~\ref{lem:very_rough_bound}, then we bound each term individually in Claim~\ref{cla:A}, Claim~\ref{cla:B}, and Claim~\ref{cla:C}.

\begin{lemma}\label{lem:very_rough_bound}
Follow the same notation as~Lemma~\ref{lem:more_concreate_bound}, we have
\begin{align*}
| u_{\nn,\test}(T) - u_{\ntk,\test}(T) | \leq A + B + C,
\end{align*}
where
\begin{align*}
    A = & ~ |u_{\nn,\test}(0)-u_{\ntk,\test}(0)|\\
    B = & ~  \kappa^2 \Big| \int_{0}^T (\k_{\ntk}(x_{\test},X)-\k_{t}(x_{\test},X))^\top (u_{\ntk}(t)-Y) \d t \Big|\\
    C = & ~  \kappa^2 \Big| \int_{0}^T \k_{t}(x_{\test},X)^\top(u_{\ntk}(t)-u_{\nn}(t)) \d t \Big|
\end{align*}
\end{lemma}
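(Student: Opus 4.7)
The plan is to derive an ODE for the difference $\Delta(t) := u_{\nn,\test}(t) - u_{\ntk,\test}(t)$ using the gradient flow formulas already established in Corollary~\ref{cor:nn_gradient} and Corollary~\ref{cor:ntk_gradient}. Subtracting the two yields
\begin{align*}
\frac{\d \Delta(t)}{\d t} = \kappa^2 \big[\k_{t}(x_{\test},X)^\top (Y - u_{\nn}(t)) - \k_{\ntk}(x_{\test},X)^\top (Y - u_{\ntk}(t))\big] - \lambda \Delta(t),
\end{align*}
so the problem reduces to bounding $|\Delta(T)|$ by integrating a linear first-order ODE with the forcing term above.

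Next I would apply an add-and-subtract trick to the bracketed forcing term, inserting $\k_{t}(x_{\test},X)^\top(Y - u_{\ntk}(t))$, which decomposes the forcing as
\begin{align*}
\kappa^2 \big[\k_{t}(x_{\test},X)^\top (u_{\ntk}(t)-u_{\nn}(t)) - (\k_{\ntk}(x_{\test},X)-\k_{t}(x_{\test},X))^\top (u_{\ntk}(t)-Y)\big].
\end{align*}
These are exactly the integrands appearing in the definitions of $C$ and $B$, respectively, which explains why this particular splitting is the right one to match the target decomposition.

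The ODE is now $\d\Delta/\d t = g(t) - \lambda \Delta(t)$ with $g(t)$ the forcing. Using the integrating factor $e^{\lambda t}$, the solution is $\Delta(T) = e^{-\lambda T}\Delta(0) + \int_0^T e^{-\lambda(T-t)} g(t)\, \d t$. Since $\lambda \ge 0$ and $T-t \ge 0$ on the interval of integration, both $e^{-\lambda T}$ and $e^{-\lambda(T-t)}$ are bounded by $1$, so applying the triangle inequality gives
\begin{align*}
|\Delta(T)| \;\le\; |\Delta(0)| + \int_0^T |g(t)|\, \d t,
\end{align*}
and distributing the triangle inequality over the two pieces of $g(t)$ produces precisely $A + B + C$.

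There is no real obstacle here: the lemma is essentially bookkeeping that separates an initialization contribution from two kernel-perturbation contributions, in preparation for the more delicate Claims~\ref{cla:A},~\ref{cla:B},~\ref{cla:C}. The one subtle point worth flagging is the regularization term $-\lambda \Delta(t)$, which prevents direct integration of $\d \Delta/\d t$ from $0$ to $T$; the integrating-factor step handles it cleanly, and since the exponential damping only helps we may discard it by the crude bound $e^{-\lambda(T-t)} \le 1$ without losing anything needed downstream.
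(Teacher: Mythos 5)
Your proposal is correct and follows essentially the same route as the paper: identify the gradient-flow ODE for $\Delta(t)=u_{\nn,\test}(t)-u_{\ntk,\test}(t)$ via Corollaries~\ref{cor:ntk_gradient} and~\ref{cor:nn_gradient}, add-and-subtract $\k_t(x_{\test},X)^\top(Y-u_{\ntk}(t))$ to split the forcing into the $B$- and $C$-integrands, and discard the damping term $-\lambda\Delta(t)$. Your integrating-factor step makes the discard of the damping term cleanly rigorous, whereas the paper merely asserts that the $-\lambda\Delta(t)$ term ``only makes the integral smaller''; this is a genuine improvement in precision, not a different proof.
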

\begin{proof}

\begin{align}\label{eq:320_3}
    & ~|u_{\nn,\test}(T)-u_{\ntk,\test}(T)|\notag\\
    = & ~ \Big| u_{\nn,\test}(0)-u_{\ntk,\test}(0)+\int_{0}^T(\frac{\d u_{\nn,\test}(t)}{\d t}-\frac{\d u_{\ntk,\test}(t)}{\d t})\d t \Big |\notag\\
    \leq & ~ | u_{\nn,\test}(0)-u_{\ntk,\test}(0) |+ \Big| \int_{0}^T(\frac{\d u_{\nn,\test}(t)}{\d t}-\frac{\d u_{\ntk,\test}(t)}{\d t}) \d t \Big|,
\end{align}
where the first step follows from the definition of integral, the second step follows from the triangle inequality. 
Note by Corollary~\ref{cor:ntk_gradient},~\ref{cor:nn_gradient}, their gradient flow are given by 
\begin{align}
    \frac{\d u_{\ntk,\test}(t)}{\d t}&= - \kappa^2 \k_{\ntk}(x_{\test},X)^\top(u_{\ntk}(t)-Y)-\lambda u_{\ntk,\test}(t)\label{eq:320_1}\\
    \frac{\d u_{\nn,\test}(t)}{\d t}&= - \kappa^2 \k_{t}(x_{\test},X)^\top(u_{\nn}(t)-Y)-\lambda u_{\nn,\test}(t)\label{eq:320_2}
\end{align}
where $u_{\ntk}(t) \in \R^n$ and $u_{\nn}(t) \in \R^n$ are the predictors for training data defined in Definition~\ref{def:krr_ntk} and Definition~\ref{def:nn}. Thus, we have 
\begin{align}\label{eq:320_4}
    & ~ \frac{\d u_{\nn,\test}(t)}{\d t}-\frac{\d u_{\ntk,\test}(t)}{\d t} \notag\\
    = & ~ - \kappa^2 \k_{t}(x_{\test},X)^\top(u_{\nn}(t)-Y)+ \kappa^2 \k_{\ntk}(x_{\test},X)^\top(u_{\ntk}(t)-Y) -\lambda (u_{\nn,\test}(t)-u_{\ntk,\test}(t)) \notag\\
    = & ~  \kappa^2 (\k_{\ntk}(x_{\test},X)- \k_{t}(x_{\test},X))^\top (u_{\ntk}(t)-Y)-  \kappa^2 \k_{t}(x_{\test},X)^\top(u_{\ntk}(t)-u_{\nn}(t)) \notag\\
    & ~ -\lambda (u_{\nn,\test}(t)-u_{\ntk,\test}(t)),
\end{align}
where the first step follows from Eq.~\eqref{eq:320_1} and Eq.~\eqref{eq:320_2}, the second step rewrites the formula. 
Note the term $-\lambda (u_{\nn,\test}(t)-u_{\ntk,\test}(t))$ will only make 
\begin{align*}
\Big| \int_{0}^T(\frac{\d u_{\nn,\test}(t)}{\d t}-\frac{\d u_{\ntk,\test}(t)}{\d t})\d t \Big|
\end{align*}
smaller, so we have
\begin{align}\label{eq:320_5}
    & ~ \Big|\int_{0}^T (\frac{\d u_{\nn,\test}(t)}{\d t}-\frac{\d u_{\ntk,\test}(t)}{\d t}) \d t\Big| \notag\\
    \leq & ~ \Big|\int_{0}^T  \kappa^2 ((\k_{\ntk}(x_{\test},X)-\k_{t}(x_{\test},X))^\top (u_{\ntk}(t)-Y)- \kappa^2  \k_{t}(x_{\test},X)^\top(u_{\ntk}(t)-u_{\nn}(t))) \d t\Big|
\end{align}
Thus,
\begin{align*}
    & ~ |u_{\nn,\test}(T)-u_{\ntk,\test}(T)|\\
    \leq & ~ |u_{\nn,\test}(0)-u_{\ntk,\test}(0)| + \Big| \int_{0}^T(\frac{\d u_{\nn,\test}(t)}{\d t}-\frac{\d u_{\ntk,\test}(t)}{\d t}) \d t \Big|\\
    \leq & ~ |u_{\nn,\test}(0)-u_{\ntk,\test}(0)| + \Big|\int_{0}^T  \kappa^2 ((\k_{\ntk}(x_{\test},X)-\k_{t}(x_{\test},X))^\top (u_{\ntk}(t)-Y) \\
    & ~ -  \kappa^2 \k_{t}(x_{\test},X)^\top(u_{\ntk}(t)-u_{\nn}(t))) \d t\Big|\\
    \leq & ~ |u_{\nn,\test}(0)-u_{\ntk,\test}(0)| + \Big| \int_{0}^T \kappa^2 (\k_{\ntk}(x_{\test},X)-\k_{t}(x_{\test},X))^\top (u_{\ntk}(t)-Y) \d t \Big|\\
    & ~ + \Big| \int_{0}^T  \kappa^2 \k_{t}(x_{\test},X)^\top(u_{\ntk}(t)-u_{\nn}(t)) \d t \Big|\\
    = & ~ A + B + C,
\end{align*}
where the first step follows from Eq.~\eqref{eq:320_3}, the second step follows from Eq.~\eqref{eq:320_5}, the third step follows from triangle inequality, and the last step follows from the definition of $A,~B,~C$.
\end{proof}

Now let us bound these three terms $A$, $B$ and $C$ one by one. We claim

\begin{claim}[Bounding the term $A$]\label{cla:A}\
We have
\begin{align*}
	A \leq \epsilon_{\init}.
\end{align*}
\end{claim}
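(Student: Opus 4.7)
The plan is to observe that this claim is essentially immediate from the initialization conditions, with only one nontrivial ingredient: the fact that the kernel ridge regression predictor starts at zero. Concretely, Definition~\ref{def:krr_ntk} sets $\beta(0)=0$ as the initialization of the gradient flow for the NTK ridge regression problem, and therefore $u_{\ntk,\test}(0) = \kappa\,\Phi(x_{\test})^\top \beta(0) = 0$.

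Given that, the quantity $A$ collapses to $|u_{\nn,\test}(0)|$. First I would expand $A = |u_{\nn,\test}(0) - u_{\ntk,\test}(0)|$ and substitute $u_{\ntk,\test}(0) = 0$, so $A = |u_{\nn,\test}(0)|$. Then the hypothesis of Lemma~\ref{lem:more_concreate_bound} directly assumes $|u_{\nn,\test}(0)| \leq \epsilon_{\init}$, which yields the claim. Alternatively, one could route through the triangle inequality $A \leq |u_{\nn,\test}(0)| + |u_{\ntk,\test}(0)|$ and then use the two pieces, but this is unnecessary once we note that one of the two terms vanishes exactly.

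There is no genuine obstacle here; this claim just packages the initialization assumption of the lemma in the right form for later use in the decomposition $|u_{\nn,\test}(T) - u_{\ntk,\test}(T)| \leq A + B + C$. The harder work is deferred to Claims~\ref{cla:B} and~\ref{cla:C}, which bound the two integral terms using the kernel perturbation bounds $\epsilon_K$ and $\epsilon_H$ together with Lemma~\ref{lem:linear_converge_krr} to control $\|u_{\ntk}(t) - u^*\|_2$ along the trajectory. The role of Claim~\ref{cla:A} in the overall argument is only to say that, because KRR starts at the zero predictor and the neural network is initialized to be near-zero via the multiplier $\kappa$ (which is how $\epsilon_{\init}$ is ultimately made small in Lemma~\ref{lem:epsilon_init}), the initial gap between the two predictors is already at most $\epsilon_{\init}$, as required.
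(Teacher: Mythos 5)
Your argument matches the paper's proof exactly: both observe that $u_{\ntk,\test}(0) = \kappa\,\Phi(x_{\test})^\top\beta(0) = 0$ since $\beta(0)=0$, so $A = |u_{\nn,\test}(0)|$, which is at most $\epsilon_{\init}$ by the hypothesis of Lemma~\ref{lem:more_concreate_bound}. Nothing further is needed.
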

\begin{proof}
	Note $u_{\ntk,\test}(0) = 0$, so by assumption we have 
\begin{align*}
	A=|u_{\nn,\test}(0)| \leq \epsilon_{\init}.
\end{align*}

\end{proof}

\begin{claim}[Bounding the term $B$]\label{cla:B}
We have
\begin{align*}
B \leq  \kappa^2 \epsilon_K \cdot \Big( \frac{ \|u^*\| }{ \kappa^2 \Lambda_0 + \lambda } + \|u^*-Y\|_2 T\Big) .
\end{align*}
\end{claim}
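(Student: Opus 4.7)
The plan is to bound $B$ by pulling the absolute value inside the integral, applying Cauchy--Schwarz at each time $t$, and then splitting $u_{\ntk}(t)-Y$ into the transient piece $u_{\ntk}(t)-u^*$ (which decays exponentially) plus the residual piece $u^*-Y$ (which is constant in $t$).

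First I would write
\begin{align*}
B \leq \kappa^2 \int_0^T \|\k_{\ntk}(x_{\test},X)-\k_t(x_{\test},X)\|_2 \cdot \|u_{\ntk}(t)-Y\|_2 \, \d t \leq \kappa^2 \epsilon_K \int_0^T \|u_{\ntk}(t)-Y\|_2 \, \d t,
\end{align*}
where the second inequality uses the assumed kernel perturbation bound $\|\k_{\ntk}(x_{\test},X)-\k_t(x_{\test},X)\|_2 \leq \epsilon_K$ for all $t \in [0,T]$.

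Next I would apply the triangle inequality $\|u_{\ntk}(t)-Y\|_2 \leq \|u_{\ntk}(t)-u^*\|_2 + \|u^*-Y\|_2$ and invoke Lemma~\ref{lem:linear_converge_krr} (linear convergence of the kernel ridge regression predictor) together with the initialization $u_{\ntk}(0)=\kappa\Phi(X)\beta(0)=0$ to get $\|u_{\ntk}(t)-u^*\|_2 \leq e^{-(\kappa^2\Lambda_0+\lambda)t}\|u^*\|_2$. Integrating term by term,
\begin{align*}
\int_0^T \|u_{\ntk}(t)-u^*\|_2 \, \d t \leq \|u^*\|_2 \int_0^T e^{-(\kappa^2\Lambda_0+\lambda)t} \, \d t \leq \frac{\|u^*\|_2}{\kappa^2\Lambda_0+\lambda}, \quad \int_0^T \|u^*-Y\|_2 \, \d t = \|u^*-Y\|_2 \cdot T.
\end{align*}
Combining these two bounds yields the claimed inequality. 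There is no real obstacle here; the only thing to be careful about is using $u_{\ntk}(0)=0$ explicitly so that $\|u_{\ntk}(0)-u^*\|_2=\|u^*\|_2$, and noting that the linear convergence rate $\kappa^2\Lambda_0+\lambda$ from Lemma~\ref{lem:linear_converge_krr} is exactly what produces the denominator in the final bound.
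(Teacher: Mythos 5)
Your proposal is correct and follows essentially the same route as the paper: Cauchy--Schwarz inside the integral, the bound $\|\k_{\ntk}(x_{\test},X)-\k_t(x_{\test},X)\|_2 \leq \epsilon_K$, the triangle-inequality split of $u_{\ntk}(t)-Y$ into $(u_{\ntk}(t)-u^*)+(u^*-Y)$, and then Lemma~\ref{lem:linear_converge_krr} together with $u_{\ntk}(0)=0$ to integrate the decaying piece. No meaningful deviation from the paper's argument.
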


\begin{proof}
Note
\begin{align*}
	B = & ~ \kappa^2 \Big| \int_{0}^T (\k_{\ntk}(x_{\test},X)-\k_{t}(x_{\test},X))^\top (u_{\ntk}(t)-Y) \d t \Big|\\
    \le & ~ \kappa^2 \max_{ t \in [ 0 , T ] }\|\k_{\ntk}(x_{\test},X)-\k_{t}(x_{\test},X)\|_2 \int_{0}^T \|u_{\ntk}(t)-Y\|_2 \d t ,
\end{align*}
where the first step follows from definition of $B$, and the second step follows from the Cauchy-Schwartz inequality.
Note by Lemma~\ref{lem:linear_converge_krr}, the kernel ridge regression predictor $u_{\ntk}(t) \in \R^n$ converges linearly to the optimal predictor $u^*=\kappa^2 H^{\cts}(\kappa^2 H^{\cts}+\lambda I)^{-1}Y \in \R^n$, i.e.,
\begin{align}\label{eq:320_6}
	\|u_{\ntk}(t) - u^*\|_2 \leq e^{-(\kappa^2 \Lambda_0+\lambda)t} \|u_{\ntk}(0) - u^*\|_2.
\end{align}
Thus, we have
\begin{align}\label{eq:upper_bound_int_0_T_u_ntk_t_minus_Y}
    \int_{0}^T \|u_{\ntk}(t)-Y\|_2 \d t 
    \leq & ~ \int_0^T \| u_{\ntk}(t) - u^* \|_2 \d t + \int_0^T \|u^*-Y\|_2 \d t \notag \\
    \le & ~ \int_{0}^T e^{-(\kappa^2 \Lambda_0+\lambda)} \|u_{\ntk}(0)-u^*\|_2 \d t + \int_{0}^T \|u^*-Y\|_2 \d t \notag \\
    \leq & ~  \frac{\|u_{\ntk}(0)-u^*\|_2}{\kappa^2 \Lambda_0+\lambda} + \|u^*-Y\|_2 T  \notag \\
    = & ~   \frac{ \|u^*\| }{\kappa^2 \Lambda_0 + \lambda } + \|u^*-Y\|_2 T,
\end{align}
where the first step follows from the triangle inequality, the second step follows from Eq.~\eqref{eq:320_6}, the third step calculates the integration, and the last step follows from the fact $u_{\ntk}(0) = 0$.
Thus, we have
\begin{align*}
    B \le & ~ \kappa^2 \max_{t \in [0, T]}\|\k_{\ntk}(x_{\test},X)-\k_{t}(x_{\test},X)\|_2 \cdot \int_0^T \| u_{\ntk}(t) - Y \|_2 \d t \\
    \leq & ~ \kappa^2 \epsilon_K \cdot \Big( \frac{ \|u^*\| }{ \kappa^2 \Lambda_0 + \lambda } + \|u^*-Y\|_2 T \Big) .
\end{align*}
where the first step follows from Eq.~\eqref{eq:320_6}, the second step follows from Eq.~\eqref{eq:upper_bound_int_0_T_u_ntk_t_minus_Y} and definition of $\epsilon_K$.
\end{proof}


\begin{claim}[Bounding the term $C$]\label{cla:C}
We have
\begin{align*}
C \leq n\epsilon_{\init}T + \sqrt{n}T^2 \cdot  \kappa^2 \epsilon_H \cdot ( \| u^* \|_2 + \| u^* - Y \|_2 )
\end{align*}
\end{claim}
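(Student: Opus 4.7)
The plan is to bound $C = \kappa^2 \bigl|\int_0^T \k_t(x_{\test},X)^\top(u_{\ntk}(t) - u_{\nn}(t))\,\d t\bigr|$ by Cauchy--Schwarz, reducing the problem to (i) a uniform bound on the dynamic kernel vector $\|\k_t(x_{\test},X)\|_2$, and (ii) a bound on the training prediction gap $\|u_{\nn}(t) - u_{\ntk}(t)\|_2$ over $[0,T]$. For (i), I would expand
$$[\k_t(x_{\test},X)]_i = \frac{1}{m}\sum_{r=1}^m a_r^2\,x_{\test}^\top x_i\,\sigma'(w_r(t)^\top x_{\test})\,\sigma'(w_r(t)^\top x_i),$$
and use $|a_r|=1$, $\sigma'\in\{0,1\}$, and the unit-norm data assumption (Assumption~\ref{ass:data_assumption}) to conclude $|[\k_t(x_{\test},X)]_i|\le 1$, hence $\|\k_t(x_{\test},X)\|_2 \le \sqrt{n}$ uniformly in $t$.

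The main work is step (ii): controlling $\Delta(t) := u_{\nn}(t) - u_{\ntk}(t)$. Subtracting the gradient flows from Corollary~\ref{cor:nn_gradient} and Corollary~\ref{cor:ntk_gradient} yields
$$\frac{\d \Delta(t)}{\d t} = \kappa^2\bigl(H(t) - H^{\cts}\bigr)\bigl(Y - u_{\nn}(t)\bigr) - \bigl(\kappa^2 H^{\cts} + \lambda I\bigr)\Delta(t).$$
Because $e^{-(\kappa^2 H^{\cts} + \lambda I)(t-s)}$ is PSD with spectral norm at most $1$, Duhamel's formula gives
$$\|\Delta(t)\|_2 \le \|\Delta(0)\|_2 + \kappa^2\int_0^t \|H(s) - H^{\cts}\|\cdot\|Y - u_{\nn}(s)\|_2\,\d s.$$
The initial gap is $\|\Delta(0)\|_2 = \|u_{\nn}(0)\|_2 \le \sqrt{n}\,\epsilon_{\init}$ by hypothesis (using $u_{\ntk}(0)=0$).

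To control $\|Y - u_{\nn}(s)\|_2$ I would apply the triangle inequality twice against $u^*$ and $u_{\ntk}(s)$, and invoke the linear convergence bound of Lemma~\ref{lem:linear_converge_krr} to get $\|u_{\ntk}(s)-u^*\|_2 \le \|u^*\|_2$ (since $u_{\ntk}(0)=0$), yielding
$$\|Y - u_{\nn}(s)\|_2 \le \|u^*\|_2 + \|Y - u^*\|_2 + \|\Delta(s)\|_2.$$
Combining with the hypothesis $\|H(s)-H^{\cts}\| \le \epsilon_H$ produces a Gronwall-type integral inequality. In the parameter regime where $\kappa^2\epsilon_H T$ is $O(1)$, the Gronwall factor is an absolute constant and the solution collapses to
$$\|\Delta(t)\|_2 \lesssim \sqrt{n}\,\epsilon_{\init} + \kappa^2\epsilon_H\, t\,\bigl(\|u^*\|_2 + \|Y-u^*\|_2\bigr).$$

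Plugging both estimates back into Cauchy--Schwarz and integrating in $t$ then gives
$$C \le \kappa^2\sqrt{n}\int_0^T \|\Delta(t)\|_2\,\d t \lesssim \kappa^2 n\epsilon_{\init} T + \sqrt{n}\,T^2\kappa^4\epsilon_H\bigl(\|u^*\|_2 + \|Y-u^*\|_2\bigr),$$
which matches the claimed form (using $\kappa \le 1$ to absorb extra factors). The main obstacle is the Gronwall step: the appearance of $\|\Delta(s)\|_2$ inside the right-hand side makes the bound self-referential, but this is harmless because the chosen width and multiplier parameters ensure $\kappa^2\epsilon_H T$ is small, so the multiplicative Gronwall factor contributes at most a constant.
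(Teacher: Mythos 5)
Your overall framework is the same as the paper's: Cauchy--Schwarz to reduce $C$ to $\kappa^2\sqrt{n}\int_0^T\|\Delta(t)\|_2\,\d t$ with $\Delta:=u_{\nn}-u_{\ntk}$, then Duhamel on the ODE for $\Delta$ to discard the decaying linear part and bound only the inhomogeneous driving term. The one genuine divergence is in how you regroup the ODE. You write $\dot\Delta = -(\kappa^2 H^{\cts}+\lambda I)\Delta + \kappa^2(H(t)-H^{\cts})(Y-u_{\nn}(t))$, so the driving term involves the \emph{neural-network} trajectory $u_{\nn}$. Since $\|Y-u_{\nn}(s)\|_2$ is only controllable via $\|\Delta(s)\|_2$ itself, the estimate becomes self-referential and you are forced into Gronwall. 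The paper instead regroups as $\dot\Delta = -(\kappa^2 H(t)+\lambda I)\Delta + \kappa^2(H(t)-H^{\cts})(u_{\ntk}(t)-Y)$: the decay matrix is the time-varying $\kappa^2 H(t)+\lambda I$ (still PSD, so the fundamental solution still has norm $\le 1$) and, crucially, the driving term depends only on the \emph{kernel-regression} trajectory $u_{\ntk}$, which is already under full control by Lemma~\ref{lem:linear_converge_krr}. This closes the bound in one pass, with no self-reference, no Gronwall factor, and no additional hypothesis on the parameter regime.

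The practical cost of your route is twofold. First, the Gronwall factor $e^{\kappa^2\epsilon_H T}$ means your final bound has a multiplicative constant that is not present in the claim; it is tame here (you correctly observe $\kappa^2\epsilon_H T=O(1)$ under the paper's parameter choices), but the claim is stated as a clean inequality under the hypotheses of Lemma~\ref{lem:more_concreate_bound} alone, with no such side condition, so strictly your argument proves a slightly weaker statement. Second, invoking the eventual parameter choices inside the proof of a modular claim is a forward reference the paper's proof avoids by construction. Everything else --- the uniform bound $\|\k_t(x_{\test},X)\|_2\le\sqrt{n}$, the use of $\|\Delta(0)\|_2\le\sqrt{n}\epsilon_{\init}$, the triangle inequality with $u^*$ and $\|u_{\ntk}(s)-u^*\|_2\le\|u^*\|_2$ from linear convergence --- mirrors the paper. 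If you swap your decomposition for the paper's regrouping (keep $\kappa^2 H(t)+\lambda I$ as the decay matrix and write the driver against $u_{\ntk}-Y$ rather than $Y-u_{\nn}$), the Gronwall step disappears and you land exactly on the paper's bound $C\le\kappa^2 n\epsilon_{\init}T+\sqrt{n}T^2\kappa^4\epsilon_H(\|u^*\|_2+\|u^*-Y\|_2)$.
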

\begin{proof}
Note 
\begin{align}\label{eq:320_10}
	C = & ~ \kappa^2 \Big| \int_{0}^T \k_{t}(x_{\test},X)^\top(u_{\ntk}(t)-u_{\nn}(t)) \d t \Big| \notag\\
    \le & ~ \kappa^2 \max_{ t \in [0,T] }\|\k_t(x_{\test},X)\|_2 \max_{ t \in [0,T] } \|u_{\ntk}(t)-u_{\nn}(t)\|_2\cdot T
\end{align}
where the first step follows from the definition of $C$, and the second step follows the Cauchy-Schwartz inequality.



To bound term $\max_{ t \in [0,T] } \|u_{\ntk}(t)-u_{\nn}(t)\|_2$, notice that for any $t\in[0,T]$, we have
\begin{align}\label{eq:320_7}
    \|u_{\ntk}(t)-u_{\nn}(t)\|_2 \leq & ~\|u_{\ntk}(0)-u_{\nn}(0)\|_2 + \Big\|\int_{0}^{t}  \frac{\d (u_{\ntk}(\tau)-u_{\nn}(\tau))}{\d \tau} \d \tau \Big\|_2 \notag\\
    = & ~ \sqrt{n} \epsilon_{\init} + \Big\|\int_{0}^{t}  \frac{\d (u_{\ntk}(\tau)-u_{\nn}(\tau))}{\d \tau} \d \tau \Big\|_2,
\end{align}
where the first step follows the triangle inequality, and the second step follows the assumption.
Further, 
\begin{align*}
    \frac{\d (u_{\ntk}(\tau)-u_{\nn}(\tau))}{\d \tau} &= - \kappa^2 H^{\cts}(u_{\ntk}(\tau)-Y)-\lambda u_{\ntk}(\tau) + \kappa^2 H(\tau)(u_{\nn}(\tau)-Y) +\lambda u_{\nn}(\tau)\\
    &=-( \kappa^2 H(\tau)+\lambda I)(u_{\ntk}(\tau)-u_{\nn}(\tau)) +  \kappa^2 (H(\tau)-H^{\cts})(u_{\ntk}(\tau)-Y),
\end{align*}
where the first step follows the Corollary~\ref{cor:ntk_gradient},~\ref{cor:nn_gradient}, the second step rewrites the formula.
Since the term $-( \kappa^2 H(\tau)+\lambda I)(u_{\ntk}(\tau)-u_{\nn}(\tau))$ makes $\|\int_{0}^t \frac{\d (u_{\ntk}(\tau)-u_{\nn}(\tau))}{\d \tau} \d \tau\|_2$ smaller. 

Taking the integral and apply the $\ell_2$ norm, we have
\begin{align}\label{eq:320_8}
	\Big\| \int_{0}^t \frac{\d (u_{\ntk}(\tau)-u_{\nn}(\tau))}{\d \tau} \d \tau \Big\|_2 \leq \Big\| \int_0^t  \kappa^2 (H(\tau)-H^{\cts})(u_{\ntk}(\tau)-Y) \d \tau \Big\|_2.
\end{align}
Thus,
\begin{align}\label{eq:320_9}
    \max_{t\in[0,T]} \|u_{\ntk}(t)-u_{\nn}(t)\|_2
    \leq & ~ \sqrt{n} \epsilon_{\init} + \max_{t\in[0,T]}  \Big\| \int_{0}^t \frac{\d (u_{\ntk}(\tau)-u_{\nn}(\tau))}{\d \tau} \d \tau \Big\|_2 \notag \\
    \leq & ~ \sqrt{n} \epsilon_{\init} + \max_{t\in[0,T]} \Big\| \int_0^t  \kappa^2 (H(\tau)-H^{\cts})(u_{\ntk}(\tau)-Y) \d \tau \Big\|_2 \notag \\
    \leq & ~ \sqrt{n} \epsilon_{\init} + \max_{t\in[0,T]} \int_0^t  \kappa^2 \| H(\tau) - H^{\cts} \| \cdot \| u_{\ntk}(\tau) - Y \|_2 \d \tau \notag \\
    \leq & ~ \sqrt{n} \epsilon_{\init} + \max_{t\in[0,T]}  \kappa^2 \epsilon_H \Big( \int_{0}^t  \| u_{\ntk}(\tau) - u^* \|_2 \d \tau +\int_{0}^t \| u^* - Y \|_2 \d \tau \Big) \notag\\
    \leq & ~ \sqrt{n} \epsilon_{\init} + \max_{t\in[0,T]}  \kappa^2 \epsilon_H \Big( \int_{0}^t  \| u_{\ntk}(0) - u^* \|_2 \d \tau +\int_{0}^t \| u^* - Y \|_2 \d \tau \Big) \notag\\
    \leq & ~ \sqrt{n} \epsilon_{\init} + \max_{t\in[0,T]} t \cdot  \kappa^2 \epsilon_H \cdot (\|u^*\|_2 + \|u^*-Y\|_2) \notag \\
    \leq & ~ \sqrt{n} \epsilon_{\init} + T \cdot  \kappa^2 \epsilon_H \cdot (\|u^*\|_2 + \|u^*-Y\|_2)
\end{align}
where the first step follows from Eq.~\eqref{eq:320_7}, the second step follows from Eq.~\eqref{eq:320_8}, the third step follows from triangle inequality, the fourth step follows from the condition $\| H(\tau) - H^{\cts} \| \leq \epsilon_H$ for all $\tau \leq T$ and the triangle inequality, the fifth step follows from the linear convergence of $\| u_{\ntk}(\tau) - u^* \|_2$ as in Lemma~\ref{lem:linear_converge_krr}, the sixth step follows the fact $u_{\ntk}(0) = 0$, and the last step calculates the maximum.
Therefore,
\begin{align*}
     C \leq & ~ \kappa^2 \max_{ t \in [0,T] }\|\k_t(x_{\test},X)\|_2 \max_{ t \in [0,T] } \|u_{\ntk}(t)-u_{\nn}(t)\|_2\cdot T \\
     \leq & ~ \kappa^2 \max_{ t \in [0,T] } \| \k_t (x_{\test} , X) \|_2 \cdot (\sqrt{n} \epsilon_{\init}T + T^2 \cdot  \kappa^2 \epsilon_H \cdot ( \| u^* \|_2 + \| u^* - Y \|_2 )) \\
     \leq & ~ \kappa^2 n\epsilon_{\init}T + \sqrt{n}T^2 \cdot  \kappa^4 \epsilon_H \cdot ( \| u^* \|_2 + \| u^* - Y \|_2 )
\end{align*}
where the first step follows from Eq.~\eqref{eq:320_10}, and the second step follows from Eq.~\eqref{eq:320_9}, and the last step follows from the fact that $\k_{t}(x,z) \leq 1$ holds for any $\|x\|_2,~\|z\|_2 \leq 1$.
\end{proof}

\begin{remark}
	Given final accuracy $\epsilon$, to ensure $|u_{\nn,\test}(T)-u_{\ntk,\test}(T)| \leq \epsilon$, we need to choose $\kappa>0$ small enough to make $\epsilon_{\init}=O(\epsilon)$ and choose width $m>0$ large enough to make $\epsilon_H$ and $\epsilon_{\test}$ both $O(\epsilon)$. And we discuss these two tasks one by one in the following sections.
\end{remark}

\subsubsection{Upper bounding initialization perturbation}\label{sec:equiv_intialization}
In this section, we bound $\epsilon_{\init}$ to our wanted accuracy $\epsilon$ by picking $\kappa$ large enough. We prove Lemma~\ref{lem:epsilon_init}.

\begin{lemma}[Bounding initialization perturbation]\label{lem:epsilon_init}
Let $f_{\nn}$ be as defined in Definition~\ref{def:f_nn}. Assume the initial weight of the network work $w_r(0)\in\R^d,~r=1,\cdots,m$ as defined in Definition~\ref{def:nn} are drawn independently from standard Gaussian distribution $\mathcal{N}(0,I_d)$. And $a_r\in\R,~r=1,\cdots,m$ as defined in Definition~\ref{def:f_nn} are drawn independently from $\unif[\{-1,+1\}]$. Let $\kappa\in(0,1)$, $u_{\nn}(t)$ and $u_{\nn,\test}(t)\in\R$ be defined as in Definition~\ref{def:nn}. Then for any data $x\in\R^d$ with $\|x\|_2\leq 1$, we have with probability $1-\delta$,
\begin{align*}
	|f_{\nn}(W(0),x)| \leq 2 \log(2m/\delta).
\end{align*}
Further, given any accuracy $\epsilon\in(0,1)$, if $\kappa = \wt{O}(\epsilon(\Lambda_0+\lambda)/n)$, let $\epsilon_{\init} = \epsilon(\Lambda_0+\lambda)/n$, we have
\begin{align*}
	|u_{\nn}(0)| \leq \sqrt{n}\epsilon_{\init}~\text{and}~|u_{\nn,\test}(0)| \leq \epsilon_{\init}
\end{align*}
hold with probability $1-\delta$, where $\wt{O}(\cdot)$ hides the $\poly\log( n / ( \epsilon  \delta  \Lambda_0 ) )$.
\end{lemma}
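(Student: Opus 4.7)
The plan is to first establish the pointwise bound $|f_{\nn}(W(0),x)| \leq 2\log(2m/\delta)$ for any fixed unit-norm $x$, and then use a simple union bound over $x_{\test}$ and $x_1,\ldots,x_n$ together with the scaling $\kappa$ to obtain the two stated initialization bounds.

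For the pointwise bound, I would work in two steps. First, since $w_r(0) \sim \N(0,I_d)$ and $\|x\|_2 \leq 1$, each inner product $w_r(0)^\top x$ is a centered Gaussian with variance at most $1$. Applying Lemma~\ref{lem:gaussian_tail} with $t = \sqrt{2\log(4m/\delta)}$ and a union bound over the $m$ neurons yields $B := \max_{r\in[m]} |w_r(0)^\top x| \leq \sqrt{2\log(4m/\delta)}$ with probability at least $1-\delta/2$. Second, condition on $W(0)$ and view $Z_r := a_r \sigma(w_r(0)^\top x)$ as $m$ independent, mean-zero random variables (over the $a_r$'s) bounded in absolute value by $\sigma(w_r(0)^\top x) \leq B$. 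Applying Hoeffding's inequality (Lemma~\ref{lem:hoeffding}) to $\sum_{r=1}^m Z_r$ gives $|\sum_r Z_r| \leq B\sqrt{2m\log(4/\delta)}$ with probability at least $1-\delta/2$. Dividing by $\sqrt{m}$ and multiplying the two log factors yields $|f_{\nn}(W(0),x)| \leq \sqrt{2\log(4m/\delta)} \cdot \sqrt{2\log(4/\delta)} \leq 2\log(2m/\delta)$, after absorbing constants.

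The second part is then almost immediate. For the test point, applying the pointwise bound to $x_{\test}$ with failure parameter $\delta/(n+1)$ gives $|u_{\nn,\test}(0)| = \kappa |f_{\nn}(W(0),x_{\test})| \leq 2\kappa \log(2m(n+1)/\delta)$. For the training data, applying the pointwise bound to each $x_i$ with failure parameter $\delta/(n+1)$ and union-bounding over $i\in[n]$ yields $\max_i |f_{\nn}(W(0),x_i)| \leq 2\log(2m(n+1)/\delta)$, hence $\|u_{\nn}(0)\|_2 \leq \sqrt{n}\cdot 2\kappa \log(2m(n+1)/\delta)$. Choosing $\kappa$ so that $2\kappa \log(2m(n+1)/\delta) \leq \epsilon_{\init} = \epsilon(\Lambda_0 + \lambda)/n$, i.e., $\kappa \leq \epsilon(\Lambda_0+\lambda)/(2n\log(2m(n+1)/\delta))$, both desired bounds hold simultaneously with probability at least $1-\delta$. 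Since $m$ is polynomial in $n, d, 1/\epsilon, 1/\Lambda_0, \log(1/\delta)$, this requirement on $\kappa$ is exactly $\kappa = \wt O(\epsilon(\Lambda_0+\lambda)/n)$, where $\wt O$ absorbs the $\poly\log(n/(\epsilon\delta\Lambda_0))$ factor.

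I do not expect any genuine obstacle in this argument; the only subtlety is keeping track of where the $\log m$ factor arises (from the uniform bound on $\max_r |w_r(0)^\top x|$), so that the final scaling on $\kappa$ only degrades by a polylogarithmic factor rather than a polynomial one. The randomness in $a_r$ is essential for the Hoeffding step, since a naive triangle-inequality bound $|f_{\nn}| \leq m^{-1/2}\sum_r |w_r^\top x|$ would produce an unwanted $\sqrt{m}$ factor.
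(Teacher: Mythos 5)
Your proof is correct and follows essentially the same route as the paper: bound $\max_r |w_r(0)^\top x|$ via the Gaussian tail bound and a union bound over neurons, then apply Hoeffding to the sign-randomized sum $\sum_r a_r\sigma(w_r(0)^\top x)$ conditional on $W(0)$, and finally union bound over $x_{\test}$ and the $n$ training points before choosing $\kappa$ to absorb the resulting logarithmic factor. You are in fact slightly more careful than the paper in noting explicitly that the $\log m$ factor is polylogarithmic in the final parameters only because $m$ itself is polynomial in $n,d,1/\epsilon,1/\Lambda_0,\log(1/\delta)$, which is the step that justifies writing $\kappa = \wt{O}(\epsilon(\Lambda_0+\lambda)/n)$.
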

\begin{proof}
Note by definition,
\begin{align*}
	f_{\nn}(W(0), x) = \frac{1}{\sqrt{m}}\sum_{r=1}^m a_r\sigma(w_r(0)^\top x).
\end{align*}
Since $w_r(0)\sim\mathcal{N}(0,I_d)$, so $w_r(0)^\top x_{\test}\sim N(0,\|x_{\test}\|_2)$. Note $\|x_{\test}\|_2 \leq 1$, by Gaussian tail bounds~Lemma~\ref{lem:gaussian_tail}, we have with probability $1-\delta / (2m)$:
\begin{align}\label{eq:guassian_tail}
	|w_r(0)^\top x| \leq \sqrt{ 2 \log( 2 m / \delta ) }.
\end{align}
Condition on Eq.~\eqref{eq:guassian_tail} holds for all $r\in[m]$, denote $Z_r = a_r\sigma(w_r(0)^\top x)$, then we have $\E[Z_r] = 0$ and $|Z_r| \leq \sqrt{2\log(2m/\delta)}$. By Lemma~\ref{lem:hoeffding}, with probability $1-\delta/2$:
\begin{align}\label{eq:heoffding}
	\Big| \sum_{r=1}^m Z_r \Big| \leq 2 \sqrt{m}\log{ ( 2 m / \delta ) }.
\end{align}
Since $u_{\nn,\test}(0) = \frac{1}{\sqrt{n}}\sum Z_r $, by combining Eq.~\eqref{eq:guassian_tail},~\eqref{eq:heoffding} and union bound over all $r\in[m]$, we have with probability $1-\delta$:
\begin{align*}
	|u_{\nn,\test}(0)| \leq 2\log( 2 m / \delta ).
\end{align*}
Further, note $[u_{\nn}(0)]_i = \kappa f_{\nn}(W(0),x_i)$ and $u_{\nn,\test}(0) = \kappa f_{\nn}(W(0),x_{\test})$. Thus, by choosing $\kappa = \wt{O}(\epsilon(\Lambda_0+\lambda)/n)$, taking the union bound over all training and test data, we have
\begin{align*}
	|u_{\nn}(0)| \leq \sqrt{n}\epsilon_{\init}~\text{and}~|u_{\nn,\test}(0)| \leq \epsilon_{\init}
\end{align*}
hold with probability $1-\delta$, where $\wt{O}(\cdot)$ hides the $\poly\log( n / ( \epsilon  \delta  \Lambda_0 ) )$.
\end{proof}

\subsubsection{Upper bounding kernel perturbation}\label{sec:equiv_kernel_perturbation}
In this section, we try to bound kernel perturbation by induction. We want to prove Lemma~\ref{lem:induction}, which also helps to show the equivalence for training data prediction as shown in Section~\ref{sec:train}.

\begin{lemma}[Bounding kernel perturbation]\label{lem:induction}
Given training data $X\in\R^{n\times d}$, $Y\in\R^n$ and a test data $x_{\test}\in\R^d$. Let $T > 0$ denotes the total number of iterations, $m >0 $ denotes the width of the network,  $\epsilon_{\train}$ denotes a fixed training error threshold, $\delta > 0$ denotes the failure probability. Let $u_{\nn}(t) \in \R^n$ and $u_{\ntk}(t) \in \R^n$ be the training data predictors defined in Definition~\ref{def:nn} and Definition~\ref{def:krr_ntk} respectively. Let $\kappa\in(0,1)$ be the corresponding multiplier. Let $\k_{\ntk}(x_{\test},X) \in \R^n,~\k_{t}(x_{\test},X) \in \R^n,~H(t) \in \R^{n \times n},~\Lambda_0 > 0$ be the kernel related quantities defined in Definition~\ref{def:ntk_phi} and Definition~\ref{def:dynamic_kernel}. Let $u^* \in \R^n$ be defined as in Eq.~\eqref{eq:def_u_*}. Let $\lambda > 0$ be the regularization parameter. Let $W(t) = [w_1(t),\cdots,w_m(t)]\in\R^{d\times m}$ be the parameters of the neural network defined in Definition~\ref{def:nn}.

For any accuracy $\epsilon\in(0,1/10)$. If $\kappa=\wt{O}(\frac{\epsilon\Lambda_0}{n})$, $T=\wt{O}(\frac{1}{\kappa^2(\Lambda_0+\lambda)})$, $\epsilon_{\train} = \wt{O}(\|u_{\nn}(0)-u^*\|_2)$, $m \geq\wt{O}(\frac{n^{10} d}{\epsilon^6 \Lambda_0^{10}})$ and $\lambda=\wt{O}(\frac{1}{\sqrt{m}})$, with probability $1-\delta$, there exist $\epsilon_W,~\epsilon_H',~\epsilon_K'>0$ that are independent of $t$, such that the following hold for all $0 \leq t \le T$:
\begin{itemize}
    \item 1. $\| w_r(0) - w_r(t) \|_2 \leq \epsilon_W $, $\forall r \in [m]$
    \item 2. $\| H(0) - H(t) \|_2 \leq \epsilon_H'$ 
    \item 3. $\| u_{\nn}(t) - u^* \|_2^2 \leq \max\{\exp(-(\kappa^2\Lambda_0 + \lambda) t/2) \cdot \| u_{\nn}(0) - u^* \|_2^2, ~ \epsilon_{\train}^2\}$
    \item 4. $\| \k_0( x_{\test} , X )- \k_{t} (x_{\test}, X) \|_2 \leq \epsilon_K'$
\end{itemize}
Further, $\epsilon_W \leq \wt{O}(\frac{\epsilon \lambda_0^2}{n^2})$,  $\epsilon_H' \leq \wt{O}(\frac{\epsilon \lambda_0^2}{n})$ and  $\epsilon_K' \leq \wt{O}(\frac{\epsilon \lambda_0^2}{n^{1.5}})$. Here $\wt{O}(\cdot)$ hides the  $\poly\log( n / ( \epsilon  \delta  \Lambda_0) )$.
\end{lemma}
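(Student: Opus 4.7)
The plan is to establish items 1--4 simultaneously by a continuous induction on $t \in [0, T]$, leveraging a self-reinforcing chain: small weight movement implies small kernel perturbation (via the indicator-flip argument of Lemma~4.2 from sy19 restated in the preliminaries as the Lipschitz bound $\|H(t) - H(\wt w)\|_F \le 2nR$), which combined with the base NTK concentration $\|H(0) - H^{\cts}\| \le \Lambda_0/4$ keeps $\|H(s) - H^{\cts}\| \le \Lambda_0/2$ so that the linear-convergence lemma for the regularized NN applies; the resulting control on $\|u_{\nn}(s) - u^*\|_2$ then bounds $\|Y - u_{\nn}(s)\|_2$ and hence the gradient norm driving further weight movement. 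The base case $t = 0$ is immediate. For the inductive step, I would fix target values $\epsilon_W$, $\epsilon_H'$, $\epsilon_K'$ polynomially smaller than the operating thresholds demanded by each implication above, assume the four items hold on $[0, t]$, and re-derive them with strict slack; continuity of $w_r(\cdot)$ then extends the conclusions to a neighborhood of $t$.

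For items 1, 2, and 4, from the gradient flow $\frac{\d w_r}{\d s} = \kappa (\partial f_{\nn}/\partial w_r)^\top(Y - u_{\nn}(s)) - \lambda w_r(s)$ together with $\|\partial f_{\nn}/\partial w_r\|_2 \le O(1/\sqrt m)$ (ReLU derivative and unit-norm data), integrating over $[0, t]$ and using item 3 on $[0,s]$ with $\|u^* - Y\|_2 = O(\sqrt n)$ and a Gaussian tail bound $\|w_r(0)\|_2 = O(\sqrt d)$ yields a per-neuron estimate of the form $\epsilon_W = \wt O(\kappa T\sqrt n/\sqrt m + \lambda T\sqrt d)$. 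Plugging this into the kernel-Lipschitz lemma with $R = \epsilon_W$ (valid with probability $1 - n^2 \exp(-m\epsilon_W/10)$) gives $\epsilon_H' \le 2n\epsilon_W$, and an analogous entry-wise indicator argument for the test column produces $\epsilon_K' = O(\sqrt n\,\epsilon_W)$ (a $\sqrt n$ rather than $n$ factor because $\k_t(x_{\test}, X) \in \R^n$ is a single vector rather than an $n \times n$ matrix).

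For item 3, combining item 2 with $\|H(0) - H^{\cts}\| \le \Lambda_0/4$ gives $\|H(s) - H^{\cts}\| \le \Lambda_0/2$ throughout $[0, t]$, so the linear-convergence lemma for the regularized NN applies and yields
\begin{align*}
\frac{\d}{\d s}\|u_{\nn}(s) - u^*\|_2^2 \le -(\kappa^2 \Lambda_0 + \lambda)\,\|u_{\nn}(s) - u^*\|_2^2 + 2\kappa^2 \epsilon_H'\,\|u_{\nn}(s) - u^*\|_2 \cdot \|u^* - Y\|_2.
\end{align*}
Choosing $m$ so that $4\kappa^2 \epsilon_H' \|u^* - Y\|_2 \le (\kappa^2 \Lambda_0 + \lambda)\,\epsilon_{\train}$, a standard dichotomy on whether $\|u_{\nn}(s) - u^*\|_2$ exceeds $\epsilon_{\train}$ either collapses the inequality to $\frac{\d}{\d s}\|u_{\nn}(s) - u^*\|_2^2 \le -\tfrac{1}{2}(\kappa^2 \Lambda_0 + \lambda)\|u_{\nn}(s) - u^*\|_2^2$ (giving the exponential-decay branch) or the error is already below $\epsilon_{\train}$; this recovers item 3 with the stated half-rate exponent and closes the induction.

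The hard part is the joint polynomial accounting: the factor of $n$ in $\epsilon_H' = 2n\epsilon_W$ composes with a further $\sqrt n/\Lambda_0$ from the convergence step, and the resulting bound on $\|u_{\nn}(s) - u^*\|_2$ is fed back (squared) into the movement estimate driving item 1. Balancing this composite constraint against $T = \wt O(1/(\kappa^2 \Lambda_0))$ and $\kappa = \wt O(\epsilon \Lambda_0/n)$ dictates the final width $m \ge \wt O(n^{10} d / (\epsilon^6 \Lambda_0^{10}))$ stated in the lemma; the choice $\lambda = \wt O(1/\sqrt m)$ is tuned so that the regularization contribution $\lambda T \sqrt d$ to $\epsilon_W$ is dominated by the gradient contribution $\kappa T \sqrt n/\sqrt m$, consistent with the small-regularization regime of the theorems this lemma feeds into.
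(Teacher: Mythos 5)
Your proof strategy is essentially identical to the paper's: the same continuous/bootstrap induction on $t$, the same four-way self-reinforcing chain (weight movement $\Rightarrow$ kernel perturbation via the indicator-flip Lipschitz lemma $\Rightarrow$ spectral closeness $\|H(s)-H^{\cts}\|\le\Lambda_0/2$ $\Rightarrow$ the regularized linear-convergence inequality and the two-case dichotomy on whether $\|u_{\nn}(s)-u^*\|_2$ sits above or below $\epsilon_{\train}$ $\Rightarrow$ control of the gradient norm driving item 1), the same base concentration facts for $\|w_r(0)\|_2$, $\|H(0)-H^{\cts}\|$, $\|\k_0-\k_{\ntk}\|$, and the same $\epsilon_H'=2n\epsilon_W$, $\epsilon_K'=O(\sqrt n\,\epsilon_W)$ transfer. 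The only small slip is the closing remark that $\lambda T\sqrt d$ is dominated by $\kappa T\sqrt n/\sqrt m$: with $\kappa=\wt O(\epsilon\Lambda_0/n)$ and $\lambda=\wt O(1/\sqrt m)$ the regularization term is in fact the larger of the two, but both must simply be driven below the target $\epsilon_W$, so this does not affect the width bound or the validity of the argument.
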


We first state some concentration results for the random initialization that can help us prove the lemma.

\begin{lemma}[Random initialization result]\label{lem:random_init}
Assume initial value $w_r(0) \in \R^d ,~r=1,\cdots,m$ are drawn independently from standard Gaussian distribution $\mathcal{N}(0,I_d)$, then with probability $1-3\delta$ we have
\begin{align}
	\|w_r(0)\|_2 \leq & ~ 2\sqrt{d} + 2\sqrt{\log{(m/\delta)}}~\text{for all}~r\in[m]\label{eq:3322_1}\\
	\|H(0)- H^{\cts} \| \leq & ~ 4n ( \log(n/\delta) / m )^{1/2}\label{eq:3322_2}\\
	\|\k_{0}( x_{\test} , X ) - \k_{\ntk} ( x_{\test} , X )\|_2 \leq & ~ ( 2n \log{(2n/\delta)} / m )^{1/2}\label{eq:3322_3}
\end{align}
\end{lemma}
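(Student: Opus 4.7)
}

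The plan is to handle the three claims independently, since each is a standard concentration argument for a different statistic of the Gaussian initialization, and then combine them by a union bound with failure budget $\delta$ per item.

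For claim~\eqref{eq:3322_1}, I would start from the fact that $\|w_r(0)\|_2^2 \sim \chi_d^2$, so Lemma~\ref{lem:chi_square_tail} (with $\sigma^2=1$ and $t=\log(m/\delta)$) gives, for a single $r$,
\[
\|w_r(0)\|_2^2 \le d + 2\sqrt{d\log(m/\delta)} + 2\log(m/\delta)
\]
with probability at least $1-\delta/m$. A union bound over $r\in[m]$ then yields the bound uniformly with probability $1-\delta$, and taking square roots together with $\sqrt{a+b}\le \sqrt{a}+\sqrt{b}$ recovers $\|w_r(0)\|_2 \le 2\sqrt d + 2\sqrt{\log(m/\delta)}$ (the factor $2$ absorbs the cross term $2\sqrt{d\log(m/\delta)}\le d + \log(m/\delta)$).

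For claim~\eqref{eq:3322_2}, I would work entrywise using the explicit form from Lemma~\ref{lem:lemma_4.1_in_sy19}: each entry $[H(0)]_{i,j} = \tfrac{1}{m}\sum_{r=1}^m Z_r^{(i,j)}$ where $Z_r^{(i,j)} := x_i^\top x_j \mathbf 1_{w_r(0)^\top x_i\ge 0, w_r(0)^\top x_j\ge 0}$ are i.i.d.\ with mean $H^{\cts}_{i,j}$ and $|Z_r^{(i,j)}|\le 1$ by Assumption~\ref{ass:data_assumption}. Hoeffding (Lemma~\ref{lem:hoeffding}) gives
\[
\Pr\bigl[\,\bigl|[H(0)]_{i,j}-H^{\cts}_{i,j}\bigr| \ge t\,\bigr] \le 2\exp(-mt^2/2).
\]
Choosing $t= \sqrt{2\log(4n^2/\delta)/m}$ and union bounding over $n^2$ entries makes the event hold for all $(i,j)$ with probability $\ge 1-\delta$. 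Then $\|H(0)-H^{\cts}\|\le \|H(0)-H^{\cts}\|_F \le n\max_{i,j}|[H(0)]_{i,j}-H^{\cts}_{i,j}|$ yields a bound of the form $n\sqrt{2\log(4n^2/\delta)/m}$, which is absorbed into $4n(\log(n/\delta)/m)^{1/2}$ since $\log(4n^2/\delta)\le 4\log(n/\delta)$ for $n,1/\delta\ge 2$. The claim~\eqref{eq:3322_3} is a one-sided variant: the vector $\k_0(x_{\test},X)-\k_{\ntk}(x_{\test},X)\in\R^n$ has entries which are again averages of $m$ i.i.d.\ $[-1,1]$-bounded variables minus their mean, so Hoeffding plus a union bound over the $n$ coordinates at level $t=\sqrt{2\log(2n/\delta)/m}$ gives entrywise $|\cdot|\le t$ with probability $1-\delta$, and bounding $\|\cdot\|_2\le \sqrt n\cdot\max|\cdot|$ yields exactly $(2n\log(2n/\delta)/m)^{1/2}$.

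No step here is really hard; the only mild subtlety is bookkeeping the three failure probabilities so that the overall failure is $\le 3\delta$ (matching the lemma's stated $1-3\delta$), and absorbing logarithmic/constant factors to match the clean expressions $2\sqrt d+2\sqrt{\log(m/\delta)}$, $4n(\log(n/\delta)/m)^{1/2}$, and $(2n\log(2n/\delta)/m)^{1/2}$. All three bounds are purely initialization statements, so no training-dynamics input is needed.
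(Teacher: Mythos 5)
Your proposal is correct and follows essentially the same route as the paper: chi-squared tail bound plus union bound for~\eqref{eq:3322_1}, entrywise Hoeffding plus union bound for~\eqref{eq:3322_2} and~\eqref{eq:3322_3}, then a union bound over the three events. The only cosmetic difference is that the paper outsources~\eqref{eq:3322_2} to a citation of Lemma 4.1 in \cite{sy19} (whose internal proof is the same Hoeffding-plus-union-bound argument you wrote out), whereas you inline it, which is slightly more self-contained but not a different method.
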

\begin{proof}
By lemma~\ref{lem:chi_square_tail}, with probability at least $1-\delta$,
\begin{align*}
	\| w_r(0) \|_2 \leq \sqrt{d} + \sqrt{\log(m/\delta)}
\end{align*}
holds for all $r\in[m]$.\\

Using Lemma~\ref{lem:lemma_4.1_in_sy19} in \cite{sy19}, we have
\begin{align*}
    \| H(0) - H^{\cts} \| \leq \epsilon_H'' = 4 n ( \log{(n/\delta)} / m )^{1/2}
\end{align*}
holds with probability at least $1-\delta$.\\
Note by definition,
\begin{align*}
    \E[\k_0 ( x_{\test}, x_i )] =  \k_{\ntk} ( x_{\test}, x_i )
\end{align*}
holds for any training data $x_i$. By Hoeffding inequality, we have for any $t>0$,
\begin{align*}
    \Pr[|\k_0 ( x_{\test}, x_i ) - \k_{\ntk} ( x_{\test}, x_i )|\ge t] \le 2\exp{(-mt^2/2)}.
\end{align*}
Setting $t=(\frac{2}{m}\log{(2n/\delta)})^{1/2}$, we can apply union bound on all training data $x_i$ to get with probability at least $1-\delta$, for all $i\in[n]$,
\begin{align*}
    |\k_0 ( x_{\test}, x_i ) - \k_{\ntk} ( x_{\test}, x_i )| \le (2 \log(2n/\delta) / m)^{1/2}.
\end{align*}
Thus, we have
\begin{align}
    \| \k_0 ( x_{\test}, X ) - \k_{\ntk} ( x_{\test}, X ) \|_2 \le ( 2n\log(2n/\delta) / m )^{1/2}
\end{align}
holds with probability at least $1-\delta$.\\
Using union bound over above three events, we finish the proof.
\end{proof}

Now conditioning on Eq.~\eqref{eq:3322_1},~\eqref{eq:3322_2},~\eqref{eq:3322_3} holds, We show all the four conclusions in Lemma~\ref{lem:induction} holds using induction.

We define the following quantity:
\begin{align}
	\epsilon_W :=  & ~ \frac{ \sqrt{n} }{ \sqrt{m} } \max\{4\| u_{\nn}(0) - u^* \|_2/(\kappa^2\Lambda_0+\lambda), \epsilon_{\train} \cdot T\} \notag \\
	& ~ + \Big( \frac{ \sqrt{n} }{ \sqrt{m} } \| Y-u^* \|_2 + \lambda (2\sqrt{d} + 2\sqrt{\log(m/\delta)}) \Big) \cdot T\label{eq:def_epsilon_W}\\
    \epsilon_H' := & ~ 2n\epsilon_W\notag \\
    \epsilon_K := & ~ 2\sqrt{n}\epsilon_W\notag
\end{align}
which are independent of $t$. 

Note the base case when $t=0$ trivially holds. Now assuming Lemma~\ref{lem:induction} holds before time $t\in[0,T]$, we argue that it also holds at time $t$. To do so, Lemmas~\ref{lem:hypothesis_1},~\ref{lem:hypothesis_2},~\ref{lem:hypothesis_3} argue these conclusions one by one.


\begin{lemma}[Conclusion 1]\label{lem:hypothesis_1}
If for any $\tau < t$, we have
\begin{align*}
    \| u_{\nn}(\tau) - u^* \|_2^2 \leq ~ \max\{\exp(-(\kappa^2 \Lambda_0 + \lambda) \tau/2) \cdot \| u_{\nn}(0) - u^* \|_2^2,~\epsilon_{\train}^2\}
\end{align*}
and
\begin{align*}
     \| w_r(0) - w_r(\tau) \|_2 \leq \epsilon_W \leq 1
\end{align*}
and
\begin{align*}
	\| w_r(0) \|_2 \leq ~ \sqrt{d} + \sqrt{\log(m/\delta)} ~ \text{for all}~r\in[m]
\end{align*}
hold,
then
\begin{align*}
    \| w_r(0) - w_r(t) \|_2 \leq \epsilon_W 
\end{align*}

\end{lemma}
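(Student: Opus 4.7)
The plan is to express $w_r(t) - w_r(0)$ as an integral of the gradient flow, bound the integrand in $\ell_2$, and then integrate using the inductive hypothesis on the training prediction error. Concretely, since the regularized loss is $L(\tau) = \tfrac12\|Y-\kappa f_{\nn}(W(\tau),X)\|_2^2 + \tfrac12\lambda\|W(\tau)\|_F^2$, a direct computation (of the same kind as Eq.~\eqref{eq:323_1}) gives
\begin{align*}
\frac{\d w_r(\tau)}{\d \tau} = \frac{\kappa}{\sqrt{m}} a_r \sum_{i=1}^n (y_i - [u_{\nn}(\tau)]_i)\, x_i\, \sigma'(w_r(\tau)^\top x_i) - \lambda\, w_r(\tau).
\end{align*}
Using $|a_r|=1$, $|\sigma'(\cdot)|\leq 1$, $\|x_i\|_2=1$ and Cauchy--Schwarz, the first term has norm at most $\frac{\kappa\sqrt{n}}{\sqrt{m}}\|Y - u_{\nn}(\tau)\|_2$. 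For the regularization term I will use the inductive hypothesis $\|w_r(0)-w_r(\tau)\|_2\leq \epsilon_W\leq 1$ together with $\|w_r(0)\|_2 \leq \sqrt{d}+\sqrt{\log(m/\delta)}$ to conclude $\|w_r(\tau)\|_2 \leq 2(\sqrt{d}+\sqrt{\log(m/\delta)})$.

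Next I will split $\|Y - u_{\nn}(\tau)\|_2 \leq \|Y - u^*\|_2 + \|u^* - u_{\nn}(\tau)\|_2$ by the triangle inequality and apply the inductive hypothesis on the training predictor, which (after taking a square root) gives
\begin{align*}
\|u^* - u_{\nn}(\tau)\|_2 \leq \max\Bigl\{\exp\bigl(-(\kappa^2\Lambda_0+\lambda)\tau/4\bigr)\cdot \|u_{\nn}(0)-u^*\|_2,\ \epsilon_{\train}\Bigr\}.
\end{align*}
Putting these together and integrating over $[0,t]$, the $\|Y-u^*\|_2$ term contributes $\frac{\kappa\sqrt{n}}{\sqrt{m}}\|Y-u^*\|_2\cdot t$, the regularization term contributes at most $\lambda\cdot 2(\sqrt{d}+\sqrt{\log(m/\delta)})\cdot t$, and the $\|u^* - u_{\nn}(\tau)\|_2$ term contributes at most $\frac{\kappa\sqrt{n}}{\sqrt{m}}\cdot\max\bigl\{4\|u_{\nn}(0)-u^*\|_2/(\kappa^2\Lambda_0+\lambda),\ \epsilon_{\train}\cdot T\bigr\}$, since the exponential part integrates to $4\|u_{\nn}(0)-u^*\|_2/(\kappa^2\Lambda_0+\lambda)$ (over the regime where it dominates) and the floor part integrates to at most $\epsilon_{\train}\cdot T$. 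Summing these three contributions, bounding $t\leq T$, and dropping the $\kappa\leq 1$ factor on the first two terms yields exactly the definition of $\epsilon_W$ in Eq.~\eqref{eq:def_epsilon_W}.

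The only subtle point is how the $\max$ in the definition of $\epsilon_W$ absorbs both regimes of the inductive decay bound; I will handle this by partitioning $[0,t]$ at the crossover time where $\exp(-(\kappa^2\Lambda_0+\lambda)\tau/4)\|u_{\nn}(0)-u^*\|_2 = \epsilon_{\train}$ and integrating on each piece separately, so that the exponential tail contributes $4\|u_{\nn}(0)-u^*\|_2/(\kappa^2\Lambda_0+\lambda)$ at most and the floor tail contributes $\epsilon_{\train} T$ at most; a single $\max$ (or, equivalently up to a factor of $2$, a sum) then upper bounds both. The remaining obstacle is purely bookkeeping: verifying that $\epsilon_W\leq 1$ under the parameter choices stated in Lemma~\ref{lem:induction}, which follows from $m$ being polynomially large in $n,d,1/\Lambda_0,1/\epsilon$ and $\lambda=\wt O(1/\sqrt{m})$, but this is consistency with the inductive hypothesis rather than part of the step itself.
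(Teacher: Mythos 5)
Your proposal is correct and follows essentially the same argument as the paper: integrate the gradient flow of $w_r$, bound the integrand by splitting off the regularization term (controlled via the inductive bound on $\|w_r(\tau)\|_2$) and the prediction error (split into $\|Y-u^*\|_2$ plus $\|u_{\nn}(\tau)-u^*\|_2$, the latter controlled by the inductive decay bound), then integrate term by term to match the definition of $\epsilon_W$ in Eq.~\eqref{eq:def_epsilon_W}. Your observation that the $\max$ absorbing both the exponential tail and the floor strictly requires a sum (i.e., an extra factor of $2$) is a fair bookkeeping point, but the route and all key estimates coincide with the paper's.
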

\begin{proof}
Recall the gradient flow as Eq.~\eqref{eq:323_1}
\begin{align}\label{eq:332_2}
    \frac{ \d w_r( \tau ) }{ \d \tau } = ~ \sum_{i=1}^n \frac{1}{\sqrt{m}} a_r ( y_i - u_{\nn}(\tau)_i ) x_i \sigma'( w_r(\tau)^\top x_i ) - \lambda w_r(\tau)
\end{align}
So we have
\begin{align}\label{eq:332_1}
	\Big\| \frac{ \d w_r( \tau ) }{ \d \tau } \Big\|_2 
	= & ~ \left\| \sum_{i=1}^n \frac{1}{\sqrt{m}} a_r ( y_i - u_{\nn}(\tau)_i ) x_i \sigma'( w_r(\tau)^\top x_i ) - \lambda w_r(\tau) \right\|_2 \notag \\
	\leq & ~ \frac{1}{\sqrt{m}} \sum_{i=1}^n |y_i-u_{\nn}(\tau)_i| + \lambda \| w_r(\tau) \|_2 \notag\\
	\leq & ~ \frac{ \sqrt{n} }{ \sqrt{m} } \| Y-u_{\nn}(\tau) \|_2 + \lambda \| w_r (\tau) \|_2 \notag \\
	\leq & ~ \frac{ \sqrt{n} }{ \sqrt{m} } \| Y-u_{\nn}(\tau) \|_2 + \lambda (\| w_r(0) \|_2 + \| w_r(\tau) - w_r(0) \|_2) \notag\\ 
	\leq & ~ \frac{ \sqrt{n} }{ \sqrt{m} } \| Y-u_{\nn}(\tau) \|_2 + \lambda (\sqrt{d} + \sqrt{\log(m/\delta)} + 1) \notag\\ 
	\leq & ~ \frac{ \sqrt{n} }{ \sqrt{m} } \| Y-u_{\nn}(\tau) \|_2 + \lambda (2 \sqrt{d} +2 \sqrt{\log(m/\delta)} ) \notag\\
	\leq & ~ \frac{ \sqrt{n} }{ \sqrt{m} } (\| Y-u^*\|_2 + \| u_{\nn}(\tau) - u^*\|_2) + \lambda (2\sqrt{d} + 2\sqrt{\log(m/\delta)} ) \notag\\
	= & ~ \frac{ \sqrt{n} }{ \sqrt{m} } \| u_{\nn}(\tau) - u^*\|_2 \notag \\
	& ~ + \frac{ \sqrt{n} }{ \sqrt{m} } \| Y-u^* \|_2 + \lambda ( 2 \sqrt{d} + 2 \sqrt{\log(m/\delta)} ) \notag \\
	\leq & ~ \frac{ \sqrt{n} }{ \sqrt{m} } \max\{e^{-(\kappa^2 \Lambda_0+\lambda)\tau/4} \| u_{\nn}(0) - u^* \|_2, \epsilon_{\train}\} \notag\\
	& ~ + \frac{ \sqrt{n} }{ \sqrt{m} } \| Y-u^* \|_2 + \lambda ( 2 \sqrt{d} + 2 \sqrt{\log(m/\delta)} ),
\end{align}
where the first step follows from Eq.~\eqref{eq:332_2}, the second step follows from triangle inequality, the third step follows from Cauchy-schwarz inequality, the forth step follows from triangle inequality, the fifth step follows from condition $\| w_r(0) - w_r(\tau) \|_2 \leq 1,~\| w_r(0) \|_2 \leq \sqrt{d} + \sqrt{\log(m/\delta)}$, the seventh step follows from triangle inequality, the last step follows from $\| u_{\nn}(\tau) - u^* \|_2^2 \leq \max\{\exp(-(\kappa^2\Lambda_0 + \lambda) \tau/2) \cdot \| u_{\nn}(0) - u^* \|_2^2,~\epsilon_{\train}^2\}$.

Thus, for any $t \le T$,
\begin{align*}
	\| w_r(0) - w_r(t) \|_2 \leq & ~ \int_0^t \Big\| \frac{ \d w_r( \tau ) }{ \d \tau } \Big\|_2 d\tau \\
	\leq & ~ \frac{ \sqrt{n} }{ \sqrt{m} } \max\{4\| u_{\nn}(0) - u^* \|_2/(\kappa^2\Lambda_0+\lambda), \epsilon_{\train}\cdot T \} \\
	& ~ + \Big( \frac{ \sqrt{n} }{ \sqrt{m} } \| Y-u^* \|_2 + \lambda ( 2 \sqrt{d} + 2 \sqrt{\log(m/\delta)} ) \Big) \cdot T\\
    = & ~ \epsilon_W
\end{align*}
where the first step follows triangle inequality, the second step follows Eq.~\eqref{eq:332_1}, and the last step follows the definition of $\epsilon_W$ as Eq.~\eqref{eq:def_epsilon_W}.
\end{proof}

\begin{lemma}[Conclusion 2]\label{lem:hypothesis_2}
If $\forall r \in [m]$,
\begin{align*}
    \| w_r(0) - w_r(t) \|_2 \leq \epsilon_W < 1,
\end{align*}
then
\begin{align*}
    \| H(0) - H(t) \|_F \leq 2n \epsilon_W
\end{align*}
holds with probability $1-n^2 \cdot \exp{(-m\epsilon_W/10)}$.
\end{lemma}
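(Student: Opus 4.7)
The plan is to reduce the claim to a direct application of Lemma~\ref{lem:lemma_4.2_in_sy19}, the Gram-matrix perturbation bound for ReLU activations from \cite{sy19}. First, I would unpack the dynamic kernel from Definition~\ref{def:dynamic_kernel}. Since $f_{\nn}(W,x) = \frac{1}{\sqrt{m}}\sum_{r=1}^m a_r \sigma(w_r^\top x)$, we have $\partial f_{\nn}(W,x)/\partial w_r = \frac{1}{\sqrt{m}} a_r x \sigma'(w_r^\top x)$, and combining $a_r^2 = 1$ with $\sigma'(z) = \mathbf{1}_{z \geq 0}$ yields
\begin{align*}
[H(t)]_{i,j} = \frac{1}{m}\, x_i^\top x_j \sum_{r=1}^m \mathbf{1}_{w_r(t)^\top x_i \geq 0,\, w_r(t)^\top x_j \geq 0},
\end{align*}
which matches exactly the form of the matrix $H(W)$ treated in Lemma~\ref{lem:lemma_4.2_in_sy19}.

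Next, I would identify $\wt{w}_r$ in the cited lemma with $w_r(0)$, which are indeed i.i.d.\ $\mathcal{N}(0, I_d)$ by the initialization in Definition~\ref{def:nn}, and identify $w_r$ with $w_r(t)$. The hypothesis $\|w_r(0) - w_r(t)\|_2 \leq \epsilon_W$ verifies the proximity requirement with $R = \epsilon_W$, and $\epsilon_W < 1$ ensures we are in the regime $R \in (0,1)$ needed by the cited lemma. Applying Lemma~\ref{lem:lemma_4.2_in_sy19} then yields
\begin{align*}
\| H(0) - H(t) \|_F < 2 n\, \epsilon_W
\end{align*}
with probability at least $1 - n^2 \exp(-m \epsilon_W / 10)$, which is exactly the claim.

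There is no substantive obstacle here; the result is essentially a specialization of the existing sensitivity bound for the random ReLU Gram matrix. The only point worth flagging is that $w_r(t)$ is not independent of $w_r(0)$ (it is determined by the gradient-flow trajectory), but the cited lemma only requires $w_r$ to be a fixed (deterministic) vector satisfying the proximity constraint, with the probability taken over the Gaussian draw of $\wt{w}_r = w_r(0)$; so independence between the two sets of weights is not needed. Consequently the argument is essentially one line of bookkeeping once the entries of $H(t)$ are written in the indicator form above.
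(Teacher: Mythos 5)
Your proposal matches the paper's proof exactly: the paper simply cites Lemma~\ref{lem:lemma_4.2_in_sy19} as well. Your additional observations — unpacking $[H(t)]_{i,j}$ into indicator form and noting that the cited lemma's probability bound is over the Gaussian draw of $\wt{w}_r = w_r(0)$ and is uniform over all perturbations within radius $R$, so dependence of $w_r(t)$ on $w_r(0)$ is harmless — are correct and make explicit what the paper leaves implicit.
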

\begin{proof}
Directly applying Lemma~\ref{lem:lemma_4.2_in_sy19}, we finish the proof.
\end{proof}


\begin{lemma}[Conclusion 3]\label{lem:hypothesis_3}
Fix $\epsilon_H'>0$ independent of $t$. If for all $\tau < t$
\begin{align*}
    \| H(0) - H^{\cts} \| \leq 4n ( \log{(n/\delta)} / m )^{1/2} \leq \Lambda_0/4
\end{align*}
and
\begin{align*}
    \| H(0) - H(\tau) \| \leq \epsilon_H' \leq \Lambda_0/4
\end{align*}
and
\begin{align}\label{eq:335_2}
	4 n ( \log ( n / \delta ) / m )^{1/2} \leq \frac{\epsilon_{\train}}{8\kappa^2\|Y-u^*\|_2}(\kappa^2\Lambda_0+\lambda)
\end{align}
and 
\begin{align}\label{eq:335_3}
    \epsilon_H' \leq \frac{\epsilon_{\train}}{8\kappa^2\|Y-u^*\|_2}(\kappa^2\Lambda_0+\lambda)
\end{align}
then we have
\begin{align*}
    \| u_{\nn}(t) - u^* \|_2^2 \leq \max\{\exp(-(\kappa^2\Lambda_0 + \lambda)t/2 ) \cdot \| u_{\nn}(0) - u^* \|_2^2, ~ \epsilon_{\train}^2\}.
\end{align*}
\end{lemma}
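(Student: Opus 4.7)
The plan is to combine the differential inequality for $\|u_{\nn}(t) - u^*\|_2^2$ supplied by Lemma~\ref{lem:linear_converge_nn} with the hypothesised kernel-perturbation bounds, and then run a Gr\"onwall-type argument with a case split at the threshold $\epsilon_{\train}$. First, the two hypotheses $\|H(0) - H^{\cts}\| \le \Lambda_0/4$ and $\|H(0) - H(\tau)\| \le \Lambda_0/4$ combine by the triangle inequality (plus continuity of the gradient flow to cover $\tau = t$) to yield $\|H(\tau) - H^{\cts}\| \le \Lambda_0/2$ on $[0,t]$, which is exactly the hypothesis required by Lemma~\ref{lem:linear_converge_nn}. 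Applying that lemma gives
\begin{align*}
\frac{d}{dt}\|u_{\nn}(t) - u^*\|_2^2 \le -(\kappa^2 \Lambda_0 + \lambda)\|u_{\nn}(t) - u^*\|_2^2 + 2\kappa^2\, \|H(t) - H^{\cts}\|\cdot\|u_{\nn}(t) - u^*\|_2\cdot\|Y - u^*\|_2.
\end{align*}

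The second step is to absorb the cross term into the linear one. Using the triangle inequality again, $\|H(t) - H^{\cts}\| \le 4n(\log(n/\delta)/m)^{1/2} + \epsilon_H'$, and Eq.~\eqref{eq:335_2} and Eq.~\eqref{eq:335_3} each bound a summand by $\epsilon_{\train}(\kappa^2\Lambda_0+\lambda)/(8\kappa^2\|Y-u^*\|_2)$, giving a total bound $\|H(t) - H^{\cts}\| \le \epsilon_{\train}(\kappa^2\Lambda_0+\lambda)/(4\kappa^2\|Y-u^*\|_2)$. Writing $\alpha := \kappa^2 \Lambda_0 + \lambda$ and $g(t) := \|u_{\nn}(t) - u^*\|_2$, the differential inequality collapses to the clean form
\begin{align*}
\frac{d g(t)^2}{d t} \le -\alpha\, g(t)^2 + \frac{\alpha\, \epsilon_{\train}}{2}\, g(t).
\end{align*}

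Finally, I would split the argument into two regimes around the threshold $\epsilon_{\train}$. Whenever $g(t) \ge \epsilon_{\train}$, the cross term satisfies $\tfrac{\alpha\epsilon_{\train}}{2} g(t) \le \tfrac{\alpha}{2} g(t)^2$, so $\tfrac{d g^2}{d t} \le -\tfrac{\alpha}{2} g^2$, which integrates to the exponential branch $g(t)^2 \le e^{-\alpha t/2}\, g(0)^2$. For the other regime, observe that at any time $t_0$ with $g(t_0) = \epsilon_{\train}$ the inequality gives $\tfrac{d g^2}{d t}\bigl|_{t_0} \le -\alpha \epsilon_{\train}^2 / 2 < 0$, so the trajectory cannot cross the level $\epsilon_{\train}$ from below. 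The main (mild) obstacle I anticipate is turning this sign observation into a rigorous bound on the whole trajectory, since the raw inequality does permit $g$ to grow whenever $g < \epsilon_{\train}/2$. I plan to handle this by defining the first-crossing time $t^\star := \inf\{s \in [0, t] : g(s) \le \epsilon_{\train}\}$ (possibly $+\infty$), applying the exponential decay bound for $s < t^\star$, and using continuity of $g$ together with the sign test at $g = \epsilon_{\train}$ to conclude $g(s) \le \epsilon_{\train}$ for $s \ge t^\star$. Taking the maximum of the two branches then delivers the stated bound $\|u_{\nn}(t) - u^*\|_2^2 \le \max\{e^{-\alpha t/2}\|u_{\nn}(0) - u^*\|_2^2,\, \epsilon_{\train}^2\}$.
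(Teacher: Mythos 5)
Your proposal is correct and follows essentially the same path as the paper's proof: combine the two perturbation bounds via the triangle inequality to invoke Lemma~\ref{lem:linear_converge_nn}, absorb the cross term using \eqref{eq:335_2}--\eqref{eq:335_3}, and case-split at the threshold $\epsilon_{\train}$. Your first-crossing-time formalization of the ``once below, stays below'' step is a slightly cleaner packaging of what the paper phrases as an inductive argument on the exponentially-weighted quantity $e^{(\kappa^2\Lambda_0+\lambda)\tau}(\|u_{\nn}(\tau)-u^*\|_2^2-\epsilon_{\train}^2)$, but the underlying invariance argument is the same.
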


\begin{proof}
By triangle inequality we have
\begin{align}\label{eq:335_1}
    \| H(\tau) - H^{\cts} \| 
    \leq & ~ \| H(0) - H(\tau) \| + \| H(0) - H^{\cts} \| \notag \\
    \leq & ~ \epsilon_H' + 4n( \log{(n/\delta)} / m)^{1/2} \notag\\
    \leq & ~ \Lambda_0/2
\end{align}
holds for all $\tau < t$. Denote $ \epsilon_H = \epsilon_H' + 4n( \log{(n/\delta)} / m )^{1/2}$, we have $\| H(\tau) - H^{\cts} \| \leq\epsilon_H \leq \Lambda_0/2$, which satisfies the condition of Lemma~\ref{lem:linear_converge_nn}. Thus, for any $\tau < t$, we have
\begin{align}\label{eq:induction_linear_convergence}
    \frac{ \d \| u_{\nn}(\tau) - u^* \|_2^2 }{ \d \tau } 
    \leq & ~ - ( \kappa^2 \Lambda_0 + \lambda ) \cdot \| u_{\nn} (\tau) - u^* \|_2^2 + 2 \kappa^2 \| H(\tau) - H^{\cts} \| \cdot \| u_{\nn} (\tau) - u^* \|_2 \cdot \| Y - u^* \|_2 \notag\\
    \leq & ~ - ( \kappa^2 \Lambda_0 + \lambda ) \cdot \| u_{\nn} (\tau) - u^* \|_2^2 + 2 \kappa^2 \epsilon_H \cdot \| u_{\nn} (\tau) - u^* \|_2 \cdot \|Y-u^*\|_2 
\end{align}
where the first step follows from Lemma~\ref{lem:linear_converge_nn}, the second step follows from Eq.~\eqref{eq:335_1}. Now let us discuss two cases:

{\bf Case 1.} If for all $\tau < t$, $\|u_{\nn}(\tau) - u^*\|_2 \geq \epsilon_{\train}$ always holds, we want to argue that 
	\begin{align*}
		\| u_{\nn}(t) - u^* \|_2^2 \leq \exp(-(\kappa^2 \Lambda_0 + \lambda) t/2) \cdot \| u_{\nn}(0) - u^* \|_2.
	\end{align*}
	Note by assumption~\eqref{eq:335_2} and~\eqref{eq:335_3}, we have 
	\begin{align*}
		\epsilon_H \leq \frac{\epsilon_{\train}}{4 \kappa^2 \|Y-u^*\|_2}(\kappa^2 \Lambda_0+\lambda)
	\end{align*}
	implies
	\begin{align*}
	2 \kappa^2 \epsilon_H \cdot \|Y-u^*\|_2 \leq (\kappa^2 \Lambda_0 + \lambda)/2\cdot \|u_{\nn}(\tau)-u^*\|_2
	\end{align*}
	holds for any $\tau < t$. Thus, plugging into~\eqref{eq:induction_linear_convergence},
	\begin{align*}
	    \frac{ \d \| u_{\nn}(\tau) - u^* \|_2^2 }{ \d \tau } 
	    \leq  ~ - ( \kappa^2 \Lambda_0 + \lambda )/2 \cdot \| u_{\nn} (\tau) - u^* \|_2^2,
	\end{align*}
	holds for all $\tau < t$, which implies
	\begin{align*}
		\| u_{\nn}(t) - u^* \|_2^2 \leq \exp{(-(\kappa^2 \Lambda_0 + \lambda)t/2)} \cdot \| u_{\nn}(0) - u^* \|_2^2.
	\end{align*}

{\bf Case 2.} If there exist $\bar{\tau} < t$, such that $\|u_{\nn}(\bar{\tau}) - u^*\|_2 < \epsilon_{\train}$, we want to argue that $\|u_{\nn}(t) - u^*\|_2 < \epsilon_{\train}$.
	Note by assumption~\eqref{eq:335_2} and~\eqref{eq:335_3}, we have 
	\begin{align*}
		\epsilon_H \leq \frac{\epsilon_{\train}}{4\kappa^2\|Y-u^*\|_2}(\kappa^2\Lambda_0+\lambda)
	\end{align*}
	implies
	\begin{align*}
	2\kappa^2 \epsilon_H \cdot \|u_{\nn}(\bar{\tau}) - u^*\|_2 \cdot \|Y-u^*\|_2 \leq (\kappa^2 \Lambda_0 + \lambda) \cdot \epsilon_{\train}^2.
	\end{align*}
	Thus, plugging into~\eqref{eq:induction_linear_convergence},
	\begin{align*}
	    \frac{ \d (\| u_{\nn}(\tau) - u^* \|_2^2 -\epsilon_{\train}^2)}{ \d \tau } 
	    \leq  ~ - ( \kappa^2 \Lambda_0 + \lambda ) \cdot (\| u_{\nn} (\tau) - u^* \|_2^2 - \epsilon_{\train}^2)
	\end{align*}
	holds for $\tau = \bar{\tau}$, which implies $ e^{( \kappa^2\Lambda_0 + \lambda )\tau} (\| u_{\nn}(\tau) - u^* \|_2^2 -\epsilon_{\train}^2) $ is non-increasing at $\tau = \bar{\tau}$. Since $\| u_{\nn}(\bar{\tau}) - u^* \|_2^2 - \epsilon_{\train}^2 < 0$, by induction, $ e^{( \kappa^2\Lambda_0 + \lambda )\tau} (\| u_{\nn}(\tau) - u^* \|_2^2 -\epsilon_{\train}^2) $ being non-increasing and $\| u_{\nn}(\tau) - u^* \|_2^2 - \epsilon_{\train}^2 < 0$ holds for all $\bar{\tau} \leq \tau \leq t$, which implies
	\begin{align*}
	\|u_{\nn}(t) - u^*\|_2 < \epsilon_{\train}.
	\end{align*}

Combine above two cases, we conclude
\begin{align*}
\| u_{\nn}(t) - u^* \|_2^2 \leq \max\{\exp(-(\kappa^2\Lambda_0 + \lambda)t/2 ) \cdot \| u_{\nn}(0) - u^* \|_2^2, ~ \epsilon_{\train}^2\}.
\end{align*}
\end{proof}


\begin{lemma}[Conclusion 4]\label{lem:hypothesis_4}
Fix $\epsilon_W\in(0,1)$ independent of $t$. If $\forall r \in [m]$, we have
\begin{align*}
    \| w_r(t) - w_r(0) \|_2 \leq \epsilon_W
\end{align*}
then
\begin{align*}
    \| \k_t( x_{\test} , X ) - \k_{0} ( x_{\test} , X ) \|_2 \leq \epsilon_K' = 2\sqrt{n}\epsilon_W
\end{align*}
holds with probability at least $1- n\cdot\exp{(-m\epsilon_W/10)}$.
\end{lemma}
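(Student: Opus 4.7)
The plan is to mimic the argument of Lemma~\ref{lem:lemma_4.2_in_sy19} (which handles training-vs-training pairs) and adapt it to the test-vs-training kernel vector. Since $a_r \in \{\pm 1\}$ and $\partial f_{\nn}(W,x)/\partial w_r = \frac{1}{\sqrt{m}} a_r x\, \sigma'(w_r^\top x)$, we can write
\begin{align*}
\k_t(x_{\test}, x_i) = \frac{x_{\test}^\top x_i}{m} \sum_{r=1}^m \sigma'(w_r(t)^\top x_{\test}) \sigma'(w_r(t)^\top x_i),
\end{align*}
so that $|\k_t(x_{\test},x_i) - \k_0(x_{\test},x_i)| \leq \frac{1}{m} \sum_{r=1}^m \mathbf{1}_{S_{r,i}}$, where $S_{r,i}$ is the event that at least one of the indicators $\sigma'(w_r(\cdot)^\top x_{\test})$ or $\sigma'(w_r(\cdot)^\top x_i)$ flips between time $0$ and $t$. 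The key observation is that if $|w_r(0)^\top x_{\test}| > \epsilon_W$ and $|w_r(0)^\top x_i| > \epsilon_W$, then the hypothesis $\|w_r(t)-w_r(0)\|_2 \leq \epsilon_W$ combined with $\|x_{\test}\|_2, \|x_i\|_2 \leq 1$ guarantees the sign does not change, so $S_{r,i}$ implies $|w_r(0)^\top x_{\test}| \leq \epsilon_W$ or $|w_r(0)^\top x_i| \leq \epsilon_W$.

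Since $w_r(0) \sim \N(0, I_d)$ and the data are unit vectors, $w_r(0)^\top x_{\test}$ and $w_r(0)^\top x_i$ are standard Gaussian; by the Gaussian anti-concentration bound of Lemma~\ref{lem:anti_gaussian}, $\Pr[S_{r,i}] \leq 2 \cdot \frac{4\epsilon_W}{5} \leq 2\epsilon_W$. Thus $Z_i := \sum_{r=1}^m \mathbf{1}_{S_{r,i}}$ is a sum of independent Bernoulli variables with mean at most $2m\epsilon_W$. Applying the multiplicative Chernoff bound (Lemma~\ref{lem:chernoff}) with, say, $\delta = 1/2$ (or any constant), we obtain
\begin{align*}
\Pr\!\left[ Z_i \geq 2m\epsilon_W \right] \leq \exp(-m\epsilon_W/10),
\end{align*}
which gives $|\k_t(x_{\test},x_i) - \k_0(x_{\test},x_i)| \leq 2\epsilon_W$ with the stated probability for a single $i$.

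Finally, a union bound over $i \in [n]$ yields the simultaneous estimate $|\k_t(x_{\test},x_i) - \k_0(x_{\test},x_i)| \leq 2\epsilon_W$ for every $i$, with failure probability at most $n \exp(-m\epsilon_W/10)$. Passing to the $\ell_2$ norm over the $n$ coordinates gives
\begin{align*}
\|\k_t(x_{\test},X) - \k_0(x_{\test},X)\|_2 \leq \sqrt{n \cdot (2\epsilon_W)^2} = 2\sqrt{n}\,\epsilon_W,
\end{align*}
which is the desired $\epsilon_K'$. The main subtlety is the same as in Lemma~\ref{lem:lemma_4.2_in_sy19}: verifying that the flip event is controlled purely by an anti-concentration threshold equal (up to constants) to the perturbation radius $\epsilon_W$, so that the $\epsilon_W$ that controls the weights also controls both the mean and the Chernoff exponent of $Z_i$. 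Everything else is bookkeeping.
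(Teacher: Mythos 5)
Your approach is essentially the same as the paper's: reduce to a per-coordinate sum of flip indicators, bound the per-coordinate flip probability via Gaussian anti-concentration, concentrate, and union bound. You actually spell out the anti-concentration step that the paper's terse proof elides (it just invokes Bernstein without explaining why the summands have mean $O(\epsilon_W)$), and your sign-flip argument --- that $\|w_r(t)-w_r(0)\|_2\le\epsilon_W$ together with $\|x\|_2\le 1$ forces any flip to land in the band $|w_r(0)^\top x|\le\epsilon_W$ --- is exactly right.

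One step does not go through as written, though. You bound $\mu:=\E[Z_i]\le 2m\epsilon_W$ and then apply the multiplicative Chernoff bound of Lemma~\ref{lem:chernoff} with $\delta=1/2$ to conclude $\Pr[Z_i\ge 2m\epsilon_W]\le\exp(-m\epsilon_W/10)$. This is off for two reasons. First, with $\delta=1/2$ the Chernoff event is $\{Z_i\ge(3/2)\mu\}$; since you only know $\mu\le 2m\epsilon_W$, you cannot place $(3/2)\mu$ below the target threshold $2m\epsilon_W$ unless $\mu\le(4/3)m\epsilon_W$, which you have not established. Second, and more seriously, the Chernoff exponent $\exp(-\delta^2\mu/3)$ degenerates to near $1$ when $\mu$ is small, so the bound cannot yield $\exp(-\Omega(m\epsilon_W))$ without further argument. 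The standard fix is to keep your sharper estimate $\Pr[S_{r,i}]\le 8\epsilon_W/5$, pad the Bernoulli variables to $p=8\epsilon_W/5$ so that $Z_i$ is stochastically dominated by $Z_i'\sim\mathrm{Bin}(m,8\epsilon_W/5)$ with $\mu'=(8/5)m\epsilon_W$ exactly, and apply Chernoff with $\delta=1/4$ (since $2m\epsilon_W=(1+1/4)\mu'$) to get $\Pr[Z_i'\ge 2m\epsilon_W]\le\exp(-\mu'/48)=\exp(-m\epsilon_W/30)$. Alternatively, do what the paper does and use Bernstein (Lemma~\ref{lem:bernstein}), whose variance term absorbs the small-$\mu$ issue automatically. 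Either way the constant inside the exponent moves by a fixed factor (indeed the paper's own constant $1/10$ looks slightly optimistic too), which only rescales the width requirement and does not affect the theorem; your union bound and $\ell_2$-norm bookkeeping then go through unchanged.
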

\begin{proof}
Recall the definition of $\k_0$ and $\k_t$
\begin{align*}
    \k_0 (x_{\test},x_i) = & ~ \frac{1}{m} \sum_{r=1}^m x_{\test}^\top x_i \sigma'( x_{\test}^\top w_r(0) ) \sigma'( x_i^\top w_r(0) ) \\
    \k_t (x_{\test},x_i) = & ~ \frac{1}{m} \sum_{r=1}^m x_{\test}^\top x_i \sigma'( x_{\test}^\top w_r(t) ) \sigma'( x_i^\top w_r(t) )
\end{align*}
By direct calculation we have 
\begin{align*}
    \| \k_0 ( x_{\test}, X ) - \k_t ( x_{\test}, X ) \|_2^2\le \sum_{i=1}^n \Big( \frac{1}{m}\sum_{r=1}^m s_{r,i} \Big)^2,
\end{align*}
where 
\begin{align*}
s_{r,i} = {\bf 1}[w_r(0)^\top x_{\test} \ge 0, w_r(0)^\top x_i \ge 0] - {\bf 1}[w_r(t)^\top x_{\test} \ge 0, w_r(t)^\top x_i \ge 0], \forall r \in [m], i \in [n].
\end{align*}
Fix $i\in [n]$, by Bernstein inequality (Lemma~\ref{lem:bernstein}), we have for any $t>0$, 
\begin{align*}
    \Pr \Big[ \frac{1}{m}\sum_{r=1}^m s_{r,i} \ge 2\epsilon_W \Big] \le \exp ( - m \epsilon_W / 10 ) .
\end{align*}
Thus, applying union bound over all training data $x_i,~i\in[n]$, we conclude
\begin{align*}
    \Pr[\| \k_0 ( x_{\test}, X ) - \k_t ( x_{\test}, X ) \|_2 \le 2\sqrt{n}\epsilon_W] \le 1-n\cdot\exp{(-m\epsilon_W/10)}.
\end{align*}
Note by definition $\epsilon_K' = 2\sqrt{n} \epsilon_W$, so we finish the proof.
\end{proof}

Now we summarize all the conditions need to be satisfied so that the induction works as in Table~\ref{tab:condition_only}.

\begin{table}[htb]\caption{Summary of conditions for induction}\label{tab:condition_only}

\centering
{
  \begin{tabular}{| l | l | l |}
    \hline
    {\bf No.} & {\bf Condition} & {\bf Place} \\ \hline
    1 & $\epsilon_W \leq 1$ & Lem.~\ref{lem:hypothesis_1} \\ \hline
    2 & $ \epsilon_H' \leq \Lambda_0/4$ & Lem.~\ref{lem:hypothesis_3} \\ \hline 
    3 & $4n(\frac{\log{(n/\delta)}}{m})^{1/2} \leq \Lambda_0/4$ & Lem.~\ref{lem:hypothesis_3} \\ \hline
    4 & $4n(\frac{\log{(n/\delta)}}{m})^{1/2} \leq \frac{\epsilon_{\train}}{8\kappa^2\|Y-u^*\|_2}(\kappa^2\Lambda_0+\lambda)$ & Lem.~\ref{lem:hypothesis_3} \\ \hline
    5 & $\epsilon_H' \leq \frac{\epsilon_{\train}}{8\kappa^2\|Y-u^*\|_2}(\kappa^2\Lambda_0+\lambda)$ & Lem.~\ref{lem:hypothesis_3} \\
    \hline
  \end{tabular}
}
\end{table}

Note by choosing If $\kappa=\wt{O}(\frac{\epsilon\Lambda_0}{n}) ,~T=\wt{O}(\frac{1}{\kappa^2(\Lambda_0+\lambda)})$, $\epsilon_{\train} = \wt{O}(\|u_{\nn}(0)-u^*\|_2)$, $m \geq\wt{O}(\frac{n^{10}d}{\epsilon^6 \Lambda_0^{10}})$ and $\lambda=\wt{O}(\frac{1}{\sqrt{m}})$, we have
\begin{align}\label{eq:y-u*}
	\|Y-u^*\|_2 = & ~ \|(\kappa^2 \lambda(H^{\cts}+\lambda I_n)^{-1}Y\|_2 \notag\\
	\leq & ~ \lambda\|(\kappa^2 H^{\cts}+\lambda I_n)^{-1}\|\|Y\|_2\notag\\
	\leq & ~ \wt{O}(\frac{\lambda\sqrt{n}}{\kappa^2\Lambda_0}) \notag\\
	\leq & ~ \wt{O}(\frac{1}{n^{2.5}})
\end{align}
where the first step follows from the definition of $u^*$, the second step follows from Cauchy-Schwartz inequality, the third step follows from $Y=O(\sqrt{n})$, and the last step follows from the choice of the parameters.

Further, with probability $1-\delta$, we have
\begin{align*}
	\|u_{\nn}(0) - u^*\|_2 \leq & ~ \|u_{\nn}(0)\|_2 + \|Y-u^*\|_2 + \|Y\|_2 \\
	\leq & ~ \wt{O}(\frac{\epsilon\Lambda_0}{\sqrt{n}}) + \wt{O}(\frac{1}{n^{2.5}}) + \wt{O}(\sqrt{n}) \\
	\leq & ~ \wt{O}(\sqrt{n})
\end{align*}
where the first step follows from triangle inequality, the second step follows from Lemma~\ref{lem:epsilon_init}, Eq.~\eqref{eq:y-u*} and $Y=O(\sqrt{n})$, and the last step follows $\epsilon,~\Lambda_0<1$. With same reason,
\begin{align*}
	\|u_{\nn}(0) - u^*\|_2 \geq & ~ -\|u_{\nn}(0)\|_2 - \|Y-u^*\|_2 + \|Y\|_2 \\
	\geq & ~ -\wt{O}(\frac{\epsilon\Lambda_0}{\sqrt{n}}) - \wt{O}(\frac{1}{n^{2.5}}) + \wt{O}(\sqrt{n}) \\
	\geq & ~ \wt{O}(\sqrt{n}).
\end{align*}
Thus, we have $\|u_{\nn}(0) - u^*\|_2=\wt{O}(\sqrt{n})$. 

Now, by direct calculation, we have all the induction conditions satisfied with high probability. Note the failure probability only comes from Lemma~\ref{lem:epsilon_init},~\ref{lem:random_init},~\ref{lem:hypothesis_2},~\ref{lem:hypothesis_4}, which only depend on the initialization. By union bound over these failure events, we have all four conclusions in Lemma~\ref{lem:induction} holds with high probability, which completes the proof. 




\subsubsection{Final result for upper bounding $| u_{\nn,\test}(T) - u_{\ntk,\test}(T) |$}\label{sec:equiv_bound_nn_test_T_and_ntk_test_T}
In this section, we prove Lemma~\ref{lem:equivalence_at_T}.
\begin{lemma}[Upper bounding test error]\label{lem:equivalence_at_T}
Given training data matrix $X \in \R^{n \times d}$ and corresponding label vector $Y \in \R^n$. Fix the total number of iterations $T > 0$. Given arbitrary test data $x_{\test} \in \R^d$. Let $u_{\nn,\test}(t) \in \R^n$ and $u_{\ntk,\test}(t) \in \R^n$ be the test data predictors defined in Definition~\ref{def:nn} and Definition~\ref{def:krr_ntk} respectively. Let $\kappa\in(0,1)$ be the corresponding multiplier. Given accuracy $\epsilon>0$, if $\kappa = \wt{O}(\frac{\epsilon\Lambda_0}{n})$, $T=\wt{O}(\frac{1}{\kappa^2\Lambda_0})$, $m \geq \wt{O}(\frac{n^{10}d}{\epsilon^6\Lambda_0^{10}})$ and $\lambda \leq \wt{O}(\frac{1}{\sqrt{m}})$. Then for any $x_{\test}\in\R^d$, with probability at least $1-\delta$ over the random initialization, we have
\begin{align*}
\| u_{\nn,\test}(T) - u_{\ntk,\test}(T) \|_2 \leq \epsilon/2,
\end{align*}
where $\wt{O}(\cdot)$ hides $\poly\log( n / (\epsilon\delta\Lambda_0) )$.
\end{lemma}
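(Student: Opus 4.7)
The plan is to combine three ingredients assembled in the preceding subsections: the decomposition Lemma~\ref{lem:more_concreate_bound}, the initialization bound Lemma~\ref{lem:epsilon_init}, and the kernel-perturbation bound Lemma~\ref{lem:induction} (together with the random-initialization tail bounds of Lemma~\ref{lem:random_init}). The structure is dictated by Lemma~\ref{lem:more_concreate_bound}, which reduces $|u_{\nn,\test}(T)-u_{\ntk,\test}(T)|$ to three parameters $\epsilon_{\init},\epsilon_K,\epsilon_H$ that can be controlled separately by $\kappa$ and $m$; the job is to verify that the stated choices of $\kappa,T,m,\lambda$ make each contribution at most $\epsilon/6$ (or similar), so that the total is at most $\epsilon/2$.

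First I would invoke Lemma~\ref{lem:epsilon_init} with the specified $\kappa=\wt O(\epsilon\Lambda_0/n)$ to obtain $\epsilon_{\init}=\epsilon(\Lambda_0+\lambda)/n$ such that $\|u_{\nn}(0)\|_2\le\sqrt n\,\epsilon_{\init}$ and $|u_{\nn,\test}(0)|\le\epsilon_{\init}$ hold with probability $1-\delta/3$. Second I would apply Lemma~\ref{lem:induction}, whose hypotheses are met by the stated $m\geq\wt O(n^{10}d/(\epsilon^6\Lambda_0^{10}))$ and $\lambda\leq\wt O(1/\sqrt m)$; this yields (with probability $1-\delta/3$) constants $\epsilon_H'\leq\wt O(\epsilon\Lambda_0^2/n)$ and $\epsilon_K'\leq\wt O(\epsilon\Lambda_0^2/n^{1.5})$ bounding $\|H(0)-H(t)\|$ and $\|\k_0(x_{\test},X)-\k_t(x_{\test},X)\|_2$ uniformly in $t\in[0,T]$. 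Third, by Lemma~\ref{lem:random_init} with the same $m$, I get $\|H(0)-H^{\cts}\|\leq\wt O(n/\sqrt m)$ and $\|\k_0(x_{\test},X)-\k_{\ntk}(x_{\test},X)\|_2\leq\wt O(\sqrt{n/m})$ with probability $1-\delta/3$; triangle inequality then produces $\epsilon_H=\epsilon_H'+\wt O(n/\sqrt m)$ and $\epsilon_K=\epsilon_K'+\wt O(\sqrt{n/m})$, and these are still of the required order since $m$ is polynomially large.

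Finally I plug these parameters into the conclusion of Lemma~\ref{lem:more_concreate_bound}. Using $\|u^*\|_2=O(\sqrt n)$ and $\|u^*-Y\|_2=\wt O(1/n^{2.5})$ (derived after \eqref{eq:y-u*} in the proof of Lemma~\ref{lem:induction}), and the relations $\kappa^2 T=\wt O(1/\Lambda_0)$, $\kappa^2=\wt O(\epsilon^2\Lambda_0^2/n^2)$, the three contributions become
\begin{align*}
(1+\kappa^2 nT)\epsilon_{\init} & \leq \wt O(n/\Lambda_0)\cdot\wt O(\epsilon\Lambda_0/n)=\wt O(\epsilon),\\
\kappa^2\epsilon_K\!\left(\tfrac{\|u^*\|_2}{\kappa^2\Lambda_0+\lambda}+\|u^*-Y\|_2 T\right) & \leq \tfrac{\sqrt n}{\Lambda_0}\epsilon_K\leq\wt O(\epsilon),\\
\sqrt n\,T^2\kappa^4\epsilon_H(\|u^*\|_2+\|u^*-Y\|_2) & \leq \tfrac{n}{\Lambda_0^2}\epsilon_H\leq\wt O(\epsilon).
\end{align*}
Choosing the hidden constants small enough inside each $\wt O(\cdot)$ forces the sum to be at most $\epsilon/2$, and a union bound over the three failure events gives overall probability $1-\delta$.

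The main obstacle is purely bookkeeping: one must be careful that every hidden constant inside Lemma~\ref{lem:induction} (in particular the value of $\epsilon_{\train}$ used there, which controls how small $\|u_{\nn}(T)-u^*\|_2$ needs to be) is chosen consistently with the target $\epsilon$ here, and that the polynomial in $m$ is large enough to absorb the additional $\wt O(n/\sqrt m)$ terms produced when passing from $\epsilon_H',\epsilon_K'$ (distances to $H(0),\k_0$) to $\epsilon_H,\epsilon_K$ (distances to $H^{\cts},\k_{\ntk}$). No new analytical ideas beyond those of Lemmas~\ref{lem:more_concreate_bound}, \ref{lem:epsilon_init}, \ref{lem:induction}, and \ref{lem:random_init} are required.
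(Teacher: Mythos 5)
Your proposal is correct and follows essentially the same route as the paper's own proof: invoke the decomposition of Lemma~\ref{lem:more_concreate_bound}, set $\epsilon_{\init}$ via Lemma~\ref{lem:epsilon_init}, obtain $\epsilon_H$ and $\epsilon_K$ by combining Lemma~\ref{lem:random_init} with Lemma~\ref{lem:induction} through the triangle inequality, and then verify the arithmetic. The only cosmetic differences are that you use the sharper bound $\|u^*-Y\|_2=\wt O(1/n^{2.5})$ where the paper falls back to the cruder $\|u^*-Y\|_2\leq\sqrt n$ (both suffice), and you make the union-bound bookkeeping over failure events explicit, which the paper leaves implicit.
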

\begin{proof}
By Lemma~\ref{lem:more_concreate_bound}, we have
\begin{align}\label{eq:b44_1}
	|u_{\nn,\test}(T)-u_{\ntk,\test}(T)| \leq & ~ (1+\kappa^2 nT)\epsilon_{\init} + \kappa^2 \epsilon_K \cdot \Big( \frac{ \| u^* \|_2 }{ \kappa^2 \Lambda_0 + \lambda } + \|u^*-Y\|_2 T\Big) \notag \\
    & ~ + \sqrt{n}T^2\kappa^4 \epsilon_H ( \| u^* \|_2 + \| u^* - Y \|_2 )
\end{align}
By Lemma~\ref{lem:epsilon_init}, we can choose $\epsilon_{\init} = \epsilon(\Lambda_0)/n$. 

Further, note 
\begin{align*}
	\|\k_{\ntk}(x_{\test},X)-\k_{t}(x_{\test},X)\|_2 \leq & ~ \|\k_{\ntk}(x_{\test},X)-\k_0(x_{\test},X)\|_2 + \|\k_0(x_{\test},X)-\k_{t}(x_{\test},X)\|_2 \\
	\leq & ~  ( 2n \log{(2n/\delta)} / m )^{1/2} + \|\k_0(x_{\test},X)-\k_{t}(x_{\test},X)\|_2 \\
	\leq & ~  \wt{O}(\frac{\epsilon \Lambda_0^2}{n^{1.5}})
\end{align*}
where the first step follows from triangle inequality, the second step follows from Lemma~\ref{lem:random_init}, and the last step follows from Lemma~\ref{lem:induction}. Thus, we can choose $\epsilon_K = \frac{\epsilon \Lambda_0^2}{n^{1.5}}$.

Also,
\begin{align*}
	\|H^{\cts}-H(t)\| \leq & ~ \|H^{\cts}-H(0)\| + \|H(0)-H(t)\|_2 \\
	\leq & ~  4n ( \log(n/\delta) / m )^{1/2} + \|H(0)-H(t)\|_2 \\
	\leq & ~  \wt{O}(\frac{\epsilon \Lambda_0^2}{n})
\end{align*}
where the first step follows from triangle inequality, the second step follows from Lemma~\ref{lem:random_init}, and the last step follows from Lemma~\ref{lem:induction}. Thus, we can choose $\epsilon_H = \frac{\epsilon \Lambda_0^2}{n}$.

Note $\|u^*\|_2 \leq \sqrt{n}$ and $\|u^*-Y\|\leq\sqrt{n}$, plugging the value of $\epsilon_{\init},~\epsilon_K,~\epsilon_H$ into Eq.~\eqref{eq:b44_1}, we have 
\begin{align*}
	|u_{\nn,\test}(T)-u_{\ntk,\test}(T)| \leq \epsilon/2.
\end{align*}
\end{proof}


\subsubsection{Main result for test data prediction equivalence}\label{sec:equiv_main_test_equivalence}
In this section, we restate and prove Theorem~\ref{thm:main_test_equivalence_intro}.

\begin{theorem}[Equivalence between training net with regularization and kernel ridge regression for test data prediction, restatement of Theorem~\ref{thm:main_test_equivalence_intro}]\label{thm:main_test_equivalence}
Given training data matrix $X \in \R^{n \times d}$ and corresponding label vector $Y \in \R^n$. Let $T > 0$ be the total number of iterations. Given arbitrary test data $x_{\test} \in \R^d$. Let $u_{\nn,\test}(t) \in \R^n$ and $u_{\test}^* \in \R^n$ be the test data predictors defined in Definition~\ref{def:nn} and Definition~\ref{def:krr_ntk} respectively. 

For any accuracy $\epsilon \in (0,1/10)$ and failure probability $\delta \in (0,1/10)$, if $\kappa = \wt{O}(\frac{\epsilon\Lambda_0}{n})$, $T=\wt{O}(\frac{1}{\kappa^2\Lambda_0})$, $m \geq \wt{O}(\frac{n^{10}d}{\epsilon^6\Lambda_0^{10}})$ and $\lambda \leq \wt{O}(\frac{1}{\sqrt{m}})$. Then for any $x_{\test}\in\R^d$, with probability at least $1-\delta$ over the random initialization, we have
\begin{align*}
\| u_{\nn,\test}(T) - u_{\test}^* \|_2 \leq \epsilon.
\end{align*}
Here $\wt{O}(\cdot)$ hides $\poly\log(n/(\epsilon \delta \Lambda_0 ))$.
\end{theorem}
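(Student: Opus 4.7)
The plan is to prove Theorem~\ref{thm:main_test_equivalence} by combining the two key upper bounds established in the preceding subsections via the triangle inequality
\begin{align*}
|u_{\nn,\test}(T) - u_{\test}^*| \leq |u_{\nn,\test}(T) - u_{\ntk,\test}(T)| + |u_{\ntk,\test}(T) - u_{\test}^*|,
\end{align*}
and showing each summand is at most $\epsilon/2$ under the stated choice of $\kappa$, $T$, $m$, $\lambda$.

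For the second summand, I would directly apply Lemma~\ref{lem:u_ntk_test_T_minus_u_test_*}: since the kernel ridge regression gradient flow for $\beta(t)$ is strongly convex with modulus $\kappa^2\Lambda_0 + \lambda$, we have linear convergence of $\beta(t)$ to $\beta^*$, and hence $|u_{\ntk,\test}(T) - u_{\test}^*| \leq \kappa e^{-(\kappa^2\Lambda_0+\lambda)T}\|\beta^*\|_2$. With $\|\beta^*\|_2 = \poly(\kappa, n, 1/\Lambda_0)$, taking $T = \wt{O}(1/(\kappa^2 \Lambda_0))$ suffices to drive this below $\epsilon/2$.

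For the first summand, I would invoke Lemma~\ref{lem:equivalence_at_T}, whose proof proceeds through the decomposition in Lemma~\ref{lem:more_concreate_bound}: the difference $|u_{\nn,\test}(T)-u_{\ntk,\test}(T)|$ is controlled by the initialization perturbation $\epsilon_{\init}$ (bounded by Lemma~\ref{lem:epsilon_init} via the choice $\kappa = \wt{O}(\epsilon\Lambda_0/n)$), the training-kernel perturbation $\epsilon_H = \|H(t) - H^{\cts}\|$, and the test-kernel perturbation $\epsilon_K = \|\k_{\ntk}(x_{\test}, X) - \k_t(x_{\test}, X)\|_2$. The latter two are bounded by combining the initialization concentration (Lemma~\ref{lem:random_init}) with the induction lemma (Lemma~\ref{lem:induction}) applied at time $t = T$. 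Plugging in $\epsilon_{\init}, \epsilon_K, \epsilon_H = \wt{O}(\text{polynomial factors of }\epsilon\Lambda_0/n)$ and using $\|u^*\|_2, \|u^*-Y\|_2 \leq \wt{O}(\sqrt{n})$ gives the $\epsilon/2$ bound.

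The main obstacle is not the triangle inequality step itself, but ensuring that all the parameter choices simultaneously satisfy the conditions in Table~\ref{tab:condition_only}, which underpin the induction in Lemma~\ref{lem:induction}. Specifically, one must verify that the chosen $\kappa$, $T$, $m$, $\lambda$ make $\epsilon_W \leq 1$, $\epsilon_H' \leq \Lambda_0/4$, and control the right-hand sides involving $\|Y-u^*\|_2$. The width requirement $m \geq \wt{O}(n^{10}d/(\epsilon^6\Lambda_0^{10}))$ emerges precisely from propagating these constraints through Lemmas~\ref{lem:hypothesis_1}--\ref{lem:hypothesis_4}, where the kernel perturbation $\epsilon_H' \lesssim n \epsilon_W$ must be small enough that the linear convergence in Lemma~\ref{lem:linear_converge_nn} still yields sub-$\epsilon_{\train}$ training error. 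Once these bookkeeping steps check out, the theorem follows by a union bound over the failure events of Lemmas~\ref{lem:epsilon_init}, \ref{lem:random_init}, \ref{lem:hypothesis_2}, and~\ref{lem:hypothesis_4}.
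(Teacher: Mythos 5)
Your proposal is correct and follows essentially the same route as the paper: the paper's proof of Theorem~\ref{thm:main_test_equivalence} is a one-line triangle inequality combining Lemma~\ref{lem:equivalence_at_T} (which bounds $|u_{\nn,\test}(T)-u_{\ntk,\test}(T)| \le \epsilon/2$ via the decomposition in Lemma~\ref{lem:more_concreate_bound}, the initialization bound of Lemma~\ref{lem:epsilon_init}, and the induction in Lemma~\ref{lem:induction}) with Lemma~\ref{lem:u_ntk_test_T_minus_u_test_*} (which bounds $|u_{\ntk,\test}(T)-u_{\test}^*| \le \epsilon/2$ by linear convergence of the regularized gradient flow). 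You have also correctly identified that the real work is the parameter bookkeeping in Table~\ref{tab:condition_only} that makes the induction go through, which is what forces the width bound $m = \wt{O}(n^{10}d/(\epsilon^6\Lambda_0^{10}))$.
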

\begin{proof}
It follows from combining results of bounding $\| u_{\nn,\test}(T) - u_{\ntk,\test}(T) \|_2 \leq \epsilon/2$ as shown in Lemma~\ref{lem:equivalence_at_T} and $\| u_{\ntk,\test}(T) - u_{\test}^* \|_2 \leq \epsilon/2$ as shown in Lemma~\ref{lem:u_ntk_test_T_minus_u_test_*} using triangle inequality.
\end{proof}

\subsection{Equivalence between training net with regularization and kernel ridge regression for training data prediction}\label{sec:train}
In this section, we restate and proof Theorem~\ref{thm:main_train_equivalence_intro}.

Note the proof of equivalence results for the test data in previous sections automatically gives us an equivalence results of the prediction for training data. Specifically, the third conclusion in Lemma~\ref{lem:induction} characterizes the training prediction $u_{\nn}(t)$ throughout the training process. Thus, we have the following theorem characterize the equivalence between training net with regularization and kernel ridge regression for the training data.
\begin{theorem}[Equivalence between training net with regularization and kernel ridge regression for training data prediction, restatement of Theorem~\ref{thm:main_train_equivalence_intro}]\label{thm:main_train_equivalence}
Given training data matrix $X \in \R^{n \times d}$ and corresponding label vector $Y \in \R^n$. Let $T > 0$ be the total number of iterations. Let $u_{\nn}(t) \in \R^n$ and $u^* \in \R^n$ be the training data predictors defined in Definition~\ref{def:nn} and Definition~\ref{def:krr_ntk} respectively. Let $\kappa=1$ be the corresponding multiplier. 

Given any accuracy $\epsilon \in ( 0 , 1/10 )$ and failure probability $\delta \in (0,1/10)$, if $\kappa = 1$, $T=\wt{O}(\frac{1}{\Lambda_0})$, network width $m \geq \wt{O}(\frac{n^4d}{\lambda_0^4\epsilon})$ and regularization parameter $\lambda \leq \wt{O}(\frac{1}{\sqrt{m}})$, then with probability at least $1-\delta$ over the random initialization, we have
\begin{align*}
	\|u_{\nn}(T) - u^*\|_2 \leq \epsilon.
\end{align*}
Here $\wt{O}(\cdot)$ hides $\poly\log(n/(\epsilon \delta \Lambda_0 ))$.
\end{theorem}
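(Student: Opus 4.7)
The plan is to derive Theorem~\ref{thm:main_train_equivalence_intro} as an essentially immediate corollary of Lemma~\ref{lem:induction} (which has already been established in the course of proving the test-data equivalence). The third conclusion of that lemma states
\begin{align*}
\|u_{\nn}(t) - u^*\|_2^2 \leq \max\{\exp(-(\kappa^2\Lambda_0 + \lambda)\, t/2)\cdot \|u_{\nn}(0) - u^*\|_2^2,\; \epsilon_{\train}^2\}
\end{align*}
for every $t \in [0,T]$, with probability at least $1-\delta$, provided the parameters $(\kappa, T, m, \lambda)$ fulfill the conditions collected in Table~\ref{tab:condition_only}. Choosing $\kappa = 1$ and $\epsilon_{\train} = \epsilon$, all that is left is to pick $T$ and $m$ so that the right-hand side drops below $\epsilon^2$ at time $t = T$.

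First I would bound the initial residual. By Lemma~\ref{lem:epsilon_init} applied with $\kappa = 1$, together with the trivial bound $\|u^*\|_2 \leq \|Y\|_2 = O(\sqrt{n})$, one obtains $\|u_{\nn}(0) - u^*\|_2 = \tilde{O}(\sqrt{n})$. Consequently, $T = \tilde{O}(1/\Lambda_0)$ iterations are enough for the exponential factor $\exp(-\Lambda_0 T/2)\|u_{\nn}(0) - u^*\|_2^2$ to fall below $\epsilon^2$, which is the stated choice of $T$.

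Second I would verify the induction hypotheses with the stated width. When $\kappa = 1$ and $\lambda = \tilde{O}(1/\sqrt{m})$, we have $\|Y - u^*\|_2 = \|\lambda(H^{\cts} + \lambda I)^{-1}Y\|_2 \leq \lambda\sqrt{n}/\Lambda_0 = \tilde{O}(\sqrt{n/m}/\Lambda_0)$. Plugging these estimates, together with $T = \tilde{O}(1/\Lambda_0)$, into Eq.~\eqref{eq:def_epsilon_W} yields $\epsilon_W = \tilde{O}(n/(\sqrt{m}\Lambda_0^2) + \sqrt{d}/(\sqrt{m}\Lambda_0))$ and $\epsilon_H' = 2n\epsilon_W$. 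Checking condition by condition in Table~\ref{tab:condition_only}, one finds that the binding constraint is condition 5, $\epsilon_H' \leq \epsilon\Lambda_0/(8\|Y - u^*\|_2)$, which after substitution reduces to $m \geq \tilde{O}(n^4 d/(\Lambda_0^4 \epsilon))$; the remaining four conditions are either comparable or slacker under the same width. The theorem then follows immediately from Lemma~\ref{lem:induction}.

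The main obstacle is therefore not conceptual but rather bookkeeping: each of the three additive pieces of $\epsilon_W$ (the transient term controlled by $\|u_{\nn}(0)-u^*\|_2/(\Lambda_0+\lambda)$, the residual term $\sqrt{n/m}\|Y - u^*\|_2$, and the weight-decay drift $\lambda(\sqrt{d}+\sqrt{\log(m/\delta)})$) must be tracked and balanced against the five inequalities in Table~\ref{tab:condition_only} when $\kappa = 1$. The pleasant simplification over Theorem~\ref{thm:main_test_equivalence_intro} is that we do not need a small multiplier $\kappa$ to tame an initial test prediction, and we do not invoke Lemma~\ref{lem:more_concreate_bound} at all, so the factors of $\kappa^{-6}$ that dominated the width requirement in the test-data case disappear, leaving the far milder bound $m \geq \tilde{O}(n^4 d/(\Lambda_0^4 \epsilon))$.
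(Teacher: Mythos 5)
Your proposal is correct and follows exactly the paper's own proof: both deduce the theorem from conclusion~3 of Lemma~\ref{lem:induction} with $\kappa=1$ and $\epsilon_{\train}=\epsilon$, bound $\|u_{\nn}(0)-u^*\|_2=\wt{O}(\sqrt{n})$, pick $T=\wt{O}(1/\Lambda_0)$ so the exponential term drops below $\epsilon^2$, and verify the conditions in Table~\ref{tab:condition_only} at the stated width. The only difference is presentational: you spell out the bookkeeping that the paper dismisses with ``we can see all the conditions hold'' (and your estimate $\epsilon_W=\wt{O}(n/(\sqrt m\Lambda_0^2)+\sqrt d/(\sqrt m\Lambda_0))$ is slightly more conservative than necessary, but still valid as an upper bound).
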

\begin{proof}
Let $\kappa=1$, $T=\wt{O}(\frac{1}{\Lambda_0})$, $\lambda \leq \wt{O}(\frac{1}{\sqrt{m}})$, $m \geq \wt{O}(\frac{n^4d}{\lambda_0^4\epsilon})$ and $\epsilon_{\train} = \epsilon$ in Lemma~\ref{lem:induction}. We can see all the conditions in Table~\ref{tab:condition_only} hold. Thus, the third conclusion in Lemma~\ref{lem:induction} holds. So with probability $1-\delta$, we have
\begin{align*}
	\|u_{\nn}(T) - u^*\|_2^2 \leq & ~ \max\{\exp(-(\kappa^2\Lambda_0 + \lambda) t/2) \cdot \| u_{\nn}(0) - u^* \|_2^2, ~ \epsilon_{\train}^2\}\\
	\leq & ~ \max\{\exp(-(\Lambda_0 + \lambda) T/2) \cdot \| u_{\nn}(0) - u^* \|_2^2, ~ \epsilon^2\}\\
	\leq & ~ \epsilon^2
\end{align*}
where the first step follows from Lemma~\ref{lem:induction}, the second step follows from $\kappa =1$ and $\epsilon_{\train} = \epsilon$, the last step follows from $T=\wt{O}(\frac{1}{\Lambda_0})$ and $\| u_{\nn}(0) - u^* \|_2^2 \leq n$ with high probability.
\end{proof}

\begin{table}[!t]\caption{Summary of parameters of main results in Section~\ref{sec:equiv}}\label{tab:xxx}

\centering
{
  \begin{tabular}{ | l | l | l | l | l | l |}
    \hline
    {\bf Statement} & $\kappa$ & $T$ & $m$ & $\lambda$ & {\bf Comment} \\ \hline
    Theorem~\ref{thm:main_test_equivalence} & $\epsilon \Lambda_0 / n$ & $1/(\kappa^2 \Lambda_0)$ & $\Lambda_0^{-10} \epsilon^{-6} n^{10} d$ & $1/\sqrt{m}$ & test \\ \hline
    Theorem~\ref{thm:main_train_equivalence} & $1$ & $1/\Lambda_0$ & $\Lambda_0^{-4} \epsilon^{-1} n^4 d $ & $1/\sqrt{m}$ & train \\ \hline
  \end{tabular}
}
\end{table}
\section{Generalization result of leverage score sampling for approximating kernels}\label{sec:ger_lev}
In this section, we generalize the result of Lemma 8 in \cite{akmmvz17} for a more broad class of kernels and feature vectors. Specifically, we prove Theorem~\ref{thm:leverage_score_intro}.

Section~\ref{sec:pre_lev} introduces the related kernel and random features, we also restate Definition~\ref{def:leverage_score_intro} and~\ref{def:modify_random_feature_lev} for leverage score sampling and random features in this section. Section~\ref{sec:result_lev} restates and proves our main result Theorem~\ref{thm:leverage_score_intro}.

\subsection{Preliminaries}\label{sec:pre_lev}

\begin{definition}[Kernel]\label{def:kernel}
Consider kernel function $k: \R^d \times \R^d \to \R$ which can be written as 
\begin{align*}
  \k(x,z) = \E_{w \sim p}[\phi(x,w)^\top\phi(z,w)],
\end{align*}
for any data $x,z\in\R^d$, where $\phi:\R^{d} \times \R^{d_1}\to \R^{d_2}$ denotes a finite dimensional vector and $p:\R^{d_1}\to\R_{\geq 0}$ denotes probability density function. Given data $x_1,\cdots,x_n\in\R^d$, we define the corresponding kernel matrix $K \in \R^{n\times n}$ as
\begin{align*}
  K_{i,j} = \k(x_i,x_j) = \E_{w\sim p}[\phi(x_i,w)^\top\phi(x_j,w)]
\end{align*}
\end{definition}

\begin{definition}[Random features]\label{def:random_feature}
Given $m$ weight vectors $w_1, \cdots, w_m \in \R^{d_1}$. Let $\varphi : \R^d \rightarrow \R^{md_2}$ be define as 
\begin{align*}
  \varphi(x) =  \Big[ \frac{1}{ \sqrt{m} }\phi(x,w_1)^\top , \cdots, \frac{1}{ \sqrt{m} }\phi(x,w_m)^\top \Big]^\top
\end{align*}
If $w_1, \cdots, w_m$ are drawn according to $p(\cdot)$, then
\begin{align*}
  \k(x,z) = \E_{p} [ \varphi(x)^\top \varphi(z) ]
\end{align*}
Given data matrix $X=[x_1,\cdots,x_n]^\top \in \R^{n \times d}$, define $\Phi : \R^{d_1} \to \R^{n\times d_2}$ as
\begin{align*}
  \Phi(w) = [\phi(x_1,w)^\top, \cdots, \phi(x_n,w)^\top]^\top
\end{align*}
If $w$ are drawn according to $p(\cdot)$, then
\begin{align}\label{eq:ls_1}
K = \E_{p} [ \Phi(w)\Phi(w)^\top ]
\end{align}
Further, define $\Psi\in\R^{n\times md_2}$ as 
\begin{align*}
  \Psi =  [\varphi(x_1),\cdots,\varphi(x_n)]^\top.
\end{align*}
Then we have
\begin{align*}
  \Psi\Psi^\top =  \frac{1}{m}\sum_{r=1}^m \Phi(w_r)\Phi(w_r)^\top
\end{align*}
If $w_1, \cdots, w_m$ are drawn according to $p(\cdot)$, then
\begin{align*}
K = \E_{p} [ \Psi\Psi^\top ]
\end{align*}
\end{definition}

\begin{definition}[Modified random features, restatement of Definition~\ref{def:modify_random_feature_lev}]\label{def:modify_random_feature}
Given any probability density function $q(\cdot)$ whose support includes that of $p(\cdot)$. Given $m$ weight vectors $w_1, \cdots, w_m \in \R^{d_1}$. Let $\bar{\varphi} : \R^d \rightarrow \R^{m d_2}$ be defined as 
\begin{align*}
  \bar{\varphi}(x) =  \frac{1}{\sqrt{m}} \Big[ \frac{\sqrt{p(w_1)}}{ \sqrt{q(w_1)} }\phi(x,w_1)^\top , \cdots, \frac{\sqrt{p(w_m)}}{ \sqrt{q(w_m)} }\phi(x,w_m)^\top \Big]^\top
\end{align*}
If $w_1, \cdots, w_m$ are drawn according to $q(\cdot)$, then
\begin{align*}
  \k(x,z) = \E_{q} [ \bar{\varphi}(x)^\top \bar{\varphi}(z) ]
\end{align*}
Given data matrix $X=[x_1,\cdots,x_n]^\top \in \R^{n \times d}$, define $\bar{\Phi}:\R^{d_1}\to\R^{n\times d_2}$ as
\begin{align*}
  \bar{\Phi}(w) = \frac{\sqrt{p(w)}}{ \sqrt{q(w)} }[\phi(x_1,w)^\top, \cdots, \phi(x_n,w)^\top]^\top =  \frac{\sqrt{p(w)}}{ \sqrt{q(w)} }\Phi(w)
\end{align*}
If $w$ are drawn according to $q(\cdot)$, then
\begin{align*}
K = \E_{q} [ \bar{\Phi}(w)\bar{\Phi}(w)^\top ]
\end{align*}
Further, define $\bar{\Psi}\in\R^{n\times md_2}$ as 
\begin{align*}
  \bar{\Psi} =  [\bar{\varphi}(x_1),\cdots,\bar{\varphi}(x_n)]^\top.
\end{align*}
then
\begin{align}\label{eq:ls_2}
  \bar{\Psi}\bar{\Psi}^\top =  \frac{1}{m}\sum_{r=1}^m \bar{\Phi}(w_r)\bar{\Phi}(w_r)^\top = \frac{1}{m}\sum_{r=1}^m \frac{p(w_r)}{q(w_r)}\Phi(w_r)\Phi(w_r)^\top
\end{align}
If $w_1, \cdots, w_m$ are drawn according to $q(\cdot)$, then
\begin{align*}
K = \E_{q} [ \bar{\Psi}\bar{\Psi}^\top ]
\end{align*}
\end{definition}

\begin{definition}[Leverage score, restatement of Definition~\ref{def:leverage_score_intro}]\label{def:leverage_score}
Let $p : \R^{d_1} \rightarrow \R_{\geq 0}$ denote the probability density function defined in Definition~\ref{def:kernel}. Let $\Phi: \R^{d_1} \rightarrow \R^{n\times d_2}$ be defined as Definition~\ref{def:random_feature}. For parameter $\lambda > 0$, we define the ridge leverage score as
\begin{align*}
  q_\lambda(w) = p(w) \Tr[\Phi(w)^\top ( K + \lambda I_n )^{-1} \Phi(w)].
\end{align*} 
\end{definition}

\begin{definition}[Statistical dimension]\label{def:stat_dim}
Given kernel matrix $K\in\R^{n\times n}$ and parameter $\lambda>0$, we define statistical dimension $s_{\lambda}(K)$ as:
\begin{align*}
  s_{\lambda}(K) = \Tr[(K+\lambda I_n)^{-1} K].
\end{align*}
\end{definition}

Note we have $\int_{\R^{d_2}} q_\lambda(w) \d w = s_{\lambda}(K)$. Thus we can define the leverage score sampling distribution as
\begin{definition}[Leverage score sampling distribution]\label{def:lev_distribution}
Let $q_\lambda(w)$ denote the leverage score defined in Definition~\ref{def:leverage_score}. Let $s_{\lambda}(K)$ denote the statistical dimension defined in Definition~\ref{def:stat_dim}. We define the leverage score sampling distribution as
\begin{align*}
  q(w) = \frac{q_{\lambda}(w)}{s_{\lambda}(K)}.
\end{align*}
\end{definition}

\subsection{Main result}\label{sec:result_lev}

\begin{theorem}[Restatement of Theorem~\ref{thm:leverage_score_intro}, generalization of Lemma 8 in \cite{akmmvz17}]\label{thm:leverage_score}
Given $n$ data points $x_1, x_2, \cdots, x_n \in \R^d$. Let $k:\R^d\times\R^d\to\R$, $K\in\R^{n\times n}$ be the kernel defined in Definition~\ref{def:kernel}, with corresponding vector $\phi:\R^{d} \times \R^{d_1}\to \R^{d_2}$ and probability density function $p:\R^{d_1}\to\R_{\geq 0}$. Given parameter $\lambda \in (0,\|K\|)$, let $q_\lambda:\R^{d_1}\to\R_{\geq 0}$ be the leverage score defined in Definition~\ref{def:leverage_score}. Let $\tilde{q}_{\lambda}:\R^{d_1}\rightarrow \R$ be any measurable function such that $\tilde{q}_{\lambda}(w) \geq q_\lambda(w)$ holds for all $w\in \mathbb{R}^{d_1}$. Assume $ s_{\tilde{q}_\lambda} = \int_{\mathbb{R}^{d_1}} \tilde{q}_{\lambda}(w)\d w$ is finite. Denote $\bar{q}_{\lambda}(w)=\tilde{q}_{\lambda}(w)/s_{\tilde{q}_\lambda}$. For any accuracy parameter $\epsilon \in (0,1/2)$ and failure probability $\delta \in ( 0 , 1)$. Let $w_1,\cdots,w_m \in \R^d$ denote $m$ samples draw independently from the distribution associated with the density $\bar{q}_{\lambda}(\cdot)$, and construct the matrix $\bar{\Psi} \in \R^{n \times md_2}$ according to Definition~\ref{def:modify_random_feature} with $q=\bar{q}_{\lambda}$. Let $s_\lambda(K)$ be defined as Definition~\ref{def:stat_dim}.
If $m \geq 3 \epsilon^{-2} s_{\tilde{q}_\lambda} \ln(16s_{\tilde{q}_\lambda}\cdot s_\lambda(K) / \delta)$, then we have 
\begin{align} \label{eq:leverage_score_thm}
 (1-\epsilon) \cdot (K + \lambda I_n) \preceq \bar{\Psi} \bar{\Psi}^\top + \lambda I_n \preceq (1+\epsilon) \cdot ( K + \lambda I_n )
\end{align}
 holds with probability at least $1-\delta$. 
\end{theorem}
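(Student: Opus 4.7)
The strategy is to mirror the proof of Lemma 8 in \cite{akmmvz17} and reduce the two-sided spectral sandwich~\eqref{eq:leverage_score_thm} to a single matrix Bernstein application. First I would take an orthogonal eigendecomposition $K+\lambda I_n = V^\top \Sigma^2 V$ and observe that conjugating both sides of~\eqref{eq:leverage_score_thm} by $\Sigma^{-1}V$ turns the desired inequality into the operator-norm bound
\begin{align*}
\| \Sigma^{-1}V(\bar{\Psi}\bar{\Psi}^\top - K) V^\top \Sigma^{-1}\| \leq \epsilon,
\end{align*}
because the $\lambda I_n$ on both sides cancels under this conjugation.

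Next I would realize the left-hand side as an empirical average of i.i.d.\ random matrices. Using Eq.~\eqref{eq:ls_2} I define
\begin{align*}
Y_r := \frac{p(w_r)}{\bar{q}_\lambda(w_r)}\,\Sigma^{-1}V\Phi(w_r)\Phi(w_r)^\top V^\top \Sigma^{-1}, \qquad r=1,\dots,m,
\end{align*}
so that $\frac{1}{m}\sum_{r=1}^m Y_r$ coincides with $\Sigma^{-1}V\bar{\Psi}\bar{\Psi}^\top V^\top \Sigma^{-1}$, and a change-of-measure computation together with Eq.~\eqref{eq:ls_1} shows $\E_{\bar{q}_\lambda}[Y_r] = \Sigma^{-1}VKV^\top \Sigma^{-1}$. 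So the task reduces to applying Lemma~\ref{lem:con_lev} to the i.i.d.\ matrices $Y_1,\dots,Y_m$ with mean $B=\Sigma^{-1}VKV^\top \Sigma^{-1}$.

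The main technical step, and what I expect to be the trickiest, is establishing the right uniform and second-moment bounds. The hypothesis $\tilde{q}_\lambda(w)\geq q_\lambda(w) = p(w)\Tr[\Phi(w)^\top(K+\lambda I_n)^{-1}\Phi(w)]$ is exactly tuned so that the importance-sampling weight $p(w)/\bar{q}_\lambda(w) = p(w)s_{\tilde{q}_\lambda}/\tilde{q}_\lambda(w)$ yields, after identifying $V^\top\Sigma^{-2}V = (K+\lambda I_n)^{-1}$,
\begin{align*}
\|Y_r\|\leq \Tr[Y_r] = \frac{q_\lambda(w_r)\,s_{\tilde{q}_\lambda}}{\tilde{q}_\lambda(w_r)} \leq s_{\tilde{q}_\lambda}.
\end{align*}
Since each $Y_r$ is PSD, this immediately gives $\E[Y_r^2]\preceq s_{\tilde{q}_\lambda}\,\E[Y_r] = s_{\tilde{q}_\lambda}\,B$, whose operator norm is bounded by $s_{\tilde{q}_\lambda}$ (the eigenvalues of $B$ are $\lambda_i/(\lambda_i+\lambda)\leq 1$) and whose trace equals $s_{\tilde{q}_\lambda}\,s_\lambda(K)$ by Definition~\ref{def:stat_dim}.

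Finally I would plug these into Lemma~\ref{lem:con_lev} with $L=s_{\tilde{q}_\lambda}$, variance parameter $\|M_1\|=\|M_2\|\leq s_{\tilde{q}_\lambda}$, and intrinsic dimension $d\leq 2s_\lambda(K)/\|B\|\leq 4s_\lambda(K)$, where the last inequality uses the assumption $\lambda<\|K\|$ to force $\|B\|\geq 1/2$. Setting the tail parameter to $\epsilon$ and using $\epsilon<1/2$, the inequality $4d\exp(-m\epsilon^2/(\|M_1\|+2L\epsilon/3))\leq\delta$ unwinds to the stated hypothesis $m\geq 3\epsilon^{-2}s_{\tilde{q}_\lambda}\ln(16s_{\tilde{q}_\lambda}s_\lambda(K)/\delta)$. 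The only delicate bookkeeping is tracking the $s_{\tilde{q}_\lambda}$ factor consistently through both the almost-sure and variance bounds, which is precisely where the requirement that the sampling density dominates the ridge leverage score is used.
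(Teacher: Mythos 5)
Your proposal reproduces the paper's proof essentially verbatim: same whitening by $\Sigma^{-1}V$, same random matrix $Y_r$, same almost-sure bound via $\|Y_r\|\leq\Tr[Y_r]\leq s_{\tilde{q}_\lambda}$, same PSD argument giving $\E[Y_r^2]\preceq s_{\tilde{q}_\lambda}B$, and the same invocation of Lemma~\ref{lem:con_lev} with the observation that $\lambda<\|K\|$ controls the intrinsic dimension. The only cosmetic difference is that you state the second-moment bound abstractly ($Y$ PSD with $\|Y\|\leq L$ implies $Y^2\preceq LY$) where the paper expands $Y_r^2$ explicitly; these are equivalent, and your constants unwind consistently with the stated hypothesis on $m$.
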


To prove the theorem, we follow the same proof framework as Lemma 8 in \cite{akmmvz17}.


\begin{proof}
Let $K+\lambda I_n = V^\top \Sigma^2 V$ be an eigenvalue decomposition of $K+\lambda I_n$. Note that Eq.~\eqref{eq:leverage_score_thm} is equivalent to
\begin{align}\label{eq:57_1}
	K-\epsilon(K+\lambda I_n) \preceq \bar{\Psi}\bar{\Psi}^\top \preceq K+ \epsilon(K+\lambda I_n).
\end{align}
Multiplying $\Sigma^{-1}V$ on the left and $V^\top \Sigma^{-1}$ on the left for both sides of Eq.~\eqref{eq:57_1}, it suffices to show that 
\begin{align}\label{eq:57_2}
	\| \Sigma^{-1}V\bar{\Psi}\bar{\Psi}^\top V^\top \Sigma^{-1} - \Sigma^{-1}VKV^\top \Sigma^{-1}\| \leq \epsilon
\end{align}
holds with probability at least $1-\delta$. Let
\begin{align*}
 	Y_r = \frac{p(w_r)}{\bar{q}_{\lambda}(w_r)} \Sigma^{-1}V{\Phi}(w_r){\Phi}(w_r)^\top V^\top \Sigma^{-1}.
\end{align*}
We have 
\begin{align*}
	\E_{\bar{q}_{\lambda}}[Y_l] = & ~ \E_{\bar{q}_{\lambda}}[\frac{p(w_r)}{\bar{q}_{\lambda}(w_r)}\Sigma^{-1}V{\Phi}(w_r)\bar{\Phi}(w_r)^\top V^\top \Sigma^{-1}]\\
	= & ~ \Sigma^{-1}V\E_{\bar{q}_{\lambda}}[\frac{p(w_r)}{\bar{q}_{\lambda}(w_r)}{\Phi}(w_r){\Phi}(w_r)^\top] V^\top \Sigma^{-1}\\
	= & ~ \Sigma^{-1}V \E_{p}[{\Phi}(w_r){\Phi}(w_r)^\top]  V^\top \Sigma^{-1}\\
	= & ~ \Sigma^{-1}V K  V^\top \Sigma^{-1}
\end{align*}
where the first step follows from the definition of $Y_r$, the second step follows the linearity of expectation, the third step calculations the expectation, and the last step follows Eq.~\eqref{eq:ls_1}. 

Also we have
\begin{align*}
	\frac{1}{m}\sum_{r=1}^m Y_r = & ~ \frac{1}{m}\sum_{r=1}^m \frac{p(w_r)}{\bar{q}_{\lambda}(w_r)}\Sigma^{-1}V{\Phi}(w_r)\bar{\Phi}(w_r)^\top V^\top \Sigma^{-1}\\
	= & ~ \Sigma^{-1}V\Big(\frac{1}{m}\sum_{r=1}^m \frac{p(w_r)}{\bar{q}_{\lambda}(w_r)}{\Phi}(w_r){\Phi}(w_r)^\top\Big) V^\top \Sigma^{-1}\\
	= & ~ \Sigma^{-1}V \bar{\Psi}\bar{\Psi}^\top  V^\top \Sigma^{-1}
\end{align*}
where the first step follows from the definition of $Y_r$, the second step follows from basic linear algebra, and the last step follows from Eq.~\eqref{eq:ls_2}. 

Thus, it suffices to show that 
\begin{align}\label{eq:57_3}
	\| \frac{1}{m}\sum_{r=1}^m Y_r -\E_{\bar{q}_{\lambda}}[Y_l] \| \leq \epsilon
\end{align}
holds with probability at least $1-\delta$. 

We can apply matrix concentration Lemma~\ref{lem:con_lev} to prove Eq.~\eqref{eq:57_3}, which requires us to bound $\|Y_r\|$ and $\E[Y_l^2]$. Note
\begin{align*}
	\|Y_l\| \leq & ~ \frac{p(w_r)}{\bar{q}_{\lambda}(w_r)} \Tr[\Sigma^{-1}V{\Phi}(w_r){\Phi}(w_r)^\top V^\top \Sigma^{-1}]\\
	= & ~ \frac{p(w_r)}{\bar{q}_{\lambda}(w_r)} \Tr[{\Phi}(w_r)^\top V^\top \Sigma^{-1} \Sigma^{-1}V{\Phi}(w_r)]\\
	= & ~ \frac{p(w_r)}{\bar{q}_{\lambda}(w_r)} \Tr[{\Phi}(w_r)^\top (K+\lambda I_n)^{-1}{\Phi}(w_r)]\\
	= & ~ \frac{q_\lambda(w_r) s_{\bar{q}_{\lambda}}}{\tilde{q}_{\lambda}(w_r)}\\
	\leq & ~ s_{\bar{q}_{\lambda}}.
\end{align*}
where the first step follows from $\|A\| \leq \Tr[A]$ for any positive semidefinite matrix, the second step follows $\Tr[AB]=\Tr[BA]$, the third step follows from the definition of $V,\Sigma$, the fourth step follows from the definition of leverage score $q_\lambda(\cdot)$ as defined in Definition~\ref{def:leverage_score}, and the last step follows from the condition $\tilde{q}_{\lambda}(w) \geq q_\lambda(w)$.

Further, we have
\begin{align*}
	Y_r^2 = & ~ \frac{p(w_r)^2}{\bar{q}_{\lambda}(w_r)^2} \Sigma^{-1}V{\Phi}(w_r){\Phi}(w_r)^\top V^\top \Sigma^{-1} \Sigma^{-1}V{\Phi}(w_r){\Phi}(w_r)^\top V^\top \Sigma^{-1} \\
	= & ~ \frac{p(w_r)^2}{\bar{q}_{\lambda}(w_r)^2} \Sigma^{-1}V{\Phi}(w_r){\Phi}(w_r)^\top (K+\lambda I_n)^{-1} {\Phi}(w_r){\Phi}(w_r)^\top V^\top \Sigma^{-1} \\
	\preceq & ~ \frac{p(w_r)^2}{\bar{q}_{\lambda}(w_r)^2} \Tr[{\Phi}(w_r)^\top (K+\lambda I_n)^{-1} {\Phi}(w_r)] \Sigma^{-1}V{\Phi}(w_r){\Phi}(w_r)^\top V^\top \Sigma^{-1} \\
	= & ~ \frac{p(w_r) q_\lambda(w_r)}{\bar{q}_{\lambda}(w_r)^2} \Sigma^{-1}V{\Phi}(w_r) {\Phi}(w_r)^\top V^\top \Sigma^{-1} \\
	= & ~ \frac{q_\lambda(w_r)}{\bar{q}_{\lambda}(w_r)} Y_r \\
	= & ~ \frac{q_\lambda(w_r) s_{\tilde{q}_{\lambda}}}{\tilde{q}_{\lambda}(w_r)} Y_r \\
	\preceq & ~ s_{\tilde{q}_{\lambda}} Y_r.
\end{align*}
where the first step follows from the definition of $Y_r$, the second step follows from  the definition of $V,\Sigma$, the third step follows from $\|A\| \leq \Tr[A]$ for any positive semidefinite matrix, the fourth step follows from the definition of leverage score $q_\lambda(\cdot)$ as defined in Definition~\ref{def:leverage_score}, the fifth step follows from the definition of $Y_r$, the sixth step follows from the definition of $\bar{q}_{\lambda}(\cdot)$, and the last step follows from the condition $\tilde{q}_{\lambda}(w) \geq q_\lambda(w)$.

Thus, let $\lambda_1 \geq \lambda_2 \geq \cdots \geq \lambda_n$ be the eigenvalues of $K$, we have
\begin{align*}
	\E_{\bar{q}_{\lambda}}[Y_r^2] \preceq & ~ \E_{\bar{q}_{\lambda}}[s_{\tilde{q}_{\lambda}} Y_r] \\
	= & ~ s_{\tilde{q}_{\lambda}} \Sigma^{-1}VKV\Sigma^{-1} \\
	= & ~ s_{\tilde{q}_{\lambda}} (I_n - \lambda\Sigma^{-2}) \\
	= & ~ s_{\tilde{q}_{\lambda}} \cdot \diag\{\lambda_1/(\lambda_1 +\lambda),\cdots, \lambda_n/(\lambda_n +\lambda)\}:= D.
\end{align*}
So by applying Lemma~\ref{lem:con_lev}, we have
\begin{align*}
	\Pr\Big\| \frac{1}{m}\sum_{r=1}^m Y_r -\E[Y_r] \Big\| \geq \epsilon] \leq ~ & \frac{8\Tr[D]}{\|D\|}\exp\Big( \frac{-m\epsilon^2/2}{\|D\|+2s_{\tilde{q}_{\lambda}} \epsilon/3} \Big) \\
	\leq & ~ \frac{8 s_{\tilde{q}_{\lambda}}\cdot s_{\lambda}(K)}{\lambda_1/(\lambda_1 + \lambda)} \exp\Big( \frac{-m\epsilon^2 }{ 2s_{\tilde{q}_{\lambda}}(1 + 2\epsilon/3) } \Big) \\
	\leq & ~ 16s_{\tilde{q}_{\lambda}}\cdot s_{\lambda}(K) \exp\Big( \frac{-m\epsilon^2 }{ 2s_{\tilde{q}_{\lambda}}(1 + 2\epsilon/3)} \Big) \\
	\leq & ~ 16s_{\tilde{q}_{\lambda}}\cdot s_{\lambda}(K) \exp\Big( \frac{-3m\epsilon^2 }{ 8s_{\tilde{q}_{\lambda}}} \Big) \\
	\leq & ~ \delta
\end{align*}
where the first step follows from Lemma~\ref{lem:con_lev}, the second step follows from the definition of $D$ and $s_\lambda(K) = \Tr[(K+\lambda I_n)^{-1}K] = \sum_{i}^n\lambda_i/(\lambda_i+\lambda) = \Tr[D]$, the third step follows the condition $\lambda\in(0,\|K\|)$, the fourth step follows the condition $\epsilon\in(0,1/2)$, and the last step follows from the bound on $m$.
\end{proof}

\begin{remark}
Above results can be generalized to $\C$. Note in the random Fourier feature case, we have $d_1 = d$, $d_2 = 1$, $\phi(x,w) = e^{ -2 \pi \i w^\top x } \in \C$ and $p(\cdot)$ denotes the Fourier transform density distribution. In the Neural Tangent Kernel case, we have $d_1 = d_2 = d $, $\phi(x,w) = x\sigma'(w^\top x)$ and $p(\cdot)$ denotes the probability density function of standard Gaussian distribution $\N(0,I_d)$. So they are both special cases in our framework.
\end{remark}

\section{Equivalence between training neural network with regularization and kernel ridge regression under leverage score sampling}\label{sec:eq_lev}
In this section, we connected the neural network theory with the leverage score sampling theory by showing a new equivalence result between training reweighed neural network with regularization under leverage score initialization and corresponding neural tangent kernel ridge regression. Specifically, we prove Theorem~\ref{thm:equivalence_train_lev}. Due to the similarity of the results to Section~\ref{sec:equiv}, we present this section in the same framework.

Section~\ref{sec:equiv_preli_D} introduces new notations and states the standard data assumptions again. Section~\ref{sec:def_D} restates and supplements the definitions in the paper. Section~\ref{sec:lemma_D} presents the key lemmas about the leverage score initialization and related properties, which are crucial to the proof. Section~\ref{sec:proof_sketch_D} provides a brief proof sketch. Section~\ref{sec:main_D} restates and proves the main result Theorem~\ref{thm:equivalence_train_lev} following the proof sketch.

Here, we list the locations where definitions and theorems in the paper are restated. Definition~\ref{def:f_nn_lev_intro} is restated in Definition~\ref{def:f_nn_lev}. Theorem~\ref{thm:equivalence_train_lev_intro} is restated in Theorem~\ref{thm:equivalence_train_lev}.

\subsection{Preliminaries}\label{sec:equiv_preli_D}
Let's define the following notations:
\begin{itemize}
    \item $\bar{u}_{\ntk}(t)=\kappa \bar{f}_{\ntk}(\beta(t),X) = \kappa\bar{\Phi}(X)\beta(t) \in \R^n$ be the prediction of the kernel ridge regression for the training data with respect to $\bar{H}(0)$ at time $t$. (See Definition~\ref{def:krr_ntk_lev})
    \item $\bar{u}^* = \lim_{t \rightarrow \infty} \bar{u}_{\ntk}(t)$ (See Eq.~\eqref{eq:def_u_*_lev})
    \item $\bar{u}_{\ntk,\test}(t) = \kappa \bar{f}_{\ntk}(\beta(t),x_{\test}) = \kappa\bar{\Phi}(x_{\test}) \beta(t) \in \R$ be the prediction of the kernel ridge regression for the test data with respect to $\bar{H}(0)$ at time $t$. (See Definition~\ref{def:krr_ntk_lev})
    \item $\bar{u}_{\test}^* = \lim_{t\rightarrow \infty} \bar{u}_{\ntk,\test}( t ) $ (See Eq.~\eqref{eq:def_u_test_*_lev})
    \item $\bar{\k}_t(x_{\test},X) \in \R^n$ be the induced kernel between the training data and test data at time $t$, where
    \begin{align*}
    [\bar{\k}_t(x_{\test},X)]_i 
    = \bar{\k}_t(x_{\test},x_i)
    = \left\langle \frac{\partial \bar{f}(W(t),x_{\test})}{\partial W(t)},\frac{\partial \bar{f}(W(t),x_i)}{\partial W(t)} \right\rangle
    \end{align*}
    (see Definition~\ref{def:dynamic_kernel_lev})
    \item $\bar{u}_{\nn}(t) = \kappa \bar{f}_{\nn}(W(t),X) \in \R^n$ be the prediction of the reweighed neural network with leverage score initialization for the training data at time $t$. (See Definition~\ref{def:f_nn_lev})
    \item $\bar{u}_{\nn,\test}(t) = \kappa \bar{f}_{\nn}(W(t),x_{\test}) \in \R$ be the prediction of the reweighed neural network with leverage score initialization for the test data at time $t$ (See Definition~\ref{def:f_nn_lev})
\end{itemize}

\begin{assumption}[data assumption]\label{ass:data_assumption_D}
We made the following assumptions: \\
1. For each $i \in [n]$, we assume $|y_i| = O(1)$.\\
2. $H^{\cts}$ is positive definite, i.e., $\Lambda_0:=\lambda_{\min}(H^{\cts})>0$.\\
3. All the training data and test data have Euclidean norm equal to 1.
\end{assumption}

\subsection{Definitions}\label{sec:def_D}
\begin{definition}[Training reweighed neural network with regularization, restatement of Definition~\ref{def:f_nn_lev_intro}]\label{def:f_nn_lev}
Given training data matrix $X\in\R^{n\times d}$ and corresponding label vector $Y\in\R^n$. Let $\kappa\in(0,1)$ be a small multiplier. Let $\lambda\in(0,1)$ be the regularization parameter. Given any probability density distribution $q(\cdot):\R^d\to\R_{> 0}$. Let $p(\cdot)$ denotes the Gaussian distribution $\N(0,I_d)$. We initialize the network as $a_r\overset{i.i.d.}{\sim} \unif[\{-1,1\}]$ and $w_r(0)\overset{i.i.d.}{\sim} q$. Then we consider solving the following optimization problem using gradient descent:
\begin{align}\label{problem:nn_lev}
\min_{W} \frac{1}{2}\| Y - \kappa \bar{f}_{\nn}(W,X) \|_2 + \frac{1}{2}\lambda\|W\|_F^2.
\end{align}
where $\bar{f}_{\nn}(W,x) = \frac{1}{\sqrt{m}} \sum_{r=1}^m a_r \sigma (w_r^\top X) \sqrt{\frac{p(w_r(0))}{q(w_r(0))}}$ and $\bar{f}_{\nn}(W,X) = [\bar{f}_{\nn}(W,x_1),\cdots,\bar{f}_{\nn}(W,x_n)]^\top\in\R^n$. We denote $w_r(t),r\in[m]$ as the variable at iteration $t$. We denote
\begin{align}\label{eq:nn_predict_train_lev}
	\bar{u}_{\nn}(t) = \kappa \bar{f}_{\nn}(W(t),X) = \frac{\kappa}{\sqrt{m}} \sum_{r=1}^m a_r \sigma (w_r(t)^\top X) \sqrt{\frac{p(w_r(0))}{q(w_r(0))}}
\end{align}
as the training data predictor at iteration $t$. Given any test data $x_{\test}\in\R^d$, we denote
\begin{align}\label{eq:nn_predict_test_lev}
	\bar{u}_{\nn,\test}(t) = \kappa \bar{f}_{\nn}(W(t),x_{\test}) = \frac{\kappa}{\sqrt{m}} \sum_{r=1}^m a_r \sigma (w_r(t)^\top x_{\test}) \sqrt{\frac{p(w_r(0))}{q(w_r(0))}}
\end{align}
as the test data predictor at iteration $t$.
\end{definition}

\begin{definition}[Reweighed dynamic kernel]\label{def:dynamic_kernel_lev}
Given $W(t) \in \R^{d \times m}$ as the parameters of the neural network at training time $t$ as defined in Definition~\ref{def:f_nn_lev}. For any data $x,z\in\R^d$, we define $\k_t(x,z)\in\R$ as
\begin{align*}
    \bar{\k}_t(x,z)
    = \left\langle \frac{\d \bar{f}_{\nn}(W(t),x)}{\d W(t)},\frac{\d \bar{f}_{\nn}(W(t),z)}{\d W(t)} \right\rangle
\end{align*}
Given training data matrix $X=[x_1,\cdots,x_n]^\top\in\R^{n\times d}$, we define $\bar{H}^{(t)}\in\R^{n\times n}$ as
\begin{align*}
	[\bar{H}(t)]_{i,j} = \bar{\k}_{t}(x_i, x_j)\in\R.
\end{align*}
We denote $\bar{\Phi}(x) = [x^\top\sigma'(w_1(0)^\top x)\sqrt{\frac{p(w_1(0))}{q(w_1(0))}},\cdots,x^\top\sigma'(w_m(0)^\top x)\sqrt{\frac{p(w_r(0))}{q(w_r(0))}}]^\top\in\R^{md}$ as the feature vector corresponding to $\bar{H}(0)$, which satisfies
\begin{align*}
	[\bar{H}(0)]_{i,j} = \bar{\Phi}(x_i)^\top \bar{\Phi}(x_j)
\end{align*}
for all $i,j\in[n]$. We denote $\bar{\Phi}(X)=[\bar{\Phi}(x_1),\cdots,\bar{\Phi}(x_n)]^\top\in\R^{n\times md}$. Further, given a test data $x_{\test}\in\R^d$, we define $\bar{\k}_t(x_{\test},X)\in\R^n$ as
\begin{align*}
    \bar{\k}_t(x_{\test},X) = [\bar{\k}_t(x_{\test},x_1), \cdots, \bar{\k}_t(x_{\test},x_n)]^\top\in\R^n.
\end{align*}
\end{definition}

\begin{definition}[Kernel ridge regression with $\bar{H}(0)$]\label{def:krr_ntk_lev}
Given training data matrix $X=[x_1,\cdots,x_n]^\top\in\R^{n\times d}$ and corresponding label vector $Y\in\R^n$. Let $\bar{\k}_{0}$, $\bar{H}(0)\in\R^{n\times n}$ and $\bar{\Phi}$ be the neural tangent kernel and corresponding feature functions defined as in Definition~\ref{def:dynamic_kernel_lev}. Let $\kappa\in(0,1)$ be a small multiplier. Let $\lambda\in(0,1)$ be the regularization parameter. Then we consider the following neural tangent kernel ridge regression problem:
\begin{align}\label{eq:krr_ntk_lev}
\min_{\beta} \frac{1}{2}\| Y - \kappa \bar{f}_{\ntk}(\beta,X) \|_2^2 + \frac{1}{2}\lambda\|\beta\|_2^2.
\end{align}
where $\bar{f}_{\ntk}(\beta,x) = \bar{\Phi}(x)^\top \beta$ denotes the prediction function is corresponding RKHS and $\bar{f}_{\ntk}(\beta,X) = [\bar{f}_{\ntk}(\beta,x_1),\cdots,\bar{f}_{\ntk}(\beta,x_n)]^\top\in\R^{n}$. Consider the gradient flow of solving problem~\eqref{eq:krr_ntk_lev} with initialization $\beta(0) = 0$.
We denote $\beta(t)\in\R^{md}$ as the variable at iteration $t$. We denote
\begin{align}\label{eq:ntk_predict_train_lev}
	\bar{u}_{\ntk}(t) = \kappa\bar{\Phi}(X)\beta(t)
\end{align} 
as the training data predictor at iteration $t$. Given any test data $x_{\test}\in\R^d$, we denote
\begin{align}\label{eq:ntk_predict_test_lev}
	\bar{u}_{\ntk,\test}(t) = \kappa\bar{\Phi}(x_{\test})^\top\beta(t)
\end{align} 
as the test data predictor at iteration $t$. Note the gradient flow converge the to optimal solution of problem~\eqref{eq:krr_ntk_lev} due to the strongly convexity of the problem. We denote
\begin{align}\label{eq:def_beta_*_lev}
	\bar{\beta}^* = \lim_{t\to\infty} \beta(t) = \kappa (\kappa^2 \bar{\Phi}(X)^\top \bar{\Phi}(X) + \lambda I)^{-1} \bar{\Phi}(X)^\top Y 
\end{align}
and the optimal training data predictor
\begin{align}\label{eq:def_u_*_lev}
	\bar{u}^* = \lim_{t\to\infty} \bar{u}_{\ntk}(t) = \kappa \bar{\Phi}(X)\bar{\beta}^* = \kappa^2 \bar{H}(0)(\kappa^2 \bar{H}(0)+\lambda I)^{-1}Y
\end{align}
and the optimal test data predictor
\begin{align}\label{eq:def_u_test_*_lev}
	\bar{u}_{\test}^* = \lim_{t\to\infty} \bar{u}_{\ntk,\test}(t) = \kappa \bar{\Phi}(x_{\test})^\top \bar{\beta}^* = \kappa^2\bar{\k}_{0}(x_{\test}, X)^\top (\kappa^2 \bar{H}(0)+\lambda I)^{-1}Y.
\end{align}
\end{definition}

\subsection{Leverage score sampling, gradient flow, and linear convergence}\label{sec:lemma_D}

Recall in the main body we connect the leverage score sampling theory and convergence theory of the neural network training by observing 
\begin{align*}
    \k_{\ntk}(x, z) & = \E \left[\left\langle \frac{\partial f_{\nn}(W,x)}{\partial W},\frac{\partial f_{\nn}(W,z)}{\partial W} \right\rangle \right] \\
    & = \E_{w \sim p}[\phi(x,w)^\top\phi(z,w)]
\end{align*}
where $\phi(x,w) = x\sigma'(w^\top x)\in\R^{d}$ and $p(\cdot)$ denotes the probability density function of standard Gaussian distribution $\N(0,I_d)$.
Thus, given regularization parameter $\lambda>0$, we can define the ridge leverage function with respect to $H^{\cts}$ defined in Definition~\ref{def:ntk_phi} as
\begin{align*}
	q_\lambda(w) = p(w) \Tr[\Phi(w)^\top ( H^{\cts} + \lambda I_n )^{-1} \Phi(w)]
\end{align*}
and corresponding probability density function 
\begin{align}\label{eq:lev_dis_lev}
q(w)=\frac{q_{\lambda}(w)}{s_{\lambda}(H^{\cts})}
\end{align}
where $\Phi(w) = [\phi(x_1,w)^\top, \cdots, \phi(x_n,w)^\top]^\top\in\R^{n\times d_2}$.

\begin{lemma}[property of leverage score sampling distribution]\label{lem:property_lev}
Let $p(\cdot)$ denotes the standard Gaussian distribution $\N(0,I_d)$. Let $q(\cdot)$ be defined as in \eqref{eq:lev_dis_lev}. Assume $\Tr[\Phi(w)\Phi(w)^\top]=O(n)$ and $\lambda \leq \Lambda_0/2$. Then for all $w\in\R^d$ we have
\begin{align*}
	c_1p(w) \leq q(w) \leq c_2 p(w)
\end{align*}
where $c_1=O(\frac{1}{n})$ and $c_2=O(\frac{1}{\Lambda_0})$.
\end{lemma}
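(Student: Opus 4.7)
The plan is to read the desired ratio off directly from the definitions: by Definition~\ref{def:leverage_score} and~\eqref{eq:lev_dis_lev},
\[
\frac{q(w)}{p(w)} \;=\; \frac{\Tr[\Phi(w)^\top (H^{\cts}+\lambda I_n)^{-1}\Phi(w)]}{s_\lambda(H^{\cts})},
\]
so everything reduces to two-sided estimates on the numerator and the denominator separately, via the extreme eigenvalues of $H^{\cts}+\lambda I_n$.

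For the upper half, I would apply the elementary inequality $\Tr[\Phi^\top B \Phi]\le \|B\|\cdot\Tr[\Phi\Phi^\top]$ combined with $\|(H^{\cts}+\lambda I_n)^{-1}\|=1/(\Lambda_0+\lambda)$, bounding the numerator by $\Tr[\Phi(w)\Phi(w)^\top]/(\Lambda_0+\lambda)=O(n/\Lambda_0)$ under the stated trace hypothesis. For the denominator, diagonalizing $H^{\cts}$ gives $s_\lambda(H^{\cts})=\sum_{i=1}^n \lambda_i/(\lambda_i+\lambda)$; since $\lambda_i\ge \Lambda_0$ and the hypothesis $\lambda\le \Lambda_0/2$ forces each summand to be at least $2/3$, we obtain $s_\lambda(H^{\cts})\ge 2n/3$. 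Dividing yields $q(w)/p(w)\le O(1/\Lambda_0)$, which is the claimed $c_2$.

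For the lower half I would reverse both estimates. The inequality $\Tr[\Phi^\top B\Phi]\ge \lambda_{\min}(B)\cdot\Tr[\Phi\Phi^\top]$ with $\lambda_{\min}((H^{\cts}+\lambda I_n)^{-1})=1/(\|H^{\cts}\|+\lambda)$ together with $\|H^{\cts}\|\le \Tr[H^{\cts}]=O(n)$ (the NTK diagonal is $O(1)$ under unit-norm data) gives the numerator $\ge \Omega(\Tr[\Phi(w)\Phi(w)^\top]/n)$, which paired with the trivial $s_\lambda(H^{\cts})\le n$ becomes $q(w)/p(w)\ge \Omega(\Tr[\Phi(w)\Phi(w)^\top]/n^2)$.

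The main obstacle is this last step. The bound is only $\Omega(1/n)$ when $\Tr[\Phi(w)\Phi(w)^\top]=\Theta(n)$, yet for the ReLU feature $\phi(x,w)=x\sigma'(w^\top x)$ this trace literally counts the activated training indices $\#\{i : w^\top x_i \ge 0\}$ and can vanish on a set of positive $p$-measure. Proving the pointwise lower bound as stated therefore requires reinterpreting the standing hypothesis $\Tr[\Phi(w)\Phi(w)^\top]=O(n)$ as the two-sided $\Theta(n)$ bound: the upper half of the chain goes through unconditionally, but the lower half is only defensible on the ``balanced'' regime where a constant fraction of the training data is active at $w$.
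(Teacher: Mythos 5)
Your proposal follows exactly the paper's own argument: upper-bound $q_\lambda(w)$ via $\|(H^{\cts}+\lambda I_n)^{-1}\|\le 1/(\Lambda_0+\lambda)$ together with the hypothesis on $\Tr[\Phi(w)\Phi(w)^\top]$, lower-bound it via $\lambda_{\min}\big((H^{\cts}+\lambda I_n)^{-1}\big)\ge 1/(n+\lambda)$, and normalize by $s_\lambda(H^{\cts})$. Your accounting is in fact slightly more explicit than the paper's, since you spell out $s_\lambda(H^{\cts})\ge 2n/3$ (using $\lambda\le\Lambda_0/2$), a step the paper uses silently when it asserts $c_2=O(1/\Lambda_0)$.

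Your worry about the lower half is also correct, and it applies verbatim to the paper's own proof. The paper derives
\begin{align*}
\frac{q(w)}{p(w)} \;\ge\; \frac{\Tr[\Phi(w)^\top\Phi(w)]}{(n+\lambda)\,s_\lambda(H^{\cts})}
\end{align*}
and then declares $c_1=O(1/n)$, but for the ReLU features $\phi(x,w)=x\,\sigma'(w^\top x)$ one has $\Tr[\Phi(w)\Phi(w)^\top]=\#\{i:w^\top x_i\ge 0\}$ under the unit-norm data assumption, which vanishes on a set of $w$ of positive Gaussian measure, so no $w$-uniform positive $c_1$ exists as stated. The hypothesis $\Tr[\Phi(w)\Phi(w)^\top]=O(n)$ is also vacuous as written (it always holds, since the trace is $\le n$) and can only do the intended work if read as a two-sided $\Theta(n)$ bound. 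So your reconstruction is faithful, and the gap you flag is a genuine defect of the lemma and its proof in the paper, not an artifact of your reading.
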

\begin{proof}
	Note by assumption $s_{\lambda}(H^{\cts}) = \Tr[(H^{\cts} + \lambda I_n)^{-1}H^{\cts}] = O(n)$. Further, note for any $w\in\R^d$,
	\begin{align*}
		q_\lambda(w) = & ~ p(w) \Tr[\Phi(w)^\top ( H^{\cts} + \lambda I_n )^{-1} \Phi(w)]\\
		\leq & ~ p(w) \Tr[\Phi(w)^\top \Phi(w)]\cdot\frac{1}{\Lambda_0+\lambda}\\
		= & ~ p(w) \Tr[\sum_{i=1}^n \phi(x_i,w) \phi(x_i, w)^\top]\cdot\frac{1}{\Lambda_0+\lambda}\\
		= & ~ p(w) \sum_{i=1}^n \Tr[ \phi(x_i,w) \phi(x_i, w)^\top]\cdot\frac{1}{\Lambda_0+\lambda}\\
		= & ~ p(w) \sum_{i=1}^n \Tr[ \phi(x_i,w)^\top \phi(x_i, w)]\cdot\frac{1}{\Lambda_0+\lambda}\\
		= & ~ p(w) \sum_{i=1}^n \|x_i\|_2^2\sigma'(w^\top x_i)^2 \cdot\frac{1}{\Lambda_0+\lambda}\\
		\leq & ~ p(w)\frac{n}{\Lambda_0+\lambda}
	\end{align*}
	where the first step follows from $H^{\cts} \succeq \Lambda_0 I_n$, the second step follows from the definition of $\Phi$, the third step follows from the linearity of trace operator, the fourth step follows from $\Tr(AB)=\Tr(BA)$, the fifth step follows from the definition of $\phi$, and the last step follows from $\|x_i\|_2 = 1$ and $\sigma'(\cdot)\leq 1$.

	Thus, combining above facts, we have 
	\begin{align*}
		q(w) = \frac{q_{\lambda}(w)}{s_{\lambda}(H^{\cts})} \leq p(w)\frac{n}{(\Lambda_0+\lambda)s_{\lambda}(H^{\cts})} = c_2p(w)
	\end{align*}
	hold for all $w\in\R^d$, where $c_2 = O(\frac{1}{\Lambda_0})$.

	Similarly, note $H^{\cts}\preceq n I_n$, we have 
	\begin{align*}
		q_\lambda(w) \geq p(w)\Tr[\Phi(w)^\top \Phi(w)]\cdot \frac{1}{n+\lambda}
	\end{align*}
	which implies
	\begin{align*}
		q(w) = \frac{q_{\lambda}(w)}{s_{\lambda}(H^{\cts})} \geq p(w)\frac{\Tr[\Phi(w)^\top \Phi(w)]}{(n+\lambda)s_{\lambda}(H^{\cts})} = c_1 p(w)
	\end{align*}
	hold for all $w\in\R^d$, where $c_1 = O(\frac{1}{n})$. Combining above results, we complete the proof.
\end{proof}
\begin{lemma}[leverage score sampling]\label{lem:leverage_score_sampling}
Let $\bar{H}(t)$, $H^{\cts}$ be the kernel defined as in Definition~\ref{def:dynamic_kernel_lev} and Definition~\ref{def:ntk_phi}. Let $\Lambda_0 > 0$ be defined as in Definition~\ref{def:ntk_phi}. Let $p(\cdot)$ denotes the probability density function for Gaussian $\N(0,I_d)$. Let $q(\cdot)$ denotes the leverage sampling distribution with respect to $p(\cdot)$, $\bar{H}(0)$ and $\lambda$ defined in Definition~\ref{def:lev_distribution}. Let $\Delta\in(0,1/4)$. Then we have
\begin{align*}
	\E_{q}[\bar{H}(0)] = H^{\cts}.
\end{align*}
By choosing $m \geq \wt{O}(\Delta^{-2} s_{\lambda}(H^{\cts})$, with probability at least $1-\delta$,
\begin{align*}
	(1-\Delta)(H^{\cts} + \lambda I) \preceq \bar{H}(0) + \lambda I \preceq (1+\Delta)(H^{\cts} + \lambda I)
\end{align*}
Further, if $\lambda \leq \Lambda_0$, we have with probability at least $1-\delta$,
\begin{align*}
	\bar{H}(0) \succeq \frac{\Lambda_0}{2} I_n.
\end{align*}
Here $\wt{O}(\cdot)$ hides $\poly\log(s_{\lambda}(\bar{H}(0))/\delta)$.
\end{lemma}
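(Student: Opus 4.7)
The plan is to derive all three conclusions from the generalized leverage score sampling theorem (Theorem~\ref{thm:leverage_score_intro}) specialized to the NTK feature map $\phi(x,w) = x\sigma'(w^\top x)$ with base density $p = \N(0,I_d)$. With this choice, the modified random feature matrix $\bar{\Psi}\bar{\Psi}^\top$ of Definition~\ref{def:modify_random_feature_lev} coincides entrywise with $\bar{H}(0)$ from Definition~\ref{def:dynamic_kernel_lev}, and the ridge leverage function of Definition~\ref{def:leverage_score_intro} applied to $K = H^{\cts}$ agrees with the $q_\lambda(\cdot)$ used to construct $q$. These identifications are the only conceptual ingredient; after that, each claim follows from a short computation.

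First, I would verify the mean identity $\E_q[\bar{H}(0)] = H^{\cts}$ by a direct importance-sampling calculation. Each entry $[\bar{H}(0)]_{i,j}$ is an empirical mean over $w_1(0),\dots,w_m(0)\sim q$ of the random variable $\frac{p(w)}{q(w)}\, x_i^\top x_j\, \sigma'(w^\top x_i)\sigma'(w^\top x_j)$, and taking expectation under $q$ cancels the importance weight, leaving $\E_{w\sim p}[x_i^\top x_j \sigma'(w^\top x_i)\sigma'(w^\top x_j)]$. Because $\sigma'(z) = \mathbf{1}[z\ge 0]$ for the ReLU, this matches $H^{\cts}_{i,j}$ by Definition~\ref{def:ntk_phi}.

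Next, I would apply Theorem~\ref{thm:leverage_score_intro} with the tight choice $\tilde{q}_\lambda = q_\lambda$ (so $s_{\tilde{q}_\lambda} = s_\lambda(H^{\cts})$), accuracy parameter $\Delta$, and failure probability $\delta$. The width requirement $m \geq 3\Delta^{-2} s_\lambda(H^{\cts}) \ln(16\, s_\lambda(H^{\cts})^2/\delta)$ is exactly the hypothesis $m \geq \wt{O}(\Delta^{-2} s_\lambda(H^{\cts}))$ with the logarithmic factor absorbed into $\wt{O}(\cdot)$, and the theorem's conclusion reads precisely $(1-\Delta)(H^{\cts}+\lambda I_n) \preceq \bar{H}(0)+\lambda I_n \preceq (1+\Delta)(H^{\cts}+\lambda I_n)$ with probability at least $1-\delta$. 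The third claim now reduces to a one-line linear algebra argument: the lower inequality rearranges to $\bar{H}(0) \succeq (1-\Delta) H^{\cts} - \Delta\lambda I_n$, and invoking $H^{\cts} \succeq \Lambda_0 I_n$ together with $\lambda \leq \Lambda_0$ yields $\bar{H}(0) \succeq (1-2\Delta)\Lambda_0 I_n \succeq \tfrac{\Lambda_0}{2} I_n$ since $\Delta \in (0,1/4)$.

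The only non-mechanical step is the bookkeeping in the first paragraph, where one must match $(d_1,d_2,\phi,p)$ in Definitions~\ref{def:leverage_score_intro} and~\ref{def:modify_random_feature_lev} to the NTK setting and check that the importance weights $\sqrt{p(w_r(0))/q(w_r(0))}$ in Definition~\ref{def:dynamic_kernel_lev} produce the same scaling as $\bar{\varphi}$ in Definition~\ref{def:modify_random_feature_lev}. There is no new concentration argument to design: the matrix Bernstein bound required is already encapsulated inside Theorem~\ref{thm:leverage_score_intro} (via Lemma~\ref{lem:con_lev}), so once the correspondence is established the proof is essentially a citation plus three lines of algebra.
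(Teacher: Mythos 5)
Your proposal is correct and follows essentially the same route as the paper: you compute the mean by importance-sampling cancellation, invoke Theorem~\ref{thm:leverage_score_intro} (with the tight choice $\tilde{q}_\lambda = q_\lambda$) for the spectral approximation, and extract the $\frac{\Lambda_0}{2} I_n$ lower bound by the same one-line rearrangement using $H^{\cts} \succeq \Lambda_0 I_n$, $\lambda \leq \Lambda_0$, and $\Delta < 1/4$. The only difference from the paper's write-up is that you make the identification of $\bar{H}(0)$ with $\bar{\Psi}\bar{\Psi}^\top$ and the matching of the feature map $\phi(x,w) = x\sigma'(w^\top x)$ to Definitions~\ref{def:leverage_score_intro}--\ref{def:modify_random_feature_lev} fully explicit, which is more careful bookkeeping rather than a different argument.
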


\begin{proof}
Note for any $i,j\in[n]$,
\begin{align*}
	[\E_{q}[\bar{H}(0)]]_{i,j} = & ~ \E_{q} [\bar{\Phi}(x_i)^\top \bar{\Phi}(x_j)]\\
	= & ~ \E_{q}[\langle \frac{\d \bar{f}_{\nn}(W(0),x_i)}{\d W},\frac{\d \bar{f}_{\nn}(W(0),x_j)}{\d W} \rangle] \\
	= & ~ \E_{p}[\langle \frac{\d {f}_{\nn}(W(0),x_i)}{\d W},\frac{\d {f}_{\nn}(W(0),x_j)}{\d W} \rangle] \\
	= & ~ [H^{\cts}]_{i,j}
\end{align*}
where the first step follows from the definition of $\bar{H}(0)$, the second step follows the definition of $\bar{\Phi}$, the third step calculates the expectation, and the last step follows the definition of $H^{\cts}$.

Also, by applying Theorem~\ref{thm:leverage_score} directly, we have with probability at least $1-\delta$,
\begin{align*}
	(1-\Delta)(H^{\cts} + \lambda I) \preceq \bar{H}(0) + \lambda I \preceq (1+\Delta)(H^{\cts} + \lambda I)
\end{align*}
if $m \geq \wt{O}(\Delta^{-2} s_{\lambda}(\bar{H}(0))$.

Since 
$$\bar{H}(0) + \lambda I \succeq (1-\Delta)(H^{\cts} + \lambda I) \succeq (1-\Delta)(\Lambda_0 + \lambda) I_n ,$$
we have 
$$\bar{H}(0) \succeq [(1-\Delta)(\Lambda_0 + \lambda) - \lambda] I_n \succeq \frac{\Lambda_0}{2} I_n,$$
which completes the proof.
\end{proof}

\begin{lemma}[Gradient flow of kernel ridge regression, parallel to Lemma~\ref{lem:gradient_flow_of_krr}]\label{lem:gradient_flow_of_krr_lev}
Given training data matrix $X\in\R^{n\times d}$ and corresponding label vector $Y\in\R^n$. Let $\bar{f}_{\ntk}$, $\beta(t)\in\R^{md}$, $\kappa\in(0,1)$ and $\bar{u}_{\ntk}(t)\in\R^n$ be defined as in Definition~\ref{def:krr_ntk_lev}. Let $\bar{\Phi},~\bar{\k}_t$ be defined as in Definition~\ref{def:dynamic_kernel_lev}. Then for any data $z\in\R^d$, we have
\begin{align*}
	\frac{\d \bar{f}_{\ntk}(\beta(t), z)}{\d t} = \kappa \cdot \bar{\k}_{0}(z, X)^\top ( Y - \bar{u}_{\ntk}(t) ) - \lambda \cdot \bar{f}_{\ntk}(\beta(t), z).
\end{align*}
\end{lemma}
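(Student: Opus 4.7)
The plan is to mirror the proof of Lemma~\ref{lem:gradient_flow_of_krr} almost verbatim, since the only substantive change is that the fixed feature map $\Phi$ is replaced by the reweighed finite-dimensional feature map $\bar{\Phi}$ from Definition~\ref{def:dynamic_kernel_lev}, while the structure of the kernel ridge regression objective~\eqref{eq:krr_ntk_lev} is identical to~\eqref{eq:krr}. So the whole argument is a chain-rule computation plus one algebraic identity relating inner products of features to kernel entries.

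First, I would write down the gradient flow of the objective $L(t)=\tfrac{1}{2}\|Y-\bar{u}_{\ntk}(t)\|_2^2+\tfrac{1}{2}\lambda\|\beta(t)\|_2^2$ with respect to the parameter $\beta(t)\in\R^{md}$. Using $\bar{u}_{\ntk}(t)=\kappa\bar{\Phi}(X)\beta(t)$ (which is linear in $\beta$) and differentiating gives
\[
\frac{\d \beta(t)}{\d t} \;=\; -\frac{\partial L}{\partial \beta} \;=\; \kappa\,\bar{\Phi}(X)^\top\bigl(Y-\bar{u}_{\ntk}(t)\bigr)-\lambda\,\beta(t).
\]
Next, since $\bar{f}_{\ntk}(\beta,z)=\bar{\Phi}(z)^\top\beta$ is linear in $\beta$, the chain rule yields
\[
\frac{\d \bar{f}_{\ntk}(\beta(t),z)}{\d t} \;=\; \bar{\Phi}(z)^\top\frac{\d \beta(t)}{\d t} \;=\; \kappa\,\bar{\Phi}(z)^\top\bar{\Phi}(X)^\top\bigl(Y-\bar{u}_{\ntk}(t)\bigr)-\lambda\,\bar{\Phi}(z)^\top\beta(t).
\]

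Finally, I need the identity $\bar{\Phi}(X)\bar{\Phi}(z)=\bar{\k}_{0}(z,X)\in\R^n$. This is precisely what Definition~\ref{def:dynamic_kernel_lev} asserts: the $i$-th entry is $\bar{\Phi}(x_i)^\top\bar{\Phi}(z)$, and the inner product of the features $x\mapsto x\sigma'(w_r(0)^\top x)\sqrt{p(w_r(0))/q(w_r(0))}$ stacked over $r\in[m]$ reproduces $\langle\partial_W\bar f_{\nn}(W(0),x_i),\partial_W\bar f_{\nn}(W(0),z)\rangle$, which is $[\bar H(0)]_{i,\cdot}$ evaluated against $z$. Substituting this identity together with $\bar{\Phi}(z)^\top\beta(t)=\bar{f}_{\ntk}(\beta(t),z)$ produces the claimed equation.

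There is essentially no hard step here; the only place one has to be slightly careful is distinguishing the \emph{feature-map} kernel $\bar{\k}_0(z,X)$, which is defined via $\bar{\Phi}$ at the frozen initialization $W(0)$, from the dynamic kernel $\bar{\k}_t$ at later times. In the KRR setting the feature map is frozen, so only $\bar{\k}_0$ appears, matching the statement. I would close by noting the parallel to Corollary~\ref{cor:ntk_gradient}: specializing $z=x_i$ and stacking recovers $\d\bar u_{\ntk}(t)/\d t=\kappa^2\bar H(0)(Y-\bar u_{\ntk}(t))-\lambda\,\bar u_{\ntk}(t)$, which will be the form actually used downstream in proving Theorem~\ref{thm:equivalence_train_lev_intro}.
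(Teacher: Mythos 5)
Your proposal is correct and mirrors the paper's own proof essentially line-for-line: compute the gradient flow of $\beta(t)$ for the regularized objective, apply the chain rule to $\bar f_{\ntk}(\beta(t),z)=\bar\Phi(z)^\top\beta(t)$, and substitute the identity $\bar\Phi(X)\bar\Phi(z)=\bar\k_0(z,X)$. Your side remark about why only the \emph{frozen} kernel $\bar\k_0$ (not $\bar\k_t$) appears is accurate and captures the one conceptual subtlety.
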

\begin{proof}
Denote $L(t)= \frac{1}{2}\|Y-\bar{u}_{\ntk}(t)\|_2^2+\frac{1}{2}\lambda\|\beta(t)\|_2^2$. By the rule of gradient descent, we have
\begin{align*}
	\frac{\d \beta(t)}{\d t}=-\frac{\d L}{\d \beta}=\kappa \bar{\Phi}(X)^\top(Y-\bar{u}_{\ntk}(t))-\lambda\beta(t).
\end{align*}
Thus we have
\begin{align*}
	\frac{\d \bar{f}_{\ntk}(\beta(t), z)}{\d t}
	= & ~ \frac{\d \bar{f}_{\ntk}(\beta(t), z)}{\d \beta(t)}\frac{\d \beta(t)}{\d t} \\
	= & ~ \bar{\Phi}(z)^\top (\kappa\bar{\Phi}(X)^\top(Y-\bar{u}_{\ntk}(t))-\lambda\beta(t)) \\
	= & ~ \kappa\bar{\k}_{0}(z, X)^\top (Y-\bar{u}_{\ntk}(t))-\lambda\bar{\Phi}(z)^\top\beta(t) \\
	= & ~ \kappa\bar{\k}_{0}(z, X)^\top (Y-\bar{u}_{\ntk}(t))-\lambda \bar{f}_{\ntk}(\beta(t), z),
\end{align*}
where the first step is due to chain rule, the second step follows from the fact $ \d \bar{f}_{\ntk}(\beta, z)/ \d \beta=\bar{\Phi}(z)$, the third step is due to the definition of the kernel $\bar{\k}_{0}(z, X)=\bar{\Phi}(X)\bar{\Phi}(z) \in \R^{n}$, and the last step is due to the definition of $\bar{f}_{\ntk}(\beta(t), z)\in\R$.
\end{proof}

\begin{corollary}[Gradient of prediction of kernel ridge regression, parallel to Corollary~\ref{cor:ntk_gradient}]\label{cor:ntk_gradient_lev}
Given training data matrix $X=[x_1,\cdots,x_n]^\top \in \R^{n\times d}$ and corresponding label vector $Y \in \R^n$. Given a test data $x_{\test}\in\R^d$. Let $\bar{f}_{\ntk}$, $\beta(t)\in\R^{md}$, $\kappa\in(0,1)$, $\bar{u}_{\ntk}(t)\in\R^n$ and $\bar{u}_{\ntk,\test}(t)\in\R$ be defined as in Definition~\ref{def:krr_ntk_lev}. Let $\bar{\k}_{t},~\bar{H}(0)\in\R^{n\times n}$ be defined as in Definition~\ref{def:dynamic_kernel_lev}. Then we have 
\begin{align*}
	\frac{\d \bar{u}_{\ntk}(t)}{\d t} & = \kappa^2 \bar{H}(0) ( Y - \bar{u}_{\ntk}(t) ) - \lambda \cdot \bar{u}_{\ntk}(t)\\
	\frac{\d \bar{u}_{\ntk, \test}(t)}{\d t} & = \kappa^2 \bar{\k}_{0}(x_{\test}, X)^\top  ( Y - \bar{u}_{\ntk}(t) ) - \lambda \cdot \bar{u}_{\ntk, \test}(t).
\end{align*}
\end{corollary}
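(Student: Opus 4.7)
The statement is a direct corollary of Lemma~\ref{lem:gradient_flow_of_krr_lev}, which already gives the gradient flow of $\bar{f}_{\ntk}(\beta(t), z)$ for an arbitrary data point $z \in \R^d$. The plan is simply to instantiate $z$ at the training points $x_1, \ldots, x_n$ and at the test point $x_{\test}$, then repackage the resulting scalar ODEs into the stated vector-matrix form.

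First, I would apply Lemma~\ref{lem:gradient_flow_of_krr_lev} with $z = x_i$ for each $i \in [n]$, obtaining
\[
\frac{\d \bar{f}_{\ntk}(\beta(t), x_i)}{\d t} = \kappa \bar{\k}_0(x_i, X)^\top (Y - \bar{u}_{\ntk}(t)) - \lambda \bar{f}_{\ntk}(\beta(t), x_i).
\]
Multiplying both sides by $\kappa$ and using the definition $[\bar{u}_{\ntk}(t)]_i = \kappa \bar{f}_{\ntk}(\beta(t), x_i)$ from Eq.~\eqref{eq:ntk_predict_train_lev}, the left-hand side becomes $[\d \bar{u}_{\ntk}(t)/\d t]_i$. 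Stacking the $n$ equations into a vector and recognizing that $[\bar{H}(0)]_{i,:} = \bar{\k}_0(x_i, X)^\top$ (since $\bar{H}(0)_{i,j} = \bar{\Phi}(x_i)^\top \bar{\Phi}(x_j) = \bar{\k}_0(x_i, x_j)$ by Definition~\ref{def:dynamic_kernel_lev}) gives the first identity.

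For the second identity, I would repeat the same reasoning but with $z = x_{\test}$: Lemma~\ref{lem:gradient_flow_of_krr_lev} yields
\[
\frac{\d \bar{f}_{\ntk}(\beta(t), x_{\test})}{\d t} = \kappa \bar{\k}_0(x_{\test}, X)^\top (Y - \bar{u}_{\ntk}(t)) - \lambda \bar{f}_{\ntk}(\beta(t), x_{\test}).
\]
Multiplying by $\kappa$ and invoking the definition $\bar{u}_{\ntk,\test}(t) = \kappa \bar{f}_{\ntk}(\beta(t), x_{\test})$ from Eq.~\eqref{eq:ntk_predict_test_lev} produces the second claim.

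There is no real obstacle here: the work has already been done in Lemma~\ref{lem:gradient_flow_of_krr_lev}, and the corollary amounts only to bookkeeping. The only minor point to be careful about is that $\bar{H}(0)$ is symmetric, so there is no ambiguity between rows and columns when assembling $[\bar{H}(0)]_{i,:}$ into $\bar{H}(0)$ acting on $(Y - \bar{u}_{\ntk}(t))$; this follows immediately from $\bar{H}(0)_{i,j} = \bar{\Phi}(x_i)^\top \bar{\Phi}(x_j) = \bar{H}(0)_{j,i}$.
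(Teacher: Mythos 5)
Your proposal is correct and follows essentially the same route as the paper: instantiate Lemma~\ref{lem:gradient_flow_of_krr_lev} at $z = x_i$ for $i \in [n]$ and at $z = x_{\test}$, multiply through by $\kappa$, and stack using the definitions of $\bar{u}_{\ntk}(t)$, $\bar{u}_{\ntk,\test}(t)$, and $\bar{H}(0)$. The remark on symmetry of $\bar{H}(0)$ is a correct (if minor) clarification of the row-versus-column bookkeeping that the paper handles by writing $[\bar{H}(0)]_{:,i} = \bar{\k}_0(x_i, X)$.
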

\begin{proof}
Plugging in $z = x_i\in\R^d$ in Lemma~\ref{lem:gradient_flow_of_krr_lev}, we have
\begin{align*}
	\frac{\d \bar{f}_{\ntk}(\beta(t), x_i)}{\d t} = \kappa \bar{\k}_{0}(x_i, X)^\top ( Y - \bar{u}_{\ntk}(t) ) - \lambda \cdot \bar{f}_{\ntk}(\beta(t), x_i).
\end{align*}
Note $[\bar{u}_{\ntk}(t)]_i = \kappa \bar{f}_{\ntk}(\beta(t), x_i)$ and $[\bar{H}(0)]_{:,i} = \bar{\k}_{0}(x_i, X)$, so writing all the data in a compact form, we have
\begin{align*}
	\frac{\d \bar{u}_{\ntk}(t)}{\d t} = \kappa^2 \bar{H}(0) ( Y - \bar{u}_{\ntk}(t) ) - \lambda \cdot \bar{u}_{\ntk}(t).
\end{align*}
Plugging in data $z = x_{\test}\in\R^d$ in Lemma~\ref{lem:gradient_flow_of_krr_lev}, we have
\begin{align*}
	\frac{\d \bar{f}_{\ntk}(\beta(t), x_{\test})}{\d t} = \kappa \bar{\k}_{0}(x_{\test}, X)^\top ( Y - \bar{u}_{\ntk}(t) ) - \lambda \cdot \bar{f}_{\ntk}(\beta(t), x_{\test}).
\end{align*}
Note by definition, $\bar{u}_{\ntk,\test}(t) = \kappa \bar{f}_{\ntk}(\beta(t), x_{\test}) \in \R$, so we have
\begin{align*}
	\frac{\d \bar{u}_{\ntk, \test}(t)}{\d t} = \kappa^2 \bar{\k}_{0}(x_{\test}, X)^\top ( Y - \bar{u}_{\ntk}(t) ) - \lambda \cdot \bar{u}_{\ntk, \test}(t).
\end{align*}
\end{proof}

\begin{lemma}[Linear convergence of kernel ridge regression, parallel to Lemma~\ref{lem:linear_converge_krr}]\label{lem:linear_converge_krr_lev}
Given training data matrix $X=[x_1,\cdots,x_n]^\top \in \R^{n\times d}$ and corresponding label vector $Y\in\R^n$. Let $\kappa\in(0,1)$, $\bar{u}_{\ntk}(t) \in \R^n$ and $\bar{u}^* \in \R^n$ be defined as in Definition~\ref{def:krr_ntk_lev}. Let $\Lambda_0 > 0$ be defined as in Definition~\ref{def:ntk_phi}. Let $\lambda > 0$ be the regularization parameter. Then we have
\begin{align*}
\frac{\d \|\bar{u}_{\ntk}(t)-\bar{u}^*\|_2^2}{\d t} \le - (\kappa^2 \Lambda_0+\lambda) \|\bar{u}_{\ntk}(t)-\bar{u}^*\|_2^2.
\end{align*}
Further, we have
\begin{align*}
	\|u_{\ntk}(t)-u^*\|_2 \leq e^{-(\kappa^2 \Lambda_0+\lambda)t/2} \|u_{\ntk}(0)-u^*\|_2.
\end{align*}

\end{lemma}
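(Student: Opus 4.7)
The plan is to mirror the proof of Lemma~\ref{lem:linear_converge_krr} essentially verbatim, with $\bar{H}(0)$ playing the role of $H^{\cts}$, and the only real new ingredient being the spectral lower bound on $\bar{H}(0)$ supplied by the leverage score sampling guarantee in Lemma~\ref{lem:leverage_score_sampling}. The weaker leading constant (namely $(\kappa^2\Lambda_0+\lambda)$ instead of $2(\kappa^2\Lambda_0+\lambda)$) is exactly the factor-$2$ degradation incurred when we pass from $\lambda_{\min}(H^{\cts}) \geq \Lambda_0$ to $\lambda_{\min}(\bar{H}(0)) \geq \Lambda_0/2$ under the sampling event.

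First I would reproduce the algebraic identity that powers the convergence argument: starting from the closed-form $\bar{u}^* = \kappa^2 \bar{H}(0)(\kappa^2 \bar{H}(0)+\lambda I_n)^{-1} Y$ (see Eq.~\eqref{eq:def_u_*_lev}), the same three-line manipulation as in Eq.~\eqref{eq:322_1} yields
\begin{align*}
    \kappa^2 \bar{H}(0)(Y-\bar{u}^*) = \lambda \bar{u}^*.
\end{align*}
Next I would differentiate $\|\bar{u}_{\ntk}(t)-\bar{u}^*\|_2^2$ using the gradient flow from Corollary~\ref{cor:ntk_gradient_lev}, expand $\bar{u}_{\ntk}(t)-Y = (\bar{u}_{\ntk}(t)-\bar{u}^*) - (Y-\bar{u}^*)$, and substitute the identity above to cancel the $\lambda \bar{u}^*$ term against $\lambda \bar{u}_{\ntk}(t)$. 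The result, as in Eq.~\eqref{eq:322_2}, is the clean quadratic form
\begin{align*}
    \frac{\d \|\bar{u}_{\ntk}(t)-\bar{u}^*\|_2^2}{\d t} = -2(\bar{u}_{\ntk}(t)-\bar{u}^*)^\top \bigl(\kappa^2 \bar{H}(0)+\lambda I_n\bigr)(\bar{u}_{\ntk}(t)-\bar{u}^*).
\end{align*}

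Then I would invoke Lemma~\ref{lem:leverage_score_sampling} (conditioning on its high-probability event, with $\lambda \leq \Lambda_0$ and $m$ sufficiently large) to conclude $\bar{H}(0) \succeq \tfrac{\Lambda_0}{2} I_n$, so that $\kappa^2 \bar{H}(0)+\lambda I_n \succeq \tfrac{1}{2}(\kappa^2 \Lambda_0 + 2\lambda) I_n \succeq \tfrac{1}{2}(\kappa^2 \Lambda_0+\lambda) I_n$. Plugging this in yields the first claim
\begin{align*}
    \frac{\d \|\bar{u}_{\ntk}(t)-\bar{u}^*\|_2^2}{\d t} \leq -(\kappa^2\Lambda_0+\lambda)\|\bar{u}_{\ntk}(t)-\bar{u}^*\|_2^2.
\end{align*}
Finally, for the exponential decay, observe that $\tfrac{\d}{\d t}\bigl(e^{(\kappa^2\Lambda_0+\lambda)t}\|\bar{u}_{\ntk}(t)-\bar{u}^*\|_2^2\bigr) \leq 0$, so $e^{(\kappa^2\Lambda_0+\lambda)t}\|\bar{u}_{\ntk}(t)-\bar{u}^*\|_2^2$ is non-increasing, and taking square roots yields $\|\bar{u}_{\ntk}(t)-\bar{u}^*\|_2 \leq e^{-(\kappa^2\Lambda_0+\lambda)t/2}\|\bar{u}_{\ntk}(0)-\bar{u}^*\|_2$, matching the statement. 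There is no genuine obstacle here, only the bookkeeping subtlety of being explicit that everything is conditioned on the leverage-score concentration event from Lemma~\ref{lem:leverage_score_sampling} that guarantees $\bar{H}(0)\succeq \tfrac{\Lambda_0}{2}I_n$; this is precisely why the rate loses a factor of two relative to Lemma~\ref{lem:linear_converge_krr}.
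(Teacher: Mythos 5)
Your proof matches the paper's argument step for step: the identity $\kappa^2\bar{H}(0)(Y-\bar{u}^*)=\lambda\bar{u}^*$, the reduction to the quadratic form $-2(\bar{u}_{\ntk}(t)-\bar{u}^*)^\top(\kappa^2\bar{H}(0)+\lambda I_n)(\bar{u}_{\ntk}(t)-\bar{u}^*)$, the spectral lower bound $\bar{H}(0)\succeq\tfrac{\Lambda_0}{2}I_n$ from Lemma~\ref{lem:leverage_score_sampling}, and the integrating-factor argument for the exponential decay. If anything you are more explicit than the paper, which simply cites Lemma~\ref{lem:leverage_score_sampling} for the final inequality without spelling out that $\kappa^2\bar{H}(0)+\lambda I_n\succeq\tfrac{1}{2}(\kappa^2\Lambda_0+\lambda)I_n$ is what accounts for the loss of the factor $2$ relative to Lemma~\ref{lem:linear_converge_krr}.
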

\begin{proof}
Let $\bar{H}(0) \in \R^{n \times n}$ be defined as in Definition~\ref{def:dynamic_kernel_lev}. Then
\begin{align}\label{eq:322_1_lev}
	\kappa^2 \bar{H}(0)(Y-\bar{u}^*) 
	= & ~ \kappa^2 \bar{H}(0)(Y-\kappa^2 \bar{H}(0)(\kappa^2 \bar{H}(0)+\lambda I_n)^{-1}Y) \notag \\
	= & ~ \kappa^2 \bar{H}(0))(I_n-\kappa^2 \bar{H}(0))(\kappa^2 \bar{H}(0)+\lambda I)^{-1})Y \notag \\
	= & ~ \kappa^2 \bar{H}(0)(\kappa^2 \bar{H}(0)+\lambda I_n - \kappa^2 \bar{H}(0))(\kappa^2 \bar{H}(0)+\lambda I_n)^{-1} Y \notag \\
	= & ~ \kappa^2 \lambda \bar{H}(0)(\kappa^2 \bar{H}(0)+\lambda I_n)^{-1} Y \notag \\
	= & ~ \lambda \bar{u}^*,
\end{align}
where the first step follows the definition of $\bar{u}^* \in \R^n$, the second to fourth step simplify the formula, and the last step use the definition of $\bar{u}^* \in \R^n$ again.
So we have
\begin{align}\label{eq:322_2_lev}
	\frac{\d \|\bar{u}_{\ntk}(t)-\bar{u}^*\|_2^2}{\d t} 
	= & ~ 2(\bar{u}_{\ntk}(t)-\bar{u}^*)^\top \frac{\d \bar{u}_{\ntk}(t)}{\d t} \notag\\
	= & ~ -2\kappa^2 (\bar{u}_{\ntk}(t)-\bar{u}^*)^\top \bar{H}(0) (\bar{u}_{\ntk}(t) - Y) -2\lambda(\bar{u}_{\ntk}(t)-\bar{u}^*)^\top \bar{u}_{\ntk}(t) \notag\\
	= & ~ -2\kappa^2 (\bar{u}_{\ntk}(t)-\bar{u}^*)^\top \bar{H}(0) (\bar{u}_{\ntk}(t) - \bar{u}^*) + 2\kappa^2 (\bar{u}_{\ntk}(t)-\bar{u}^*)^\top \bar{H}(0)) (Y-\bar{u}^*) \notag\\
	& ~ -2\lambda(\bar{u}_{\ntk}(t)-\bar{u}^*)^\top \bar{u}_{\ntk}(t)\notag\\
	= & ~ -2\kappa^2 (\bar{u}_{\ntk}(t)-\bar{u}^*)^\top \bar{H}(0) (\bar{u}_{\ntk}(t) -\bar{u}^*) + 2\lambda(\bar{u}_{\ntk}(t)-\bar{u}^*)^\top \bar{u}^* \notag\\
	& ~ -2\lambda(\bar{u}_{\ntk}(t)-\bar{u}^*)^\top \bar{u}_{\ntk}(t) \notag\\
	= & ~ -2(\bar{u}_{\ntk}(t)-\bar{u}^*)^\top (\kappa^2 \bar{H}(0)+\lambda I) (\bar{u}_{\ntk}(t) - \bar{u}^*) \notag\\
	\leq & ~ -(\kappa^2 \Lambda_0 + \lambda)\|\bar{u}_{\ntk}(t)-\bar{u}^*\|_2^2,
\end{align}
where the first step follows the chain rule, the second step follows Corollary~\ref{cor:ntk_gradient_lev}, the third step uses basic linear algebra, the fourth step follows Eq.~\eqref{eq:322_1_lev}, the fifth step simplifies the expression, and the last step follows from Lemma~\ref{lem:leverage_score_sampling}.
Further, since 
\begin{align*}
	& ~ \frac{\d (e^{(\kappa^2 \Lambda_0+\lambda)t}\|\bar{u}_{\ntk}(t)-\bar{u}^*\|_2^2)}{\d t} \\
	= & ~ (\kappa^2 \Lambda_0+\lambda)e^{(\kappa^2 \Lambda_0+\lambda)t}\|\bar{u}_{\ntk}(t)-\bar{u}^*\|_2^2 + e^{(\kappa^2 \Lambda_0+\lambda)t}\cdot\frac{\d \|\bar{u}_{\ntk}(t)-\bar{u}^*\|_2^2}{\d t} \\
	\leq & ~ 0,
\end{align*}
where the first step calculates the gradient, and the second step follows from Eq.~\eqref{eq:322_2_lev}. Thus, $e^{(\kappa^2 \Lambda_0+\lambda)t}\|\bar{u}_{\ntk}(t)-\bar{u}^*\|_2^2$ is non-increasing, which implies
\begin{align*}
		\|\bar{u}_{\ntk}(t)-\bar{u}^*\|_2 \leq e^{-(\kappa^2 \Lambda_0+\lambda)t/2} \|\bar{u}_{\ntk}(0)-\bar{u}^*\|_2.
\end{align*}
\end{proof}

\begin{lemma}[Gradient flow of neural network training, Parallel to Lemma~\ref{lem:gradient_flow_of_nn}]\label{lem:gradient_flow_of_nn_lev}
Given training data matrix $X \in \R^{n\times d}$ and corresponding label vector $Y\in\R^n$. Let $\bar{f}_{\nn}: \R^{d\times m} \times \R^{d} \rightarrow \R$, $W(t) \in \R^{d \times m}$, $\kappa\in(0,1)$ and $\bar{u}_{\nn}(t)\in\R^n$ be defined as in Definition~\ref{def:f_nn_lev}. Let $\bar{\k}_{t}: \R^d \times \R^{n\times d} \rightarrow \R^n$ be defined as in Definition~\ref{def:dynamic_kernel_lev}. Then for any data $z \in \R^d$, we have
\begin{align*}
\frac{\d \bar{f}_{\nn}(W(t),z)}{\d t} = \kappa \bar{\k}_{t}(z,X)^\top ( Y - \bar{u}_{\nn}(t) )- \lambda \cdot \bar{f}_{\nn}(W(t),z).
\end{align*}
\end{lemma}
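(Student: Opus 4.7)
The plan is to mirror the proof of Lemma~\ref{lem:gradient_flow_of_nn} essentially line by line, carrying the reweighting factor $\sqrt{p(w_r(0))/q(w_r(0))}$ through the computation. The key observation is that this factor depends only on the initial weights $w_r(0)$, which are frozen throughout training, so it acts as a constant multiplier on each neuron when differentiating with respect to $w_r(t)$ or $t$. Consequently, the structural identities underlying the unweighted proof transfer without modification.

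First I would write the loss as $L(t)=\tfrac12\|Y-\bar u_{\nn}(t)\|_2^2+\tfrac12\lambda\|W(t)\|_F^2$, apply the gradient flow rule $\d w_r/\d t=-\partial L/\partial w_r$, and expand
\begin{align*}
  \frac{\d w_r(t)}{\d t}=\sum_{i=1}^n\frac{\kappa}{\sqrt m}a_r(y_i-\bar u_{\nn}(t)_i)\,x_i\,\sigma'(w_r(t)^\top x_i)\sqrt{\tfrac{p(w_r(0))}{q(w_r(0))}}-\lambda w_r(t),
\end{align*}
using that $\partial \bar f_{\nn}(W,x_i)/\partial w_r=\tfrac{1}{\sqrt m}a_r x_i\sigma'(w_r^\top x_i)\sqrt{p(w_r(0))/q(w_r(0))}$.

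Next I would handle the regularizer contribution by invoking the ReLU identity $\sigma(l)=l\,\sigma'(l)$ to obtain
\begin{align*}
  \Big\langle\frac{\d\bar f_{\nn}(W(t),z)}{\d W(t)},\lambda W(t)\Big\rangle
  =\frac{\lambda}{\sqrt m}\sum_{r=1}^m a_r\sigma(w_r(t)^\top z)\sqrt{\tfrac{p(w_r(0))}{q(w_r(0))}}
  =\lambda\,\bar f_{\nn}(W(t),z).
\end{align*}
Then I apply the chain rule $\d\bar f_{\nn}(W(t),z)/\d t=\langle \partial\bar f_{\nn}(W(t),z)/\partial W(t),\d W(t)/\d t\rangle$, split the inner product into the data-fitting piece and the regularizer piece, and recognize the data-fitting piece as a sum over $j$ of $(y_j-\kappa\bar f_{\nn}(W(t),x_j))\,\bar\k_t(z,x_j)$ by Definition~\ref{def:dynamic_kernel_lev}. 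This yields
\begin{align*}
  \frac{\d \bar f_{\nn}(W(t),z)}{\d t}
  =\kappa\sum_{j=1}^n(y_j-\kappa\bar f_{\nn}(W(t),x_j))\,\bar\k_t(z,x_j)-\lambda\bar f_{\nn}(W(t),z),
\end{align*}
which collapses to the claimed identity after writing the sum as $\kappa\bar\k_t(z,X)^\top(Y-\bar u_{\nn}(t))$.

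There is no real obstacle here: the proof is entirely mechanical once one recognizes that the reweighting is constant in $t$. The only place to be mildly careful is ensuring the ReLU identity is applied per neuron (i.e., after pairing each $\partial/\partial w_r$ contribution with $\lambda w_r(t)$) so that the weight $\sqrt{p(w_r(0))/q(w_r(0))}$ multiplies $\sigma(w_r(t)^\top z)$ in exactly the form appearing in Definition~\ref{def:f_nn_lev}; this makes the regularizer term on the right-hand side cleanly equal to $\lambda\bar f_{\nn}(W(t),z)$.
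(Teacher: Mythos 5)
Your proof is correct and follows essentially the same route as the paper: write the $\ell_2$-regularized loss, take the gradient flow $\d w_r/\d t=-\partial L/\partial w_r$, use the ReLU identity $\sigma(l)=l\sigma'(l)$ to show $\langle \partial \bar f_{\nn}/\partial W,\lambda W\rangle=\lambda \bar f_{\nn}$, and apply the chain rule to collapse the data term into $\kappa\bar\k_t(z,X)^\top(Y-\bar u_{\nn}(t))$. The observation that the reweighting factor $\sqrt{p(w_r(0))/q(w_r(0))}$ is constant in $t$ and so passes through unchanged is exactly the point the paper relies on as well.
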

\begin{proof}
Denote $L(t)=\frac{1}{2}\|Y-\bar{u}_{\nn}(t)\|_2^2+\frac{1}{2}\lambda\|W(t)\|_F^2$. By the rule of gradient descent, we have
\begin{align}\label{eq:323_1_lev}
	\frac{\d w_r}{\d t} = -\frac{\partial L}{\partial w_r}=(\frac{\partial \bar{u}_{\nn}}{\partial w_r})^\top(Y-\bar{u}_{\nn})-\lambda w_r.
\end{align} 
Also note for ReLU activation $\sigma$, we have
\begin{align}\label{eq:323_2_lev}
	\langle \frac{\d \bar{f}_{\nn}(W(t),z)}{\d W(t)},\lambda W(t)\rangle = & ~ \sum_{r=1}^m \Big(\frac{1}{\sqrt{m}}a_r z \sigma'(w_r(t)^\top z)\sqrt{\frac{p(w_r(0))}{q(w_r(0))}}\Big)^\top (\lambda w_r(t)) \notag \\
	= & ~ \frac{\lambda}{\sqrt{m}}\sum_{r=1}^m a_r w_r(t)^\top z \sigma'(w_r(t)^\top z)\sqrt{\frac{p(w_r(0))}{q(w_r(0))}} \notag \\
	= & ~ \frac{\lambda}{\sqrt{m}}\sum_{r=1}^m a_r \sigma(w_t(t)^\top z)\sqrt{\frac{p(w_r(0))}{q(w_r(0))}}\notag \\
	= & ~ \lambda \bar{f}_{\nn}(W(t),z),
\end{align}
where the first step calculates the derivatives, the second step follows basic linear algebra, the third step follows the property of ReLU activation: $\sigma(l) = l\sigma'(l)$, and the last step follows from the definition of $\bar{f}_{\nn}$.
Thus, we have
\begin{align*}
 & ~ \frac{\d \bar{f}_{\nn}(W(t),z)}{\d t} \\
= & ~ \langle\frac{\d \bar{f}_{\nn}(W(t),z)}{\d W(t)}, \frac{\d W(t)}{\d t}\rangle \notag \\
= & ~ \sum_{j=1}^{n}(y_j - \kappa \bar{f}_{\nn}(W(t),x_j)) \langle \frac{\d \bar{f}_{\nn}(W(t),z)}{\d W(t)},\frac{\d \kappa \bar{f}_{\nn}(W(t),x_j)}{\d W(t)} \rangle-\langle \frac{\d \bar{f}_{\nn}(W(t),z)}{\d W(t)},\lambda W(t)\rangle\notag\\
= & ~ \kappa \sum_{j=1}^{n}(y_j- \kappa \bar{f}_{\nn}(W(t),x_j)) \bar{\k}_{t}(z,x_j)-\lambda \cdot \bar{f}_{\nn}(W(t),z)\notag\\
= & ~ \kappa \bar{\k}_{t}(z,X)^\top ( Y - \bar{u}_{\nn}(t) )- \lambda \cdot \bar{f}_{\nn}(W(t),z),
\end{align*}
where the first step follows from chain rule, the second step follows from Eq.~\eqref{eq:323_1_lev}, the third step follows from the definition of $\bar{\k}_{t}$ and Eq.~\eqref{eq:323_2_lev}, and the last step rewrites the formula in a compact form.
\end{proof}

\begin{corollary}[Gradient of prediction of neural network, Parallel to Lemma~\ref{cor:nn_gradient}]\label{cor:nn_gradient_lev}
Given training data matrix $X = [x_1,\cdots,x_n]^\top\in\R^{n\times d}$ and corresponding label vector $Y \in \R^n$. Given a test data $x_{\test} \in \R^d$. Let $\bar{f}_{\nn}: \R^{d\times m} \times \R^d \rightarrow \R$, $W(t) \in \R^{d\times m}$, $\kappa\in(0,1)$ and $\bar{u}_{\nn}(t) \in \R^n$ be defined as in Definition~\ref{def:f_nn_lev}. Let $\bar{\k}_{t} : \R^d \times \R^{n \times d} \rightarrow \R^n,~\bar{H}(t) \in \R^{n \times n}$ be defined as in Definition~\ref{def:dynamic_kernel_lev}. Then we have 
\begin{align*}
	\frac{\d \bar{u}_{\nn}(t)}{\d t} = & ~ \kappa^2 \bar{H}(t) ( Y - \bar{u}_{\nn}(t) ) - \lambda \cdot \bar{u}_{\nn}(t)\\
	\frac{\d \bar{u}_{\nn,\test}(t)}{\d t} = & ~ \kappa^2 \bar{\k}_{t}(x_{\test}, X)^\top ( Y - \bar{u}_{\nn}(t) ) - \lambda \cdot \bar{u}_{\nn,\test}(t).
\end{align*}
\end{corollary}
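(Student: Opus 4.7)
The plan is to derive this corollary as an immediate consequence of the per-sample gradient flow already established in Lemma~\ref{lem:gradient_flow_of_nn_lev}, in exact parallel to how Corollary~\ref{cor:nn_gradient} was obtained from Lemma~\ref{lem:gradient_flow_of_nn} in the Gaussian-initialization section. Lemma~\ref{lem:gradient_flow_of_nn_lev} already gives, for an arbitrary input $z \in \R^d$,
\begin{align*}
\frac{\d \bar{f}_{\nn}(W(t),z)}{\d t} = \kappa \, \bar{\k}_{t}(z,X)^\top ( Y - \bar{u}_{\nn}(t) ) - \lambda \cdot \bar{f}_{\nn}(W(t),z),
\end{align*}
so all that remains is specialization plus bookkeeping.

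First, I would plug in $z = x_i$ for each training point $i \in [n]$. Multiplying both sides by $\kappa$ and invoking the definition $[\bar{u}_{\nn}(t)]_i = \kappa \bar{f}_{\nn}(W(t),x_i)$ from Definition~\ref{def:f_nn_lev}, together with the identity $[\bar{H}(t)]_{i,:} = \bar{\k}_t(x_i,X)^\top$ from Definition~\ref{def:dynamic_kernel_lev}, stacking across $i \in [n]$ yields the vector equation
\begin{align*}
\frac{\d \bar{u}_{\nn}(t)}{\d t} = \kappa^2 \bar{H}(t) \, ( Y - \bar{u}_{\nn}(t) ) - \lambda \cdot \bar{u}_{\nn}(t).
\end{align*}

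Second, I would apply the same lemma with $z = x_{\test}$ and multiply by $\kappa$. Using $\bar{u}_{\nn,\test}(t) = \kappa \bar{f}_{\nn}(W(t),x_{\test})$ from Definition~\ref{def:f_nn_lev} on both the left-hand side and the last term on the right, one obtains
\begin{align*}
\frac{\d \bar{u}_{\nn,\test}(t)}{\d t} = \kappa^2 \, \bar{\k}_{t}(x_{\test}, X)^\top ( Y - \bar{u}_{\nn}(t) ) - \lambda \cdot \bar{u}_{\nn,\test}(t),
\end{align*}
which is the second claimed identity.

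There is no substantive obstacle here; the argument is a one-line specialization of Lemma~\ref{lem:gradient_flow_of_nn_lev}. The only point where one must be a little careful is that the reweighting factor $\sqrt{p(w_r(0))/q(w_r(0))}$ appearing in $\bar{f}_{\nn}$ is already absorbed into the definitions of $\bar{\k}_t$ and $\bar{H}(t)$ via Definition~\ref{def:dynamic_kernel_lev}, so the passage from scalar to vector form goes through without modification relative to the unreweighted Corollary~\ref{cor:nn_gradient}.
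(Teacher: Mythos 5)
Your proposal is correct and follows exactly the paper's proof: specialize Lemma~\ref{lem:gradient_flow_of_nn_lev} to $z = x_i$, multiply by $\kappa$, stack over $i \in [n]$ using $[\bar{H}(t)]_{:,i} = \bar{\k}_t(x_i, X)$, then repeat with $z = x_{\test}$. Your remark that the reweighting factor is already absorbed into $\bar{\k}_t$ and $\bar{H}(t)$ is also accurate and consistent with the paper's treatment.
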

\begin{proof}
Plugging in $z = x_i\in\R^d$ in Lemma~\ref{lem:gradient_flow_of_nn_lev}, we have
\begin{align*}
	\frac{\d \bar{f}_{\nn}(W(t), x_i)}{\d t} = \kappa \bar{\k}_{t}(x_i, X)^\top ( Y - \bar{u}_{\nn}(t) ) - \lambda \cdot \bar{f}_{\nn}(W(t), x_i).
\end{align*}
Note $[\bar{u}_{\nn}(t)]_i = \kappa \bar{f}_{\nn}(W(t), x_i)$ and $[\bar{H}(t))]_{:,i} = \bar{\k}_{t}(x_i, X)$, so writing all the data in a compact form, we have
\begin{align*}
	\frac{\d \bar{u}_{\nn}(t)}{\d t} = \kappa^2 \bar{H}(t) ( Y - \bar{u}_{\nn}(t) ) - \lambda \cdot \bar{u}_{\nn}(t).
\end{align*}
Plugging in data $z = x_{\test}\in\R^d$ in Lemma~\ref{lem:gradient_flow_of_nn_lev}, we have
\begin{align*}
	\frac{\d \bar{f}_{\nn}(W(t), x_{\test})}{\d t} = \kappa \bar{\k}_{t}(x_{\test}, X)^\top ( Y - \bar{u}_{\nn}(t) ) - \lambda \cdot \bar{f}_{\nn}(W(t), x_{\test}).
\end{align*}
Note by definition, $\bar{u}_{\nn,\test}(t) = \kappa \bar{f}_{\nn}(W(t), x_{\test}) $, so we have
\begin{align*}
	\frac{\d \bar{u}_{\nn, \test}(t)}{\d t} = \kappa^2 \bar{\k}_{t}(x_{\test}, X)^\top ( Y - \bar{u}_{\nn}(t) ) - \lambda \cdot \bar{u}_{\nn, \test}(t).
\end{align*}
\end{proof}

\begin{lemma}[Linear convergence of neural network training, Parallel to Lemma~\ref{lem:linear_converge_nn}]\label{lem:linear_converge_nn_lev}
Given training data matrix $X=[x_1,\cdots,x_n]^\top\in\R^{n\times d}$ and corresponding label vector $Y \in \R^n$. Let $\kappa\in(0,1)$ and $\bar{u}_{\nn}(t) \in \R^{n \times n}$ be defined as in Definition~\ref{def:f_nn_lev}. Let $u^* \in \R^n$ be defined in Eq.~\eqref{eq:def_u_*_lev}. Let $\bar{H}(t) \in \R^{n \times n}$ be defined as in Definition~\ref{def:dynamic_kernel_lev}. Let $\lambda \in(0, \Lambda_0)$ be the regularization parameter. Assume $\|\bar{H}(t) - \bar{H}(0)\| \leq \Lambda_0/4$ holds for all $t\in[0,T]$. Then we have
\begin{align*}
	\frac{\d \|\bar{u}_{\nn}(t)-\bar{u}^*\|_2^2}{\d t} \le - \frac{1}{2}( \kappa^2 \Lambda_0+\lambda) \|\bar{u}_{\nn}(t)-\bar{u}^*\|_2^2+ 2 \kappa^2 \| \bar{H}(t) - \bar{H}(0) \|  \cdot \| \bar{u}_{\nn}(t) - \bar{u}^* \|_2 \cdot \| Y - \bar{u}^* \|_2.
\end{align*}
	
\end{lemma}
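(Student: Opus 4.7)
The plan is to mirror the proof of Lemma~\ref{lem:linear_converge_nn}, with the role of $H^{\cts}$ taken by $\bar{H}(0)$, since it is $\bar{H}(0)$ (not $H^{\cts}$) that appears in the definition of $\bar{u}^*$ in Eq.~\eqref{eq:def_u_*_lev} and in the optimality identity Eq.~\eqref{eq:322_1_lev}. First I would invoke Corollary~\ref{cor:nn_gradient_lev} to write $\frac{\d \bar{u}_{\nn}(t)}{\d t} = \kappa^2 \bar{H}(t)(Y - \bar{u}_{\nn}(t)) - \lambda \bar{u}_{\nn}(t)$, and then apply the chain rule to get $\frac{\d}{\d t}\|\bar{u}_{\nn}(t)-\bar{u}^*\|_2^2 = 2(\bar{u}_{\nn}(t)-\bar{u}^*)^\top \frac{\d \bar{u}_{\nn}(t)}{\d t}$.

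Next I would split $Y - \bar{u}_{\nn}(t) = (Y-\bar{u}^*) - (\bar{u}_{\nn}(t) - \bar{u}^*)$ and add-and-subtract $\bar{H}(0)$ in the cross term to isolate the kernel perturbation. Invoking Eq.~\eqref{eq:322_1_lev}, namely $\kappa^2 \bar{H}(0)(Y - \bar{u}^*) = \lambda \bar{u}^*$, collapses the remaining cross piece with the $-2\lambda (\bar{u}_{\nn}(t)-\bar{u}^*)^\top \bar{u}_{\nn}(t)$ contribution into $-2\lambda \|\bar{u}_{\nn}(t)-\bar{u}^*\|_2^2$. After this routine algebra, exactly as in the Gaussian case, one obtains
\begin{align*}
\frac{\d \|\bar{u}_{\nn}(t)-\bar{u}^*\|_2^2}{\d t} = -2(\bar{u}_{\nn}(t)-\bar{u}^*)^\top (\kappa^2 \bar{H}(t) + \lambda I)(\bar{u}_{\nn}(t)-\bar{u}^*) + 2\kappa^2 (\bar{u}_{\nn}(t)-\bar{u}^*)^\top (\bar{H}(t)-\bar{H}(0))(Y-\bar{u}^*).
\end{align*}

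Finally I would lower-bound the quadratic form and Cauchy-Schwartz the cross term. For the former, Lemma~\ref{lem:leverage_score_sampling} supplies $\bar{H}(0) \succeq \frac{\Lambda_0}{2} I_n$ with high probability whenever $\lambda \leq \Lambda_0$, and the hypothesis $\|\bar{H}(t)-\bar{H}(0)\| \leq \Lambda_0/4$ together with an appropriately tight choice of the leverage-score concentration parameter $\Delta$ yields $\bar{H}(t) \succeq \Lambda_0/2\cdot I_n$, hence $\kappa^2 \bar{H}(t)+\lambda I \succeq \tfrac{1}{2}(\kappa^2 \Lambda_0+\lambda) I$. The cross term is bounded by $2\kappa^2 \|\bar{H}(t)-\bar{H}(0)\|\cdot\|\bar{u}_{\nn}(t)-\bar{u}^*\|_2\cdot\|Y-\bar{u}^*\|_2$, giving the claimed inequality.

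The main obstacle compared with Lemma~\ref{lem:linear_converge_nn} is that $\bar{H}(0)$ replaces the deterministic $H^{\cts}$, so the spectral lower bound is probabilistic; this is precisely what Lemma~\ref{lem:leverage_score_sampling} supplies, and is the sole new ingredient beyond re-running the Gaussian-initialization argument with $H^{\cts}$ replaced by $\bar{H}(0)$ throughout.
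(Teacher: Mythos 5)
Your proposal follows essentially the same route as the paper: invoke Corollary~\ref{cor:nn_gradient_lev} for the gradient flow, use the identity $\kappa^2\bar{H}(0)(Y-\bar{u}^*)=\lambda\bar{u}^*$ (Eq.~\eqref{eq:325_1_lev}) to cancel the cross terms, isolate the $\bar{H}(t)-\bar{H}(0)$ perturbation, and then bound the quadratic form below and the cross term via Cauchy--Schwarz. This is exactly the paper's decomposition.

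One small inaccuracy worth flagging: you assert that $\bar{H}(0)\succeq\tfrac{\Lambda_0}{2}I_n$ together with $\|\bar{H}(t)-\bar{H}(0)\|\le\Lambda_0/4$ yields $\bar{H}(t)\succeq\tfrac{\Lambda_0}{2}I_n$, hand-waving that this follows from a "tight choice of $\Delta$." As stated, those hypotheses only give $\bar{H}(t)\succeq\tfrac{\Lambda_0}{4}I_n$. This is not a problem for the conclusion, since
\begin{align*}
2\left(\frac{\kappa^2\Lambda_0}{4}+\lambda\right)=\frac{\kappa^2\Lambda_0}{2}+2\lambda\ \ge\ \frac{\kappa^2\Lambda_0}{2}+\frac{\lambda}{2}=\frac{1}{2}(\kappa^2\Lambda_0+\lambda),
\end{align*}
so the weaker bound $\bar{H}(t)\succeq\tfrac{\Lambda_0}{4}I_n$ already gives the stated $-\tfrac{1}{2}(\kappa^2\Lambda_0+\lambda)$ contraction rate. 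You should drop the appeal to a tighter $\Delta$ and simply use $\tfrac{\Lambda_0}{4}$; tightening $\Delta$ is not part of this lemma's hypotheses, and it is also unnecessary. (Incidentally, the paper's own justification for this step contains a typo, writing $\|\bar{H}(0)\|\le\Lambda_0/2$ where it should read $\bar{H}(0)\succeq\tfrac{\Lambda_0}{2}I_n$ from Lemma~\ref{lem:leverage_score_sampling}; you correctly use the lower spectral bound.)
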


\begin{proof}
Note same as in Lemma~\ref{lem:linear_converge_krr_lev}, we have
\begin{align}\label{eq:325_1_lev}
	\kappa^2 \bar{H}(0)(Y-\bar{u}^*) 
	= & ~ \kappa^2 \bar{H}(0)(Y-\kappa^2 \bar{H}(0)(\kappa^2 \bar{H}(0)+\lambda I_n)^{-1}Y) \notag \\
	= & ~ \kappa^2 \bar{H}(0))(I_n-\kappa^2 \bar{H}(0))(\kappa^2 \bar{H}(0)+\lambda I)^{-1})Y \notag \\
	= & ~ \kappa^2 \bar{H}(0)(\kappa^2 \bar{H}(0)+\lambda I_n - \kappa^2 \bar{H}(0))(\kappa^2 \bar{H}(0)+\lambda I_n)^{-1} Y \notag \\
	= & ~ \kappa^2 \lambda \bar{H}(0)(\kappa^2 \bar{H}(0)+\lambda I_n)^{-1} Y \notag \\
	= & ~ \lambda \bar{u}^*,
\end{align}
where the first step follows the definition of $\bar{u}^* \in \R^n$, the second to fourth step simplify the formula, and the last step use the definition of $\bar{u}^* \in \R^n$ again.
Thus, we have
\begin{align*}
 & ~ \frac{\d \|\bar{u}_{\nn}(t)-\bar{u}^*\|_2^2}{\d t} \\
= & ~  2(\bar{u}_{\nn}(t)-\bar{u}^*)^\top \frac{\d \bar{u}_{\nn}(t)}{\d t}\\
= & ~ -2 \kappa^2 (\bar{u}_{\nn}(t)-\bar{u}^*)^\top \bar{H}(t) (\bar{u}_{\nn}(t) - Y) -2\lambda(\bar{u}_{\nn}(t)-\bar{u}^*)^\top \bar{u}_{\nn}(t)\\
= & ~ -2 \kappa^2 (\bar{u}_{\nn}(t)-u^*)^\top \bar{H}(t) (\bar{u}_{\nn}(t) - \bar{u}^*) + 2 \kappa^2 (\bar{u}_{\nn}(t)-\bar{u}^*)^\top \bar{H}(0) (Y-\bar{u}^*)\\
& ~  + 2 \kappa^2 (\bar{u}_{\nn}(t)-\bar{u}^*)^\top (\bar{H}(t) - \bar{H}(0)) (Y-\bar{u}^*) -2\lambda(\bar{u}_{\nn}(t)-\bar{u}^*)^\top \bar{u}_{\nn}(t)\\
= & ~ -2 \kappa^2 (\bar{u}_{\nn}(t)-\bar{u}^*)^\top \bar{H}(t) (\bar{u}_{\nn}(t) - \bar{u}^*) + 2\lambda(\bar{u}_{\nn}(t)-\bar{u}^*)^\top \bar{u}^*\\
& ~ +2 \kappa^2 (\bar{u}_{\nn}(t)-\bar{u}^*)^\top (\bar{H}(t) - \bar{H}(0))) (Y-\bar{u}^*) -2\lambda(\bar{u}_{\nn}(t)-\bar{u}^*)^\top \bar{u}_{\nn}(t)\\
= & ~ -2(\bar{u}_{\nn}(t)-\bar{u}^*)^\top ( \kappa^2 \bar{H}(t)+\lambda I) (\bar{u}_{\nn}(t) - \bar{u}^*) +2 \kappa^2 (\bar{u}_{\nn}(t)-\bar{u}^*)^\top (\bar{H}(t) - \bar{H}(0)) (Y-\bar{u}^*)\\
\leq & ~ - \frac{1}{2}( \kappa^2 \Lambda_0 + \lambda) \| \bar{u}_{\nn}(t) - \bar{u}^* \|_2^2 + 2 \kappa^2 \| \bar{H}(t) - \bar{H}(0)) \| \| \bar{u}_{\nn}(t) - \bar{u}^* \|_2 \| Y - \bar{u}^* \|_2 
\end{align*}
where the first step follows the chain rule, the second step follows Corollary~\ref{cor:nn_gradient_lev}, the third step uses basic linear algebra, the fourth step follows Eq.~\eqref{eq:325_1_lev}, the fifth step simplifies the expression, and the last step follows the assumption $\| \bar{H}(t) - \bar{H}(0)) \| \leq \Lambda_0/4$ and the fact $\|\bar{H}(0))\| \leq \Lambda_0/2$.
\end{proof}

\subsection{Proof sketch}\label{sec:proof_sketch_D}

We introduce a new kernel ridge regression problem with respect to $\bar{H}(0)$ to decouple the prediction perturbation resulted from initialization phase and training phase. Specifically, given any accuracy $\epsilon\in(0,1)$, we divide this proof into following steps:
\begin{enumerate}
	\item Firstly, we bound the prediction perturbation resulted from initialization phase $\|u^*-\bar{u}^*\|_2 \leq \epsilon/2$ by applying the leverage score sampling theory, as shown in Lemma~\ref{lem:u*_minus_bar_u*}.
	\item Then we use the similar idea as section B to bound the prediction perturbation resulted from training phase $\|\bar{u}_{\nn}(T)-\bar{u}^*\|_2 \leq \epsilon/2$ by showing the over-parametrization and convergence property of neural network inductively, as shown in Lemma~\ref{lem:induction_lev} and Corollary~\ref{cor:train_lev}.
	\item Lastly, we combine the results of step 1 and 2 using triangle inequality to show $\|\bar{u}_{\nn}(T) - u^*\|_2 \leq \epsilon$, as shown in Theorem~\ref{thm:equivalence_train_lev}.
\end{enumerate}

\subsection{Main result}\label{sec:main_D}

In this section, we prove Theorem~\ref{thm:equivalence_train_lev} following the above proof sketch.

\subsubsection{Upper bounding $\|u^*-\bar{u}^*\|_2$}
\begin{lemma}\label{lem:u*_minus_bar_u*}
Let $u^*\in\R^n$ and $\bar{u}^*\in\R^n$ be the optimal training data predictors defined in Definition~\ref{def:krr_ntk} and Definition~\ref{def:krr_ntk_lev}. Let $\bar{H}(0)\in\R^{n \times n}$ be defined in Definition~\ref{def:dynamic_kernel_lev}. Let $p(\cdot)$ denotes the probability density function for Gaussian $\N(0,I_d)$. Let $q(\cdot)$ denotes the leverage sampling distribution with respect to $p(\cdot)$, $\bar{H}(0)$ and $\lambda$ defined in Definition~\ref{def:lev_distribution}. Let $\Delta\in(0,1/2)$. If $m\geq \wt{O}(\Delta^{-2} s_{\lambda}(H^{\cts}))$, then we have
\begin{align*}
	\| \bar{u}^* - u^* \|_2 \leq \frac{\lambda\Delta\sqrt{n}}{\Lambda_0+\lambda}
\end{align*}
with probability at least $1-\delta$. Particularly, given arbitrary $\epsilon\in(0,1)$, if $m \geq \wt{O}(\frac{n}{\epsilon\Lambda_0})$ and $\lambda \leq \wt{O}(\frac{1}{\sqrt{m}})$, we have 
\begin{align*}
	\| \bar{u}^* - u^* \|_2 \leq \epsilon/2.
\end{align*}
Here $\wt{O}(\cdot)$ hides $\poly\log(s_{\lambda}(H^{\cts})/\delta)$.
\end{lemma}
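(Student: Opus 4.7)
The plan is to express the difference $u^* - \bar{u}^*$ as a product involving $\bar{H}(0) - H^{\cts}$ and then use the spectral sandwich from Lemma~\ref{lem:leverage_score_sampling} to control it. Using the identity $A(A+\lambda I)^{-1} = I - \lambda (A+\lambda I)^{-1}$ applied to both $A = \kappa^2 H^{\cts}$ and $A = \kappa^2 \bar{H}(0)$, we have
\begin{align*}
u^* - \bar{u}^* = \lambda \bigl[(\kappa^2\bar{H}(0)+\lambda I)^{-1} - (\kappa^2 H^{\cts}+\lambda I)^{-1}\bigr]Y.
\end{align*}
Applying the resolvent identity $A^{-1} - B^{-1} = A^{-1}(B-A)B^{-1}$, this becomes
\begin{align*}
u^* - \bar{u}^* = -\lambda\kappa^2 (\kappa^2\bar{H}(0)+\lambda I)^{-1}(\bar{H}(0)-H^{\cts})(\kappa^2 H^{\cts}+\lambda I)^{-1}Y,
\end{align*}
which reduces the problem to bounding the three factors on the right. (I will specialize to $\kappa=1$ to match the stated bound, which carries no $\kappa$ dependence.)

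Next I would set $M := H^{\cts}+\lambda I$, $\bar M := \bar{H}(0)+\lambda I$, and apply Lemma~\ref{lem:leverage_score_sampling} with parameter $\Delta$, using $m \geq \wt O(\Delta^{-2}s_\lambda(H^{\cts}))$. The sandwich $(1-\Delta)M \preceq \bar M \preceq (1+\Delta)M$ gives, after conjugating by $M^{-1/2}$, that $E := M^{-1/2}(\bar M - M)M^{-1/2}$ satisfies $\|E\|\leq \Delta$, and that $(I+E)^{-1} = M^{1/2}\bar M^{-1}M^{1/2}$ satisfies $\|(I+E)^{-1}\|\leq 1/(1-\Delta)$. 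Factoring the displayed expression as
\begin{align*}
u^*-\bar u^* = -\lambda\, M^{-1/2}(I+E)^{-1}E\, M^{-1/2}Y,
\end{align*}
taking norms, and using $\|M^{-1/2}Y\|^2 = Y^\top M^{-1}Y \leq n/(\Lambda_0+\lambda)$ (from $|y_i|=O(1)$ and $M \succeq (\Lambda_0+\lambda)I$) yields
\begin{align*}
\|u^*-\bar u^*\|_2 \leq \frac{\lambda\,\Delta\,\sqrt{n}}{(1-\Delta)(\Lambda_0+\lambda)} \leq \frac{2\lambda\Delta\sqrt{n}}{\Lambda_0+\lambda}
\end{align*}
for $\Delta < 1/2$, which is the first claim up to an absorbable constant.

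For the ``particularly'' part I would choose $\Delta = \Theta\!\left(\tfrac{\epsilon\Lambda_0}{\lambda\sqrt{n}}\right)$ so that the bound is at most $\epsilon/2$. The leverage-score condition then requires $m \geq \wt O(\Delta^{-2}s_\lambda(H^{\cts})) = \wt O(\lambda^2 n^2/(\epsilon^2\Lambda_0^2))$, using $s_\lambda(H^{\cts}) \leq n$. Plugging in the hypothesis $\lambda \leq \wt O(1/\sqrt m)$ turns this into $m \geq \wt O(n^2/(m\epsilon^2\Lambda_0^2))$, i.e.\ $m \geq \wt O(n/(\epsilon\Lambda_0))$, which exactly matches the stated requirement.

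The main obstacle is simply the algebraic rearrangement: one has to funnel both $\bar M^{-1}$ and $M^{-1}Y$ through the common weight $M^{1/2}$ so that the sandwich from Lemma~\ref{lem:leverage_score_sampling} applies cleanly and produces the $\sqrt n/(\Lambda_0+\lambda)$ factor rather than a worse $n/\Lambda_0$. No deeper difficulty arises; the rest is tracking constants to verify that the two displayed parameter regimes ($\Delta$ vs.\ $m,\lambda$) are consistent with the quantitative bound promised in the lemma.
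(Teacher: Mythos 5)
Your proposal is correct and follows essentially the same route as the paper. Both proofs start from the identity $\bar u^* - u^* = \lambda\bigl[(H^{\cts}+\lambda I)^{-1} - (\bar H(0)+\lambda I)^{-1}\bigr]Y$ (with $\kappa=1$) and invoke the spectral sandwich of Lemma~\ref{lem:leverage_score_sampling}; the only difference is a cosmetic one in how the sandwich is exploited. You pass through the resolvent identity and symmetrize by $M^{1/2}$, writing the difference as $-\lambda\,M^{-1/2}(I+E)^{-1}E\,M^{-1/2}Y$ and bounding $\|E\|\le\Delta$, $\|(I+E)^{-1}\|\le(1-\Delta)^{-1}$. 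The paper instead inverts the Loewner inequalities to get $-\Delta M^{-1}\preceq \bar M^{-1}-M^{-1}\preceq 2\Delta M^{-1}$ directly and then bounds $\|\bar M^{-1}-M^{-1}\|\le 2\Delta/(\Lambda_0+\lambda)$; this is slightly more direct and avoids square roots, but yields the same $\frac{\lambda\Delta\sqrt n}{\Lambda_0+\lambda}$ up to constants. For the ``particularly'' part, your choice $\Delta=\Theta(\epsilon\Lambda_0/(\lambda\sqrt n))$ and the subsequent substitution are fine, though strictly one should take $\Delta=\min\{\Theta(\epsilon\Lambda_0/(\lambda\sqrt n)),\,1/2\}$ since the lemma requires $\Delta\in(0,1/2)$; when $\lambda$ is small enough that the $\min$ is attained at a constant, the bound $\frac{\lambda\Delta\sqrt n}{\Lambda_0+\lambda}\le\epsilon/2$ still holds (by the smallness of $\lambda$), so the conclusion is unaffected.
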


\begin{proof}
Note
\begin{align*}
	Y - u^* = \lambda(H^{\cts} + \lambda I_n)^{-1} Y
\end{align*}
and
\begin{align*}
	Y - \bar{u}^* = \lambda(\bar{H}(0) + \lambda I_n)^{-1} Y
\end{align*}
So
\begin{align*}
	\bar{u}^* - u^* = \lambda[(H^{\cts} + \lambda I_n)^{-1} -(\bar{H}(0) + \lambda I_n)^{-1}]Y
\end{align*}
By Lemma~\ref{lem:leverage_score_sampling}, if $m \geq \wt{O}(\Delta^{-2} s_{\lambda}(H^{\cts})$, we have 
\begin{align*}
	(1-\Delta)(H^{\cts} + \lambda I) \preceq \bar{H}(0) + \lambda I \preceq (1+\Delta)(H^{\cts} + \lambda I)
\end{align*}
which implies 
\begin{align*}
	\frac{1}{1+\Delta}(H^{\cts} + \lambda I)^{-1} \preceq (\bar{H}(0) + \lambda I)^{-1} \preceq \frac{1}{1-\Delta}(H^{\cts} + \lambda I)^{-1}
\end{align*}
i.e.,
\begin{align*}
	-\frac{\Delta}{1+\Delta}(H^{\cts} + \lambda I)^{-1} \preceq (\bar{H}(0) + \lambda I)^{-1} -(H^{\cts} + \lambda I)^{-1} \preceq \frac{\Delta}{1-\Delta}(H^{\cts} + \lambda I)^{-1}
\end{align*}
Assume $\Delta\in(0,1/2)$, we have
\begin{align}\label{eq:kmn_lev}
	-{\Delta}(H^{\cts} + \lambda I)^{-1} \preceq (\bar{H}(0) + \lambda I)^{-1} -(H^{\cts} + \lambda I)^{-1} \preceq 2{\Delta}(H^{\cts} + \lambda I)^{-1}
\end{align}
Thus,
\begin{align*}
	\| \bar{u}^* - u^* \|_2 \leq & ~ \lambda \|(H^{\cts} + \lambda I_n)^{-1} -(\bar{H}(0) + \lambda I_n)^{-1}\| \|Y\|_2 \\
	\leq & ~ 2\lambda \Delta \|(H^{\cts} + \lambda I)^{-1} \| \|Y\|_2\\
	\leq & ~ O(\frac{\lambda\Delta\sqrt{n}}{\Lambda_0+\lambda})
\end{align*}
where the first step follows from Cauchy-Schwartz inequality, the second step follows from Eq.~\eqref{eq:kmn_lev}, and the last step follows from the definition of $\Lambda_0$ and $\|Y\|_2=O(\sqrt{n})$.
\end{proof}

\subsubsection{Upper bounding $\|\bar{u}_{\nn}(T)-\bar{u}^*\|_2$}

\begin{lemma}[Bounding kernel perturbation, Parallel to Lemma~\ref{lem:induction}]\label{lem:induction_lev}
Given training data $X\in\R^{n\times d}$, $Y\in\R^n$ and a test data $x_{\test}\in\R^d$. Let $T > 0$ denotes the total number of iterations, $m >0 $ denotes the width of the network,  $\epsilon_{\train}$ denotes a fixed training error threshold, $\delta > 0$ denotes the failure probability. Let $\bar{u}_{\nn}(t) \in \R^n$ be the training data predictors defined in Definition~\ref{def:f_nn_lev}. Let $\kappa\in(0,1)$ be the corresponding multiplier. Let $\bar{\k}_{t}(x_{\test},X) \in \R^n,~\bar{H}(t) \in \R^{n \times n},~\Lambda_0 > 0$ be the kernel related quantities defined in Definition~\ref{def:dynamic_kernel_lev}. Let $\bar{u}^* \in \R^n$ be defined as in Eq.~\eqref{eq:def_u_*_lev}. Let $\lambda > 0$ be the regularization parameter. Let $W(t) = [w_1(t),\cdots,w_m(t)]\in\R^{d\times m}$ be the parameters of the neural network defined in Definition~\ref{def:f_nn_lev}.
Given any accuracy $\epsilon\in(0,1/10)$ and failure probability $\delta \in (0,1/10)$. If $\kappa = 1$, $T=\wt{O}(\frac{1}{\Lambda_0})$, $\epsilon_{\train} = \epsilon/2$, network width $m \geq \wt{O}(\frac{n^4d}{\lambda_0^4\epsilon})$ and regularization parameter $\lambda \leq \wt{O}(\frac{1}{\sqrt{m}})$,
then there exist $\epsilon_W,~\epsilon_H',~\epsilon_K'>0$ that are independent of $t$, such that the following hold for all $0 \leq t \le T$:
\begin{itemize}
    \item 1. $\| w_r(0) - w_r(t) \|_2 \leq \epsilon_W $, $\forall r \in [m]$
    \item 2. $\| \bar{H}(0) - \bar{H}(t) \|_2 \leq \epsilon_H'$ 
    \item 3. $\| \bar{u}_{\nn}(t) - \bar{u}^* \|_2^2 \leq \max\{\exp(-(\kappa^2\Lambda_0 + \lambda) t/4) \cdot \| \bar{u}_{\nn}(0) - \bar{u}^* \|_2^2, ~ \epsilon_{\train}^2\}$
\end{itemize}
Here $\wt{O}(\cdot)$ hides the  $\poly\log( n / ( \epsilon  \delta  \Lambda_0 ) )$.
\end{lemma}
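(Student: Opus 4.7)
The plan is to mimic the induction framework used for Lemma~\ref{lem:induction} (Conclusions 1--3), but with three key modifications: (i) the baseline event now uses the leverage score concentration from Lemma~\ref{lem:leverage_score_sampling} in place of the Gaussian NTK concentration of Lemma~\ref{lem:random_init}; (ii) every appearance of the feature/gradient of the $r$-th neuron carries the extra factor $\sqrt{p(w_r(0))/q(w_r(0))}$, which by Lemma~\ref{lem:property_lev} is bounded in $[\sqrt{c_1},\sqrt{c_2}]$ with $c_1 = \Omega(1/n)$, $c_2 = O(1/\Lambda_0)$; and (iii) the linear convergence step is applied to $\bar{H}(0)$ rather than $H^{\cts}$, invoking Lemma~\ref{lem:linear_converge_nn_lev} so that perturbations are measured against $\bar{H}(0)$ (for which $\lambda_{\min} \geq \Lambda_0/2$ holds with high probability by Lemma~\ref{lem:leverage_score_sampling}).

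First I would condition on a good initialization event: with probability $1-\delta$, both $\lambda_{\min}(\bar{H}(0)) \geq \Lambda_0/2$ (Lemma~\ref{lem:leverage_score_sampling} with $\Delta = 1/4$ and $m \geq \wt{O}(s_\lambda(H^{\cts}))$) and $\|w_r(0)\|_2 \leq 2\sqrt{d} + 2\sqrt{\log(m/\delta)}$ for all $r$ (the sampling density $q \leq c_2 p$ gives at most an $O(1/\Lambda_0)$ overhead in the Gaussian tail bound, which is absorbed into $\wt O$). On this event I would also record $\|\bar{u}_{\nn}(0)\|_2 = \wt{O}(\sqrt{n})$ by a Hoeffding bound applied to $\bar{f}_{\nn}(W(0),x_i) = m^{-1/2}\sum_r a_r \sigma(w_r(0)^\top x_i) \sqrt{p(w_r(0))/q(w_r(0))}$, where each summand is bounded in $|\cdot| \leq \sqrt{c_2}\cdot \sqrt{2\log(m/\delta)}$. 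Then I would define $t$-independent constants $\epsilon_W, \epsilon_H', \epsilon_K'$ exactly as in~\eqref{eq:def_epsilon_W}, except scaled by $\sqrt{c_2}$ to account for the reweighting factor appearing in $\partial_{w_r} \bar{f}_{\nn}$.

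The induction would then proceed exactly in parallel to Lemmas~\ref{lem:hypothesis_1}--\ref{lem:hypothesis_3}. For Conclusion 1, I bound $\|\d w_r(\tau)/\d\tau\|_2 \leq \sqrt{c_2}\cdot \sqrt{n/m}\,\|Y-\bar{u}_{\nn}(\tau)\|_2 + \lambda\|w_r(\tau)\|_2$, use the induction hypothesis (Conclusion 3) and triangle inequality with $\|Y-\bar{u}^*\|_2$, and integrate over $[0,T]$. For Conclusion 2, I invoke (a minor reweighted variant of) Lemma~\ref{lem:lemma_4.2_in_sy19}: since $[\bar{H}(t)]_{i,j}$ differs from $[\bar{H}(0)]_{i,j}$ only through the indicator flips $\mathbf{1}[w_r(t)^\top x_i \geq 0]$ (the ratio $p(w_r(0))/q(w_r(0))$ is evaluated at $t=0$ throughout, so it is identical on both sides), the usual anti-concentration plus Bernstein argument goes through with the extra multiplicative factor $c_2$, giving $\|\bar{H}(0)-\bar{H}(t)\| \leq 2n c_2 \epsilon_W =: \epsilon_H'$. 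For Conclusion 3, I apply Lemma~\ref{lem:linear_converge_nn_lev} (after checking $\epsilon_H' \leq \Lambda_0/4$), splitting into the two cases $\|\bar{u}_{\nn}(\tau)-\bar{u}^*\|_2 \geq \epsilon_{\train}$ (pure linear decay at rate $(\kappa^2 \Lambda_0 + \lambda)/4$) and $\|\bar{u}_{\nn}(\bar\tau)-\bar{u}^*\|_2 < \epsilon_{\train}$ (staying within the threshold thereafter), verbatim to Lemma~\ref{lem:hypothesis_3}.

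The main obstacle is quantitative: because the reweighting factor $\sqrt{p/q}$ can be as large as $O(1/\sqrt{\Lambda_0})$, every bound inherits an extra $\Lambda_0^{-1/2}$ or $\Lambda_0^{-1}$ factor relative to the Gaussian-initialization case of Lemma~\ref{lem:induction}. This propagates into the required width, so I would need to recompute the condition table (analogous to Table~\ref{tab:condition_only}) carefully to verify that $m = \poly(1/\Lambda_0, n, d, 1/\epsilon, \log(1/\delta))$ and $\lambda = \wt{O}(1/\sqrt{m})$ still satisfy all five constraints simultaneously --- in particular that $\epsilon_W \leq 1$, $\epsilon_H' \leq \Lambda_0/4$, and $\epsilon_H' \leq \frac{\epsilon_{\train}(\kappa^2 \Lambda_0 + \lambda)}{8\kappa^2 \|Y-\bar{u}^*\|_2}$. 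The remaining technicalities are bookkeeping: since $\bar{u}^*$ depends on the random $\bar{H}(0)$, I use $\|Y - \bar{u}^*\|_2 \leq \lambda/(\lambda + \Lambda_0/2) \cdot \|Y\|_2 = \wt{O}(\lambda\sqrt{n}/\Lambda_0)$ from the spectral bound in Lemma~\ref{lem:leverage_score_sampling}, which combined with $\lambda \leq \wt{O}(1/\sqrt{m})$ makes all conditions satisfiable at the stated polynomial width.
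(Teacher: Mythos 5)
Your overall plan mirrors the paper's argument precisely: condition on a good initialization event via Lemma~\ref{lem:leverage_score_sampling} (so that $\lambda_{\min}(\bar{H}(0)) \geq \Lambda_0/2$), then run the three-part induction (weight drift $\to$ kernel drift $\to$ (near-)linear convergence of $\bar{u}_{\nn}$) in parallel to Lemmas~\ref{lem:hypothesis_1}--\ref{lem:hypothesis_3}, with $\bar{H}(0)$ standing in for $H^{\cts}$. That is exactly what the paper does via Lemmas~\ref{lem:general_hypothesis_1_lev}--\ref{lem:general_hypothesis_3_lev} and the condition table~\ref{tab:condition_only_lev}. However, there are two quantitative errors that will throw off the bookkeeping needed to hit the stated width.

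First, you invert the bound on the reweighting factor. Lemma~\ref{lem:property_lev} states $c_1 p(w) \leq q(w) \leq c_2 p(w)$, which gives $1/c_2 \leq p(w)/q(w) \leq 1/c_1$, i.e.\ $p/q$ is bounded above by $1/c_1 = O(n)$, not by $c_2 = O(1/\Lambda_0)$. You wrote $\sqrt{p/q}\in[\sqrt{c_1},\sqrt{c_2}]$, which corresponds to the reciprocal inequality. In the typical regime $\Lambda_0 \lesssim 1/n$ your (wrong) upper bound happens to be larger, so nothing breaks outright, but the resulting exponents in $m$ would not match the lemma's $\wt{O}(n^4 d/\Lambda_0^4\epsilon)$.

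Second, and more substantively, you claim the kernel-drift bound picks up the reweighting factor, i.e.\ $\|\bar{H}(0)-\bar{H}(t)\| \leq 2nc_2\epsilon_W$. The correct statement (Lemma~\ref{lem:lemma_4.2_in_sy19_lev}) is $\|\bar{H}(0)-\bar{H}(t)\|_F \leq 2n\epsilon_W$, unchanged from the Gaussian case. The reason is that the mean of the per-neuron deviation $s_r$ under $q$ is controlled by a change of measure: for the event $A_{i,r}\vee A_{j,r}$ one has $\E_{q}[\mathbf{1}\cdot p/q] = \E_{p}[\mathbf{1}] = \Pr_p[A]$, so the expectation is exactly the same $O(R)$ as before. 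What the reweighting factor \emph{does} degrade is the almost-sure bound $|s_r|\leq 1/c_1$ and the variance $\E_q[s_r^2]\leq 2R/c_1$, so the Bernstein exponent worsens from $\exp(-mR/10)$ to $\exp(-mRc_1/10)$ — a loss in the success probability, not in the conclusion. Your version inflates $\epsilon_H'$ by a $\Lambda_0^{-1}$ factor, which in turn forces $\epsilon_W$ (and hence $m$) to be larger by another $\Lambda_0^{-2}$, so the condition table cannot be satisfied at the stated width. You should redo the expectation/variance split via this change-of-measure identity so that $\epsilon_H' := 2n\epsilon_W$ survives intact.
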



We first state the following concentration result for the random initialization that can help us prove the lemma.

\begin{lemma}[Random initialization result]\label{lem:random_init_lev}
Assume initial value $w_r(0) \in \R^d ,~r=1,\cdots,m$ are drawn independently according to leverage score sampling distribution $q(\cdot)$ defined in \eqref{eq:lev_dis_lev}, then with probability $1-\delta$ we have
\begin{align}
	\|w_r(0)\|_2 \leq & ~ 2\sqrt{d} + 2\sqrt{\log{(mc_2/\delta)}}:=\alpha_{w,0}\label{eq:3322_1_lev} 
\end{align}
hold for all $r\in[m]$, where $c_2=O(n)$.
\end{lemma}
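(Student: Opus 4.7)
The plan is to reduce the concentration statement under the leverage score sampling density $q$ to the corresponding statement under the standard Gaussian density $p$, using the pointwise bound $q(w)\leq c_2 \, p(w)$ established in Lemma~\ref{lem:property_lev}. The inequality implies that for any measurable set $A\subset\R^d$,
\begin{align*}
\Pr_{w\sim q}[w\in A] \;=\; \int_A q(w)\,\d w \;\leq\; c_2 \int_A p(w)\,\d w \;=\; c_2\cdot\Pr_{w\sim p}[w\in A],
\end{align*}
so any tail bound that holds for a standard Gaussian transfers to $q$ with failure probability inflated by a factor of $c_2$.

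I will then invoke the chi-square tail bound (Lemma~\ref{lem:chi_square_tail}) for $w\sim\N(0,I_d)$: taking $t = \log(mc_2/\delta)$ gives $\Pr_{w\sim p}[\,\|w\|_2\geq \sqrt{d}+\sqrt{t}\,]\leq \delta/(mc_2)$ (after absorbing constants into a factor of $2$, which is why the lemma states the bound as $2\sqrt{d}+2\sqrt{\log(mc_2/\delta)}$ rather than the tighter $\sqrt{d}+\sqrt{\log(mc_2/\delta)}$). Applying the density-ratio reduction for a single sample $w_r(0)\sim q$ yields
\begin{align*}
\Pr_{w_r(0)\sim q}\bigl[\|w_r(0)\|_2 \geq 2\sqrt{d}+2\sqrt{\log(mc_2/\delta)}\bigr] \;\leq\; c_2\cdot \frac{\delta}{mc_2} \;=\; \frac{\delta}{m}.
\end{align*}

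Finally, a union bound over the $m$ independent samples $w_1(0),\dots,w_m(0)$ gives the desired conclusion with total failure probability at most $\delta$. The only substantive step is verifying that Lemma~\ref{lem:property_lev} indeed applies here, i.e., that $\lambda\leq \Lambda_0/2$ and $\Tr[\Phi(w)\Phi(w)^\top]=O(n)$ hold under the parameter regime used (the latter follows from $\|x_i\|_2=1$ and $\sigma'(\cdot)\leq 1$ as in the proof of Lemma~\ref{lem:property_lev}); there is no real analytical obstacle, and the $c_2$ appearing in the stated bound should be read as the $c_2=O(1/\Lambda_0)$ from Lemma~\ref{lem:property_lev} (the ``$O(n)$'' in the statement is a loose upper bound on this quantity, since $1/\Lambda_0$ is typically $\poly(n)$).
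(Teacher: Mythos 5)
Your proposal is correct and matches the paper's own argument: both use the density-ratio bound $q(w)\leq c_2\,p(w)$ from Lemma~\ref{lem:property_lev} to transfer the Gaussian chi-square tail bound (Lemma~\ref{lem:chi_square_tail}) to the leverage-score distribution, paying a factor $c_2$ in the failure probability, and then union-bound over the $m$ samples. Your aside about the mismatch between the lemma's stated $c_2=O(n)$ and the $c_2=O(1/\Lambda_0)$ of Lemma~\ref{lem:property_lev} reflects a genuine looseness in the paper's bookkeeping rather than any gap in your argument.
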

\begin{proof}
By lemma~\ref{lem:chi_square_tail}, if $w_r(0)\sim\N(0,I_n)$, then with probability at least $1-\delta$,
\begin{align*}
	\| w_r(0) \|_2 \leq 2\sqrt{d} + 2\sqrt{\log(m/\delta)}
\end{align*}
holds for all $r\in[m]$. By Lemma~\ref{lem:property_lev}, we have $q(w) \leq c_2 p(w)$ holds for all $w\in\R^d$, where $c_2 = O(1/\Lambda_0)$ and $p(\cdot)$ is the probability density function of $\N(0,I_d)$. Thus, if $w_r(0)\sim q$, we have with probability at least $1-\delta$,
\begin{align*}
	\| w_r(0) \|_2 \leq 2\sqrt{d} + 2\sqrt{\log(mc_2/\delta)}
\end{align*}
holds for all $r\in[m]$.

\end{proof}

Now conditioning on Eq.~\eqref{eq:3322_1},~\eqref{eq:3322_2},~\eqref{eq:3322_3} holds, We show all the four conclusions in Lemma~\ref{lem:induction} holds using induction.

We define the following quantity:
\begin{align}
	\epsilon_W :=  & ~ \frac{ \sqrt{n} }{ \sqrt{m} } \max\{4\| \bar{u}_{\nn}(0) - \bar{u}^* \|_2/(\kappa^2\Lambda_0+\lambda), \epsilon_{\train} \cdot T\} \notag \\
	& ~ + \Big( \frac{ \sqrt{n} }{ \sqrt{m} } \| Y-\bar{u}^* \|_2 + 2\lambda \alpha_{w,0} \Big) \cdot T\label{eq:def_epsilon_W_lev}\\
    \epsilon_H' := & ~ 2n\epsilon_W\notag \\
    \epsilon_K := & ~ 2\sqrt{n}\epsilon_W\notag
\end{align}{}
which are independent of $t$. Here $\alpha_{w,0}$ are defined in Eq.~\eqref{eq:3322_1_lev}.

Note the base case when $t=0$ trivially holds. Now assuming Lemma~\ref{lem:induction_lev} holds before time $t\in[0,T]$, we argue that it also holds at time $t$. To do so, Lemmas~\ref{lem:general_hypothesis_1_lev},~\ref{lem:general_hypothesis_2_lev},~\ref{lem:general_hypothesis_3_lev} argue these conclusions one by one.


\begin{lemma}[Conclusion 1]\label{lem:general_hypothesis_1_lev}
If for any $\tau < t$, we have
\begin{align*}
    \| \bar{u}_{\nn}(\tau) - \bar{u}^* \|_2^2 \leq ~ \max\{\exp(-(\kappa^2 \Lambda_0 + \lambda) \tau/4) \cdot \| \bar{u}_{\nn}(0) - \bar{u}^* \|_2^2,~\epsilon_{\train}^2\}
\end{align*}
and
\begin{align*}
     \| w_r(0) - w_r(\tau) \|_2 \leq \epsilon_W \leq 1
\end{align*}
and
\begin{align*}
	\| w_r(0) \|_2 \leq ~ \alpha_{w,0} ~ \text{for all}~r\in[m]
\end{align*}
hold,
then
\begin{align*}
    \| w_r(0) - w_r(t) \|_2 \leq \epsilon_W 
\end{align*}

\end{lemma}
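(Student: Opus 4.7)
The plan is to mimic the proof of Lemma~\ref{lem:hypothesis_1} in the Gaussian-initialization setting, adapting each step to the reweighed network $\bar{f}_{\nn}$. First, I would write out the gradient-flow equation for $w_r(\tau)$ by differentiating the regularized objective in Eq.~\eqref{problem:nn_lev}. The key difference from the unreweighted case is that each contribution to $\frac{\d w_r(\tau)}{\d \tau}$ picks up an extra factor of $\sqrt{p(w_r(0))/q(w_r(0))}$ from the definition of $\bar{f}_{\nn}$, while the regularizer still contributes a clean $-\lambda w_r(\tau)$ term.

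Next, I would bound $\bigl\|\frac{\d w_r(\tau)}{\d \tau}\bigr\|_2$ via triangle inequality and Cauchy--Schwarz, yielding a bound of the form
\[
\Big\| \frac{\d w_r(\tau)}{\d \tau} \Big\|_2 \leq \frac{\sqrt{n}}{\sqrt{m}} \sqrt{\tfrac{p(w_r(0))}{q(w_r(0))}} \cdot \|Y - \bar{u}_{\nn}(\tau)\|_2 + \lambda \|w_r(\tau)\|_2.
\]
Here I would invoke Lemma~\ref{lem:property_lev} to control the importance weight $\sqrt{p(w_r(0))/q(w_r(0))} \leq 1/\sqrt{c_1} = O(\sqrt{n})$, and the hypothesis $\|w_r(0) - w_r(\tau)\|_2 \leq \epsilon_W \leq 1$ together with $\|w_r(0)\|_2 \leq \alpha_{w,0}$ to get $\|w_r(\tau)\|_2 \leq \alpha_{w,0} + 1 \leq 2\alpha_{w,0}$. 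Splitting $\|Y - \bar{u}_{\nn}(\tau)\|_2 \leq \|Y - \bar{u}^*\|_2 + \|\bar{u}_{\nn}(\tau) - \bar{u}^*\|_2$ and applying the induction hypothesis $\|\bar{u}_{\nn}(\tau) - \bar{u}^*\|_2 \leq \max\{e^{-(\kappa^2 \Lambda_0 + \lambda)\tau/8} \|\bar{u}_{\nn}(0) - \bar{u}^*\|_2, \epsilon_{\train}\}$ gives a time-dependent pointwise bound.

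Finally, I would integrate from $0$ to $t$. The exponentially decaying term integrates to at most $8\|\bar{u}_{\nn}(0) - \bar{u}^*\|_2/(\kappa^2\Lambda_0 + \lambda)$, while the threshold term $\epsilon_{\train}$ contributes $\epsilon_{\train} \cdot T$, and the $\|Y - \bar{u}^*\|_2$ and $2\lambda \alpha_{w,0}$ terms contribute $T$ times themselves. Bundling these yields the target $\epsilon_W$ defined in Eq.~\eqref{eq:def_epsilon_W_lev}, provided the implicit constants (in particular the $O(\sqrt{n})$ from the importance weight) are absorbed into the $\wt{O}$ notation and the polynomial bound on $m$.

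The main obstacle is tracking the $\sqrt{p(w_r(0))/q(w_r(0))}$ factor: unlike in the Gaussian case, it inflates the per-neuron drift by a factor that is a priori only polynomially bounded in $n$ through Lemma~\ref{lem:property_lev}. Making sure this inflation is compatible with the quoted $\epsilon_W$ (i.e.\ that the extra $\sqrt{n}$ can be hidden in the polynomial width $m = \poly(1/\Lambda_0, n, d, 1/\epsilon, \log(1/\delta))$) is the delicate bookkeeping step, and it is what distinguishes this proof from the Gaussian analogue.
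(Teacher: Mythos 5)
Your proof follows the same outline as the paper's (bound $\|\d w_r(\tau)/\d\tau\|_2$ pointwise, then integrate over $[0,t]$), but you retain a factor that the paper's displayed equation drops. In Eq.~\eqref{eq:332_2_lev} the paper writes the drift as if the objective were the unreweighted one, with no $\sqrt{p(w_r(0))/q(w_r(0))}$ on the data-fit term — even though the reweighting in $\bar{f}_{\nn}$ (Definition~\ref{def:f_nn_lev}) puts exactly that factor on $\partial\bar{f}_{\nn}/\partial w_r$, as the paper itself records in the proof of Lemma~\ref{lem:gradient_flow_of_nn_lev}. You carry the factor and bound it by $1/\sqrt{c_1}=O(\sqrt n)$ via Lemma~\ref{lem:property_lev}, which is the right move; the cost is that the two contributions to $\epsilon_W$ carrying a $1/\sqrt m$ prefactor inflate by $O(\sqrt n)$ relative to the definition in Eq.~\eqref{eq:def_epsilon_W_lev}. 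So your argument repairs an omission in the paper's proof rather than merely paraphrasing it, but to land on the lemma as quoted you would have to redefine $\epsilon_W$ with this extra $\sqrt n$ and push the corresponding $\poly(n)$ factor into the width bound $m$, which the theorem's $m=\poly(1/\Lambda_0,n,d,1/\epsilon,\log(1/\delta))$ allows — you correctly flag this as the delicate bookkeeping step.

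One terminology caveat: $\epsilon_W$ is a fixed deterministic quantity, so the $O(\sqrt n)$ should be written into its definition rather than "absorbed into the $\wt{O}$"; the $\wt{O}$ in the surrounding statements only hides $\poly\log$ factors. Also, since the induction hypothesis gives $\|\bar u_{\nn}(\tau)-\bar u^*\|_2\le e^{-(\kappa^2\Lambda_0+\lambda)\tau/8}\|\bar u_{\nn}(0)-\bar u^*\|_2$, the integrated constant is $8/(\kappa^2\Lambda_0+\lambda)$, as you compute, not the $4$ appearing in Eq.~\eqref{eq:def_epsilon_W_lev} — a harmless slip in the paper carried over from the Gaussian case (Eq.~\eqref{eq:def_epsilon_W}), where the decay rate is twice as fast and $4$ is correct.
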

\begin{proof}
Recall the gradient flow as Eq.~\eqref{eq:323_1_lev}
\begin{align}\label{eq:332_2_lev}
    \frac{ \d w_r( \tau ) }{ \d \tau } = ~ \sum_{i=1}^n \frac{1}{\sqrt{m}} a_r ( y_i - \bar{u}_{\nn}(\tau)_i ) x_i \sigma'( w_r(\tau)^\top x_i ) - \lambda w_r(\tau)
\end{align}
So we have
\begin{align}\label{eq:332_1_lev}
	\Big\| \frac{ \d w_r( \tau ) }{ \d \tau } \Big\|_2 
	= & ~ \left\| \sum_{i=1}^n \frac{1}{\sqrt{m}} a_r ( y_i - \bar{u}_{\nn}(\tau)_i ) x_i \sigma'( w_r(\tau)^\top x_i ) - \lambda w_r(\tau) \right\|_2 \notag \\
	\leq & ~ \frac{1}{\sqrt{m}} \sum_{i=1}^n |y_i-\bar{u}_{\nn}(\tau)_i| + \lambda \| w_r(\tau) \|_2 \notag\\
	\leq & ~ \frac{ \sqrt{n} }{ \sqrt{m} } \| Y-\bar{u}_{\nn}(\tau) \|_2 + \lambda \| w_r (\tau) \|_2 \notag \\
	\leq & ~ \frac{ \sqrt{n} }{ \sqrt{m} } \| Y-\bar{u}_{\nn}(\tau) \|_2 + \lambda (\| w_r(0) \|_2 + \| w_r(\tau) - w_r(0) \|_2) \notag\\ 
	\leq & ~ \frac{ \sqrt{n} }{ \sqrt{m} } \| Y-\bar{u}_{\nn}(\tau) \|_2 + \lambda (\alpha_{W,0} + 1) \notag\\ 
	\leq & ~ \frac{ \sqrt{n} }{ \sqrt{m} } \| Y-\bar{u}_{\nn}(\tau) \|_2 + 2\lambda \alpha_{W,0} \notag\\
	\leq & ~ \frac{ \sqrt{n} }{ \sqrt{m} } (\| Y-\bar{u}^*\|_2 + \| \bar{u}_{\nn}(\tau) - \bar{u}^*\|_2) + 2\lambda\alpha_{W,0} \notag\\
	= & ~ \frac{ \sqrt{n} }{ \sqrt{m} } \| \bar{u}_{\nn}(\tau) - \bar{u}^*\|_2 \notag \\
	& ~ + \frac{ \sqrt{n} }{ \sqrt{m} } \| Y-\bar{u}^* \|_2 + 2\lambda \alpha_{W,0} \notag \\
	\leq & ~ \frac{ \sqrt{n} }{ \sqrt{m} } \max\{e^{-(\kappa^2 \Lambda_0+\lambda)\tau/8} \| \bar{u}_{\nn}(0) - \bar{u}^* \|_2, \epsilon_{\train}\} \notag \\
	& ~ + \frac{ \sqrt{n} }{ \sqrt{m} } \| Y-\bar{u}^* \|_2 + 2\lambda\alpha_{W,0} ,
\end{align}
where the first step follows from Eq.~\eqref{eq:332_2_lev}, the second step follows from triangle inequality, the third step follows from Cauchy-Schwartz inequality, the forth step follows from triangle inequality, the fifth step follows from condition $\| w_r(0) - w_r(\tau) \|_2 \leq 1,~\| w_r(0) \|_2 \leq \alpha_{W,0}$, the seventh step follows from triangle inequality, the last step follows from $\| \bar{u}_{\nn}(\tau) - \bar{u}^* \|_2^2 \leq \max\{\exp(-(\kappa^2\Lambda_0 + \lambda) \tau/4) \cdot \| \bar{u}_{\nn}(0) - \bar{u}^* \|_2^2,~\epsilon_{\train}^2\}$.

Thus, for any $t \le T$,
\begin{align*}
	\| w_r(0) - w_r(t) \|_2 \leq & ~ \int_0^t \Big\| \frac{ \d w_r( \tau ) }{ \d \tau } \Big\|_2 d\tau \\
	\leq & ~ \frac{ \sqrt{n} }{ \sqrt{m} } \max\{4\| \bar{u}_{\nn}(0) - \bar{u}^* \|_2/(\kappa^2\Lambda_0+\lambda), \epsilon_{\train}\cdot T \} \\
	& ~ + \Big( \frac{ \sqrt{n} }{ \sqrt{m} } \| Y-\bar{u}^* \|_2 + 2\lambda \alpha_{W,0} \Big) \cdot T\\
    = & ~ \epsilon_W
\end{align*}
where the first step follows triangle inequality, the second step follows Eq.~\eqref{eq:332_1_lev}, and the last step follows the definition of $\epsilon_W$ as Eq.~\eqref{eq:def_epsilon_W_lev}.

\end{proof}


\begin{lemma}[Conclusion 2]\label{lem:general_hypothesis_2_lev}
If $\forall r \in [m]$,
\begin{align*}
    \| w_r(0) - w_r(t) \|_2 \leq \epsilon_W < 1,
\end{align*}
then
\begin{align*}
    \| H(0) - H(t) \|_F \leq 2n \epsilon_W
\end{align*}
holds with probability $1-n^2 \cdot \exp{(-m\epsilon_Wc_1/10)}$, where $c_1=O(1/n)$.
\end{lemma}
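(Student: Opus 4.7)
The plan is to adapt the proof of Lemma~\ref{lem:lemma_4.2_in_sy19} (the Gaussian-initialization case) to the reweighed kernel $\bar H$ under leverage-score initialization. By the definitions in Definition~\ref{def:dynamic_kernel_lev}, one has $[\bar H(0)-\bar H(t)]_{i,j} = \frac{1}{m}\sum_{r=1}^m x_i^\top x_j \cdot s_{r,i,j}\cdot \frac{p(w_r(0))}{q(w_r(0))}$, where $s_{r,i,j} := {\bf 1}[w_r(0)^\top x_i \ge 0,\, w_r(0)^\top x_j\ge 0] - {\bf 1}[w_r(t)^\top x_i \ge 0,\, w_r(t)^\top x_j\ge 0]$. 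So it suffices to concentrate the reweighed sum $\sum_r |s_{r,i,j}| \cdot p(w_r(0))/q(w_r(0))$ around $O(m\epsilon_W)$ for each fixed pair $(i,j)$, then take a union bound over all $n^2$ index pairs and convert the entrywise control into the Frobenius bound $\|\bar H(0)-\bar H(t)\|_F \le 2n\epsilon_W$.

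First, I would bound the expectation of each summand by a change-of-measure trick: $\E_{w_r(0)\sim q}\left[|s_{r,i,j}|\cdot \frac{p(w_r(0))}{q(w_r(0))}\right] = \E_{w\sim p}[|s_{r,i,j}|]$, which reduces to a standard Gaussian calculation. Since by hypothesis $\|w_r(0)-w_r(t)\|_2 \le \epsilon_W$, the indicator $|s_{r,i,j}|$ can only be nonzero when $|w_r(0)^\top x_i|\le \epsilon_W$ or $|w_r(0)^\top x_j|\le \epsilon_W$; Gaussian anti-concentration (Lemma~\ref{lem:anti_gaussian}) then bounds this expectation by $O(\epsilon_W)$.

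Next, I would apply Bernstein's inequality (Lemma~\ref{lem:bernstein}) to the centred random variables $Z_r := |s_{r,i,j}|\cdot p(w_r(0))/q(w_r(0)) - \E[\,\cdot\,]$. By Lemma~\ref{lem:property_lev}, $Z_r \le 1/c_1 = O(n)$ almost surely, and the second-moment bound $\E[Z_r^2] \le (1/c_1)\cdot \E_p[|s_{r,i,j}|] = O(\epsilon_W/c_1)$ follows from the same change-of-measure argument together with $|s_{r,i,j}|\le 1$. Bernstein then yields $\frac{1}{m}\sum_r |s_{r,i,j}|\cdot p(w_r(0))/q(w_r(0)) \le 2\epsilon_W$ with failure probability at most $\exp(-\Omega(m\epsilon_W c_1))$ (the $Mt/3$ term in Bernstein dominates due to $M=1/c_1$, producing the exponent $-m\epsilon_W c_1/10$). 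A union bound over $(i,j) \in [n]\times[n]$ then gives the claimed entrywise control, and the Cauchy–Schwarz step $\|\bar H(0)-\bar H(t)\|_F^2 \le \sum_{i,j} \big(\tfrac{1}{m}\sum_r |s_{r,i,j}| p/q\big)^2 \le n^2 \cdot (2\epsilon_W)^2$ wraps up the bound.

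The main obstacle is correctly calibrating the reweighing factor $p/q\le 1/c_1 = O(n)$. In the Gaussian case of Lemma~\ref{lem:lemma_4.2_in_sy19}, each contribution is $O(1)$ and one obtains the exponent $\exp(-m\epsilon_W/10)$; here the maximum summand scales as $1/c_1$, which forces the extra factor $c_1$ into the Bernstein exponent and is what produces the weaker tail bound $\exp(-m\epsilon_W c_1/10)$ in the statement. All other steps—the reduction to an entrywise argument, Gaussian anti-concentration, and assembly into a Frobenius-norm bound—are parallel to the standard proof.
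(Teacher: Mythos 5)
Your proposal matches the paper's proof essentially step for step: the same change-of-measure identity $\E_{w\sim q}[\,\cdot\, p/q]=\E_{w\sim p}[\,\cdot\,]$ to control the mean, the same almost-sure bound $p/q\le 1/c_1$ from Lemma~\ref{lem:property_lev} to control the maximum and second moment, Bernstein's inequality with the resulting $\exp(-m\epsilon_W c_1/10)$ tail, a union bound over $n^2$ entries, and the entrywise-to-Frobenius assembly. (One minor inaccuracy: in the Bernstein exponent the variance term $\sum_r\E[Z_r^2]=O(m\epsilon_W/c_1)$ and the $Mt/3=m\epsilon_W/(3c_1)$ term are of the \emph{same} order rather than the latter dominating, but since both carry the $1/c_1$ factor your conclusion about the weaker tail is unaffected.)
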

\begin{proof}
Directly applying Lemma~\ref{lem:lemma_4.2_in_sy19_lev}, we finish the proof.
\end{proof}

\begin{lemma}[perturbed $w$]\label{lem:lemma_4.2_in_sy19_lev}
Let $R \in (0,1)$. If $\wt{w}_1, \cdots, \wt{w}_m$ are i.i.d. generated from the leverage score sampling distribution $q(\cdot)$ as in \eqref{eq:lev_dis_lev}. Let $p(\cdot)$ denotes the standard Gaussian distribution $\N(0,I_d)$. For any set of weight vectors $w_1, \cdots, w_m \in \R^d$ that satisfy for any $r\in [m]$, $\| \wt{w}_r - w_r \|_2 \leq R$, then the $H : \R^{m \times d} \rightarrow \R^{n \times n}$ defined
\begin{align*}
    \bar{H}(w)_{i,j} =  \frac{1}{m} x_i^\top x_j \sum_{r=1}^m {\bf 1}_{ w_r^\top x_i \geq 0, w_r^\top x_j \geq 0 } \frac{p(\wt{w}_r)}{q(\wt{w}_r)}.
\end{align*}
Then we have
\begin{align*}
\| \bar{H} (w) - \bar{H}(\wt{w}) \|_F < 2 n R,
\end{align*}
holds with probability at least $1-n^2 \cdot \exp(-m R c_1 /10)$, where $c_1 = O(1/n)$.
\end{lemma}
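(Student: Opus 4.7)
The plan is to adapt the proof of Lemma~\ref{lem:lemma_4.2_in_sy19} (the SY19 bound) to accommodate the importance-weighting factor $p(\wt{w}_r)/q(\wt{w}_r)$ coming from leverage score sampling. First I would reduce to a single entry: for each pair $(i,j) \in [n] \times [n]$, the difference $[\bar{H}(w) - \bar{H}(\wt{w})]_{i,j}$ equals $\frac{x_i^\top x_j}{m} \sum_{r=1}^m \chi_r \cdot p(\wt{w}_r)/q(\wt{w}_r)$, where $\chi_r := \mathbf{1}_{w_r^\top x_i \geq 0,\, w_r^\top x_j \geq 0} - \mathbf{1}_{\wt{w}_r^\top x_i \geq 0,\, \wt{w}_r^\top x_j \geq 0}$ is supported on the event that one of the two signs flips between $w_r$ and $\wt{w}_r$. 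Because $\|w_r - \wt{w}_r\|_2 \leq R$ and $\|x_i\|_2 = 1$, a sign flip at $x_i$ forces $|\wt{w}_r^\top x_i| \leq R$, so $|\chi_r| \leq A_{r,i} + A_{r,j}$ where $A_{r,i} := \mathbf{1}[|\wt{w}_r^\top x_i| \leq R]$. Setting $B_r := (A_{r,i} + A_{r,j}) \cdot p(\wt{w}_r)/q(\wt{w}_r)$, it therefore suffices to show $\tfrac{1}{m}\sum_r B_r \leq 2R$ with the claimed probability.

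Next I would bound the moments of $B_r$ via change of measure. Lemma~\ref{lem:property_lev} gives $p(w)/q(w) \leq 1/c_1$ pointwise, so $|B_r| \leq 2/c_1 =: M$. Since $\wt{w}_r \sim q$, a one-line change of measure yields $\E[B_r] = \E_{w \sim p}[A_{r,i} + A_{r,j}] \leq O(R)$ from Gaussian anti-concentration (Lemma~\ref{lem:anti_gaussian}) applied to $w^\top x_i \sim \N(0,1)$. Similarly, $\E[B_r^2] \leq (2/c_1)\,\E_{w \sim p}[A_{r,i} + A_{r,j}] = O(R/c_1)$, because squaring the weight $p/q$ shifts the measure back to $p$ once and leaves a single factor $1/c_1$ in front.

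Bernstein's inequality (Lemma~\ref{lem:bernstein}) applied to $\sum_{r=1}^m (B_r - \E[B_r])$ with deviation parameter of order $mR$ then gives $\tfrac{1}{m}\sum_r B_r \leq 2R$ with failure probability at most $\exp(-\Omega(mR c_1))$. Combined with the entrywise domination $|[\bar{H}(w) - \bar{H}(\wt{w})]_{i,j}| \leq 2R |x_i^\top x_j| \leq 2R$ and a union bound over all $n^2$ index pairs, this converts to the Frobenius bound $\|\bar{H}(w) - \bar{H}(\wt{w})\|_F \leq \sqrt{n^2 (2R)^2} = 2nR$ with probability at least $1 - n^2 \exp(-mRc_1/10)$.

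The main subtlety lies in tracking the dependence on $c_1$ inside the Bernstein exponent. The reweighting inflates the uniform bound $M$ by a factor $1/c_1 = O(n)$ relative to the unweighted SY19 setting, while the second moment $\E[B_r^2]$ likewise carries one factor $1/c_1$; consequently the Bernstein ratio $t^2/(2\sum \E[B_r^2] + 2Mt/3)$ degrades by exactly one factor of $c_1$ — precisely matching the $c_1$ in the advertised failure probability $n^2 \exp(-mRc_1/10)$. Beyond this careful bookkeeping and the change-of-measure identity, no new ideas are required.
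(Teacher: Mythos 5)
Your proposal reproduces the paper's argument essentially verbatim: reduce to a single entry, observe that the indicator difference is supported on the event that a sign flips (which by $\|w_r-\wt{w}_r\|_2\le R$ and $\|x_i\|_2=1$ is contained in $\{|\wt{w}_r^\top x_i|\le R\}\cup\{|\wt{w}_r^\top x_j|\le R\}$), control the first and second moments of the weighted quantity under $q$ by changing measure back to $p$ and invoking Gaussian anti-concentration and the pointwise bound $p/q\le 1/c_1$ from Lemma~\ref{lem:property_lev}, apply Bernstein entrywise, and union bound over the $n^2$ pairs. The only differences are cosmetic (you upper-bound the sign-flip indicator by $A_{r,i}+A_{r,j}$ while the paper uses $\mathbf{1}_{A_{i,r}\vee A_{j,r}}$, and you phrase the flip event directly as $|\wt{w}_r^\top x_i|\le R$ rather than the paper's $\exists u$ formulation), and the same mild constant slack in going from the per-entry Bernstein bound to the advertised $2nR$ and $1/10$ is present in both.
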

\begin{proof}

The random variable we care is

\begin{align*}
& ~ \sum_{i=1}^n \sum_{j=1}^n | \bar{H}(\wt{w})_{i,j} - \bar{H}(w)_{i,j} |^2 \\
\leq & ~ \frac{1}{m^2} \sum_{i=1}^n \sum_{j=1}^n \left( \sum_{r=1}^m {\bf 1}_{ \wt{w}_r^\top x_i \geq 0, \wt{w}_r^\top x_j \geq 0} - {\bf 1}_{ w_r^\top x_i \geq 0 , w_r^\top x_j \geq 0 } \frac{p(\wt{w}_r)}{q(\wt{w}_r)}\right)^2 \\
= & ~ \frac{1}{m^2} \sum_{i=1}^n \sum_{j=1}^n  \Big( \sum_{r=1}^m s_{r,i,j} \Big)^2 ,
\end{align*}

where the last step follows from for each $r,i,j$, we define
\begin{align*}
s_{r,i,j} :=  ({\bf 1}_{ \wt{w}_r^\top x_i \geq 0, \wt{w}_r^\top x_j \geq 0} - {\bf 1}_{ w_r^\top x_i \geq 0 , w_r^\top x_j \geq 0 })\frac{p(\wt{w}_r)}{q(\wt{w}_r)}.
\end{align*} 

We consider $i,j$ are fixed. We simplify $s_{r,i,j}$ to $s_r$.

Then $s_r$ is a random variable that only depends on $\wt{w}_r$.
Since $\{\wt{w}_r\}_{r=1}^m$ are independent,
$\{s_r\}_{r=1}^m$ are also mutually independent.

Now we define the event
\begin{align*}
A_{i,r} = \left\{ \exists u : \| u - \wt{w}_r \|_2 \leq R, {\bf 1}_{ x_i^\top \wt{w}_r \geq 0 } \neq {\bf 1}_{ x_i^\top u \geq 0 } \right\}.
\end{align*}
Then we have
\begin{align}\label{eq:Air_bound}
\Pr_{ \wt{w}_r \sim {\cal N}(0,I) }[ A_{i,r} ] = \Pr_{ z \sim \N(0,1) } [ | z | < R ] \leq \frac{ 2 R }{ \sqrt{2\pi} }.
\end{align}
where the last step follows from the anti-concentration inequality of Gaussian (Lemma~\ref{lem:anti_gaussian}).

If   $\neg A_{i,r}$ and $\neg A_{j,r}$ happen,
then 
\begin{align*}
\left| {\bf 1}_{ \wt{w}_r^\top x_i \geq 0, \wt{w}_r^\top x_j \geq 0} - {\bf 1}_{ w_r^\top x_i \geq 0 , w_r^\top x_j \geq 0 } \right|=0.
\end{align*}
If   $A_{i,r}$ or $A_{j,r}$ happen,
then 
\begin{align*}
\left| {\bf 1}_{ \wt{w}_r^\top x_i \geq 0, \wt{w}_r^\top x_j \geq 0} - {\bf 1}_{ w_r^\top x_i \geq 0 , w_r^\top x_j \geq 0 } \right|\leq 1.
\end{align*}
So we have 
\begin{align*}
 \E_{\wt{w}_r\sim q}[s_r] \leq & ~ \E_{\wt{w}_r\sim q} \left[ {\bf 1}_{A_{i,r}\vee A_{j,r}}\frac{p(\wt{w}_r)}{q(\wt{w}_r)} \right] \\
 = & ~ \E_{\wt{w}_r\sim p} \left[ {\bf 1}_{A_{i,r}\vee A_{j,r}} \right]\\
 \leq & ~ \Pr_{ \wt{w}_r \sim \N(0,I_n) }[A_{i,r}]+\Pr_{ \wt{w}_r \sim \N(0,I_n) }[A_{j,r}] \\
 \leq & ~ \frac {4 R }{\sqrt{2\pi}}\\
 \leq & ~  2R,
\end{align*}
and 
\begin{align*}
    \E_{\wt{w}_r\sim q} \left[ \left(s_r-\E_{\wt{w}_r\sim q}[s_r] \right)^2 \right]
    = & ~ \E_{\wt{w}_r\sim q}[s_r^2]-\E_{\wt{w}_r\sim q}[s_r]^2 \\
    \leq & ~ \E_{\wt{w}_r\sim q}[s_r^2]\\
    \leq & ~\E_{\wt{w}_r\sim q} \left[ \left( {\bf 1}_{A_{i,r}\vee A_{j,r}} \frac{p(\wt{w}_r)}{q(\wt{w}_r)}\right)^2 \right] \\
    = & ~ \E_{\wt{w}_r\sim p} \left[  {\bf 1}_{A_{i,r}\vee A_{j,r}} \frac{p(\wt{w}_r)}{q(\wt{w}_r)}\right] \\
    \leq & \frac{2R}{c_1}
\end{align*}
where the last step follows from Lemma~\ref{lem:property_lev} and $c_1= O(n)$. We also have $|s_r|\leq 1/c_1$.
So we can apply Bernstein inequality (Lemma~\ref{lem:bernstein}) to get for all $t>0$,
\begin{align*}
    \Pr \left[\sum_{r=1}^m s_r\geq 2m R +mt \right]
    \leq & ~ \Pr \left[\sum_{r=1}^m (s_r-\E[s_r])\geq mt \right]\\
    \leq & ~ \exp \left( - \frac{ m^2t^2/2 }{ 2m R/c_1  + mt/3c_1 } \right).
\end{align*}
Choosing $t = R$, we get
\begin{align*}
    \Pr \left[\sum_{r=1}^m s_r\geq 3mR  \right]
    \leq & ~ \exp \left( -\frac{ m^2 R^2 /2 }{ 2 m  R/c_1 +  m R /3c_1 } \right) \\
     \leq & ~ \exp \left( - m R c_1 / 10 \right) .
\end{align*}
Plugging back, we complete the proof.
\end{proof}


\begin{lemma}[Conclusion 3]\label{lem:general_hypothesis_3_lev}
Fix $\epsilon_H'>0$ independent of $t$. If for all $\tau < t$
\begin{align*}
    \| H(0) - H(\tau) \| \leq \epsilon_H' \leq \Lambda_0/4
\end{align*}
and 
\begin{align}\label{eq:335_3_lev}
    \epsilon_H' \leq \frac{\epsilon_{\train}}{8\kappa^2\|Y-u^*\|_2}(\kappa^2\Lambda_0+\lambda)
\end{align}
then we have
\begin{align*}
    \| u_{\nn}(t) - u^* \|_2^2 \leq \max\{\exp(-(\kappa^2\Lambda_0 + \lambda)t/2 ) \cdot \| u_{\nn}(0) - u^* \|_2^2, ~ \epsilon_{\train}^2\}.
\end{align*}
\end{lemma}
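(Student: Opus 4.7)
The plan is to port the Gronwall case-analysis of Lemma~\ref{lem:hypothesis_3} to the leverage-score setting, reading the unbarred $H,u_{\nn},u^*$ in the statement as $\bar H,\bar u_{\nn},\bar u^*$ consistent with Section~\ref{sec:eq_lev}. Feeding the hypothesis $\|\bar H(\tau)-\bar H(0)\|\le\epsilon_H'\le\Lambda_0/4$ into Lemma~\ref{lem:linear_converge_nn_lev} produces, for every $\tau<t$,
\begin{align*}
\frac{d\|\bar u_{\nn}(\tau)-\bar u^*\|_2^2}{d\tau}\le -\tfrac12(\kappa^2\Lambda_0+\lambda)\|\bar u_{\nn}(\tau)-\bar u^*\|_2^2+2\kappa^2\epsilon_H'\|\bar u_{\nn}(\tau)-\bar u^*\|_2\|Y-\bar u^*\|_2.
\end{align*}
The second hypothesis on $\epsilon_H'$ is calibrated precisely so that $2\kappa^2\epsilon_H'\|Y-\bar u^*\|_2\le\tfrac14(\kappa^2\Lambda_0+\lambda)\epsilon_{\train}$, the identity that fuels the case split.

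Exactly as in Lemma~\ref{lem:hypothesis_3}, I would branch on whether the error stays above $\epsilon_{\train}$. In \emph{Case 1}, where $\|\bar u_{\nn}(\tau)-\bar u^*\|_2\ge\epsilon_{\train}$ throughout $[0,t)$, bounding $\epsilon_{\train}\le\|\bar u_{\nn}(\tau)-\bar u^*\|_2$ absorbs the additive term into half of the primary decay, collapsing the ODE to $\tfrac{d}{d\tau}\|\bar u_{\nn}(\tau)-\bar u^*\|_2^2\le-\tfrac14(\kappa^2\Lambda_0+\lambda)\|\bar u_{\nn}(\tau)-\bar u^*\|_2^2$, and Gronwall delivers the exponential bound. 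In \emph{Case 2}, where some earliest $\bar\tau<t$ has $\|\bar u_{\nn}(\bar\tau)-\bar u^*\|_2<\epsilon_{\train}$, plugging this strict bound into the perturbation gives $\tfrac{d}{d\tau}\|\bar u_{\nn}(\tau)-\bar u^*\|_2^2\le -\tfrac12(\kappa^2\Lambda_0+\lambda)\|\bar u_{\nn}(\tau)-\bar u^*\|_2^2+\tfrac14(\kappa^2\Lambda_0+\lambda)\epsilon_{\train}^2$ at $\tau=\bar\tau$, so that evaluated at the threshold $\|\bar u_{\nn}(\tau)-\bar u^*\|_2=\epsilon_{\train}$ the right-hand side becomes the strictly negative $-\tfrac14(\kappa^2\Lambda_0+\lambda)\epsilon_{\train}^2$, forcing the error to remain below $\epsilon_{\train}$.

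The delicate step is Case 2: the shifted decay estimate depends on the a priori bound $\|\bar u_{\nn}(\cdot)-\bar u^*\|_2<\epsilon_{\train}$, which is precisely what must be propagated. I would close the loop with a maximality/continuity argument: set $\tau^\star=\sup\{\tau\in[\bar\tau,t]:\|\bar u_{\nn}(s)-\bar u^*\|_2<\epsilon_{\train}\text{ for all }s\in[\bar\tau,\tau]\}$; if $\tau^\star<t$, continuity would force $\|\bar u_{\nn}(\tau^\star)-\bar u^*\|_2=\epsilon_{\train}$, yet direct substitution into the ODE at $\tau^\star$ gives $\tfrac{d}{d\tau}\|\bar u_{\nn}(\cdot)-\bar u^*\|_2^2\big|_{\tau^\star}\le-\tfrac14(\kappa^2\Lambda_0+\lambda)\epsilon_{\train}^2<0$, contradicting an upcrossing; hence $\tau^\star=t$ and Case 2 closes. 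Merging the two cases yields the stated $\max$-bound. One bookkeeping caveat worth flagging: the decay exponent emerging from this pipeline is $(\kappa^2\Lambda_0+\lambda)/4$, rather than the $/2$ printed in the statement, inherited from the factor $\tfrac12$ already sitting in Lemma~\ref{lem:linear_converge_nn_lev}; halving the constant in the hypothesis on $\epsilon_H'$ would recover the printed rate.
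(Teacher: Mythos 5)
Your proof follows the paper's argument exactly: invoke Lemma~\ref{lem:linear_converge_nn_lev} under the $\epsilon_H' \le \Lambda_0/4$ hypothesis, use the $\epsilon_H'$ bound to dominate the perturbation term, and split on whether $\|\bar u_{\nn}(\tau)-\bar u^*\|_2$ has dipped below $\epsilon_{\train}$, closing Case 2 by the same Gronwall/maximality reasoning that the paper phrases informally as induction on the quantity $e^{(\kappa^2\Lambda_0+\lambda)\tau/2}(\|\bar u_{\nn}(\tau)-\bar u^*\|_2^2-\epsilon_{\train}^2)$. Your flagged caveat is a genuine typo in the paper: the derivation here yields exponent $(\kappa^2\Lambda_0+\lambda)t/4$ (consistent with the parent Lemma~\ref{lem:induction_lev} and with the hypothesis of Lemma~\ref{lem:general_hypothesis_1_lev}), while the printed $/2$ in this statement and in the proof's final display is carried over from the unbarred Lemma~\ref{lem:hypothesis_3}, where Lemma~\ref{lem:linear_converge_nn} has no extra factor $1/2$ because $H^{\cts}\succeq\Lambda_0 I$ rather than $\bar H(0)\succeq\frac{\Lambda_0}{2}I$.
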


\begin{proof}
Note $\| \bar{H}(\tau) - \bar{H}(0) \| \leq \epsilon_H \leq \Lambda_0/4$.
By Lemma~\ref{lem:linear_converge_nn_lev}, for any $\tau < t$, we have
\begin{align}\label{eq:induction_linear_convergence_lev}
    \frac{ \d \| \bar{u}_{\nn}(\tau) - \bar{u}^* \|_2^2 }{ \d \tau } 
    \leq & ~ - \frac{1}{2}( \kappa^2 \Lambda_0 + \lambda ) \cdot \| \bar{u}_{\nn} (\tau) - \bar{u}^* \|_2^2 + 2 \kappa^2 \| \bar{H}(\tau) - \bar{H}(0) \| \cdot \| \bar{u}_{\nn} (\tau) - \bar{u}^* \|_2 \cdot \| Y - \bar{u}^* \|_2 \notag\\
    \leq & ~ - \frac{1}{2}( \kappa^2 \Lambda_0 + \lambda ) \cdot \| \bar{u}_{\nn} (\tau) - \bar{u}^* \|_2^2 + 2 \kappa^2 \epsilon_H' \cdot \| \bar{u}_{\nn} (\tau) - \bar{u}^* \|_2 \cdot \|Y-\bar{u}^*\|_2 
\end{align}
where the first step follows from Lemma~\ref{lem:linear_converge_nn_lev}, the second step follows from definition of $\epsilon_H'$. 

Now let us discuss two cases:

{\bf Case 1.} If for all $\tau < t$, $\|u_{\nn}(\tau) - u^*\|_2 \geq \epsilon_{\train}$ always holds, we want to argue that 
	\begin{align*}
		\| \bar{u}_{\nn}(t) - \bar{u}^* \|_2^2 \leq \exp(-(\kappa^2 \Lambda_0 + \lambda) t/4) \cdot \| \bar{u}_{\nn}(0) - \bar{u}^* \|_2.
	\end{align*}
	Note by assumption~\eqref{eq:335_3_lev}, we have 
	\begin{align*}
		\epsilon_H' \leq \frac{\epsilon_{\train}}{8 \kappa^2 \|Y-u^*\|_2}(\kappa^2 \Lambda_0+\lambda)
	\end{align*}
	implies
	\begin{align*}
	2 \kappa^2 \epsilon_H \cdot \|Y-\bar{u}^*\|_2 \leq (\kappa^2 \Lambda_0 + \lambda)/4\cdot \|\bar{u}_{\nn}(\tau)-\bar{u}^*\|_2
	\end{align*}
	holds for any $\tau < t$. Thus, plugging into~\eqref{eq:induction_linear_convergence_lev},
	\begin{align*}
	    \frac{ \d \| \bar{u}_{\nn}(\tau) - \bar{u}^* \|_2^2 }{ \d \tau } 
	    \leq  ~ - ( \kappa^2 \Lambda_0 + \lambda )/4 \cdot \| \bar{u}_{\nn} (\tau) - \bar{u}^* \|_2^2,
	\end{align*}
	holds for all $\tau < t$, which implies
	\begin{align*}
		\| \bar{u}_{\nn}(t) - \bar{u}^* \|_2^2 \leq \exp{(-(\kappa^2 \Lambda_0 + \lambda)t/4)} \cdot \| \bar{u}_{\nn}(0) - \bar{u}^* \|_2^2.
	\end{align*}

{\bf Case 2.} If there exist $\bar{\tau} < t$, such that $\|\bar{u}_{\nn}(\bar{\tau}) - \bar{u}^*\|_2 < \epsilon_{\train}$, we want to argue that $\|\bar{u}_{\nn}(t) - \bar{u}^*\|_2 < \epsilon_{\train}$.
	Note by assumption~\eqref{eq:335_3_lev}, we have 
	\begin{align*}
		\epsilon_H' \leq \frac{\epsilon_{\train}}{8\kappa^2\|Y-\bar{u}^*\|_2}(\kappa^2\Lambda_0+\lambda)
	\end{align*}
	implies
	\begin{align*}
	4\kappa^2 \epsilon_H' \cdot \|\bar{u}_{\nn}(\bar{\tau}) - \bar{u}^*\|_2 \cdot \|Y-\bar{u}^*\|_2 \leq (\kappa^2 \Lambda_0 + \lambda) \cdot \epsilon_{\train}^2.
	\end{align*}
	Thus, plugging into~\eqref{eq:induction_linear_convergence_lev},
	\begin{align*}
	    \frac{ \d (\| \bar{u}_{\nn}(\tau) - \bar{u}^* \|_2^2 -\epsilon_{\train}^2)}{ \d \tau } 
	    \leq  ~ - ( \kappa^2 \Lambda_0 + \lambda )/2 \cdot (\| \bar{u}_{\nn} (\tau) - \bar{u}^* \|_2^2 - \epsilon_{\train}^2)
	\end{align*}
	holds for $\tau = \bar{\tau}$, which implies $ e^{( \kappa^2\Lambda_0 + \lambda )\tau/2} (\| \bar{u}_{\nn}(\tau) - \bar{u}^* \|_2^2 -\epsilon_{\train}^2) $ is non-increasing at $\tau = \bar{\tau}$. Since $\| \bar{u}_{\nn}(\bar{\tau}) - \bar{u}^* \|_2^2 - \epsilon_{\train}^2 < 0$, by induction, $ e^{( \kappa^2\Lambda_0 + \lambda )\tau/2} (\| \bar{u}_{\nn}(\tau) - \bar{u}^* \|_2^2 -\epsilon_{\train}^2) $ being non-increasing and $\| \bar{u}_{\nn}(\tau) - \bar{u}^* \|_2^2 - \epsilon_{\train}^2 < 0$ holds for all $\bar{\tau} \leq \tau \leq t$, which implies
	\begin{align*}
	\|\bar{u}_{\nn}(t) - \bar{u}^*\|_2 < \epsilon_{\train}.
	\end{align*}

Combine above two cases, we conclude
\begin{align*}
\| \bar{u}_{\nn}(t) - \bar{u}^* \|_2^2 \leq \max\{\exp(-(\kappa^2\Lambda_0 + \lambda)t/2 ) \cdot \| \bar{u}_{\nn}(0) - \bar{u}^* \|_2^2, ~ \epsilon_{\train}^2\}.
\end{align*}
\end{proof}

Now we summarize all the conditions need to be satisfied so that the induction works as in Table~\ref{tab:condition_only_lev}.

\begin{table}[htb]\caption{Summary of conditions for induction}\label{tab:condition_only_lev}

\centering
{
  \begin{tabular}{| l | l | l |}
    \hline
    {\bf No.} & {\bf Condition} & {\bf Place} \\ \hline
    1 & $\epsilon_W \leq 1$ & Lem.~\ref{lem:general_hypothesis_1_lev} \\ \hline
    2 & $ \epsilon_H' \leq \Lambda_0/4$ & Lem.~\ref{lem:general_hypothesis_3_lev} \\ \hline 
    3 & $\epsilon_H' \leq \frac{\epsilon_{\train}}{8\kappa^2\|Y-u^*\|_2}(\kappa^2\Lambda_0+\lambda)$ & Lem.~\ref{lem:general_hypothesis_3_lev} \\
    \hline
  \end{tabular}
}
\end{table}

Compare Table~\ref{tab:condition_only} and Table~\ref{tab:condition_only_lev}, we can see by picking the same value for the parameters as in Theorem~\ref{thm:main_train_equivalence}, we have the induction holds, which completes the proof.

As a direct corollary, we have 
\begin{corollary}\label{cor:train_lev}
Given any accuracy $\epsilon\in(0,1/10)$ and failure probability $\delta \in (0,1/10)$. If $\kappa = 1$, $T=\wt{O}(\frac{1}{\Lambda_0})$, network width $m \geq \wt{O}(\frac{n^4d}{\lambda_0^4\epsilon})$ and regularization parameter $\lambda \leq \wt{O}(\frac{1}{\sqrt{m}})$, then with probability at least $1-\delta$, 
\begin{align*}
	\|\bar{u}_{\nn}(T) - \bar{u}^*\|_2 \leq \epsilon/2.
\end{align*}
Here $\wt{O}(\cdot)$ hides the  $\poly\log( n / ( \epsilon  \delta  \Lambda_0 ) )$.
\end{corollary}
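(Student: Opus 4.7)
The plan is to derive Corollary~\ref{cor:train_lev} directly as a consequence of the third conclusion of Lemma~\ref{lem:induction_lev}, mirroring how Theorem~\ref{thm:main_train_equivalence} is extracted from Lemma~\ref{lem:induction} in Section~\ref{sec:train}. I would first invoke Lemma~\ref{lem:induction_lev} with the training tolerance $\epsilon_{\train} := \epsilon/2$ and the parameter choices $\kappa=1$, $T = \wt{O}(1/\Lambda_0)$, $m \geq \wt{O}(n^4 d / (\Lambda_0^4 \epsilon))$ and $\lambda \leq \wt{O}(1/\sqrt{m})$. All three induction conditions in Table~\ref{tab:condition_only_lev} (smallness of $\epsilon_W$, smallness of $\epsilon_H'$ relative to $\Lambda_0$, and smallness of $\epsilon_H'$ relative to $(\kappa^2\Lambda_0+\lambda)\epsilon_{\train}/\|Y-\bar u^*\|_2$) are of exactly the same form as those already verified for Theorem~\ref{thm:main_train_equivalence}; under the stated polynomial width they hold by the same calculation, so the conclusion of Lemma~\ref{lem:induction_lev} applies with probability at least $1-\delta$.

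Next I would unwind the third conclusion at time $t=T$, which gives
\begin{align*}
    \|\bar u_{\nn}(T) - \bar u^*\|_2^2
    \leq \max\bigl\{ \exp(-(\Lambda_0 + \lambda) T/4)\, \|\bar u_{\nn}(0)-\bar u^*\|_2^2 ,\ (\epsilon/2)^2 \bigr\}.
\end{align*}
It then suffices to show the exponential term is dominated by $(\epsilon/2)^2$. For this I would bound $\|\bar u_{\nn}(0)-\bar u^*\|_2^2 \leq \poly(n)$ with high probability: using the triangle inequality with $\|Y\|_2 = O(\sqrt n)$, the bound $\|Y-\bar u^*\|_2 \leq \|Y\|_2$, and a concentration bound on $|\bar u_{\nn}(0)|$ analogous to Lemma~\ref{lem:epsilon_init} (using Lemma~\ref{lem:property_lev} to control the Radon--Nikodym ratio $p/q$ by $O(n)$ so that the reweighed initial predictor still has subgaussian-type tails). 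Choosing $T = C \log(n/\epsilon)/\Lambda_0$ for sufficiently large constant $C$ (absorbed into the $\wt{O}$ notation) then forces $\exp(-\Lambda_0 T/4)\cdot \poly(n) \leq \epsilon^2/4$, and the conclusion $\|\bar u_{\nn}(T) - \bar u^*\|_2 \leq \epsilon/2$ follows.

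The bulk of the work is therefore already done inside Lemma~\ref{lem:induction_lev}, and the only genuinely new step is the initial prediction bound $\|\bar u_{\nn}(0)\|_2 = \wt{O}(\sqrt{n})$ under leverage-score sampling rather than plain Gaussian sampling. The main obstacle to watch is ensuring that the reweighing factor $\sqrt{p(w_r(0))/q(w_r(0))}$ does not blow up the initial predictor; Lemma~\ref{lem:property_lev} shows this ratio is bounded by $O(\sqrt n)$, so a Hoeffding-type argument (as in Lemma~\ref{lem:epsilon_init}) still yields $|\bar u_{\nn,i}(0)| = \wt{O}(\sqrt n)$ and hence $\|\bar u_{\nn}(0)-\bar u^*\|_2 = \wt{O}(\sqrt n)$, which is the input needed for the final exponential-decay step.
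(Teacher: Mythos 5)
Your proposal matches the paper's proof: both choose $\epsilon_{\train} = \epsilon/2$ in Lemma~\ref{lem:induction_lev}, unwind the third conclusion at $t=T$, and pick $T=\wt{O}(1/\Lambda_0)$ so the exponential term is absorbed into $\epsilon^2/4$. Your extra discussion of the initialization bound $\|\bar u_{\nn}(0)-\bar u^*\|_2 = \wt{O}(\sqrt n)$ under leverage-score sampling (via the Radon--Nikodym ratio bound from Lemma~\ref{lem:property_lev}) is a correct detail that the paper's terse proof leaves implicit, but it does not change the route.
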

\begin{proof}
	By choosing $\epsilon_{\train} = \epsilon/2$ in Lemma~\ref{lem:induction_lev}, the induction shows
	\begin{align*}
		\| \bar{u}_{\nn}(t) - \bar{u}^* \|_2^2 \leq \max\{\exp(-(\kappa^2\Lambda_0 + \lambda) t/4) \cdot \| \bar{u}_{\nn}(0) - \bar{u}^* \|_2^2, ~ \epsilon^2/4\}
	\end{align*}
	holds for all $t\leq T$. By picking $T=\wt{O}(\frac{1}{\Lambda_0})$, we have 
	\begin{align*}
		\exp(-(\kappa^2\Lambda_0 + \lambda) T/4) \cdot \| \bar{u}_{\nn}(0) - \bar{u}^* \|_2^2 \leq \epsilon^2/4
	\end{align*}
	which implies $\|\bar{u}_{\nn}(T) - \bar{u}^*\|_2^2 \leq \max\{\epsilon^2/4, \epsilon^2/4\} = \epsilon^2/4$.
\end{proof}

\subsubsection{Main result for equivalence with leverage score sampling initialization}

\begin{theorem}[Equivalence between training reweighed neural net with regularization under leverage score initialization and kernel ridge regression for training data prediction, restatement of Theorem~\ref{thm:equivalence_train_lev_intro}]\label{thm:equivalence_train_lev}
Given training data matrix $X \in \R^{n \times d}$ and corresponding label vector $Y \in \R^n$. Let $T > 0$ be the total number of iterations. Let $\bar{u}_{\nn}(t) \in \R^n$ and $u^* \in \R^n$ be the training data predictors defined in Definition~\ref{def:f_nn_lev} and Definition~\ref{def:krr_ntk} respectively. Let $\kappa=1$ be the corresponding multiplier. Given any accuracy $\epsilon\in(0,1)$, if $\kappa = 1$, $T=\wt{O}(\frac{1}{\Lambda_0})$, network width $m \geq \wt{O}(\frac{n^4d}{\lambda_0^4\epsilon})$ and regularization parameter $\lambda \leq \wt{O}(\frac{1}{\sqrt{m}})$, then with probability at least $1-\delta$ over the random initialization, we have
\begin{align*}
	\|\bar{u}_{\nn}(T) - u^*\|_2 \leq \epsilon.
\end{align*}
Here $\wt{O}(\cdot)$ hides $\poly\log(n/(\epsilon \delta \Lambda_0 ))$.
\end{theorem}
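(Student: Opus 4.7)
The plan is to decouple the initialization error from the training error by introducing the intermediate quantity $\bar u^*$, the optimal predictor of the kernel ridge regression problem with respect to the empirical reweighed kernel $\bar H(0)$ (see Eq.~\eqref{eq:def_u_*_lev}), and then apply the triangle inequality:
\[
\|\bar u_{\nn}(T) - u^*\|_2 \;\le\; \|\bar u_{\nn}(T) - \bar u^*\|_2 \;+\; \|\bar u^* - u^*\|_2.
\]
Under the parameter choices $\kappa=1$, $T=\wt O(1/\Lambda_0)$, $m\ge \wt O(n^4 d/\Lambda_0^4\epsilon)$ and $\lambda\le \wt O(1/\sqrt m)$, I would aim to bound each of the two terms by $\epsilon/2$ with probability at least $1-\delta$, after which a union bound on the two high-probability events closes out the argument.

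For the second term, the idea is to invoke Lemma~\ref{lem:u*_minus_bar_u*}: since $\bar H(0)$ is obtained by sampling the random feature $\phi(x,w)=x\sigma'(w^\top x)$ according to the leverage score density $q$, Theorem~\ref{thm:leverage_score} (through Lemma~\ref{lem:leverage_score_sampling}) yields the spectral approximation $(1-\Delta)(H^{\cts}+\lambda I)\preceq \bar H(0)+\lambda I\preceq (1+\Delta)(H^{\cts}+\lambda I)$ with $\Delta$ as small as we need once $m\gtrsim s_\lambda(H^{\cts})/\Delta^2$. Plugging this into the explicit formula $\bar u^*-u^*=\lambda[(H^{\cts}+\lambda I)^{-1}-(\bar H(0)+\lambda I)^{-1}]Y$ and using $\|Y\|_2=O(\sqrt n)$ together with $\lambda=\wt O(1/\sqrt m)$ makes the resulting bound $O(\lambda\Delta\sqrt n/(\Lambda_0+\lambda))$ fall below $\epsilon/2$ for the stated $m$.

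For the first term, I would invoke Corollary~\ref{cor:train_lev} with training accuracy $\epsilon_{\train}=\epsilon/2$. That corollary is the reweighed analogue of the induction of Lemma~\ref{lem:induction} and shows that the three inductive invariants (weight closeness $\|w_r(t)-w_r(0)\|_2\le \epsilon_W$, kernel closeness $\|\bar H(t)-\bar H(0)\|\le \epsilon_H'$, and near-linear convergence of $\|\bar u_{\nn}(t)-\bar u^*\|_2$) can be simultaneously maintained for all $t\le T$. Once $T=\wt O(1/\Lambda_0)$, the exponential term $\exp(-(\kappa^2\Lambda_0+\lambda)T/4)\|\bar u_{\nn}(0)-\bar u^*\|_2^2$ is dominated by $\epsilon^2/4$, so the contracted maximum in the third invariant is at most $\epsilon/2$.

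The main obstacle lies in the reweighed induction itself, since the Gaussian-initialization proof does not transfer verbatim: the weights $w_r(0)$ are drawn from $q$ rather than $p$, so the concentration arguments of Lemma~\ref{lem:random_init} and the Bernstein-style bound of Lemma~\ref{lem:lemma_4.2_in_sy19_lev} must be recomputed through the likelihood ratio $p/q$, whose boundedness is the content of Lemma~\ref{lem:property_lev}. One must also verify that the linear convergence driver $-\tfrac12(\kappa^2\Lambda_0+\lambda)\|\bar u_{\nn}-\bar u^*\|_2^2$ of Lemma~\ref{lem:linear_converge_nn_lev} is preserved, which uses the leverage-score spectral bound $\bar H(0)\succeq (\Lambda_0/2)I$ from Lemma~\ref{lem:leverage_score_sampling}. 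Once these ingredients are in place the induction is structurally identical to the one in Table~\ref{tab:condition_only}, and combining the two $\epsilon/2$ bounds via the triangle inequality completes the proof.
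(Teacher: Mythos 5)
Your proposal matches the paper's proof exactly: the paper's Theorem~\ref{thm:equivalence_train_lev} is proved by the same triangle-inequality split through $\bar u^*$, with $\|\bar u^* - u^*\|_2 \le \epsilon/2$ from Lemma~\ref{lem:u*_minus_bar_u*} and $\|\bar u_{\nn}(T) - \bar u^*\|_2 \le \epsilon/2$ from Corollary~\ref{cor:train_lev}. Your discussion of the supporting machinery (the leverage-score spectral approximation, the likelihood-ratio-weighted Bernstein bound, and the preserved linear convergence via $\bar H(0) \succeq (\Lambda_0/2)I$) correctly reconstructs the content of the lemmas the paper cites.
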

\begin{proof}
Combining results of Lemma~\ref{lem:u*_minus_bar_u*} and Corollary~\ref{cor:train_lev} using triangle inequality, we finish the proof.
\end{proof}
\begin{remark}
Despite our given upper-bound of network width under leverage score sampling is asymptotically the same as the Gaussian initialization, we point out the potential benefits of introducing leverage score sampling to training regularized neural networks. 

Note the bound for the width consists of two parts: 1) initialization and 2) training. Part 1, requires the width to be large enough, so that the initialized dynamic kernels $H(0)$ and $\ov{H}(0)$ are close enough to NTK by concentration, see Lem~\ref{lem:epsilon_init} and~\ref{lem:u*_minus_bar_u*}. Part 2, requires the width to be large enough, so that the dynamic kernels $H(t)$ and $\ov{H}(t)$ are close enough to the NTK during the training by the over-parameterization property, see Lem~\ref{lem:induction} and~\ref{lem:induction_lev}. Leverage score sampling optimizes the bound for part 1 while keeping the bound for part 2 the same. The current state-of-art analysis gives a tighter bound in part 2, so the final bound for width is the same for both cases. If analysis for part 2 can be improved and part 1 dominates, then initializing using leverage score will be beneficial in terms of the width needed.
\end{remark}




\section{Extension to other neural network models}\label{sec:gen_dnn}
In previous sections, we discuss a simple neural network model: 2-layer ReLu neural network with first layer trained. We remark that our results can be naturally extended to multi-layer ReLU deep neural networks with all parameters training together. 

Note the core of the connection between regularized NNs and KRR is to show the similarity between their gradient flows, as shown in Corollary~\ref{cor:ntk_gradient} and Corollary~\ref{cor:nn_gradient}: their gradient flow are given by 
\begin{align*}
    \frac{\d u_{\ntk,\test}(t)}{\d t}&= - \kappa^2 \k_{\ntk}(x_{\test},X)^\top(u_{\ntk}(t)-Y)-\lambda u_{\ntk,\test}(t)\\
    \frac{\d u_{\nn,\test}(t)}{\d t}&= - \kappa^2 \k_{t}(x_{\test},X)^\top(u_{\nn}(t)-Y)-\lambda u_{\nn,\test}(t) 
\end{align*}
Note these gradient flows consist of two terms: the first term $- \kappa^2 \k(x_{\test},X)^\top(u(t)-Y)$ comes from the normal neural network training without $\ell_2$regularization, the second term $-\lambda u_{\test}(t)$ comes from the regularizer and can be directly derived using the piece-wise linearity property of the 2-layer ReLu NN (in this case, with respect to the parameters in the first layer).

Now consider the case of training multi-layer ReLu neural network with regularization. We claim above similarity between the gradient flows of NN and KRR still holds as long as we scale up the network width by the number of layers trained: as 1) the similarity of the first term $- \kappa^2 \k(x_{\test},X)^\top(u(t)-Y)$ has already been shown in previous literature \cite{adhlsw19,als19b}, and 2) the similarity of the second term $-\lambda u_{\test}(t)$ comes from the piece-wise linearity property of deep ReLu neural network with respect to all training parameters. In the common case where we train all the parameters together, the equivalence still holds as long as we scale up the network width by the number of layers, as shown in the following theorem:
\begin{theorem}\label{thm:dnn}
Consider training a $L$-layer ReLU neural network with $\ell_2$ regularization. Let $u_{\nn,\test}(t)$ denote the neural network predictor at time $t$, and $u_{\test}^*$ denote the kernel ridge regression predictor. Then Given any accuracy $\epsilon\in(0,1/10)$ and failure probability $\delta\in(0,1/10)$.
Let multiplier $\kappa = \poly(\epsilon,\Lambda_0,1/n,1/L)$, number of iterations $T=\poly(1/\epsilon,1/\Lambda_0,n,L)$, network width $m \geq \poly(n,d,1/\epsilon,1/\Lambda_0, L)$ and regularization parameter $\lambda \leq \poly(1/n,1/d,\epsilon,\Lambda_0,1/L)$. Then with probability at least $1-\delta$ over random initialization, we have
\begin{align*}
\| u_{\nn,\test}(T) - u_{\test}^* \|_2 \leq \epsilon.
\end{align*}
Here we omit $\poly\log(n/(\epsilon \delta \Lambda_0 ))$ factors.
\end{theorem}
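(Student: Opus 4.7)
The plan is to follow the same four-step framework used in Section~\ref{sec:test}, generalizing each ingredient to the $L$-layer setting. First, define the $L$-layer ReLU network $f^L_{\nn}(W^{(1)},\ldots,W^{(L)}, x)$ and the multi-layer NTK matrix $H^{\cts,L}$ as in \cite{jgh18,adhlsw19}, which gives an analogous regularized NTK-KRR problem with predictors $u_{\ntk}(t)$, $u_{\ntk,\test}(t)$, $u^*$, and $u_{\test}^*$. An exact analogue of Lemma~\ref{lem:u_ntk_test_T_minus_u_test_*} still follows from the linear convergence of the strongly convex KRR (the proof only uses positive definiteness of the kernel), so choosing $T = \wt{O}(1/(\kappa^2 \Lambda_0))$ yields $|u_{\ntk,\test}(T) - u_{\test}^*| \leq \epsilon/2$.

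The core task is then to bound $|u_{\nn,\test}(T) - u_{\ntk,\test}(T)| \leq \epsilon/2$ by comparing gradient flows. Writing $W$ for all trainable parameters, the ReLU homogeneity identity $\sigma(z) = z\sigma'(z)$ applied layer-by-layer yields $\langle \partial f^L_{\nn}/\partial W^{(\ell)}, W^{(\ell)}\rangle = f^L_{\nn}(W,x)$ for each $\ell \in [L]$, hence $\langle \nabla_W f^L_{\nn}, W\rangle = L \cdot f^L_{\nn}(W,x)$. Therefore, using regularizer $\tfrac{\lambda}{2L}\|W\|_F^2$ (dividing by $L$ so the regularizer gradient matches the KRR side), the gradient-flow identity of Corollary~\ref{cor:nn_gradient} carries over with a dynamic kernel $\k^L_t$ and matrix $H^L(t)$ built by summing contributions of all $L$ layers; Lemmas~\ref{lem:gradient_flow_of_krr}--\ref{lem:linear_converge_nn} then extend nearly verbatim.

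Next I would port the induction of Lemma~\ref{lem:induction} and the three-term decomposition of Lemma~\ref{lem:more_concreate_bound} to $L$ layers. The weight-perturbation bound (Conclusion~1) is obtained by integrating $\|\d W^{(\ell)}/\d t\|$ across layers using the deep-net gradient bounds of \cite{als19b,adhlsw19}, introducing an extra $\poly(L)$ factor in the width requirement. The kernel-perturbation bound (Conclusion~2) follows by replacing Lemma~\ref{lem:lemma_4.2_in_sy19} with the multi-layer $H$-stability estimate from \cite{adhlsw19}; linear convergence along the trajectory (Conclusion~3) then follows from the extended Lemma~\ref{lem:linear_converge_nn}, and the test-kernel perturbation $\|\k_0^L - \k_t^L\|_2$ is handled exactly as in Lemma~\ref{lem:hypothesis_4} with a deep-net Bernstein bound. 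Finally, $|u_{\nn,\test}(0)|$ and $\|u_{\nn}(0)\|_2$ are controlled as in Lemma~\ref{lem:epsilon_init}: with the standard NTK initialization and the output rescaling $\kappa = \poly(\epsilon,\Lambda_0,1/n,1/L)$, the initial predictor concentrates at magnitude $\poly(\log m)$, so $\epsilon_{\init}$ is driven below the required threshold. A triangle inequality over the three pieces of Lemma~\ref{lem:more_concreate_bound} then closes the bound.

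The main obstacle is the combinatorial book-keeping for the deep-net concentration and over-parameterization estimates: showing that $H^L(t)$ stays close to $H^{\cts,L}$ simultaneously across all $L$ layers and throughout the training trajectory requires a careful union bound and forces a width scaling $m = \poly(n,d,1/\epsilon,1/\Lambda_0,L)$ with a potentially high-degree polynomial in $L$. The homogeneity argument producing the $L$-factor in the regularizer gradient is clean, but plugging it into the induction requires re-tuning every threshold in Table~\ref{tab:condition_only} (in particular, inflating $\epsilon_H'$, $\epsilon_W$ and $m$ by the appropriate $L$-dependent factors). Provided one invokes the deep ReLU NTK convergence results of \cite{als19b,adhlsw19} as a black box for the non-regularized part of the gradient flow, no new conceptual ideas are needed beyond the layer-wise homogeneity observation above.
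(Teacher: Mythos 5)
Your proposal takes essentially the same approach as the paper's (informal) argument for Theorem~\ref{thm:dnn}: invoke the existing multi-layer NTK convergence results of \cite{adhlsw19,als19b} for the unregularized part of the gradient flow, and use layer-wise $1$-homogeneity of ReLU to show $\langle \nabla_W f^L_{\nn}, W\rangle = L \cdot f^L_{\nn}$, which forces the regularization coefficient to be divided by $L$ so that the NN gradient flow continues to match the KRR gradient flow term by term. The paper only sketches this reduction (it never re-runs the full induction for $L$ layers), so your more explicit accounting of how the induction in Lemma~\ref{lem:induction} and the thresholds in Table~\ref{tab:condition_only} must be re-tuned by $\poly(L)$ factors is consistent with and somewhat more concrete than the paper's discussion.
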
 
The results under leverage score sampling can be argued in the same way.

We also remark that it is possible to extend our results further to the model of convolutional neural network (CNN) by making use the convolutional neural tangent kernel (CNTK) discussed in \cite{adhlsw19}, and to the case using stochastic gradient descent in training rather than gradient descent. However, these discussion require more detailed proof and is out of the scope of this work.  


\newpage
\addcontentsline{toc}{section}{References}
\ifdefined\isarxiv
\bibliographystyle{alpha}
\else
\bibliographystyle{plain}
\fi
\bibliography{ref}
\newpage

\end{document}